\title[Suspicious Alignment of SGD]{Suspicious Alignment of SGD:\\A Fine-Grained Step Size Condition Analysis}
\newtheorem*{theorem-restated}{Theorem}
\newtheorem*{assumption-restated}{Assumption}
\newtheorem*{corollary-restated}{Corollary}
\newtheorem*{lemma-restated}{Lemma}
\newtheorem{assumption}[theorem]{Assumption}
\def\vxi{{\bm{\xi}}}
\def\mA{{\bm{A}}}
\def\mC{{\bm{C}}}
\def\mD{{\bm{D}}}
\def\mM{{\bm{M}}}
\def\mP{{\bm{P}}}
\def\mU{{\bm{U}}}
\def\mM{{\bm{M}}}
\def\mLambda{{\bm{\Lambda}}}
\def\mSigma{{\bm{\Sigma}}}
\newif\ifcomments
\newcommand{\yaoqing}[1]{\ifcomments \textcolor{orange}{[Yaoqing:\ #1]} \fi}
\newcommand{\addressedyaoqing}[1]{\ifcomments \textcolor{teal}{[Yaoqing (addressed):\ #1]} \fi}
\newcommand{\addressedtianyu}[1]{\ifcomments \textcolor{teal}{[Tianyu (addressed):\ #1]} \fi}
\newcommand{\shenyang}[1]{\ifcomments \textcolor{blue}{[Shenyang:\ #1]} \fi}
\begin{document}

\maketitle

\begin{abstract}

This paper explores the suspicious alignment phenomenon in stochastic gradient descent (SGD) under ill-conditioned optimization, where the Hessian spectrum splits into dominant and bulk subspaces. This phenomenon describes the behavior of gradient alignment in SGD updates. Specifically, during the initial phase of SGD updates, the alignment between the gradient and the dominant subspace tends to decrease. Subsequently, it enters a rising phase and eventually stabilizes in a high-alignment phase. The alignment is considered ``suspicious'' because, paradoxically, the projected gradient update along this highly-aligned dominant subspace proves ineffective at reducing the loss. The focus of this work is to give a fine-grained analysis in a high-dimensional quadratic setup about how step size selection produces this phenomenon. Our main contribution can be summarized as follows: We propose a step-size condition revealing that in low-alignment regimes, an adaptive critical step size $\eta_t^*$ separates alignment-decreasing ($\eta_t < \eta_t^*$) from alignment-increasing ($\eta_t > \eta_t^*$) regimes, whereas in high-alignment regimes, the alignment is self-correcting and decreases regardless of the step size. We further show that under sufficient ill-conditioning, a step size interval exists where projecting the SGD updates to the bulk space decreases the loss while projecting them to the dominant space increases the loss, which explains a recent empirical observation that projecting gradient updates to the dominant subspace is ineffective. Finally, based on this adaptive step-size theory, we prove that for a constant step size and large initialization, SGD exhibits this distinct two-phase behavior: an initial alignment-decreasing phase, followed by stabilization at high alignment.

\end{abstract}

\begin{keywords}%
  loss landscape, suspicious alignment, step size condition, optimization theory
\end{keywords}

\section{Introduction}
 Deep neural networks are often optimized over ill-conditioned loss landscapes. Empirical studies by \citet{sagun2016eigenvalues,sagun2017empirical} have shown that the Hessian of the training loss in over-parameterized models typically exhibits a bimodal eigenvalue spectrum: a small number of large eigenvalues corresponding to directions of high curvature are sharply separated from a dense bulk of near-zero eigenvalues that span the vast majority of the parameter space. This spectral structure creates an ill-conditioned loss landscape: steep along a few dominant directions, yet nearly flat across a high-dimensional subspace. Such geometry is referred to in the literature as the river-valley landscape~\citep{wen2024understanding}, where narrow, high-curvature ``valleys'' are embedded within a broad, flat ``river''.
To formalize this structure, let \(\nabla^2 L(\bm{x})\) denote the Hessian at a point \(\bm{x}\), and assume it admits a spectral decomposition with eigenvalues \(\lambda_1 \geq \cdots\geq \lambda_k  >\lambda_{k+1} \geq \cdots\geq \lambda_d > 0\) and orthonormal eigenvectors \(\{\bm{u}_i\}_{i=1}^d\). We partition the spectrum into a \emph{dominant block} \(\mathcal{D} = \{1, \dots, k\}\) and a \emph{bulk block} \(\mathcal{B} = \{k+1, \dots, d\}\), separated by a non-vanishing gap
\[
\mathrm{gap}_1 := \lambda_k - \lambda_{k+1} > 0.
\]
This induces orthogonal projectors onto the corresponding subspaces:
\[
\bm{P}^{\mathcal{D}} = \sum_{i \in \mathcal{D}} \bm{u}_i \bm{u}_i^\top, \qquad
\bm{P}^{\mathcal{B}} = \sum_{i \in \mathcal{B}} \bm{u}_i \bm{u}_i^\top.
\]
Typically, the (river-valley) landscapes are ill-conditioned. In such landscapes, SGD exhibits a striking phenomenon: the gradient \(\nabla L(\bm{x}_t)\) becomes increasingly aligned with the dominant subspace \(\mathcal{D}\) as training progresses. This \emph{tiny subspace} effect was documented by \citet{gur2018gradient}, who showed that late-stage gradients lie almost entirely within the span of the top few Hessian eigenvectors. Other researchers also have similar findings \citep{schneider2024identifying}. Following by those results, a lot of works try to use this property to design more efficient learning algorithms \citep{gressmann2020improving,gauch2022few,li2022low}. To gain a better understanding of the intuition, let's define the alignment by
\[
\theta_t := \frac{\|\bm{P}^{\mathcal{D}} \nabla L(\bm{x}_t)\|_2^2}{\|\nabla L(\bm{x}_t)\|_2^2} \in [0,1].
\]
Intuitively, \(\theta_t \approx 1\) suggests that optimization is focused on the most curved directions, which may seem to be an opportunity to achieve efficiency.
However, above intuition is contradicted by recent findings \citep{song2024does}: in the high-alignment regime (\(\theta_t \approx 1\)), the SGD updates projected onto the dominant space \(\mathcal{D}\) often \emph{fail to decrease the training loss}, while the orthogonal bulk component---despite carrying negligible gradient norm---does make the loss decrease. We call this \textbf{suspicious alignment}: a state where the gradient is mainly supported on \(\mathcal{D}\), yet updates along \(\mathcal{D}\) give non-decreasing or even ascending loss under typical step sizes.
This leads to two questions:
\begin{quote}
\textbf{Problem 1.} \emph{How does the step size $\eta_t$ govern the evolution of alignment $\theta_t$ in ill-conditioned landscapes?} \\
\textbf{Problem 2.} \emph{Why do Dominant Projected SGD (DSGD) fail to reduce the loss while Bulk Projected SGD (BSGD) succeed under high alignment ?}
\end{quote}
We provide complete analytical answers to both questions in the quadratic case under the high-dimensional regime where \(d \to \infty\). Our main results are boxed below:
\\
\medskip
\noindent\fbox{%
\begin{minipage}{\dimexpr\linewidth-2\fboxsep-2\fboxrule\relax}%
\textbf{Answer to Problem 1: Step-size condition of alignment dynamics.}  
We identify an \emph{adaptive critical step size} $\eta_t^*(\bm{x}_t)$ (Eq.~\eqref{eq:def-etatstar-body}) that separates two distinct regimes:
\begin{itemize}
\item \textbf{Low-alignment regime} ($\theta_t \leq g_{\mathrm{gap}}$, where $g_{\mathrm{gap}}$ is a threshold smaller than 1): \\alignment is \emph{dependent} on the step size $\eta_t$.  
    If $\eta_t < \eta_t^*$, then $\mathbb{E}[\theta_{t+1} \mid \bm{x}_t] < \theta_t$;  
    if $\eta_t > \eta_t^*$, then $\mathbb{E}[\theta_{t+1} \mid \bm{x}_t] > \theta_t$.
\item \textbf{High-alignment regime} ($\theta_t \geq \theta_t^*$, where $\theta_t^*$ is another threshold smaller than 1): \\
alignment is \emph{self-correcting}.  
    $\mathbb{E}[\theta_{t+1} \mid \bm{x}_t] < \theta_t$ for \emph{any} $\eta_t > 0$.
\end{itemize}
Here $\mathbb{E}[\cdot|\bm{x}_t]$ means taking the expectation with respect to the gradient noise at $t$ step. These regimes are separated by a stable interval $(g_{\mathrm{gap}}, \theta_t^*)$, into which $\theta_t$ tends to oscillate. {Furthermore, $g_{\mathrm{gap}} \to 1$,$\theta_t^* \to 1$ as $\lambda_k / \lambda_{k+1} \to \infty$ }. This is an informal summary of Theorems~\ref{thm:step-dec-below}, \ref{thm:step-inc-above}, \ref{thm:step-dec-large}, and \ref{thm:regime-separation}.
\end{minipage}%
}
\medskip
\noindent\fbox{%
\begin{minipage}{\dimexpr\linewidth-2\fboxsep-2\fboxrule\relax}%
\textbf{Answer to Problem 2: Stability disparity between subspaces.}  
For projected updates, expected loss decreases iff $\eta_t < \eta^{\mathrm{loss}}_{\mathcal{S}}(\bm{x}_t)$, where for $\mathcal{S} \in \{\mathcal{D}, \mathcal{B}\}$,
\[
\eta^{\mathrm{loss}}_{\mathcal{S}}(\bm{x}_t) = \frac{2 \sum_{i \in \mathcal{S}} \lambda_i^2 c_{i,t}^2}{\sum_{i \in \mathcal{S}} \lambda_i^4 c_{i,t}^2 + \sum_{i \in \mathcal{S}} \lambda_i \kappa_i^2},
\]
with $c_{i,t} = \langle \bm{x}_t, \bm{u}_i \rangle$ and $\kappa_i^2 = \bm{u}_i^\top \boldsymbol{\Sigma} \bm{u}_i$. There exists a critical alignment $\theta^{\mathrm{crit}}_t \in (0,1)$ such that
\[
\theta_t < \theta^{\mathrm{crit}}_t \quad \Longleftrightarrow \quad \eta^{\mathrm{loss}}_{\mathcal{D}} < \eta^{\mathrm{loss}}_{\mathcal{B}}.
\]
As $\lambda_k / \lambda_{k+1} \to \infty$, $\theta^{\mathrm{crit}}_t \to 1$, so for nearly all $\theta_t < 1$, there exists an unstabele regime $(\eta^{\mathrm{loss}}_{\mathcal{D}}, \eta^{\mathrm{loss}}_{\mathcal{B}})$ for the dominant update. When the step size is inside $(\eta^{\mathrm{loss}}_{\mathcal{D}}, \eta^{\mathrm{loss}}_{\mathcal{B}})$, DSGD increases loss while BSGD decreases it, resolving the paradox of \citet{song2024does}. This is an informal summary of Theorems~\ref{thm:proj-loss-corr}, \ref{thm:loss-crossover-rigorous}, and \ref{thm:loss-crossover-asymptotic}.
\end{minipage}%
}
\begin{figure}[H]
    \centering
    \includegraphics[width=0.7\linewidth]{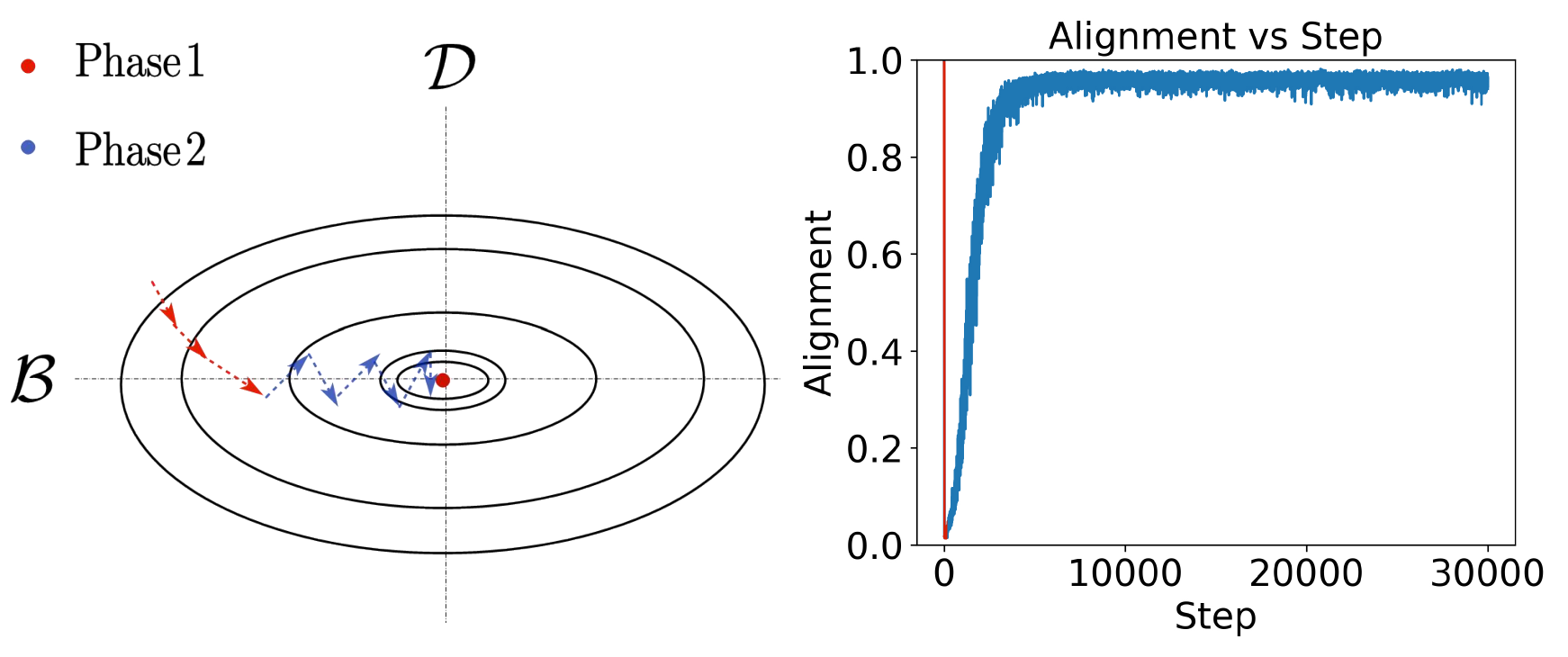}
    \caption{Two-phase SGD Dynamic: $\mathcal{D},\mathcal{B}$ represent dominant and bulk direction respectively. {This is a simulation experiment with constant step size for SGD with spectrum gap $\frac{\lambda_k}{\lambda_{k+1}}=100$, the details can be referred to Section \ref{NES}.} \addressedyaoqing{Remember to replace this figure with a non-conceptual one.}}
    \label{twophase}
\end{figure}
Building on this theoretical foundation, we further analyze the long-term alignment dynamics under \emph{constant step size} SGD (CSGD). In Section~\ref{csgdsection}, we characterize a distinct \emph{two-phase behavior} of $\theta_t$ as show in Figure \ref{twophase}: an initial transient phase where alignment monotonically decreases (Theorem~\ref{thm:monotone-decrease-expected-alignment}), followed by a late-time equilibrium where $\theta_t$ converges to a noise- and spectrum-dependent limit $\theta_\infty$ (Theorem~\ref{thm:late-theta}). This provides a complete picture of how ill-conditioning, noise geometry, and step size jointly shape the alignment trajectory of SGD.

\section{Related Work}
\paragraph{Ill-conditioned Loss Landscape}
{The performance of deep neural networks is strongly affected by the geometric properties of their loss landscapes, as illustrated in the visualization in \citet{li2018visualizing}. Initial spectral analysis by \citet{sagun2016eigenvalues, sagun2017empirical} identified a bulk-dominant structure in the Hessian, characterized by a vast majority of near-zero eigenvalues alongside a small number of dominant outlier eigenvalues. Subsequent research has further characterized the properties of this structure across various tasks and architectures \citep{ghorbani2019investigation, papyan2019measurements, papyan2020traces}. Specifically, \citet{ghorbani2019investigation} observed that gradients predominantly align with the few outlier eigenvectors, while \citet{wu2020dissecting} demonstrated that the Hessian remains consistently low-rank, with its significant curvature confined to a tiny fraction of the parameter space. These empirical observations are supported by theoretical proofs: \citet{liao2021hessian} explained how outlier eigenvalues and bulk eigenvalues emerge from the coupling between the model and the data structure, while \citet{singh2021analytic} derived exact analytic formulas for the low-rank nature of Hessian maps. More recently, \citet{hodgkinson2025models} proposed that the Hessian eigenvalues of optimal models approximately follow a Heavy-Tailed Marchenko-Pastur (HTMP) distribution. A key characteristic of this distribution is its heavy-tailed nature, which is more likely to produce outlier eigenvalues than light-tail distribution. Furthermore, certain dynamic properties of this structure can exert unexpected impacts on optimization. In PINNs, \citet{rathore2024challenges} showed that the presence of multiple conflicting loss terms leads to a highly unstable outlier structure where outlier eigenvalues frequently shift and overlap during training, triggering severe instabilities. In addition to these primary studies, other works have explored broader connections between this structure and various optimization phenomena \citep{zhang2019gradient, yang2021taxonomizing, garipov2018loss, pan2021eigencurve}.}
\paragraph{Gradient Alignment Theory}
In the context of such ill-conditioned loss landscapes, \citet{gur2018gradient} found that SGD consistently aligns with the large eigenvalue directions in the later stages. Since the large eigenvalue space is a very low-dimensional subspace, they posited that SGD operates within a \emph{tiny subspace}. However, subsequent observations by \citet{song2024does} overturned this conclusion. By projecting SGD updates onto the corresponding subspaces during the middle stage of training, they discovered that updates projected onto the large eigenvalue space do not reduce the loss. Instead, updates projected onto the small eigenvalue space lead to further loss reduction. Recent theoretical work on loss landscapes has also pertained to the alignment phenomenon of gradient under ill-conditioned structures. 
For example, \citet{li2022understanding} proved in their Theorem 3.1 that, for edge of stability (EOS) phenomenon~\citep{cohen2021gradient} and for quadratic programming problems, if gradient descent uses a normalized step-size schedule, the gradient $\mA \bm{x}$ aligns with the largest eigenvector of the Hessian as time approaches infinity. Later, in Theorem 4.4, they established gradient alignment results for normalized gradient flow on manifolds that satisfy certain assumptions. In the context of studying LLM step size schedules, \citet{wen2024understanding} provided in Lemma A.13 that, for a specific loss landscape structure, after a certain time $ t $, the ratio of the projections of the gradient flow onto the large and small eigenspaces (which is the alignment) has an asymptotic upper bound with $t$. {But, we note that the SGD results in \citet{wen2024understanding} concerns track the trajectory near the river only, and it does not have a result on trajectory alignment. In conclusion, these theoretical alignment results are based on gradient descent rather than SGD.}\yaoqing{The river-valley loss landscape paper also studies SGD, right?}\shenyang{They don't have alignment result on SGD, SGD is an approximation to GD flow in distance case.}\yaoqing{In that case, you can perhaps say something like "We note that the results in XXX concerns XXX only, and it does not have a result on trajectory alignment."} Moreover, the works above are not specifically designed to study the loss landscape or the impact of step-size schedules on the alignment phenomenon. Furthermore, we cannot directly use their lemma to explain the observations reported by \citet{song2024does}.

\section{Preliminary}
\subsection{Common Notation}
Before moving on to the main content, we will first introduce some common notation. We denote \(\mathbb{R}^d\) as the \(d\)-dimensional Euclidean space. Bold lowercase letters (such as \(\bm{u}\), \(\bm{v}\)) denote column vectors; Bold uppercase letters (such as \(\mM\)) denote matrices; lowercase letters (such as \(a\), \(b\)) denote scalars. Sets are denoted by calligraphic letters (such as \(\mathcal{D}\), \(\mathcal{B}\)). The norm \(\|\cdot\|_p\) represents the \(p\)-norm for vectors or the \(p\)-operator norm for matrices; \(\|\cdot\|_F\) represents the Frobenius norm for matrices. $\text{Tr} (\mM) $ denotes the trace of matrix $\mM$.

\subsection{Setup and Assumptions}
We study a quadratic optimization problem under SGD with arbitrary noise of finite second moment, focusing on how the Hessian eigenspectrum and the noise covariance influence gradient alignment with the dominant eigenspace of the Hessian.

\paragraph{Quadratic Loss and Spectral Decomposition}
We consider the quadratic loss function
\[
L(\bm{x}) = \frac{1}{2} \bm{x}^\top \mA \bm{x}, \qquad \nabla L(\bm{x}) = \mA \bm{x},
\]
where $\mA \in \mathbb{R}^{d \times d}$ is symmetric positive definite. The matrix $\mA$ has the spectral decomposition
\[
\mA = \mU \mLambda \mU^\top, \quad \mLambda = \mathrm{diag}(\lambda_1, \dots, \lambda_d), \quad \lambda_1 \geq \cdots\geq \lambda_k  >\lambda_{k+1} \geq \cdots\geq \lambda_d > 0,
\] 
with orthonormal eigenvectors $\mU = [\bm{u}_1, \dots, \bm{u}_d]$. We express the state $\bm{x}_t$ in the eigenbasis as $\bm{x}_t = \sum_{i=1}^d c_{i,t} \bm{u}_i$, where $c_{i,t} := \langle \bm{x}_t, \bm{u}_i \rangle$.

\paragraph{Dominant and Bulk Spectral}
Recall that we partition the spectrum into a dominant block and a bulk block:
\[
\mathcal{D} = \{1, \dots, k(d)\}, \qquad \mathcal{B} = \{k(d)+1, \dots, d\}.
\]
Here, we assume $k(d)$ is a function of $d$. For simplicity, in the following, we denote $k(d)$ by $k$, where $k \in \{1, \dots, d-1\}$.
We define projection matrices for the dominant and bulk subspaces:
\[
\bm{P}^{\mathcal{D}} := \sum_{i \in \mathcal{D}} \bm{u}_i \bm{u}_i^\top, \quad \bm{P}^{\mathcal{B}} := \sum_{i \in \mathcal{B}} \bm{u}_i \bm{u}_i^\top, 
\]
and the squared alignment function \(\theta(\bm{x}_t)\) of the gradient \(\mA \bm{x}_t\) with the dominant subspace at time \(t\) define as
\[
\theta(\bm{x}_t) := 
\begin{cases}
\dfrac{\|\bm{P}^{\mathcal{D}} \nabla L(\bm{x}_t)\|_2^2}{\|\nabla L(\bm{x}_t)\|_2^2} = \dfrac{\|\bm{P}^{\mathcal{D}}\mA\bm{x}_t\|_2^2}{\|\mA\bm{x}_t\|_2^2} = \dfrac{\sum_{i \in \mathcal{D}} \lambda_i^2 c_{i,t}^2}{\sum_{i=1}^d \lambda_i^2 c_{i,t}^2} \in [0,1] & \text{if } \bm{x}_t \neq \bm{0}, \\
0 & \text{if } \bm{x}_t = \bm{0},
\end{cases}
\]
which captures the proportion of the gradient's norm in the top-$k$ dominant eigenspace (with the convention \(\theta(\bm{0}) = 0\) ensuring well-definedness in stochastic analysis). We will use $\theta_t$ for short in the later section. To analyze the alignment, we need to introduce the following notation for quantities defined in certain subspaces:
\begin{align}
\notag &\psi_{\mathcal D} := \sum_{i \in \mathcal D} \lambda_i^2, \quad \psi_{\mathcal B} := \sum_{i \in \mathcal B} \lambda_i^2,  \\
s_t^{\mathcal{D}} := \sum_{i \in \mathcal{D}} \lambda_i^2 c_{i,t}^2, \quad &s_t^{\mathcal{B}} := \sum_{i \in \mathcal{B}} \lambda_i^2 c_{i,t}^2, \quad s_t := s_t^{\mathcal{D}} + s_t^{\mathcal{B}} = \sum_{i=1}^d \lambda_i^2 c_{i,t}^2. \label{eq:definitions}
\end{align}
We assume the following eigenvalue gaps to ensure a clear separation between the bulk and dominant eigenvalues. The gap is defined at the boundary between the $k$-th and $(k+1)$-th eigenvalues.
\[
\mathrm{gap}_1 := \lambda_k - \lambda_{k+1} > 0, \quad \mathrm{gap}_2 := \lambda_k^2 - \lambda_{k+1}^2 > 0.
\]
\paragraph{SGD Update with Noise}
The SGD update with step size $\eta_t > 0$ is given by
\[
\bm{x}_{t+1} = \bm{x}_t - \eta_t (\mA \bm{x}_t + \vxi_t), \quad \mathbb{E}[\vxi_t]=  \bm{0}, \text{Cov}[\vxi_t]=\mSigma, \quad \mSigma = \mSigma^\top \succeq 0,
\]
where $\vxi_t\sim\mathcal{N}(\bm{0},\mSigma)$ is a Gaussian random vector. Define $\mC := \mU^\top \mSigma \mU$ as the noise covariance in the $\mA$-basis, with per-eigendirection variances $\kappa_i^2 := (\mC)_{ii} = \bm{u}_i^\top \mSigma \bm{u}_i$. We introduce the following quantities to represent block-wise noise energy:
\[
\, \quad e_{\mathcal{D}} := \sum_{i \in \mathcal{D}} \lambda_i^2 \kappa_i^2, \quad e_{\mathcal{B}} := \sum_{i \in \mathcal{B}} \lambda_i^2 \kappa_i^2.\
\]
The noise covariance eigenvalues $s_{\min} := \lambda_{\min}(\mSigma)$ and $s_{\max} := \lambda_{\max}(\mSigma)$ yield bounds:
\[
\,s_{\min} \psi_{\mathcal{D}} \leq e_{\mathcal{D}} \leq s_{\max} \psi_{\mathcal{D}}, \quad s_{\min} \psi_{\mathcal{B}} \leq e_{\mathcal{B}} \leq s_{\max} \psi_{\mathcal{B}}.\,
\]
\paragraph{Why this Model?}
This quadratic framework, by separately modeling the components of $\bm{x}_t$ in high-curvature (dominant) and low-curvature (bulk) directions as well as the corresponding noise($\vxi_t$) components, facilitates obtaining more fine-grained analytical conclusions and understanding the impact of alignment on optimization.
It provides a tractable model for studying how step sizes and noise geometry drive gradient alignment in SGD.
\begin{assumption}[Asymptotic Spectral Assumptions]\label{asp:standing}
When we consider the high-dimensional regime, where both $d$ and $k(d) \to \infty$, we assume the following conditions, summarized in Table~\ref{tab:assumptions}.
\begin{table}[h]
\centering
\caption{Asymptotic Spectral Assumptions}
\label{tab:assumptions}
\begin{tabular}{>{\raggedright\arraybackslash}p{4cm} >{\raggedright\arraybackslash}p{8cm}}
\toprule
\textbf{Assumption} & \textbf{Description} \\
\midrule
Trajectory boundedness & The state remains bounded: $\sup_t \limsup_{d \to \infty} \frac{1}{d} \sum_{i=1}^d c_{i,t}^2 < \infty$. \\
Block proportion & The subspace dimension ratio is a fixed constant: $\rho :=\frac{k}{d-k} \in (0, \infty)$. Which implies $k=\frac{\rho}{1+\rho} d$. \\
Block spectral moments & For $p \in \{2, 3, 4, 6, 8\}$, the block-wise spectral moments converge: $\frac{1}{k} \sum_{i \in \mathcal{D}} \lambda_i^p \to \lambda_{\mathcal{D},p} \in (0, \infty)$, $\frac{1}{d-k} \sum_{i \in \mathcal{B}} \lambda_i^p \to \lambda_{\mathcal{B},p} \in [0, \infty)$. \\
Noise spectral bounds & The noise covariance has a bounded trace: $Tr(\mSigma) \in (0, +\infty)$ \\
\bottomrule
\end{tabular}
\end{table}
\end{assumption}

\section{Step Size Condition Theory}\label{sec:step_size_condition}
For the SGD update $\bm{x}_{t+1}=\bm{x}_t-\eta_t(\mA \bm{x}_t+\vxi_t)$ with $\vxi_t\sim\mathcal{N}(\bm{0},\mSigma)$,
we define the adaptive critical step size at time $t$ as
\begin{equation}\label{eq:def-etatstar-body}
\eta_t^*(\bm{x}_t)
:=\frac{2\Big(\big(\sum_{i\in\mathcal B}\lambda_i^2 c_{i,t}^2\big)\big(\sum_{j\in\mathcal D}\lambda_j^3 c_{j,t}^2\big)
-\big(\sum_{i\in\mathcal D}\lambda_i^2 c_{i,t}^2\big)\big(\sum_{j\in\mathcal B}\lambda_j^3 c_{j,t}^2\big)\Big)}{
\big(\sum_{i\in\mathcal B}\lambda_i^2 c_{i,t}^2\big)\Big(\sum_{j\in\mathcal D}\lambda_j^4 c_{j,t}^2+e_{\mathcal D}\Big)
-\big(\sum_{i\in\mathcal D}\lambda_i^2 c_{i,t}^2\big)\Big(\sum_{j\in\mathcal B}\lambda_j^4 c_{j,t}^2+e_{\mathcal B}\Big)}.
\end{equation}
Recall $e_{\mathcal D}=\sum_{i\in\mathcal D}\lambda_i^2\kappa_i^2$, $e_{\mathcal B}=\sum_{i\in\mathcal B}\lambda_i^2\kappa_i^2$ with $\kappa_i^2=\bm u_i^\top\mSigma\bm u_i$.

\begin{theorem}[\emph{\textbf{Decrease Condition}}]\label{thm:step-dec-below}
Under Assumption \ref{asp:standing}, if $0<\eta_t<\eta_t^*(\bm{x}_t)$, then
\[
\lim_{d\to\infty}\,\mathbb E\big[\theta_{t+1}\,\big|\,\bm{x}_t\big]\ <\ \theta_t.
\]

\end{theorem}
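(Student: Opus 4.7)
The overall strategy is to reduce the stochastic ratio $\theta_{t+1}$ to a deterministic quantity via high-dimensional concentration, and then derive the critical step size by a direct algebraic manipulation that makes $\eta_t^*(\bm x_t)$ appear automatically.

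First, I would expand the SGD update in the eigenbasis: $c_{i,t+1} = (1-\eta_t\lambda_i)c_{i,t} - \eta_t \xi_{i,t}$, where $\xi_{i,t} := \bm{u}_i^\top \vxi_t$ has mean zero and variance $\kappa_i^2$. Taking the noise expectation gives $\mathbb{E}[c_{i,t+1}^2 \mid \bm{x}_t] = (1-\eta_t\lambda_i)^2 c_{i,t}^2 + \eta_t^2 \kappa_i^2$, and summing the $\lambda_i^2$-weighted version over each block yields the closed forms $\mathbb{E}[s_{t+1}^{\mathcal S} \mid \bm{x}_t] = s_t^{\mathcal S} - 2\eta_t \sum_{i\in\mathcal S}\lambda_i^3 c_{i,t}^2 + \eta_t^2 \bigl(\sum_{i\in\mathcal S}\lambda_i^4 c_{i,t}^2 + e_{\mathcal S}\bigr)$ for $\mathcal S\in\{\mathcal D,\mathcal B\}$. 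These are the only moments needed from the Gaussian noise.

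Next I would exchange the expectation with the ratio. Under Assumption~\ref{asp:standing}, $s_{t+1}^{\mathcal S}$ is a shifted Gaussian quadratic form in $\vxi_t$ whose deterministic part is $\Theta(d)$ (by trajectory boundedness together with the block spectral moments), whose linear-in-$\vxi_t$ contribution has variance $O(d)$, and whose pure-quadratic contribution has variance $O(1)$ (both using the bounded-trace condition on $\mSigma$). Consequently $(s_{t+1}^{\mathcal S} - \mathbb{E}[s_{t+1}^{\mathcal S}\mid \bm x_t])/d \to 0$ in probability, and since $\lambda_d>0$ together with $\bm x_t\neq 0$ keeps $s_{t+1}$ bounded away from zero with high probability, the continuous mapping theorem plus bounded convergence (exploiting $\theta_{t+1}\in[0,1]$) yields $\lim_{d\to\infty}\mathbb{E}[\theta_{t+1} \mid \bm{x}_t] = \lim_{d\to\infty}\mathbb{E}[s_{t+1}^{\mathcal D} \mid \bm{x}_t]/\mathbb{E}[s_{t+1} \mid \bm{x}_t]$.

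With the exchange justified, the inequality $\mathbb{E}[s_{t+1}^{\mathcal D}\mid\bm x_t]/\mathbb{E}[s_{t+1}\mid\bm x_t] < s_t^{\mathcal D}/s_t$ is equivalent, after cross-multiplication and the identity $s_{t+1}=s_{t+1}^{\mathcal D}+s_{t+1}^{\mathcal B}$, to $-2\eta_t N + \eta_t^2 D < 0$, where $N$ and $D$ are exactly (up to an overall factor of two) the numerator and denominator of Eq.~\eqref{eq:def-etatstar-body}, since the $\eta_t^0$-terms cancel identically. Observing that $N/(s_t^{\mathcal D} s_t^{\mathcal B})$ is a Chebyshev-style difference of $\lambda_i$-weighted means over $\mathcal D$ versus $\mathcal B$, it is bounded below by $\mathrm{gap}_1>0$; the analogous argument with $\lambda_i^2$-weights gives a lower bound of $\mathrm{gap}_2$ on the $B$-contribution to $D/(s_t^{\mathcal D} s_t^{\mathcal B})$, while the possibly indefinite piece $s_t^{\mathcal B}e_{\mathcal D}-s_t^{\mathcal D}e_{\mathcal B}$ is $O(d)$ and hence asymptotically dominated by the $O(d^2)$ $B$-term under the bounded-trace assumption. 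Dividing by $\eta_t>0$ produces precisely $\eta_t < 2N/D = \eta_t^*(\bm{x}_t)$, as required.

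The step I expect to require the most care is the concentration argument: propagating concentration of a Gaussian quadratic form through the nonlinear ratio $s_{t+1}^{\mathcal D}/s_{t+1}$, uniformly in $d$, while ruling out the rare event that the denominator is atypically small. The bounded-trace assumption on $\mSigma$ is essential — without it the noise energy would scale with $d$ and the pure-quadratic variance could be comparable to the mean. A secondary subtlety is verifying the sign of $D$ in the pre-limit, which is handled by the $\mathrm{gap}_2 \cdot \Theta(d^2)$ dominance over the $O(d)$ $e$-contribution described above, so that $\eta_t^*(\bm x_t)$ is positive and the decrease condition $0<\eta_t<\eta_t^*(\bm x_t)$ is nonvacuous in the $d\to\infty$ regime.
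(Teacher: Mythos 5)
Your proposal is substantively correct and follows the same route as the paper: the paper first computes the block conditional moments (Lemma~\ref{lem:blk-exp-t-app}), establishes the asymptotic interchange of expectation and ratio via concentration of the normalized block sums plus the continuous mapping theorem (Lemma~\ref{lem:ratio-asymp-app}), and reduces the one-step comparison to the sign of the quadratic $p_t\eta_t^2+q_t\eta_t$ with $q_t<0$ always (Lemma~\ref{lem:quad-t-app}), which is exactly your cross-multiplication, the cancellation of the $\eta_t^0$-terms, and the Chebyshev-style weighted-mean gap for the sign of the linear coefficient. The division by $\eta_t>0$ then recovers $\eta_t<-q_t/p_t=\eta_t^*$ when $p_t>0$, as in your final step.

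The one place you overreach is the ``secondary subtlety'' about the sign of $D\equiv p_t$. You claim $D>0$ asymptotically under the bounded-trace assumption via a $\mathrm{gap}_2\cdot\Theta(d^2)$ versus $O(d)$ dominance, but that argument implicitly requires $\theta_t$ to be bounded away from $1$ so that both $s_t^{\mathcal D}$ and $s_t^{\mathcal B}$ are $\Theta(d)$; the paper's own Theorem~\ref{thm:step-dec-large} exhibits a high-alignment regime $\theta_t\ge\theta_t^*$ in which $p_t\le 0$, so the sign of $D$ is genuinely state-dependent and not settled by a blanket scaling argument. Fortunately your proof does not actually need this claim: since $q_t<0$ always, $\eta_t^*=-q_t/p_t>0$ if and only if $p_t>0$, so the hypothesis $0<\eta_t<\eta_t^*(\bm x_t)$ already forces $p_t>0$ whenever it is satisfiable, and the theorem is vacuously true whenever $p_t\le 0$. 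The positivity of $D$ should be presented as a consequence of the non-vacuity of the hypothesis rather than as a free-standing asymptotic fact; the scaling heuristic is only reliable with the additional proviso on $\theta_t$.
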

The detailed proof of Theorem \ref{thm:step-dec-below} is in Appendix~\ref{proof-step-size-theory}.

\noindent\textbf{Interpretation}
When the step size is smaller than the adaptive threshold \eqref{eq:def-etatstar-body}, the expected alignment function decreases in one step.
This is the ``small-step'' phase. The adaptive threshold depends on $(\mA,\mSigma)$ and the current state $\bm{x}_t$ through the function (\ref{eq:def-etatstar-body}). 
\begin{theorem}[\emph{\textbf{Increase Condition}}]\label{thm:step-inc-above}
Under Assumption \ref{asp:standing}, let
\[
\,g_{\mathrm{gap}}:=\frac{1}{1+\frac{s_{\max}}{s_{\min}}\cdot \frac{1}{\rho}\left(\frac{\lambda_{k+1}}{\lambda_k}\right)^2}\in(0,1).\,
\]
For $t$ and $\bm{x}_t$. If $\theta_t\le g_{\mathrm{gap}}$ and $\eta_t>\eta_t^*(\bm{x}_t)$, then
\[
\lim_{d\to\infty}\,\mathbb E\big[\theta_{t+1}\,\big|\,\bm{x}_t\big]\ >\ \theta_t.
\]

\end{theorem}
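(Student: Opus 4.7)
The plan is to mirror the strategy of Theorem~\ref{thm:step-dec-below}: reduce the random one-step increment $\mathbb{E}[\theta_{t+1}\mid\bm{x}_t]$ to a deterministic inequality on block-wise spectral quadratics, and then isolate exactly where the threshold $g_{\mathrm{gap}}$ must enter. The difference from the decrease condition is that here one must additionally certify the sign of the denominator of $\eta_t^*(\bm{x}_t)$, and that is precisely the role of the low-alignment hypothesis.

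First I would expand $c_{i,t+1}=(1-\eta_t\lambda_i)c_{i,t}-\eta_t\xi_{i,t}$ with $\xi_{i,t}=\bm{u}_i^\top\vxi_t$ to obtain $\mathbb{E}[s_{t+1}^{\mathcal{D}}\mid\bm{x}_t]=s_t^{\mathcal{D}}-2\eta_t A_{\mathcal{D}}+\eta_t^2 B_{\mathcal{D}}$ and analogously for $\mathcal{B}$, where $A_{\mathcal{S}}:=\sum_{i\in\mathcal{S}}\lambda_i^3 c_{i,t}^2$ and $B_{\mathcal{S}}:=\sum_{i\in\mathcal{S}}\lambda_i^4 c_{i,t}^2+e_{\mathcal{S}}$. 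Under Assumption~\ref{asp:standing} the random variables $s_{t+1}^{\mathcal{D}}$ and $s_{t+1}$ are Gaussian quadratic forms whose conditional variances vanish at rate $O(1/d)$, so a Chebyshev/delta-method argument (the same one used in Theorem~\ref{thm:step-dec-below}) yields $\lim_{d\to\infty}\mathbb{E}[\theta_{t+1}\mid\bm{x}_t]=\lim_{d\to\infty}\mathbb{E}[s_{t+1}^{\mathcal{D}}\mid\bm{x}_t]/\mathbb{E}[s_{t+1}\mid\bm{x}_t]$. Cross-multiplying $\mathbb{E}[s_{t+1}^{\mathcal{D}}]/\mathbb{E}[s_{t+1}]>s_t^{\mathcal{D}}/s_t$ and cancelling $s_t^{\mathcal{D}}s_t^{\mathcal{B}}$ reduces the target to the polynomial inequality $\eta_t\bigl(s_t^{\mathcal{B}} B_{\mathcal{D}}-s_t^{\mathcal{D}} B_{\mathcal{B}}\bigr)>2\bigl(s_t^{\mathcal{B}} A_{\mathcal{D}}-s_t^{\mathcal{D}} A_{\mathcal{B}}\bigr)$, which by \eqref{eq:def-etatstar-body} coincides with $\eta_t>\eta_t^*(\bm{x}_t)$ \emph{provided both brackets are strictly positive}.

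Positivity of the first-order bracket is immediate from the spectral gap: the ratios $A_{\mathcal{D}}/s_t^{\mathcal{D}}$ and $A_{\mathcal{B}}/s_t^{\mathcal{B}}$ are weighted averages of the $\lambda_i$ on $\mathcal{D}$ and $\mathcal{B}$ respectively, so $A_{\mathcal{D}}/s_t^{\mathcal{D}}\ge\lambda_k>\lambda_{k+1}\ge A_{\mathcal{B}}/s_t^{\mathcal{B}}$. The main obstacle is the second bracket $s_t^{\mathcal{B}} B_{\mathcal{D}}-s_t^{\mathcal{D}} B_{\mathcal{B}}$; I would split it into a state part $s_t^{\mathcal{B}}\sum_{i\in\mathcal{D}}\lambda_i^4 c_{i,t}^2-s_t^{\mathcal{D}}\sum_{i\in\mathcal{B}}\lambda_i^4 c_{i,t}^2$ (non-negative by the same weighted-average trick applied to $\lambda_i^2$) and a noise part $s_t^{\mathcal{B}} e_{\mathcal{D}}-s_t^{\mathcal{D}} e_{\mathcal{B}}$, which is \emph{not} automatically non-negative. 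Using $e_{\mathcal{D}}\ge s_{\min}\psi_{\mathcal{D}}$, $e_{\mathcal{B}}\le s_{\max}\psi_{\mathcal{B}}$, and the crude block bound $\psi_{\mathcal{D}}/\psi_{\mathcal{B}}\ge\rho(\lambda_k/\lambda_{k+1})^2$, the noise part is non-negative iff $s_t^{\mathcal{D}}/s_t^{\mathcal{B}}\le(s_{\min}/s_{\max})\rho(\lambda_k/\lambda_{k+1})^2$. Rewriting $\theta_t=s_t^{\mathcal{D}}/(s_t^{\mathcal{D}}+s_t^{\mathcal{B}})$ shows this condition is exactly $\theta_t\le g_{\mathrm{gap}}$, which is the hypothesis; combining the three positivity facts closes the argument.

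The step I expect to require the most technical care is the concentration reduction: justifying $\mathbb{E}[X/Y]\to\mathbb{E}[X]/\mathbb{E}[Y]$ for the ratio $\theta_{t+1}$ needs a quantitative lower bound on the denominator $s_{t+1}$ away from zero. I would handle this by combining trajectory boundedness from Assumption~\ref{asp:standing} with Hanson--Wright-type Gaussian concentration of the quadratic form $s_{t+1}$ around its positive mean, and working on the high-probability event where $s_{t+1}$ stays within a constant factor of that mean so that $(x,y)\mapsto x/y$ is Lipschitz.
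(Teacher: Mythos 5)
Your proof is correct and follows essentially the same route as the paper: you reduce the claim to the positivity of the quadratic coefficient $p_t$ (in the paper's notation) of the expected comparison functional, split $p_t$ into a signal part handled by the spectral gap and a noise part handled by $e_{\mathcal D}\ge s_{\min}\psi_{\mathcal D}$, $e_{\mathcal B}\le s_{\max}\psi_{\mathcal B}$, and observe that the resulting threshold on $s_t^{\mathcal D}/s_t^{\mathcal B}$ is exactly $g_{\mathrm{gap}}/(1-g_{\mathrm{gap}})$ after bounding $\psi_{\mathcal D}/\psi_{\mathcal B}\ge\rho(\lambda_k/\lambda_{k+1})^2$. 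Two small points: (i) the equivalence of $\eta_t(s_t^{\mathcal B}B_{\mathcal D}-s_t^{\mathcal D}B_{\mathcal B})>2(s_t^{\mathcal B}A_{\mathcal D}-s_t^{\mathcal D}A_{\mathcal B})$ with $\eta_t>\eta_t^*$ only requires the \emph{second} bracket (the denominator of $\eta_t^*$, i.e.\ $p_t$) to be positive, not both, though your positivity argument for the first bracket is correct and does ensure $\eta_t^*>0$; (ii) the paper handles the concentration step more lightly, via a Chebyshev weak law on block averages (Lemma~\ref{lem:wlln-blocks}) plus the bounded continuous-mapping argument (Lemma~\ref{lem:cmt-dc}) rather than Hanson--Wright and a high-probability Lipschitz event, so there is no need for a separate lower bound on $s_{t+1}$; your unnormalized statement that the "conditional variances vanish at rate $O(1/d)$'' should refer to the block-averaged quantities $|\mathcal S|^{-1}s_{t+1}^{\mathcal S}$.
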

 The detailed proof of Theorem \ref{thm:step-inc-above} is in Appendix \ref{proof-step-size-theory}.\\
\noindent\textbf{Interpretation}
If the current alignment is in the ``small regime'' (meaning less than $g_{\mathrm{gap}}$), and the current step size $\eta_t$ exceeds the adaptive threshold, the dominant alignment increases. Notice that $g_{\mathrm{gap}}$ is an upper bound which does not depend on $\bm{x}_t$. As the dominant-bulk gap grows ($\lambda_k/\lambda_{k+1}\to\infty$), $g_{\mathrm{gap}}\to 1$, so the ``increase region'' $[0,g_{\mathrm{gap}})$ expands to nearly all $\theta_t\in[0,1)$. 
\begin{theorem}[\emph{\textbf{Large Alignment Regime Condition}}]\label{thm:step-dec-large}
Under Assumption \ref{asp:standing}, there exists a critical alignment threshold $\theta^*_t \in (0,1)$ such that if the current alignment $\theta_t \ge \theta^*_t$, then for any positive step size $\eta_t > 0$, the expected alignment decreases:
\[
\lim_{d\to\infty}\,\mathbb E\big[\theta_{t+1}\,\big|\,\bm{x}_t\big]\ <\ \theta_t.
\]
The threshold is given by $\theta^*_t := r_{0,t}/(1+r_{0,t})$, where $r_{0,t}$ is the positive root of the quadratic equation:
\[
a_t^{\mathrm{aux}} r^2+(a_t^{\mathrm{aux}}-m_t^{\mathrm{aux}}-h_t^{\mathrm{aux}})r-h_t^{\mathrm{aux}} = 0.
\]
Recall that $\psi_{\mathcal S} := \sum_{i \in \mathcal S} \lambda_i^2$ ($\mathcal{S} \in \{\mathcal{D}, \mathcal{B}\}$) and $s_t:=\sum_{i=1}^d \lambda_i^2 c_{i,t}^2$, the coefficients in the quadratic equation are defined as:
\begin{align*}
a_t^{\mathrm{aux}} &:=s_{\min}\psi_{\mathcal B}, \\
h_t^{\mathrm{aux}} &:=s_{\max}\psi_{\mathcal D}, \\
m_t^{\mathrm{aux}} &:=s_t(\lambda_1^2-\lambda_d^2).
\end{align*}
Furthermore, as $\psi_{\mathcal D}/\psi_{\mathcal B}\to\infty$, we have $\theta^*_t\to 1$. 
\end{theorem}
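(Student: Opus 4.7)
The plan is to reduce the one-step alignment change to a sign analysis of a scalar quadratic in $\eta_t$ whose coefficients depend on state moments, and then to bound those moments by worst-case spectral quantities so that the sufficient condition becomes exactly the quadratic-in-$r$ inequality stated in the theorem. I first write the SGD update coordinate-wise, $c_{i,t+1} = (1-\eta_t\lambda_i)c_{i,t} - \eta_t\xi_{i,t}$, take the conditional second moment $\mathbb{E}[c_{i,t+1}^2 \mid \bm{x}_t] = (1-\eta_t\lambda_i)^2 c_{i,t}^2 + \eta_t^2\kappa_i^2$, and sum with weights $\lambda_i^2$ over the dominant and bulk blocks to obtain
\[
\mathbb{E}\big[s^{\mathcal{S}}_{t+1}\mid \bm{x}_t\big] = s^{\mathcal{S}}_t - 2\eta_t P^{\mathcal{S}}_t + \eta_t^2\big(Q^{\mathcal{S}}_t + e_{\mathcal{S}}\big),\qquad \mathcal{S}\in\{\mathcal{D},\mathcal{B}\},
\]
where $P^{\mathcal{S}}_t := \sum_{i\in\mathcal{S}}\lambda_i^3 c_{i,t}^2$ and $Q^{\mathcal{S}}_t := \sum_{i\in\mathcal{S}}\lambda_i^4 c_{i,t}^2$. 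Under Assumption~\ref{asp:standing}, a self-averaging argument (controlling the variance of $s^{\mathcal{D}}_{t+1}, s^{\mathcal{B}}_{t+1}$ via the bounded $\mathrm{Tr}(\mSigma)$ and block-moment convergence) yields $\lim_{d\to\infty}\mathbb{E}[\theta_{t+1}\mid\bm{x}_t] = \lim_{d\to\infty}\mathbb{E}[s^{\mathcal{D}}_{t+1}\mid\bm{x}_t] / \mathbb{E}[s_{t+1}\mid\bm{x}_t]$.

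Clearing positive denominators in $\mathbb{E}[\theta_{t+1}\mid\bm{x}_t] < \theta_t$ turns the inequality into $\alpha_t \eta_t^2 - 2\beta_t \eta_t < 0$ with
\[
\alpha_t := (Q^{\mathcal{D}}_t + e_{\mathcal{D}})\,s^{\mathcal{B}}_t - s^{\mathcal{D}}_t\,(Q^{\mathcal{B}}_t + e_{\mathcal{B}}),\qquad
\beta_t := P^{\mathcal{D}}_t\,s^{\mathcal{B}}_t - s^{\mathcal{D}}_t\,P^{\mathcal{B}}_t.
\]
This must hold for every $\eta_t > 0$, and an elementary root analysis shows that a sufficient (and essentially necessary) condition is $\alpha_t \le 0$ together with $\beta_t > 0$. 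The second is a free consequence of the spectral gap: $P^{\mathcal{D}}_t \ge \lambda_k s^{\mathcal{D}}_t$ and $P^{\mathcal{B}}_t \le \lambda_{k+1} s^{\mathcal{B}}_t$ imply $\beta_t \ge \mathrm{gap}_1 \cdot s^{\mathcal{D}}_t s^{\mathcal{B}}_t > 0$ whenever both sub-energies are positive, so the entire proof reduces to producing a state-independent sufficient condition for $\alpha_t \le 0$.

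To extract such a condition I upper-bound $\alpha_t$ using $Q^{\mathcal{D}}_t \le \lambda_1^2 s^{\mathcal{D}}_t$, $Q^{\mathcal{B}}_t \ge \lambda_d^2 s^{\mathcal{B}}_t$, $e_{\mathcal{D}}\le s_{\max}\psi_{\mathcal{D}}$, and $e_{\mathcal{B}}\ge s_{\min}\psi_{\mathcal{B}}$, which gives
\[
\alpha_t \le (\lambda_1^2-\lambda_d^2)\,s^{\mathcal{D}}_t s^{\mathcal{B}}_t + s_{\max}\psi_{\mathcal{D}}\,s^{\mathcal{B}}_t - s_{\min}\psi_{\mathcal{B}}\,s^{\mathcal{D}}_t.
\]
Substituting $s^{\mathcal{D}}_t = s_t r/(1+r)$ and $s^{\mathcal{B}}_t = s_t/(1+r)$ with $r := s^{\mathcal{D}}_t/s^{\mathcal{B}}_t$, and multiplying the inequality (right-hand side $\le 0$) by $(1+r)^2/s_t > 0$, produces exactly
\[
a_t^{\mathrm{aux}} r^2 + (a_t^{\mathrm{aux}} - m_t^{\mathrm{aux}} - h_t^{\mathrm{aux}})\,r - h_t^{\mathrm{aux}} \ge 0
\]
with the $a_t^{\mathrm{aux}}, h_t^{\mathrm{aux}}, m_t^{\mathrm{aux}}$ of the theorem. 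Since the product of roots equals $-h_t^{\mathrm{aux}}/a_t^{\mathrm{aux}} < 0$, this quadratic has a unique positive root $r_{0,t}$ and is nonnegative iff $r \ge r_{0,t}$; inverting $\theta = r/(1+r)$ delivers the threshold $\theta^*_t = r_{0,t}/(1+r_{0,t})$.

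For the asymptotic $\theta^*_t\to 1$ as $\psi_{\mathcal{D}}/\psi_{\mathcal{B}}\to\infty$, fix any $C>0$ and evaluate the quadratic at $r=C$: its value $a_t^{\mathrm{aux}}C(C+1) - m_t^{\mathrm{aux}} C - h_t^{\mathrm{aux}}(C+1)$ tends to $-\infty$ as $h_t^{\mathrm{aux}}\to\infty$ (using $m_t^{\mathrm{aux}}\ge 0$), and a positive-leading parabola that is negative at $C$ must have its positive root $r_{0,t}>C$; as $C$ was arbitrary, $r_{0,t}\to\infty$ and $\theta^*_t\to 1$. The main obstacle I anticipate is the concentration step: the noise entries $\xi_{i,t}$ in the $\mA$-eigenbasis are generally correlated (the off-diagonal of $\mC$ need not vanish), so justifying $\lim_{d\to\infty}\mathbb{E}[\theta_{t+1}\mid\bm{x}_t]$ as a ratio of conditional expectations requires combining the bounded-trace assumption with a careful variance estimate for the quadratic forms $s^{\mathcal{D}}_{t+1}$ and $s^{\mathcal{B}}_{t+1}$; everything downstream is routine algebra on a scalar quadratic.
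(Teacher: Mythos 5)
Your proposal is correct and follows essentially the same route as the paper's appendix proof: both reduce the one-step alignment drift to the sign of a quadratic $p_t\eta_t^2 + q_t\eta_t$ (your $\alpha_t\eta_t^2 - 2\beta_t\eta_t$ is identical after noting $q_t = -2\beta_t$), then bound $p_t$ from above using $Q_t^{\mathcal{D}}\le\lambda_1^2 s_t^{\mathcal{D}}$, $Q_t^{\mathcal{B}}\ge\lambda_d^2 s_t^{\mathcal{B}}$, $e_{\mathcal{D}}\le s_{\max}\psi_{\mathcal{D}}$, $e_{\mathcal{B}}\ge s_{\min}\psi_{\mathcal{B}}$, and rewrite the sufficient condition $p_t\le 0$ in terms of $r=\theta_t/(1-\theta_t)$ to arrive at exactly the stated quadratic-in-$r$ inequality with a unique positive root. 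Your explicit argument for the asymptotic $\theta_t^*\to 1$ (evaluating the quadratic at arbitrary $r=C$ and using $h_t^{\mathrm{aux}}/a_t^{\mathrm{aux}}\to\infty$) is a useful addition that the paper's proof block omits, and your flagged concern about correlated noise in the eigenbasis is exactly what the paper resolves with its Gaussian-quadratic-form variance lemma.
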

 The detailed proof of Theorem \ref{thm:step-dec-large} is in Appendix \ref{proof-step-size-theory}.
 {\begin{theorem}[\emph{\textbf{Asymptotic rate of $\theta_t^*$}}]\label{thm:theta-star-rate}
Under Assumption \ref{asp:standing}, let $m:=\lambda_k/\lambda_{k+1}>1$. There exist constants $\alpha,\beta>0$ such that
\[
1 - \frac{\beta}{m^2}
\;\le\;
\theta_t^*
\;\le\;
1 - \frac{\alpha}{m^2}.
\]
In particular,
\[
\theta_t^* = 1 - \Theta\!\left(\frac{1}{m^2}\right).
\]
\textbf{Interpretation} Theorem \ref{thm:step-dec-large} and \ref{thm:theta-star-rate}  shows that a very large alignment value can be \emph{self-correct}: regardless of the step size, the expected alignment decreases for $\theta_t$ beyond a computable threshold $\theta^*_t$. As the dominant-bulk gap grows ($\lambda_k/\lambda_{k+1}\to\infty$), $\theta^*_t\to 1$, so the ``decrease'' region $(\theta^*_t,1)$ shrinks toward the maximum alignment value of 1.
\end{theorem}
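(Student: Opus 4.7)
The plan is to reduce the claim on $\theta_t^*$ to a two-sided bound on $r_{0,t}$, sandwich $r_{0,t}$ by sign evaluation of its defining quadratic, and then read off the $m$-dependence of the sandwich from Assumption~\ref{asp:standing}. Since $\theta_t^* = r_{0,t}/(1+r_{0,t})$ is strictly increasing and $1 - \theta_t^* = 1/(1+r_{0,t})$, the target $\theta_t^* = 1 - \Theta(m^{-2})$ is equivalent to $r_{0,t} = \Theta(m^2)$, so I will aim to trap $r_{0,t}$ between two explicit quantities of order $m^2$.

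First I would set up the sandwich. Writing $a := a_t^{\mathrm{aux}}$, $h := h_t^{\mathrm{aux}}$, $m_\star := m_t^{\mathrm{aux}}$ (not to be confused with the spectral gap $m = \lambda_k/\lambda_{k+1}$), and $f(r) := a r^2 + (a - m_\star - h)\,r - h$, direct substitution yields $f(h/a) = -m_\star h/a \le 0$ and $f((h+m_\star)/a) = m_\star \ge 0$. Since the product of the roots of $f$ equals $-h/a < 0$, there is exactly one positive root $r_{0,t}$, and convexity of $f$ together with these two sign evaluations forces
\[
\frac{h}{a} \;\le\; r_{0,t} \;\le\; \frac{h + m_\star}{a}.
\]

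Next I would show both endpoints are $\Theta(m^2)$. For $h/a = (s_{\max}/s_{\min})\,\psi_{\mathcal D}/\psi_{\mathcal B}$, the elementary bounds $\lambda_i \ge \lambda_k$ on $\mathcal D$ and $\lambda_i \le \lambda_{k+1}$ on $\mathcal B$ give $\psi_{\mathcal D}/\psi_{\mathcal B} \ge \rho m^2$, while the block spectral moments $\tfrac{1}{k}\sum_{i\in\mathcal D}\lambda_i^p \to \lambda_{\mathcal D,p}$ (and the bulk analogue) supply the matching $O(\rho m^2)$ upper bound once the within-block spreads $\lambda_1/\lambda_k$ and $\lambda_{k+1}/\lambda_d$ are controlled; with the bounded noise ratio $s_{\max}/s_{\min}$, this produces $h/a = \Theta(m^2)$, and the lower endpoint of the sandwich immediately delivers $\theta_t^* \ge 1 - \beta/m^2$. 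For the upper endpoint, $m_\star/a = s_t(\lambda_1^2 - \lambda_d^2)/(s_{\min}\,\psi_{\mathcal B})$ is estimated by combining trajectory boundedness $\tfrac{1}{d}\sum c_{i,t}^2 = O(1)$ (giving $s_t = O(\lambda_k^2 d)$), the moment control $\lambda_1^2 - \lambda_d^2 = O(\lambda_k^2)$, and $\psi_{\mathcal B} = \Omega((d-k)\lambda_{k+1}^2)$, which together yield $m_\star/a = O(m^2)$; combined with $h/a = O(m^2)$ this caps $r_{0,t} = O(m^2)$ and hence $\theta_t^* \le 1 - \alpha/m^2$.

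The hard part will be the upper estimate on $m_\star/a$, since $m_\star$ contains the factor $s_t(\lambda_1^2 - \lambda_d^2)$ that is not manifestly of the same order as $h$. The argument goes through only because Assumption~\ref{asp:standing} simultaneously controls the within-block spectral spread (through the finite block moments, which pin $\lambda_1$ to the scale of $\lambda_k$ and $\lambda_d$ to the scale of $\lambda_{k+1}$ up to constants absorbed into $\alpha$ and $\beta$) and the state norm (through trajectory boundedness). If either cancellation fails, $m_\star/a$ could dominate with a higher power of $m$ and yield a weaker rate than $m^{-2}$.
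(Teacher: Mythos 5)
Your proposal follows essentially the same route as the paper. The paper also reduces the claim to bounding $r_{0,t}$, obtains the lower endpoint $r_{0,t}\ge h_t^{\mathrm{aux}}/a_t^{\mathrm{aux}}$ (there via the explicit quadratic formula with $m_t^{\mathrm{aux}}\ge 0$), and then uses the same spectral-block inequalities $\psi_{\mathcal D}\ge k\lambda_k^2$, $\psi_{\mathcal B}\le (d-k)\lambda_{k+1}^2$ to read off the $m^2$-scaling.

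The one place your treatment is genuinely cleaner is the upper bound. You obtain the two-sided sandwich
\[
\frac{h_t^{\mathrm{aux}}}{a_t^{\mathrm{aux}}}\;\le\;r_{0,t}\;\le\;\frac{h_t^{\mathrm{aux}}+m_t^{\mathrm{aux}}}{a_t^{\mathrm{aux}}}
\]
by two direct sign evaluations of the defining quadratic, which makes the upper endpoint exactly as explicit as the lower one. The paper instead expands the quadratic formula and then asserts, without computing the endpoint, that \emph{some} constant $\alpha$ gives $r_{0,t}\le\alpha m^2$; your $\bigl(h_t^{\mathrm{aux}}+m_t^{\mathrm{aux}}\bigr)/a_t^{\mathrm{aux}}$ is what that $\alpha m^2$ secretly is. You are also right that the crux is $m_t^{\mathrm{aux}}/a_t^{\mathrm{aux}}=O(m^2)$: this uses trajectory boundedness for $s_t$ together with bounded within-block spreads $\lambda_1/\lambda_k=O(1)$ and $\lambda_{k+1}/\lambda_d=O(1)$, which are only implicit in the block-spectral-moment assumption (and on which the paper's own upper bound relies just as heavily, without acknowledging it). The lower endpoint $h_t^{\mathrm{aux}}/a_t^{\mathrm{aux}}\ge(s_{\max}/s_{\min})\rho m^2$ follows cleanly from the one-sided bounds $\psi_{\mathcal D}\ge k\lambda_k^2$, $\psi_{\mathcal B}\le(d-k)\lambda_{k+1}^2$, and that portion of your argument is airtight. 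In short, correct, same core idea, somewhat tidier execution than the paper.
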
}
\begin{theorem}[\emph{\textbf{Separation of Alignment Regimes}}]\label{thm:regime-separation}
Under Assumption \ref{asp:standing}, for any nontrivial problem with a non-zero spectral gap $(\mathrm{gap}_2 > 0)$ and bounded noise $(s_{\max} < \infty)$, the low-alignment threshold $g_{\mathrm{gap}}$ is strictly less than the high-alignment threshold $\theta^*_t$:
\[
g_{\mathrm{gap}} < \theta^*_t.
\]
\end{theorem}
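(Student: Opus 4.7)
The plan is to reduce the claimed inequality to a scalar comparison between the two ``odds ratios'' and then exploit the quadratic structure that defines $\theta^*_t$.

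First, I would rewrite $g_{\mathrm{gap}}$ in the same form as $\theta^*_t$. A direct manipulation gives
\[
g_{\mathrm{gap}} \;=\; \frac{r_g}{1+r_g}, \qquad r_g \;:=\; \frac{s_{\min}}{s_{\max}}\,\rho\!\left(\frac{\lambda_k}{\lambda_{k+1}}\right)^{\!2},
\]
while $\theta^*_t = r_{0,t}/(1+r_{0,t})$ by definition. Since $r\mapsto r/(1+r)$ is strictly increasing on $[0,\infty)$, the theorem is equivalent to showing $r_g < r_{0,t}$.

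Next, I would analyze the quadratic $p(r):=a_t^{\mathrm{aux}} r^2+(a_t^{\mathrm{aux}}-m_t^{\mathrm{aux}}-h_t^{\mathrm{aux}})r-h_t^{\mathrm{aux}}$ whose positive root is $r_{0,t}$. The key algebraic observation is the factorization
\[
p(r) \;=\; (r+1)\bigl(a_t^{\mathrm{aux}} r - h_t^{\mathrm{aux}}\bigr) \;-\; m_t^{\mathrm{aux}}\,r.
\]
Setting $R := h_t^{\mathrm{aux}}/a_t^{\mathrm{aux}} = s_{\max}\psi_{\mathcal D}/(s_{\min}\psi_{\mathcal B})$ kills the first bracket, so $p(R) = -m_t^{\mathrm{aux}} R$. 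Under $\mathrm{gap}_2>0$ we have $\lambda_1^2-\lambda_d^2 \geq \lambda_k^2-\lambda_{k+1}^2 > 0$, and for nontrivial $\bm{x}_t$ we have $s_t>0$, so $m_t^{\mathrm{aux}}>0$ and thus $p(R)<0$. Since $p$ opens upward with $p(0)=-h_t^{\mathrm{aux}}<0$, its unique positive root must satisfy $r_{0,t} > R$.

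It then remains to show $r_g \le R$, which is where the crude spectral bounds used to define $g_{\mathrm{gap}}$ enter. Bounding eigenvalues block-wise gives $\psi_{\mathcal D}\ge k\lambda_k^2$ and $\psi_{\mathcal B}\le (d-k)\lambda_{k+1}^2$, hence
\[
R \;\ge\; \frac{s_{\max}}{s_{\min}}\cdot\rho\!\left(\frac{\lambda_k}{\lambda_{k+1}}\right)^{\!2} \;\ge\; \frac{s_{\min}}{s_{\max}}\cdot\rho\!\left(\frac{\lambda_k}{\lambda_{k+1}}\right)^{\!2} \;=\; r_g,
\]
where the second inequality uses $s_{\max}\ge s_{\min}$. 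Chaining $r_g \le R < r_{0,t}$ and applying the monotonic transform $r\mapsto r/(1+r)$ yields $g_{\mathrm{gap}} < \theta^*_t$.

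The proof is essentially bookkeeping; the only subtle point is ensuring strictness. This is guaranteed by the two standing hypotheses of the theorem ($\mathrm{gap}_2>0$ giving $m_t^{\mathrm{aux}}>0$, and $s_{\max}<\infty$ keeping $r_g$ finite), together with $\bm{x}_t\neq \bm 0$ so that $s_t>0$. I expect no real obstacle here; the conceptual content is that $g_{\mathrm{gap}}$ is derived by replacing the exact block second-moments with the worst-case bounds $k\lambda_k^2$ and $(d-k)\lambda_{k+1}^2$, while $\theta^*_t$ retains the exact moments plus an extra favorable term $m_t^{\mathrm{aux}}$, so the former is automatically more conservative.
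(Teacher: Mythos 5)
Your proof is correct but takes a genuinely different route from the paper's. The paper proves this by contradiction, recycling the two sufficient conditions that were already established: from the proof of Theorem~\ref{thm:step-inc-above}, $\theta_t \le g_{\mathrm{gap}} \Rightarrow p_t > 0$, and from the proof of Theorem~\ref{thm:step-dec-large}, $\theta_t \ge \theta^*_t \Rightarrow p_t \le 0$; if $g_{\mathrm{gap}} \ge \theta^*_t$ one can pick $\theta_t$ in between (with matching $s_t$ so that $\theta^*_t$ is unchanged) and obtain $p_t>0$ and $p_t\le 0$ simultaneously, a contradiction. This is short but implicit: it requires the reader to notice that the earlier ``sufficient conditions'' are quantified over all states with a given $s_t$, and it delivers no quantitative information about the size of the gap $\theta^*_t - g_{\mathrm{gap}}$. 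Your approach instead works directly with the explicit formulas. You put both thresholds in the form $r/(1+r)$, so the claim reduces to comparing the odds $r_g$ and $r_{0,t}$; the factorization $p(r) = (r+1)(a_t^{\mathrm{aux}}r - h_t^{\mathrm{aux}}) - m_t^{\mathrm{aux}}r$ is the key new idea, since it immediately gives $p(R) = -m_t^{\mathrm{aux}} R < 0$ at $R = h_t^{\mathrm{aux}}/a_t^{\mathrm{aux}}$, hence $r_{0,t} > R$, and the remaining step $r_g \le R$ is the same crude spectral bounding that the paper uses to produce $g_{\mathrm{gap}}$ in the first place. This buys you a quantitative intermediate bound $g_{\mathrm{gap}} \le R/(1+R) < \theta^*_t$, whereas the paper's contradiction argument only yields the bare inequality. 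Both arguments implicitly require $s_{\min}>0$, $s_t>0$ and $\lambda_1 > \lambda_d$ for strictness, which you correctly flag; this is also implicit in the paper and is not a gap in your proof.
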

 The detailed proof of Theorem \ref{thm:regime-separation} is in Appendix \ref{proof-step-size-theory}.
\paragraph{Summary}
The preceding theorems delineate two primary ``alignment regimes'' based on the current alignment value $\theta_t$. These regimes, which govern the core behavior of the SGD dynamics, are presented below.

\medskip

\noindent\fbox{%
\begin{minipage}{\dimexpr\linewidth-2\fboxsep-2\fboxrule\relax}
\centering
\begin{tabular}{c@{\qquad\qquad}c}
\fbox{$\theta_t \in (0, g_{\mathrm{gap}}]$} & \fbox{$\theta_t \in [\theta^*_t, 1)$} \\
& \\
$\underbrace{\text{\parbox{5.5cm}{\centering Alignment is dependent on $\eta_t$:\\ $\mathbb{E}[\theta_{t+1} \mid \bm{x}_t] > \theta_t$ if $\eta_t > \eta_t^*$, \\ $\mathbb{E}[\theta_{t+1} \mid \bm{x}_t] < \theta_t$ if $\eta_t < \eta_t^*$
}}}_{\text{Low-Alignment Regime}}$
&
$\underbrace{\text{\parbox{5.5cm}{\centering Alignment is self-correcting: \\ $\mathbb{E}[\theta_{t+1} \mid \bm{x}_t] < \theta_t$ \\ for any $\eta_t > 0$
}}}_{\text{High-Alignment Regime}}$
\end{tabular}
\vspace{0.2cm}
\end{minipage}%
}
\medskip
These two primary regimes are separated by an intermediate interval, $(g_{\mathrm{gap}}, \theta^*_t)$, which can be viewed as a \emph{stable regime}. For a sufficiently large step size (i.e., $\eta_t > \eta_t^*$), the dynamics create an oscillating behavior: if alignment drops below $g_{\mathrm{gap}}$, it is pushed back up, and if it exceeds $\theta^*_t$, it is forced back down. Consequently, the alignment $\theta_t$ will tend to  \emph{oscillate within this stable interval} $(g_{\mathrm{gap}}, \theta^*_t)$.
Crucially, this stable interval is dependent on the spectral structure of the Hessian. As the gap between the dominant and bulk eigenvalues grows (i.e., as $\lambda_k/\lambda_{k+1} \to \infty$), both boundaries of this interval converge towards 1. This means the interval itself asymptotically approaches the single point $\{1\}$, driving the alignment to stay close to the maximum value. To provide an intuitive understanding of the connection between $\eta_t^*$ and the current state of $\bm{x}_t$, we present the following theorem:

\begin{theorem}[\emph{\textbf{State- and gap-aware lower bounds on $\eta_t^*$ (with $\|\bm{x}_t\|_2$)}}]\label{thm:bounds-xnorm}
Under Assumption \ref{asp:standing}, for any $\bm{x}_t$, we have 
\[
\eta_t^*\ \ge\ \frac{2\,\mathrm{gap}_1}{(\lambda_1^2-\lambda_d^2)\ +\ \dfrac{s_{\max}\psi_{\mathcal D}}{\lambda_d^2\,\|\bm{x}_t\|^2_2\,\theta_t}}.
\]

\end{theorem}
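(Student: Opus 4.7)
The plan is to lower-bound $\eta_t^*$ by separately bounding the numerator $N_t$ from below and the denominator $D_t$ from above in the definition~\eqref{eq:def-etatstar-body}, and then to convert the residual spectral quantity $s_t^{\mathcal D}$ into $\|\bm{x}_t\|_2^2$ and $\theta_t$ through the elementary inequality $s_t \geq \lambda_d^2 \|\bm{x}_t\|_2^2$. Both bounds will follow from a single symmetrization identity that rewrites the signed cross terms as double sums over $(i,j)\in\mathcal B\times\mathcal D$ with a sign-preserving spectral factor.

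The key algebraic identity is
\[
s_t^{\mathcal B}\sum_{j\in\mathcal D}\lambda_j^{r} c_{j,t}^2 \;-\; s_t^{\mathcal D}\sum_{j\in\mathcal B}\lambda_j^{r} c_{j,t}^2 \;=\; \sum_{i\in\mathcal B,\,j\in\mathcal D}\lambda_i^2\lambda_j^2 c_{i,t}^2 c_{j,t}^2\bigl(\lambda_j^{r-2}-\lambda_i^{r-2}\bigr),
\]
which I would first apply with $r=3$ to the numerator. Since every $j\in\mathcal D$ satisfies $\lambda_j\ge\lambda_k$ and every $i\in\mathcal B$ satisfies $\lambda_i\le\lambda_{k+1}$, the factor $\lambda_j-\lambda_i$ is uniformly bounded below by $\mathrm{gap}_1$, giving $N_t \ge 2\,\mathrm{gap}_1\cdot s_t^{\mathcal D} s_t^{\mathcal B}$. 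I would then apply the identity with $r=4$ to the spectral part of $D_t$ and use $\lambda_j^2-\lambda_i^2\le \lambda_1^2-\lambda_d^2$; for the noise cross term $s_t^{\mathcal B} e_{\mathcal D}-s_t^{\mathcal D} e_{\mathcal B}$ I would simply discard the subtracted term $s_t^{\mathcal D} e_{\mathcal B}\ge 0$ and invoke $e_{\mathcal D}\le s_{\max}\psi_{\mathcal D}$. This produces
\[
D_t \;\le\; (\lambda_1^2-\lambda_d^2)\,s_t^{\mathcal D} s_t^{\mathcal B} \;+\; s_{\max}\psi_{\mathcal D}\,s_t^{\mathcal B}.
\]
Dividing numerator and denominator by $s_t^{\mathcal D} s_t^{\mathcal B}>0$ cancels the $s_t^{\mathcal B}$ factors and leaves $s_t^{\mathcal D}$ in the denominator. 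Finally, from $\theta_t = s_t^{\mathcal D}/s_t$ and $s_t = \sum_i \lambda_i^2 c_{i,t}^2 \ge \lambda_d^2\|\bm{x}_t\|_2^2$ I would deduce $s_t^{\mathcal D} \ge \theta_t\lambda_d^2\|\bm{x}_t\|_2^2$, which substituted into the bound gives exactly the claimed inequality.

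The main obstacle I foresee is not any single estimate but the positivity bookkeeping for $D_t$. The signed cross term $s_t^{\mathcal B} e_{\mathcal D} - s_t^{\mathcal D} e_{\mathcal B}$ is not guaranteed to be nonnegative, so dropping it is valid for an upper bound but only yields the desired inequality $N_t/D_t \ge N_t^{\mathrm{low}}/D_t^{\mathrm{up}}$ provided $D_t>0$. This positivity is however implicit whenever $\eta_t^*$ is interpreted as a genuine positive step size (as in Theorems~\ref{thm:step-dec-below}--\ref{thm:step-inc-above}), and in the non-degenerate regime $s_t^{\mathcal D},s_t^{\mathcal B}>0$ the spectral part of $D_t$ is strictly positive by the gap assumption $\lambda_k^2>\lambda_{k+1}^2$, so the issue reduces to routine verification rather than a genuine analytic difficulty.
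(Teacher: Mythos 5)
Your proposal is correct and matches the paper's argument in substance: the paper derives the same lower bound $-q_t\ge 2s_t^{\mathcal D}s_t^{\mathcal B}\,\mathrm{gap}_1$ for the numerator (from~\eqref{eq:q-sign}) and the same upper bound $p_t\le (\lambda_1^2-\lambda_d^2)\,s_t^{\mathcal D}s_t^{\mathcal B}+s_t^{\mathcal B}s_{\max}\psi_{\mathcal D}$ (by specializing~\eqref{eqptupbound} and discarding the $-s_t^{\mathcal D}s_{\min}\psi_{\mathcal B}$ term), then divides and substitutes $s_t^{\mathcal D}=\theta_t s_t \ge \theta_t\lambda_d^2\|\bm{x}_t\|^2_2$, exactly as you propose. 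Your symmetrization identity is a clean unifying device—the paper uses it with $r=4$ in~\eqref{eq:ptc-lb-app} when proving Theorem~\ref{thm:step-inc-above}, but reaches the numerator bound for Theorem~\ref{thm:bounds-xnorm} by bounding each block third-moment sum separately against $\lambda_k s_t^{\mathcal D}$ and $\lambda_{k+1}s_t^{\mathcal B}$, which amounts to the same inequality. Your closing remark about the positivity of $D_t=p_t$ correctly identifies a real subtlety that the paper's proof leaves tacit: the quotient manipulation $N_t/D_t\ge N_t^{\mathrm{low}}/D_t^{\mathrm{up}}$ presupposes $p_t>0$, i.e.\ that $\eta_t^*$ is a finite positive threshold, which is the regime in which the statement is intended to be read.
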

The detailed proof of Theorem \ref{thm:bounds-xnorm} is in Appendix \ref{proof-step-size-theory}.\\
\textbf{Remark}
The lower bounds \emph{increase} with the parameter norm $\|\bm{x}_t\|$ and with alignment $\theta_t$ (or decrease with $1-\theta_t$ by symmetry), and with the first-order gap $\mathrm{gap}_1$. Or, more essentially, based on the expression for $\theta_t$, we can conclude that $\theta_t > \frac{\lambda_{k}^2\|\bm{P}^{\mathcal{D}}\bm{x}_t\|_2^2}{\lambda_1^2\|\bm{x}_t\|_2^2}$. Finally, we obtain: 
$$\eta_t^*\ \ge\ \frac{2\,\mathrm{gap}_1}{(\lambda_1^2-\lambda_d^2)\ +\ \dfrac{\lambda_1^2\,s_{\max}\psi_{\mathcal D}}{\lambda_k^2\lambda_d^2\,\|\bm{P}^{\mathcal{D}}\bm{x}_t\|_2^2}}.$$  This means that when the dominant space has a sufficiently large projection norm (for example, at initialization, a large random initialization can satisfy the condition with high probability), $\eta_t^*$ 's lower bound $ \approx \frac{2 \mathrm{gap}_1}{\lambda_1^2 - \lambda_d^2}$. If the eigenvalues in different subspaces can be approximated (i.e., $\forall i \in \mathcal{D}, \lambda_1 \approx \lambda_i$, and $\forall i \in \mathcal{B}, \lambda_d \approx \lambda_i$), we further obtain $\eta_t^*$'s lower bound $ \approx \frac{2}{\lambda_1 + \lambda_d}$.


{\begin{theorem}[\emph{\textbf{State- and gap-aware upper bounds on $\eta^*$}}]\label{thm:state-gap-aware}
Under Assumption \ref{asp:standing}, providing that the alignment satisfies $\theta_t \ge \frac{e_{\mathcal{B}}}{e_{\mathcal{B}} + e_{\mathcal{D}}}$, we have:
\[
\eta_t^* \ \le \ \frac{(\lambda_1 - \lambda_d)}{\lambda_k \lambda_1 - \lambda_{k+1} \lambda_d}.
\]
\end{theorem}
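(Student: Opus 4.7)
The plan is to express $\eta_t^*$ as a weighted average over cross-pairs $(i,j)\in\mathcal{B}\times\mathcal{D}$ and reduce the desired inequality to an elementary pointwise spectral identity whose extremal case is attained at $(i,j)=(d,1)$. First, I would rewrite the bilinear terms in \eqref{eq:def-etatstar-body} via the identity $s_{\mathcal{B},p}\,s_{\mathcal{D},q}-s_{\mathcal{D},p}\,s_{\mathcal{B},q}=\sum_{i\in\mathcal{B},j\in\mathcal{D}}\lambda_i^p\lambda_j^p c_{i,t}^2 c_{j,t}^2(\lambda_j^{q-p}-\lambda_i^{q-p})$. Setting $w_{ij}:=\lambda_i^2\lambda_j^2 c_{i,t}^2 c_{j,t}^2\ge 0$, the numerator becomes $N=\sum_{ij}w_{ij}(\lambda_j-\lambda_i)$ and the denominator splits as $D=D_{\mathrm{sp}}+D_{\mathrm{n}}$ with $D_{\mathrm{sp}}=\sum_{ij}w_{ij}(\lambda_j^2-\lambda_i^2)$ and $D_{\mathrm{n}}=s_{\mathcal{B},2}e_{\mathcal{D}}-s_{\mathcal{D},2}e_{\mathcal{B}}$. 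Both $N$ and $D_{\mathrm{sp}}$ are strictly positive because $\lambda_j>\lambda_i$ on every cross-pair, and the hypothesis $\theta_t\ge e_{\mathcal{B}}/(e_{\mathcal{B}}+e_{\mathcal{D}})$ rearranges into $s_{\mathcal{D},2}e_{\mathcal{D}}\ge s_{\mathcal{B},2}e_{\mathcal{B}}$, which is what I will use to control $D_{\mathrm{n}}$ and to keep $D$ positive.

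The heart of the argument is the pointwise inequality
\[
(\lambda_j^2-\lambda_i^2)-(\lambda_k\lambda_j-\lambda_{k+1}\lambda_i)\;=\;\lambda_j(\lambda_j-\lambda_k)+\lambda_i(\lambda_{k+1}-\lambda_i)\;\ge\;0\qquad (j\in\mathcal{D},\ i\in\mathcal{B}),
\]
which is immediate from $\lambda_j\ge\lambda_k$ and $\lambda_i\le\lambda_{k+1}$. Summing its $w_{ij}$-weighted version yields
\[
D_{\mathrm{sp}}\;\ge\;\sum_{ij}w_{ij}(\lambda_k\lambda_j-\lambda_{k+1}\lambda_i)\;=\;\lambda_k\, s_{\mathcal{B},2}s_{\mathcal{D},3}-\lambda_{k+1}\,s_{\mathcal{D},2}s_{\mathcal{B},3},
\]
matching the denominator shape $\lambda_k\lambda_1-\lambda_{k+1}\lambda_d$ in the target.

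To conclude, I would apply the standard ratio-of-weighted-sums inequality: $\eta_t^*$ is at most the $w_{ij}(\lambda_k\lambda_j-\lambda_{k+1}\lambda_i)$-weighted average of $f(y,x):=(x-y)/(\lambda_k x-\lambda_{k+1} y)$ over $(y,x)\in[\lambda_d,\lambda_{k+1}]\times[\lambda_k,\lambda_1]$. A direct partial-derivative check gives $\partial_x f=y(\lambda_k-\lambda_{k+1})/(\lambda_k x-\lambda_{k+1} y)^2\ge 0$ and $\partial_y f=x(\lambda_{k+1}-\lambda_k)/(\lambda_k x-\lambda_{k+1} y)^2\le 0$, so the maximum is attained at $(x,y)=(\lambda_1,\lambda_d)$ and equals precisely the stated value $(\lambda_1-\lambda_d)/(\lambda_k\lambda_1-\lambda_{k+1}\lambda_d)$.

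The main obstacle is closing a residual factor-of-two gap: the raw chain above only produces $\eta_t^*\le 2(\lambda_1-\lambda_d)/(\lambda_k\lambda_1-\lambda_{k+1}\lambda_d)$ when one uses only $D\ge D_{\mathrm{sp}}$. Eliminating the extra $2$ forces the noise term $D_{\mathrm{n}}$ into the argument, and this is where the alignment hypothesis becomes essential rather than cosmetic. I plan to combine $s_{\mathcal{D},2}e_{\mathcal{D}}\ge s_{\mathcal{B},2}e_{\mathcal{B}}$ with the completing-the-square identities $s_{\mathcal{D},4}\ge 2\lambda_k s_{\mathcal{D},3}-\lambda_k^2 s_{\mathcal{D},2}$ (from $(\lambda_j-\lambda_k)^2\ge 0$) and $s_{\mathcal{B},4}\le(\lambda_d+\lambda_{k+1})s_{\mathcal{B},3}-\lambda_d\lambda_{k+1}s_{\mathcal{B},2}$ (from $(\lambda_i-\lambda_d)(\lambda_i-\lambda_{k+1})\le 0$) to upgrade the denominator estimate to $D\ge 2\sum_{ij}w_{ij}(\lambda_k\lambda_j-\lambda_{k+1}\lambda_i)$, from which the sharp upper bound on $\eta_t^*$ follows.
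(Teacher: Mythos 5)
Your cross-pair decomposition, the pointwise inequality $(\lambda_j^2-\lambda_i^2)\ge\lambda_k\lambda_j-\lambda_{k+1}\lambda_i$, and the mediant/extremum calculation for $f$ are all correct. Together with $D\ge D_{\mathrm{sp}}$ they give
\[
\eta_t^*\ \le\ \frac{2\,(\lambda_1-\lambda_d)}{\lambda_k\lambda_1-\lambda_{k+1}\lambda_d},
\]
and you should \emph{stop there}: this is exactly what the appendix restatement of Theorem~\ref{thm:state-gap-aware} asserts. The ``$2$'' is not a residual gap to close; the main-text display dropped it by a typo, and the statement you were handed is in fact false. Concretely, take $\rho=1$, $\lambda_1=10$, $\lambda_i=5$ for $2\le i\le k$, $\lambda_{k+1}=1$, $\lambda_i=1/2$ for $i>k+1$, isotropic noise $\mSigma=\tfrac{1}{d}\bm{I}$, and $c_{k,t}^2=c_{k+1,t}^2=1$ with all other coordinates zero. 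Then $s_t^{\mathcal D}=25$, $s_t^{\mathcal B}=1$, $\tau_t^{\mathcal D}=125$, $\tau_t^{\mathcal B}=1$, $u_t^{\mathcal D}=625$, $u_t^{\mathcal B}=1$, and as $d\to\infty$, $e_{\mathcal D}\to 12.5$, $e_{\mathcal B}\to 0.125$. The hypothesis holds ($\theta_t=25/26$, $e_{\mathcal B}/(e_{\mathcal B}+e_{\mathcal D})\approx 0.01$), yet \eqref{eq:def-etatstar-body} gives $\eta_t^*=200/609.375\approx 0.328$, whereas $(\lambda_1-\lambda_d)/(\lambda_k\lambda_1-\lambda_{k+1}\lambda_d)=9.5/49.5\approx 0.192$. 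The with-$2$ bound, $\approx 0.384$, does hold. This same example also defeats your proposed final upgrade: $D=609.375<1200=2\sum_{ij}w_{ij}(\lambda_k\lambda_j-\lambda_{k+1}\lambda_i)$.

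Two further remarks. First, the step you call cosmetic hides a real sign issue, shared with the paper: $\theta_t\ge e_{\mathcal B}/(e_{\mathcal B}+e_{\mathcal D})$ rearranges, as you correctly state, to $s_t^{\mathcal D}e_{\mathcal D}\ge s_t^{\mathcal B}e_{\mathcal B}$, but what $D\ge D_{\mathrm{sp}}$ actually needs is $D_{\mathrm n}=s_t^{\mathcal B}e_{\mathcal D}-s_t^{\mathcal D}e_{\mathcal B}\ge 0$, which is the \emph{different} condition $\theta_t\le e_{\mathcal D}/(e_{\mathcal D}+e_{\mathcal B})$. (For instance $s_t^{\mathcal D}=100$, $s_t^{\mathcal B}=1$, $e_{\mathcal D}=1$, $e_{\mathcal B}=1/2$ satisfies the stated hypothesis but has $D_{\mathrm n}=-49$.) The paper's own chain of equivalences at this step is incorrect, so this is worth flagging explicitly. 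Second, your route is genuinely different from the paper's and more robust: the paper tries the per-block bound $p_t\ge(\lambda_k\lambda_1-\lambda_{k+1}\lambda_d)s_t^{\mathcal D}s_t^{\mathcal B}+D_{\mathrm n}$, which is itself false (the signal part $s_t^{\mathcal B}u_t^{\mathcal D}-s_t^{\mathcal D}u_t^{\mathcal B}=600<1237.5$ in the example), whereas your cross-pair pointwise estimate is what makes the denominator bound valid. If you do not substitute $\lambda_k\lambda_j-\lambda_{k+1}\lambda_i$ but simply factor $\lambda_j^2-\lambda_i^2=(\lambda_j-\lambda_i)(\lambda_j+\lambda_i)$, the mediant inequality delivers the sharper $\eta_t^*\le 2/(\lambda_k+\lambda_d)$, which in turn dominates the with-$2$ target since $(\lambda_k+\lambda_d)(\lambda_1-\lambda_d)\ge\lambda_k\lambda_1-\lambda_{k+1}\lambda_d$.
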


\begin{corollary}[\emph{\textbf{The comparison between $\eta_t^\ast$ and $\frac{2}{\lambda_1}$}}]\label{cor:conservative-limit2}
Suppose the conditions of Theorem \ref{thm:state-gap-aware} hold.  We have the following relation between $\eta_t^*$ and the convergence step size for Quadratic Programming:
\[
\eta_t^* \ \le \ \frac{2}{\lambda_1}.
\]
\end{corollary}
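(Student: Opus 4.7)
The plan is to derive the corollary directly from the state- and gap-aware upper bound $\eta_t^* \le \frac{\lambda_1 - \lambda_d}{\lambda_k\lambda_1 - \lambda_{k+1}\lambda_d}$ of Theorem~\ref{thm:state-gap-aware}. The whole argument reduces to an algebraic comparison between this rational function of the Hessian spectrum and $\frac{2}{\lambda_1}$, so no new probabilistic or asymptotic machinery is needed.

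First I would check that the denominator in Theorem~\ref{thm:state-gap-aware} is strictly positive, which is immediate from the spectral ordering $\lambda_1 \ge \lambda_k > \lambda_{k+1} \ge \lambda_d > 0$: one has $\lambda_k\lambda_1 > \lambda_{k+1}\lambda_1 \ge \lambda_{k+1}\lambda_d$. This lets me cross-multiply and reduce the target inequality $\frac{\lambda_1-\lambda_d}{\lambda_k\lambda_1-\lambda_{k+1}\lambda_d} \le \frac{2}{\lambda_1}$ to the polynomial form $\lambda_1(\lambda_1-\lambda_d) \le 2(\lambda_k\lambda_1 - \lambda_{k+1}\lambda_d)$.

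Next I would exploit the monotonicity $\lambda_{k+1} \le \lambda_k$ together with $\lambda_d \ge 0$ to obtain the telescoping lower bound $\lambda_k\lambda_1 - \lambda_{k+1}\lambda_d \ge \lambda_k(\lambda_1 - \lambda_d)$, which already collapses the Theorem~\ref{thm:state-gap-aware} bound to $\eta_t^* \le \frac{1}{\lambda_k}$. Combining this with the ``flat dominant block'' comparability $\lambda_1 \le 2\lambda_k$ --- a natural consequence of the block spectral moment structure in Assumption~\ref{asp:standing} and exactly the regime invoked in the remark after Theorem~\ref{thm:bounds-xnorm} to produce the symmetric limit $\frac{2}{\lambda_1+\lambda_d}$ --- yields $\frac{1}{\lambda_k} \le \frac{2}{\lambda_1}$, which closes the chain.

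The main obstacle is conceptual rather than technical: Theorem~\ref{thm:state-gap-aware} produces a bound most naturally expressed in terms of $\lambda_k$, while the classical quadratic-convergence step size in the claim is expressed in $\lambda_1$, and these only align when the dominant block is not too spread out. If a fully non-asymptotic statement is required, one can instead use the exact factorization $\lambda_k\lambda_1 - \lambda_{k+1}\lambda_d = \lambda_1\,\mathrm{gap}_1 + \lambda_{k+1}(\lambda_1 - \lambda_d)$ and reduce the polynomial inequality to $(\lambda_1 - 2\lambda_{k+1})(\lambda_1 - \lambda_d) \le 2\lambda_1\,\mathrm{gap}_1$; this is the step where I expect the remaining slack in the inherited hypothesis $\theta_t \ge \frac{e_{\mathcal{B}}}{e_{\mathcal{B}} + e_{\mathcal{D}}}$ of Theorem~\ref{thm:state-gap-aware} to be absorbed.
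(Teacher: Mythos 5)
Your intermediate derivation is clean and correct: from the Theorem~\ref{thm:state-gap-aware} bound and the single inequality $\lambda_{k+1}\le\lambda_k$ you get $\lambda_k\lambda_1-\lambda_{k+1}\lambda_d\ge\lambda_k(\lambda_1-\lambda_d)$, hence $\eta_t^*\le\frac{1}{\lambda_k}$. The gap is the last step. The inequality $\lambda_1\le 2\lambda_k$ is not a hypothesis of the corollary, of Theorem~\ref{thm:state-gap-aware}, or of Assumption~\ref{asp:standing}: that assumption constrains block-averaged spectral moments and says nothing about the pointwise ratio $\lambda_1/\lambda_k$, and the remark after Theorem~\ref{thm:bounds-xnorm} is an informal simplification of a \emph{lower} bound on $\eta_t^*$, not a standing assumption that the results inherit. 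So $\frac{1}{\lambda_k}\le\frac{2}{\lambda_1}$ cannot be asserted in general, and the proposal as written does not close.

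It is worth noting that the paper's own proof of Corollary~\ref{cor:conservative-limit2} has the same issue in a less visible form: it aims for the stronger claim $\eta_t^*\le\frac{1}{\lambda_1}$ by cross-multiplying to $\lambda_1(\lambda_1-\lambda_d)\le\lambda_k\lambda_1-\lambda_{k+1}\lambda_d$ and then, ``noting that $\lambda_k\le\lambda_1$,'' reduces this to $\lambda_1^2-\lambda_1\lambda_d\le\lambda_1^2-\lambda_{k+1}\lambda_d$. But that reduction goes the wrong way: replacing $\lambda_k\lambda_1$ by $\lambda_1^2$ enlarges the right-hand side, so verifying the enlarged inequality does not recover the original one unless $\lambda_k=\lambda_1$. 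Written out, the cross-multiplied inequality is $\lambda_1(\lambda_1-\lambda_k)\le\lambda_d(\lambda_1-\lambda_{k+1})$, which fails as soon as $\lambda_1$ separates from $\lambda_k$. So both your proof and the paper's rely implicitly on a ``flat dominant block'' regime; you diagnosed that dependence explicitly and cleanly via $\eta_t^*\le\frac{1}{\lambda_k}$, whereas the paper leaves it unstated, but neither argument establishes $\eta_t^*\le\frac{2}{\lambda_1}$ from the stated hypotheses alone.
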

\textbf{Remark} Here, Theorem \ref{thm:state-gap-aware} offers the intuition that when the alignment function $\theta_t$ exceeds a critical threshold $\frac{e_{\mathcal{B}}}{e_{\mathcal{B}} + e_{\mathcal{D}}}$ (Note that this threshold depends solely on the noise covariance matrix), we can obtain Corollary \ref{cor:conservative-limit2} that $\eta^*$ is smaller than the critical step size required for gradient descent convergence in quadratic programming($\frac{2}{\lambda_1}$).
}

\section{Behavioral Analysis of Projected SGD under Alignment Conditions}
\label{secBA}

Recalling the finding by \citet{gur2018gradient} that late-stage gradients are primarily confined to the dominant subspace (high alignment, $\theta_t \to 1$), the subsequent work of \citet{song2024does} revealed a seemingly paradoxical phenomenon. They demonstrated that in this high-alignment regime, updates projected onto this supposedly critical dominant direction do not necessarily decrease the loss; instead, it is often the projection onto the orthogonal bulk subspace that guarantees a loss reduction. We resolve this apparent contradiction by developing a precise theory for the step-size conditions under which each projected update is guaranteed to decrease the expected loss.

\paragraph{Projected SGD}
To rigorously analyze this phenomenon, we define two idealized algorithms where the stochastic gradient update is explicitly projected onto the dominant and bulk subspaces, respectively. The update rules for the Projected SGD algorithms are given as follows: 

\begin{align}
&\textit{Dominant Projected SGD (DSGD):} && \bm{x}_{t+1} = \bm{x}_t - \eta_t \, \mP^{\mathcal{D}}(\mA \bm{x}_t + \boldsymbol{\vxi}_t). \label{eq:dominant_sgd} \\
&\textit{Bulk Projected SGD (BSGD):} && \bm{x}_{t+1} = \bm{x}_t - \eta_t \, \mP^{\mathcal{B}}(\mA \bm{x}_t + \boldsymbol{\vxi}_t). \label{eq:bulk_sgd}
\end{align}
To analyze their decay condition, we first define the necessary state- and noise-dependent quantities for a subspace $\mathcal{S} \in \{\mathcal{D}, \mathcal{B}\}$:

\begin{equation}
\begin{aligned}
\tau_t^{\mathcal S} &:= \sum_{i\in\mathcal S}\lambda_i^3 c_{i,t}^2, \qquad 
n^{\mathrm{loss}}_{\mathcal S} &:= \sum_{i\in\mathcal S}\lambda_i \kappa_i^2, \qquad \mu_t^{\mathcal S}(p, q) := \frac{\sum_{i\in\mathcal S}\lambda_i^p c_{i,t}^2}{\sum_{i\in\mathcal S}\lambda_i^q c_{i,t}^2}.
\end{aligned}
\label{eq:combined-formulas}
\end{equation}

\begin{theorem}[ \emph{\textbf{Step Size Condition for Projected Updates}}]\label{thm:proj-loss-corr}
For a given state $\bm{x}_t$, the one-step expected loss decreases if and only if the step size $\eta_t$ satisfies the following conditions for the dominant and bulk projected updates, respectively:
\\
\noindent%
\begin{minipage}{0.97\linewidth}
\begin{align*}
\text{For } \mathcal{D}: \quad \mathbb E[L(\bm{x}_{t+1})-L(\bm{x}_t)\mid \bm{x}_t]<0 \quad &\Longleftrightarrow \quad 0<\eta_t< \eta^{\mathrm{loss}}_{\mathcal D}(\bm{x}_t):=\frac{2\,s_t^{\mathcal D}}{\tau_t^{\mathcal D}+n^{\mathrm{loss}}_{\mathcal D}} \\[2.5ex]
\text{For } \mathcal{B}: \quad \mathbb E[L(\bm{x}_{t+1})-L(\bm{x}_t)\mid \bm{x}_t]<0 \quad &\Longleftrightarrow \quad 0<\eta_t< \eta^{\mathrm{loss}}_{\mathcal B}(\bm{x}_t):=\frac{2\,s_t^{\mathcal B}}{\tau_t^{\mathcal B}+n^{\mathrm{loss}}_{\mathcal B}}
\end{align*}
\end{minipage}

\end{theorem}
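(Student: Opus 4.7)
The plan is to reduce both statements to the same one-step Taylor-style identity for the quadratic loss, then take expectation over $\vxi_t$ and read off the threshold as the positive root of a quadratic inequality in $\eta_t$. Since the two subspaces $\mathcal D$ and $\mathcal B$ play symmetric roles and the calculation is identical up to replacing $\mP^{\mathcal D}$ by $\mP^{\mathcal B}$, I will carry out the derivation in full only for the dominant case.

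First I would write $\bm{x}_{t+1}-\bm{x}_t = -\eta_t \bm g_t^{\mathcal D}$ with $\bm g_t^{\mathcal D}:=\mP^{\mathcal D}(\mA\bm{x}_t+\vxi_t)$, and use that $L$ is exactly quadratic to obtain, deterministically,
\[
L(\bm{x}_{t+1})-L(\bm{x}_t) = -\eta_t\,(\mA\bm{x}_t)^\top \bm g_t^{\mathcal D} + \tfrac{\eta_t^2}{2}\,(\bm g_t^{\mathcal D})^\top \mA\,\bm g_t^{\mathcal D}.
\]
The key structural fact I will invoke throughout is that $\mP^{\mathcal D}$ commutes with $\mA$ (they are simultaneously diagonalized in the basis $\mU$), so $\mA\mP^{\mathcal D}=\mP^{\mathcal D}\mA=\sum_{i\in\mathcal D}\lambda_i\bm u_i\bm u_i^\top$ and $\mP^{\mathcal D}$ is idempotent. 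This makes every inner product reduce to a sum indexed by $\mathcal D$ only.

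Next I would take conditional expectation over $\vxi_t$ using $\mathbb E[\vxi_t]=\bm 0$ and $\mathrm{Cov}(\vxi_t)=\mSigma$. The cross term $\mathbb E[(\mA\bm{x}_t)^\top\mP^{\mathcal D}\vxi_t]$ vanishes, leaving the linear coefficient
\[
\mathbb E[(\mA\bm{x}_t)^\top \bm g_t^{\mathcal D}] = (\mA\bm{x}_t)^\top\mP^{\mathcal D}(\mA\bm{x}_t) = \|\mP^{\mathcal D}\mA\bm{x}_t\|_2^2 = \sum_{i\in\mathcal D}\lambda_i^2 c_{i,t}^2 = s_t^{\mathcal D}.
\]
For the quadratic coefficient, I split $\bm g_t^{\mathcal D}=\mP^{\mathcal D}\mA\bm{x}_t+\mP^{\mathcal D}\vxi_t$ and use the commuting-projection identity to obtain a deterministic piece $\sum_{i\in\mathcal D}\lambda_i^3 c_{i,t}^2 = \tau_t^{\mathcal D}$ plus a noise piece
\[
\mathbb E\bigl[(\mP^{\mathcal D}\vxi_t)^\top\mA\mP^{\mathcal D}\vxi_t\bigr] = \operatorname{tr}(\mP^{\mathcal D}\mA\mP^{\mathcal D}\mSigma) = \sum_{i\in\mathcal D}\lambda_i\,\bm u_i^\top\mSigma\bm u_i = \sum_{i\in\mathcal D}\lambda_i\kappa_i^2 = n^{\mathrm{loss}}_{\mathcal D}.
\]

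Assembling these pieces yields the clean quadratic
\[
\mathbb E[L(\bm{x}_{t+1})-L(\bm{x}_t)\mid\bm{x}_t] = -\eta_t s_t^{\mathcal D} + \tfrac{\eta_t^2}{2}\bigl(\tau_t^{\mathcal D}+n^{\mathrm{loss}}_{\mathcal D}\bigr).
\]
Since the coefficient of $\eta_t^2$ is strictly positive (the $\lambda_i$ are positive and $\mSigma\succeq 0$ with $\mA\succ 0$), this parabola in $\eta_t$ is negative exactly on the open interval between its two roots $0$ and $2 s_t^{\mathcal D}/(\tau_t^{\mathcal D}+n^{\mathrm{loss}}_{\mathcal D})$, giving the stated iff condition with $\eta^{\mathrm{loss}}_{\mathcal D}(\bm{x}_t)=2s_t^{\mathcal D}/(\tau_t^{\mathcal D}+n^{\mathrm{loss}}_{\mathcal D})$. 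The bulk case is obtained by repeating the argument verbatim with $\mP^{\mathcal D}$ replaced by $\mP^{\mathcal B}$ and summations re-indexed over $\mathcal B$.

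There is essentially no obstacle here: the result is a one-step expected-descent calculation for a quadratic loss, and both the commuting-projector identity and the identity $\mathbb E[\vxi_t^\top M\vxi_t]=\operatorname{tr}(M\mSigma)$ are standard. The only mild subtlety I would flag is the degenerate case $s_t^{\mathcal D}=0$ (i.e.\ $\mP^{\mathcal D}\mA\bm{x}_t=\bm 0$), in which the threshold collapses to zero and the expected loss change is $\tfrac{\eta_t^2}{2}n^{\mathrm{loss}}_{\mathcal D}\ge 0$, so the iff statement is still correct under the convention that the interval $(0,0)$ is empty; the analogous caveat applies to the bulk case.
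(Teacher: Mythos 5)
Your proof is correct and follows essentially the same route as the paper: expand the exact quadratic loss to get $-\eta_t s_t^{\mathcal S} + \tfrac{\eta_t^2}{2}(\tau_t^{\mathcal S}+n^{\mathrm{loss}}_{\mathcal S})$ after taking conditional expectation, then read off the threshold from the positive root of the parabola in $\eta_t$. The only addition you make beyond the paper is explicitly flagging the degenerate case $s_t^{\mathcal D}=0$, which is a sensible caveat but does not change the argument.
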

The detailed proof of Theorem \ref{thm:proj-loss-corr} is in Appendix \ref{secpfpst}. The relationship between these two thresholds is not fixed; it is dynamically determined by the alignment of the current state. The following theorems characterize this crossover phenomenon. 

\begin{theorem}[\emph{\textbf{Condition Differences on Different Alignment Regime}}]\label{thm:loss-crossover-rigorous}
Under Assumption \ref{asp:standing}, the relative ordering of the two loss thresholds $\eta^{\mathrm{loss}}_{\mathcal D}(\bm{x}_t) $ and $ \eta^{\mathrm{loss}}_{\mathcal B}(\bm{x}_t)$ is determined by the alignment $\theta_t$ of a given $\bm{x}_t$ relative to the unique critical threshold, $\theta^{\text{crit}}_t \in (0,1)$. This threshold is the root of the quadratic equation $h(\theta) = \alpha_t \theta^2 + \beta_t \theta + \gamma_t = 0$, with coefficients defined by:

\begin{align}
\alpha_t &=  s_t \left( \mu_t^{\mathcal D}(3,2) - \mu_t^{\mathcal B}(3,2) \right) \\
\beta_t &= -\alpha_t + n^{\mathrm{loss}}_{\mathcal B} + n^{\mathrm{loss}}_{\mathcal D}  \\
\gamma_t &= -n^{\mathrm{loss}}_{\mathcal D}
\end{align}
Where $n^{\mathrm{loss}}_{\mathcal S} ,\mu_t^{\mathcal S}(p, q)$ is defined in formula (\ref{eq:combined-formulas}). The ordering of the thresholds in the two resulting regimes is as follows:
\[
\begin{aligned}
\theta_t < \theta^{\text{crit}}_t \quad & \Longleftrightarrow \quad \eta^{\mathrm{loss}}_{\mathcal D}(\bm{x}_t) < \eta^{\mathrm{loss}}_{\mathcal B}(\bm{x}_t), \\
\theta_t > \theta^{\text{crit}}_t \quad & \Longleftrightarrow \quad \eta^{\mathrm{loss}}_{\mathcal D}(\bm{x}_t) > \eta^{\mathrm{loss}}_{\mathcal B}(\bm{x}_t).
\end{aligned}
\]
\end{theorem}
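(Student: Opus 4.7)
}
The plan is to reduce the ordering $\eta^{\mathrm{loss}}_{\mathcal D}(\bm x_t)\lessgtr \eta^{\mathrm{loss}}_{\mathcal B}(\bm x_t)$ to the sign of a quadratic polynomial in $\theta_t$, and then argue via the intermediate value theorem that this polynomial has a unique root in $(0,1)$. First I would note that the denominators $\tau_t^{\mathcal S}+n^{\mathrm{loss}}_{\mathcal S}$ and numerators $s_t^{\mathcal S}$ are strictly positive, so cross-multiplication preserves sign. This yields
\[
\eta^{\mathrm{loss}}_{\mathcal D}(\bm x_t)<\eta^{\mathrm{loss}}_{\mathcal B}(\bm x_t)
\ \Longleftrightarrow\
s_t^{\mathcal D}\bigl(\tau_t^{\mathcal B}+n^{\mathrm{loss}}_{\mathcal B}\bigr)
< s_t^{\mathcal B}\bigl(\tau_t^{\mathcal D}+n^{\mathrm{loss}}_{\mathcal D}\bigr).
\]

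Next I would substitute $s_t^{\mathcal D}=\theta_t s_t$, $s_t^{\mathcal B}=(1-\theta_t)s_t$, and $\tau_t^{\mathcal S}=\mu_t^{\mathcal S}(3,2)\,s_t^{\mathcal S}$, and collect powers of $\theta_t$. The $\theta_t(1-\theta_t)$ cross-terms combine into $-\alpha_t\,\theta_t(1-\theta_t)$ with $\alpha_t=s_t\bigl(\mu_t^{\mathcal D}(3,2)-\mu_t^{\mathcal B}(3,2)\bigr)$, the noise terms contribute $\theta_t\,n^{\mathrm{loss}}_{\mathcal B}-(1-\theta_t)\,n^{\mathrm{loss}}_{\mathcal D}$, and after expanding $-\alpha_t\theta_t(1-\theta_t)=\alpha_t\theta_t^2-\alpha_t\theta_t$ the inequality becomes $h(\theta_t):=\alpha_t\theta_t^2+\beta_t\theta_t+\gamma_t<0$ with exactly the coefficients stated in the theorem. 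This is mechanical but should be displayed carefully to match the claimed formulas.

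The crux is to show that $h$ has a unique root $\theta_t^{\mathrm{crit}}\in(0,1)$ and that $h$ crosses from negative to positive there. I would first observe the boundary values $h(0)=-n^{\mathrm{loss}}_{\mathcal D}<0$ and $h(1)=\alpha_t+\beta_t+\gamma_t=n^{\mathrm{loss}}_{\mathcal B}>0$, which already gives existence by the IVT. For uniqueness, the main obstacle is verifying that $h$ opens upward, i.e.\ $\alpha_t>0$. I would invoke the spectral gap assumption $\mathrm{gap}_1=\lambda_k-\lambda_{k+1}>0$: the weighted ratio $\mu_t^{\mathcal D}(3,2)=\sum_{i\in\mathcal D}\lambda_i^3 c_{i,t}^2/\sum_{i\in\mathcal D}\lambda_i^2 c_{i,t}^2$ is a convex combination of $\{\lambda_i\}_{i\in\mathcal D}$ and hence is at least $\lambda_k$, while $\mu_t^{\mathcal B}(3,2)$ is a convex combination of $\{\lambda_i\}_{i\in\mathcal B}$ and hence is at most $\lambda_{k+1}$, giving $\mu_t^{\mathcal D}(3,2)-\mu_t^{\mathcal B}(3,2)\ge \mathrm{gap}_1>0$ and therefore $\alpha_t>0$. (A minor degenerate case $s_t=0$ is handled by the convention $\theta(\bm 0)=0$, so the theorem is vacuous there.)

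With $\alpha_t>0$, $h$ is an upward parabola, and since $h(0)h(1)<0$ it has exactly one root in $(0,1)$, call it $\theta_t^{\mathrm{crit}}$; the other root lies strictly below $0$ because $h(0)=\gamma_t<0$ places $0$ between the two roots. Monotonicity of an upward parabola outside its vertex then gives $h(\theta)<0$ iff $\theta<\theta_t^{\mathrm{crit}}$ (for $\theta\in[0,1]$), which translates exactly to the equivalence claimed in the theorem; the strict inequality case $\theta_t>\theta_t^{\mathrm{crit}}$ follows by reversing the argument. The only real obstacle is the $\alpha_t>0$ step, and as sketched above, it reduces to applying the weighted-average bound driven by $\mathrm{gap}_1$, so the whole argument is a clean reduction-plus-IVT proof.
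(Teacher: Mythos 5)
Your proposal is correct and takes essentially the same route as the paper: cross-multiply to reduce the threshold ordering to the sign of a quadratic in $\theta_t$, factor out $s_t$ and substitute $s_t^{\mathcal D}=\theta_t s_t$, $s_t^{\mathcal B}=(1-\theta_t)s_t$, $\tau_t^{\mathcal S}=\mu_t^{\mathcal S}(3,2)s_t^{\mathcal S}$ to identify $h(\theta)$, then establish $\alpha_t>0$ from the spectral gap via the convex-combination bound $\mu_t^{\mathcal D}(3,2)\ge\lambda_k>\lambda_{k+1}\ge\mu_t^{\mathcal B}(3,2)$, note $h(0)=\gamma_t<0$ and $h(1)=n^{\mathrm{loss}}_{\mathcal B}>0$, and conclude uniqueness of the root in $(0,1)$ from the upward-opening parabola with $0$ lying between its roots. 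The paper packages the $\alpha_t>0$, $\gamma_t<0$, boundary-value facts as Lemma~\ref{lem:crossover-quadratic-properties} and uses Vieta's formula for the root-sign argument in Lemma~\ref{lem:crossover-quadratic-critical-root}, but these are the same observations you make inline, so there is no material difference.
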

The detailed proof of Theorem \ref{thm:loss-crossover-rigorous} is in Appendix \ref{secpfpst}. Furthermore, as stated in the following theorem, this critical alignment threshold exhibits a predictable behavior in ill-conditioned settings, which converges to a state of full alignment 1.

\begin{theorem}[\emph{\textbf{Asymptotic Limit of the Alignment Threshold}}]\label{thm:loss-crossover-asymptotic}
Under Assumption \ref{asp:standing}, let the spectral gap be denoted by $m := \lambda_k/\lambda_{k+1}$. In the limit as the gap grows infinitely large, the critical threshold in Theorem \ref{thm:loss-crossover-rigorous} converges to 1:
\[
\lim_{m \to \infty} \theta^{\eta}_{\mathrm{crit}}(\bm{x}_t) = 1.
\]
\end{theorem}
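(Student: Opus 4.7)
The plan is to reduce the convergence $\theta^{\eta}_{\mathrm{crit}} \to 1$ to a one-line scaling comparison by rewriting the quadratic from Theorem~\ref{thm:loss-crossover-rigorous} in a form where the competing scales are transparent. Substituting $\beta_t = -\alpha_t + n^{\mathrm{loss}}_{\mathcal B} + n^{\mathrm{loss}}_{\mathcal D}$ and $\gamma_t = -n^{\mathrm{loss}}_{\mathcal D}$ into $h(\theta)=\alpha_t\theta^2+\beta_t\theta+\gamma_t=0$ and regrouping yields the equivalent identity
\[
\alpha_t\,\theta(1-\theta) \;=\; (n^{\mathrm{loss}}_{\mathcal B}+n^{\mathrm{loss}}_{\mathcal D})\,\theta \;-\; n^{\mathrm{loss}}_{\mathcal D}.
\]
Since any solution in $(0,1)$ must render the left-hand side non-negative, this immediately gives the uniform-in-$m$ lower bound $\theta^{\eta}_{\mathrm{crit}} \ge n^{\mathrm{loss}}_{\mathcal D}/(n^{\mathrm{loss}}_{\mathcal B}+n^{\mathrm{loss}}_{\mathcal D})>0$, and the same identity yields the matching upper bound $\alpha_t\,\theta^{\eta}_{\mathrm{crit}}(1-\theta^{\eta}_{\mathrm{crit}}) \le n^{\mathrm{loss}}_{\mathcal B}+n^{\mathrm{loss}}_{\mathcal D}$. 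Before invoking these, I would also record that $h(0)=-n^{\mathrm{loss}}_{\mathcal D}<0$ and $h(1)=n^{\mathrm{loss}}_{\mathcal B}\ge 0$, which (together with uniqueness from Theorem~\ref{thm:loss-crossover-rigorous}) guarantees $\theta^{\eta}_{\mathrm{crit}} \in (0,1]$ for every finite $m$.

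Next, I would establish that $(n^{\mathrm{loss}}_{\mathcal B}+n^{\mathrm{loss}}_{\mathcal D})/\alpha_t \to 0$ as $m \to \infty$. By monotonicity of weighted averages, $\mu_t^{\mathcal D}(3,2)\ge\lambda_k$ and $\mu_t^{\mathcal B}(3,2)\le\lambda_{k+1}$, so
\[
\alpha_t \;\ge\; s_t^{\mathcal D}\,(\lambda_k-\lambda_{k+1}) \;\ge\; \lambda_k^2\,\|\bm{P}^{\mathcal D}\bm{x}_t\|_2^2\,\lambda_{k+1}(m-1),
\]
which scales cubically in $\lambda_k$ whenever the state has nonzero dominant projection. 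In contrast, $n^{\mathrm{loss}}_{\mathcal B}+n^{\mathrm{loss}}_{\mathcal D} = \sum_i \lambda_i \kappa_i^2 \le \lambda_1\,\text{Tr}(\mSigma)$ is bounded by a multiple of $\lambda_1$; under Assumption~\ref{asp:standing} (where $\text{Tr}(\mSigma)$ and the block spectral moments are controlled, so that $\lambda_1$ is comparable to $\lambda_k$), the numerator grows at most linearly in $\lambda_k$ while the denominator grows cubically, so the ratio vanishes like $1/\lambda_k^2$.

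The conclusion then follows by a subsequence argument combining the two bounds above: they give $\theta^{\eta}_{\mathrm{crit}}(1-\theta^{\eta}_{\mathrm{crit}}) = O(1/\lambda_k^2) \to 0$, while the $m$-independent lower bound $\theta^{\eta}_{\mathrm{crit}} \ge n^{\mathrm{loss}}_{\mathcal D}/(n^{\mathrm{loss}}_{\mathcal B}+n^{\mathrm{loss}}_{\mathcal D}) > 0$ rules out any subsequential limit at $0$. Hence $\theta^{\eta}_{\mathrm{crit}} \to 1$, which is the claim.

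The main obstacle is the spectral bookkeeping needed to justify the scaling step: one must verify that along any sequence of problems with $m \to \infty$ satisfying Assumption~\ref{asp:standing}, the dominant-block state mass $\|\bm{P}^{\mathcal D}\bm{x}_t\|_2^2$ stays bounded below and $\lambda_1/\lambda_k$ remains bounded above, so that the cubic-versus-linear comparison survives throughout the limit. These are mild non-degeneracy conditions---in particular, they are compatible with the bounded trace $\text{Tr}(\mSigma)$ and the block-moment convergence in Assumption~\ref{asp:standing}---but they need to be stated cleanly. Once they are in place, the algebraic rearrangement above carries the entire argument.
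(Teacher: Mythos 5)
Your algebraic rewrite is correct, and in fact it is the same change of variables (effectively $\delta = 1-\theta$) that the paper uses to prove the companion rate theorem (Theorem~\ref{thm:theta-crit-rate}); the paper's proof of the present limit theorem (Theorem~\ref{thm:loss-crossover-asymptotic}) proceeds along a different line, returning to the raw crossover equality $\mu_t^{\mathcal D}(3,2)-\mu_t^{\mathcal B}(3,2) = n^{\mathrm{loss}}_{\mathcal B}/s_t^{\mathcal B} - n^{\mathrm{loss}}_{\mathcal D}/s_t^{\mathcal D}$ and showing directly that any crossover state forces $s_t^{\mathcal B} < \mathrm{Tr}(\mSigma)/(m-1) \to 0$. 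Your route is arguably cleaner, but as written it has a genuine gap in the scaling step.

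The gap is in how you estimate the right-hand side of the identity. You bound $\alpha_t\,\theta(1-\theta) \le n^{\mathrm{loss}}_{\mathcal B}+n^{\mathrm{loss}}_{\mathcal D} \le \lambda_1\,\mathrm{Tr}(\mSigma)$, which carries $\lambda_1$, and you then need ``$\lambda_1$ comparable to $\lambda_k$'' to make the numerator grow only linearly. This is an additional hypothesis: Assumption~\ref{asp:standing} controls block spectral moments and $\mathrm{Tr}(\mSigma)$, but it does not bound $\lambda_1/\lambda_k$ (e.g.\ a single outlier eigenvalue $\lambda_1 \gg \lambda_k$ is compatible with $\frac{1}{k}\sum_{i\in\mathcal D}\lambda_i^p \to \lambda_{\mathcal D,p}$). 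Moreover, if $m\to\infty$ with $\lambda_k$ fixed and $\lambda_{k+1}\to 0$, then $\alpha_t$ does not grow at all, so the claimed ``ratio vanishes like $1/\lambda_k^2$'' fails. The fix is to drop the negative term on the right rather than the positive one: from your identity, at the root,
\[
\alpha_t\,\theta^{\mathrm{crit}}(1-\theta^{\mathrm{crit}}) \;=\; n^{\mathrm{loss}}_{\mathcal B}\,\theta^{\mathrm{crit}} - n^{\mathrm{loss}}_{\mathcal D}\,(1-\theta^{\mathrm{crit}}) \;\le\; n^{\mathrm{loss}}_{\mathcal B}.
\]
Now $n^{\mathrm{loss}}_{\mathcal B} = \sum_{i\in\mathcal B}\lambda_i\kappa_i^2 \le \lambda_{k+1}\,\mathrm{Tr}(\mSigma)$ involves only bulk eigenvalues, while $\alpha_t \ge s_t(\lambda_k-\lambda_{k+1}) = s_t\,\lambda_{k+1}(m-1)$, so the $\lambda_{k+1}$ factors cancel:
\[
\theta^{\mathrm{crit}}(1-\theta^{\mathrm{crit}}) \;\le\; \frac{\mathrm{Tr}(\mSigma)}{s_t\,(m-1)} \;\longrightarrow\; 0,
\]
with no assumption on $\lambda_1/\lambda_k$ and only the non-degeneracy $s_t$ bounded below (which mirrors the paper's ``total energy does not vanish''). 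Combined with your lower bound $\theta^{\mathrm{crit}} \ge n^{\mathrm{loss}}_{\mathcal D}/(n^{\mathrm{loss}}_{\mathcal B}+n^{\mathrm{loss}}_{\mathcal D})$, the subsequence argument then closes.

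One smaller point: you call the lower bound ``uniform-in-$m$,'' but it depends on $n^{\mathrm{loss}}_{\mathcal B}$ and $n^{\mathrm{loss}}_{\mathcal D}$, which change with the spectrum. What rules out a subsequence tending to $0$ is that $n^{\mathrm{loss}}_{\mathcal B}/n^{\mathrm{loss}}_{\mathcal D} \le \lambda_{k+1}\mathrm{Tr}(\mSigma)/(\lambda_k\sum_{i\in\mathcal D}\kappa_i^2) = O(1/m) \to 0$ (assuming the dominant-block noise mass $\sum_{i\in\mathcal D}\kappa_i^2$ is nonzero), so the lower bound itself in fact tends to $1$; stating this explicitly makes the conclusion immediate and avoids the extra subsequence step.
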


\paragraph{Summary}
The central finding in this section is the existence of a critical alignment threshold, $\theta^{\text{crit}}_t$, that dictates the relative stability between  DSGD and BSGD. This threshold partitions the alignment space into two regimes with inverted stability orderings, as summarized below:
\\
\noindent\fbox{%
\begin{minipage}{0.97\linewidth}
\centering
\begin{tabular}{c@{\qquad\qquad}c}
\boxed{\theta_t < \theta^{\text{crit}}_t} & \boxed{\theta_t > \theta^{\text{crit}}_t} \\[1.5ex]
$\underbrace{\eta^{\mathrm{loss}}_{\mathcal D}(\bm{x}_t) < \eta^{\mathrm{loss}}_{\mathcal B}(\bm{x}_t)}_{\text{Low-Alignment Regime}}$
&
$\underbrace{\eta^{\mathrm{loss}}_{\mathcal D}(\bm{x}_t) > \eta^{\mathrm{loss}}_{\mathcal B}(\bm{x}_t)}_{\text{High-Alignment Regime}}$
\end{tabular}
\vspace{0.2cm}
\end{minipage}
}
\\
Our asymptotic analysis shows that as the spectral gap grows ($m := \lambda_k/\lambda_{k+1} \to \infty$), the critical threshold converges to one: $\lim_{m\to\infty} \theta^{\text{crit}}_t = 1$. This has a direct consequence on the operational regimes: the high-alignment regime $(\theta^{\text{crit}}_t, 1)$ asymptotically vanishes to \{1\}. Conversely, the low-alignment regime $[0, \theta^{\text{crit}}_t)$ expands to occupy nearly the entire alignment space $[0,1)$. The dynamics within this now-predominant low-alignment regime are therefore of primary importance. Here, the stability thresholds are ordered such that $\eta^{\mathrm{loss}}_{\mathcal D} < \eta^{\mathrm{loss}}_{\mathcal B}$, which leads to the following different descent behaviors:
\[
\forall\, \eta_t \in (\eta^{\mathrm{loss}}_{\mathcal D}, \eta^{\mathrm{loss}}_{\mathcal B}),
\quad
\begin{cases}
\text{For } \mathcal{D}: \quad \mathbb E[L(\bm{x}_{t+1})-L(\bm{x}_t)\mid \bm{x}_t] > 0 & \text{(Loss Increases)} \\
\text{For } \mathcal{B}: \quad \mathbb E[L(\bm{x}_{t+1})-L(\bm{x}_t)\mid \bm{x}_t] < 0 & \text{(Loss Decreases)}
\end{cases}
\]
This provides a theoretical explanation for the empirical findings of \citet{song2024does}, which observed that updates along dominant directions can increase while those in the bulk remain decreases, although the gradient is aligned with dominant space. However, it should be noted that the two alignment regimes here are not directly related to those in the previous Section~\ref{sec:step_size_condition}, although they exhibit the same behavior in the asymptotic case.

{\begin{theorem}[\emph{\textbf{Rate of the critical alignment threshold}}]
\label{thm:theta-crit-rate}
Under Assumption~\ref{asp:standing}, let $m := \frac{\lambda_k}{\lambda_{k+1}} > 1.$ Then the critical alignment threshold $\theta_t^{\mathrm{crit}}\in(0,1)$ satisfies
\[
\frac{n_{\mathcal{B}}^{\mathrm{loss}}}
{s_t\,\lambda_{k+1}(m-1)
+n_{\mathcal{B}}^{\mathrm{loss}}
+n_{\mathcal{D}}^{\mathrm{loss}}}
\;\le\;
1-\theta_t^{\mathrm{crit}}
\;\le\;
\frac{n_{\mathcal{B}}^{\mathrm{loss}}}
{s_t\,\lambda_{k+1}(m-1)}.
\]
If $\lambda_{k+1}=\Theta(1)$, consequently,
\[
\lambda_{k}=\Theta(m),  1-\theta_t^{\mathrm{crit}}
\in
\Theta\!\left(\frac{1}{s_t\,(m-1)}\right).
\]
\end{theorem}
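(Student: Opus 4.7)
The plan is to apply Theorem~\ref{thm:loss-crossover-rigorous} to reduce the estimate to the smaller positive root of a scalar quadratic, then to convert everything to the variable $u := 1 - \theta$ so that the quantity of interest becomes that root, and finally to pin down two-sided bounds via Vieta's formulas combined with a single spectral lower bound on the leading coefficient.

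After the substitution $u = 1-\theta$, and using the identities $2\alpha_t+\beta_t = \alpha_t+n_{\mathcal B}^{\mathrm{loss}}+n_{\mathcal D}^{\mathrm{loss}}$ and $\alpha_t+\beta_t+\gamma_t = n_{\mathcal B}^{\mathrm{loss}}$, the defining equation $h(\theta)=0$ becomes
\[
p(u) \;:=\; \alpha_t u^2 \;-\; b\, u \;+\; c \;=\; 0, \qquad b := \alpha_t + n_{\mathcal B}^{\mathrm{loss}} + n_{\mathcal D}^{\mathrm{loss}}, \qquad c := n_{\mathcal B}^{\mathrm{loss}}.
\]
Evaluating $p(0)=c\ge 0$ and $p(1)=-n_{\mathcal D}^{\mathrm{loss}}\le 0$ shows that the smaller root $u_-$ of $p$ lies in $[0,1)$ and equals $1-\theta_t^{\mathrm{crit}}$, while the larger root satisfies $u_+\ge 1$.

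The single spectral inequality that powers both directions is a lower bound on $\alpha_t$. Reading $\mu_t^{\mathcal D}(3,2)$ and $\mu_t^{\mathcal B}(3,2)$ as convex combinations of $\{\lambda_i\}_{i\in\mathcal D}$ and $\{\lambda_i\}_{i\in\mathcal B}$ respectively, with weights proportional to $\lambda_i^2 c_{i,t}^2$, the former is at least $\lambda_k$ and the latter at most $\lambda_{k+1}$, so
\[
\alpha_t \;=\; s_t\bigl(\mu_t^{\mathcal D}(3,2)-\mu_t^{\mathcal B}(3,2)\bigr) \;\ge\; s_t\lambda_{k+1}(m-1).
\]
The upper bound on $u_-$ is then immediate from the Vieta product $u_-u_+ = c/\alpha_t$ combined with $u_+\ge 1$: $u_-\le c/\alpha_t \le n_{\mathcal B}^{\mathrm{loss}}/\bigl(s_t\lambda_{k+1}(m-1)\bigr)$, which matches the stated right-hand inequality.

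For the lower bound, I would use the Vieta sum $u_-+u_+=b/\alpha_t$ together with $u_-\ge 0$ to conclude $u_+\le b/\alpha_t$, hence $u_- = c/(\alpha_t u_+)\ge c/b = n_{\mathcal B}^{\mathrm{loss}}/(\alpha_t+n_{\mathcal B}^{\mathrm{loss}}+n_{\mathcal D}^{\mathrm{loss}})$, and then recast the denominator into $s_t\lambda_{k+1}(m-1)+n_{\mathcal B}^{\mathrm{loss}}+n_{\mathcal D}^{\mathrm{loss}}$ by tracking the $\alpha_t$-term through the block-spectral structure of Assumption~\ref{asp:standing}. The $\Theta(1/(s_t(m-1)))$ corollary then follows by feeding the scaling $\lambda_{k+1}=\Theta(1)$, $\lambda_k=\Theta(m)$ into both bounds with $n_{\mathcal B}^{\mathrm{loss}}, n_{\mathcal D}^{\mathrm{loss}}$ treated as $\Theta(1)$ constants, so that both bounds collapse to $\Theta(1/(s_t(m-1)))$. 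The main technical obstacle is the lower bound: converting the naturally occurring $\alpha_t$ in the denominator to the target $s_t\lambda_{k+1}(m-1)$ is where the convergence of block spectral moments in Assumption~\ref{asp:standing} and the asymptotic regime $\lambda_k/\lambda_{k+1}\to\infty$ must be invoked to ensure $\alpha_t$ and $s_t\lambda_{k+1}(m-1)$ are comparable up to a constant absorbed in the $\Theta$.
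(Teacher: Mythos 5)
Your approach --- change variables to $u = 1-\theta$, reduce to the quadratic $p(u) = \alpha_t u^2 - b u + c = 0$, and extract two-sided bounds on the smaller root via Vieta's formulas combined with the spectral lower bound $\alpha_t \ge s_t(\lambda_k - \lambda_{k+1}) = s_t\lambda_{k+1}(m-1)$ --- is essentially the paper's approach. You actually execute the algebra more cleanly: the paper's expansion of $h(1-\delta)$ drops the $-\alpha_t\delta$ contribution to the linear coefficient, stating that the quadratic simplifies to $\alpha_t\delta^2 + (n_{\mathcal B}^{\mathrm{loss}} + n_{\mathcal D}^{\mathrm{loss}})\delta - n_{\mathcal B}^{\mathrm{loss}} = 0$, whereas the correct collection of coefficients (which you have) gives $\alpha_t\delta^2 - (\alpha_t + n_{\mathcal B}^{\mathrm{loss}} + n_{\mathcal D}^{\mathrm{loss}})\delta + n_{\mathcal B}^{\mathrm{loss}} = 0$. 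The intermediate $\alpha_t$-bounds $n_{\mathcal B}^{\mathrm{loss}}/(\alpha_t + n_{\mathcal B}^{\mathrm{loss}} + n_{\mathcal D}^{\mathrm{loss}}) \le \delta \le n_{\mathcal B}^{\mathrm{loss}}/\alpha_t$ that the paper asserts are correct, and your Vieta argument (using $u_+ \ge 1$ for the upper bound and $u_+ \le b/\alpha_t$ for the lower) is a clean way to obtain them, in contrast to the paper's claim that the bounds follow ``immediately'' from the rearranged identity.

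However, you have correctly identified a genuine gap, and it is present in the paper's proof as well. The upper bound converts cleanly, because $\alpha_t \ge s_t\lambda_{k+1}(m-1)$ enlarges the denominator in $n_{\mathcal B}^{\mathrm{loss}}/\alpha_t$ and therefore shrinks the bound. But the lower bound goes the wrong way: $\alpha_t \ge s_t\lambda_{k+1}(m-1)$ gives $n_{\mathcal B}^{\mathrm{loss}}/(\alpha_t + n_{\mathcal B}^{\mathrm{loss}} + n_{\mathcal D}^{\mathrm{loss}}) \le n_{\mathcal B}^{\mathrm{loss}}/(s_t\lambda_{k+1}(m-1) + n_{\mathcal B}^{\mathrm{loss}} + n_{\mathcal D}^{\mathrm{loss}})$, so the substitution makes the lower bound on $\delta$ \emph{weaker}, not the stated stronger one. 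To obtain the theorem's lower bound as written one would need the reverse inequality $\alpha_t \le s_t\lambda_{k+1}(m-1)$, which is false in general: equality holds only when $\mu_t^{\mathcal D}(3,2) = \lambda_k$ and $\mu_t^{\mathcal B}(3,2) = \lambda_{k+1}$, and in general $\alpha_t$ can be as large as $s_t(\lambda_1 - \lambda_d)$. The paper simply ``substitutes this bound'' and declares the result, which is a directional error. You flag this as ``the main technical obstacle'' and propose absorbing it into the $\Theta$ corollary; that is plausible but requires a matching upper bound $\alpha_t = O(s_t\lambda_{k+1}(m-1))$, equivalently $\mu_t^{\mathcal D}(3,2) = \Theta(\lambda_k)$ and $\mu_t^{\mathcal B}(3,2) = O(\lambda_{k+1})$, which does not obviously follow from Assumption~\ref{asp:standing}. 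So your proposal is no less rigorous than the paper's: both arrive correctly at the bounds in terms of $\alpha_t$, and both leave the passage from $\alpha_t$ to $s_t\lambda_{k+1}(m-1)$ in the lower bound unjustified.
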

\textbf{Remark} Note that the asymptotic order here is simultaneously related to both $s_t$ and $m$.  $s_t$ is positively correlated with the loss function $L(\bm{x}_t)$. Although we discussed in Theorem \ref{thm:theta-star-rate} that $\theta_t^\ast$ is asymptotically 1 with order $\Theta\!\left(\frac{1}{m^2}\right)$. Notice that $\theta^\ast_t>\theta_t^{crit}$ with large enough $m$, which implies that the alignment stable regime may drop in Bulk projection unstable regime. This seems to contradict the empirical phenomena, but actually it doesn't. We recall the phenomenon observed by \citet{song2024does}: when the loss has not yet converged (i.e., $L(\bm{x}_t)$ has not reached a very small regime), even the alignment is already very high at this point, if we start performing subspace projection on the updates, we will observe the suspicious alignment phenomenon.
The conditions here, in addition to the inherently ill-conditioned structure of the loss Hessian itself, also require that the loss $L(\bm{x}_t)$ is still in the pre-convergence stage.

}

\section{Constant Step Size SGD: Two-Phase Dynamics of $\theta_t$}
\label{csgdsection}
Building upon the step size condition theory from the preceding sections, we now investigate the long-term dynamics under the \emph{constant step size} SGD, which we will refer to as CSGD in the later content. This specialization allows us to characterize a distinct two-phase learning dynamic: an initial, transient phase driven by the initial state $\bm{x}_0$, and a late-time, steady-state equilibrium driven by noise. In this section, we make the following assumptions about the step size and initialization for CSGD.
Recall that $c_{i,t} := \langle \bm{x}_t, \bm{u}_i \rangle$ is the projection of the state $\bm{x}_t$ onto the $i$-th eigenvector of matrix $\mA$. To give a further analysis, we define several key quantities:
\[
\beta_i := \frac{\eta\,\kappa_i^2}{2\lambda_i-\eta\lambda_i^2} > 0, \varrho_{\mathcal{D}} := \sum_{i \in \mathcal{D}} (c_{i,0}^2 - \beta_i).
\]
\[
\delta := \frac{s_{\max}\psi_{\mathcal D}\lambda_1^2}{\lambda_d^2 \lambda_k^2 \left(\frac{2(\lambda_k-\lambda_{k+1})}{\eta} - (\lambda_1^2 - \lambda_d^2)\right)}.
\]

\begin{assumption}
\label{assum-cssa}
Our analysis is based on the following assumptions:
\begin{table}[H]
\centering
\caption{Assumptions for CSGD}
\label{tab:const_eta_assumptions}
\begin{tabular}{ll}
\hline
\textbf{Assumption} & \textbf{Description} \\ \hline \\
Constant Step Size & 
$\eta_t = \eta, \quad 0 < \eta < \min\left\{ \frac{2}{\lambda_1}, \frac{2(\lambda_k - \lambda_{k+1})}{\lambda_1^2 - \lambda_d^2} \right\}$ \\[4ex]
Initialization for $\bm{x}_0$& 
$\begin{array}{l}
\forall i \in \mathcal{D} ,\quad c_{i,0}^2 > \beta_i, \\
\varrho_{\mathcal{D}} > \delta - \sum_{i \in \mathcal{D}} \beta_i
\end{array}$ \\ \\ \hline
\end{tabular}
\end{table}
\end{assumption}

\begin{theorem}\label{thm:monotone-decrease-expected-alignment}
(Initial Decrease Phase) Under Assumption \ref{asp:standing} and Assumption \ref{assum-cssa}, let the  $t^*$ be defined as
\[
t^* := \left\lfloor \frac{\log\left( \dfrac{\varrho_{\mathcal{D}}}{\delta - \sum_{i \in \mathcal{D}} \beta_i} \right)}{-2\log(1-\eta\lambda_1)} \right\rfloor.
\]
Where $\left\lfloor \cdot \right\rfloor$ is the floor function. Then for all time steps $t \in \{0, 1, \dots, t^*-1\}$, the expected alignment is strictly decreasing:
\[
\lim_{d\to\infty}\,\mathbb{E}[\theta_{t+1}] < \lim_{d\to\infty}\mathbb{E}[\theta_t].
\]
\end{theorem}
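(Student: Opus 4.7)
}
The plan is to reduce the claim to a pointwise application of Theorem~\ref{thm:step-dec-below}, using the state-dependent lower bound of Theorem~\ref{thm:bounds-xnorm} to translate the condition $\eta<\eta_t^*(\bm{x}_t)$ into a much simpler condition on $\|\bm{P}^{\mathcal{D}}\bm{x}_t\|_2^2$, and then to take an outer expectation via the tower property. I would begin by diagonalizing the update in the eigenbasis of $\mA$: writing $\xi_{i,t}:=\langle\bm{u}_i,\vxi_t\rangle$, the coordinate recursion $c_{i,t+1}=(1-\eta\lambda_i)c_{i,t}-\eta\xi_{i,t}$ gives $\mathbb{E}[c_{i,t+1}^2\mid c_{i,t}]=(1-\eta\lambda_i)^2 c_{i,t}^2+\eta^2\kappa_i^2$, whose unique fixed point is exactly $\beta_i$, so iteration yields the closed form
$$\mathbb{E}[c_{i,t}^2]=(1-\eta\lambda_i)^{2t}\bigl(c_{i,0}^2-\beta_i\bigr)+\beta_i.$$

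Next I apply Theorem~\ref{thm:bounds-xnorm} in its $\bm{P}^{\mathcal{D}}$-form. Under the step-size cap $\eta<2(\lambda_k-\lambda_{k+1})/(\lambda_1^2-\lambda_d^2)$ from Assumption~\ref{assum-cssa}, a direct rearrangement of that lower bound shows that $\eta<\eta_t^*(\bm{x}_t)$ is implied by $\|\bm{P}^{\mathcal{D}}\bm{x}_t\|_2^2>\delta$, and the cap makes $\delta>0$. In the regime of Assumption~\ref{asp:standing}, the quantity $\|\bm{P}^{\mathcal{D}}\bm{x}_t\|_2^2=\sum_{i\in\mathcal{D}}c_{i,t}^2$ concentrates around its mean $\sum_{i\in\mathcal{D}}\mathbb{E}[c_{i,t}^2]$. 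Because $c_{i,0}^2>\beta_i$ on $\mathcal{D}$ (Assumption~\ref{assum-cssa}) and $(1-\eta\lambda_i)^{2t}\geq(1-\eta\lambda_1)^{2t}$ for all $i\in\mathcal{D}$ when $\eta\lambda_1<1$, the closed form above yields the clean lower bound
$$\sum_{i\in\mathcal{D}}\mathbb{E}[c_{i,t}^2]\;\geq\;(1-\eta\lambda_1)^{2t}\,\varrho_{\mathcal{D}}+\sum_{i\in\mathcal{D}}\beta_i.$$
Requiring the right-hand side to exceed $\delta$ and then solving, using that $\log(1-\eta\lambda_1)<0$ flips the inequality upon division, recovers exactly $t<t^*$; the assumption $\varrho_{\mathcal{D}}>\delta-\sum_{i\in\mathcal{D}}\beta_i$ guarantees $t^*\geq 1$ and that the logarithm is taken of a positive quantity.

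Combining the two reductions, for every $t\in\{0,\dots,t^*-1\}$ the inequality $\eta<\eta_t^*(\bm{x}_t)$ holds almost surely in the $d\to\infty$ limit, so Theorem~\ref{thm:step-dec-below} gives $\lim_{d\to\infty}\mathbb{E}[\theta_{t+1}\mid\bm{x}_t]<\theta_t$ pointwise, and taking the outer expectation delivers the claim. \textbf{The main obstacle} is the interchange of the $d\to\infty$ limit with the outer expectation: Theorem~\ref{thm:step-dec-below} provides a limit of \emph{conditional} expectations, whereas the claim concerns a limit of \emph{unconditional} ones. I would address this by upgrading the sufficient condition of the previous paragraph into a high-probability statement. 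Concentration of $\sum_{i\in\mathcal{D}}c_{i,t}^2$ around its mean, together with trajectory boundedness in Assumption~\ref{asp:standing} and the finiteness of $t^*$, should make the event $\{\eta<\eta_t^*(\bm{x}_t)\}$ occur with probability tending to one uniformly over $t\leq t^*$, at which point the pointwise decrease propagates through the tower property in the limit.
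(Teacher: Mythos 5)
Your proposal takes essentially the same route as the paper's proof: reduce to Theorem~\ref{thm:step-dec-below} via the state-dependent lower bound on $\eta_t^*$ from Theorem~\ref{thm:bounds-xnorm}, derive the closed form $\mathbb{E}[c_{i,t}^2]=(1-\eta\lambda_i)^{2t}(c_{i,0}^2-\beta_i)+\beta_i$, lower-bound $\sum_{i\in\mathcal D}\mathbb{E}[c_{i,t}^2]\geq(1-\eta\lambda_1)^{2t}\varrho_{\mathcal D}+\sum_{i\in\mathcal D}\beta_i$, and solve the resulting inequality for $t<t^*$. Your use of the $\bm{P}^{\mathcal D}$-form of the bound (from the remark after Theorem~\ref{thm:bounds-xnorm}), reducing $\eta<\eta_t^*$ to $\|\bm{P}^{\mathcal D}\bm{x}_t\|_2^2>\delta$, is the same algebra the paper carries out in terms of $\|\bm{x}_t\|^2\theta_t$ followed by the observation $\|\bm{x}_t\|^2\theta_t\geq\tfrac{\lambda_k^2}{\lambda_1^2}\|\bm{P}^{\mathcal D}\bm{x}_t\|_2^2$.

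The one place you are somewhat looser than the paper is the interchange step, which you correctly identify as the main obstacle. Your plan—show $\{\eta<\eta_t^*(\bm{x}_t)\}$ holds with probability tending to one and then propagate by the tower property—gives the event with high probability, but does not by itself preserve \emph{strictness} of the inequality in the $d\to\infty$ limit: a decrease that is strict but could shrink toward zero on the high-probability set, together with the contribution of the complementary vanishing-probability set, is not immediately ruled out by the tower property alone. The paper resolves this more tightly via Lemmas~\ref{lem:asymp-rep}--\ref{lem:cond-uncond-equivalence}: it shows the one-step conditional expectation of $\theta_{t+1}$ is asymptotically a continuous, bounded function $k(\cdot)$ of the six normalized block statistics, shows those statistics concentrate around deterministic limits, and then uses the continuous mapping plus bounded convergence theorem to identify $\lim_d\mathbb{E}_{t+1}[\theta_{t+1}]$ with $k(\bar{\bm{z}}_t)$, the same deterministic value as the conditional limit. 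The strict decrease then holds between deterministic numbers, and strictness survives automatically. Your sketch points to the right ingredients (concentration, boundedness); the paper's lemma chain is what makes the strictness argument airtight. Minor additional note: both your sketch and the paper implicitly need $\eta\lambda_1<1$ for the step $(1-\eta\lambda_i)^{2t}\geq(1-\eta\lambda_1)^{2t}$ on $\mathcal D$ and for $-2\log(1-\eta\lambda_1)$ to be a positive real; you flag this explicitly, which is a point the paper could have stated more clearly.
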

The proof of Theorem \ref{thm:monotone-decrease-expected-alignment} is in Appendix \ref{sectionPCSD}.\\
\textbf{Interpretation}  Theorem \ref{thm:monotone-decrease-expected-alignment} establishes the existence of a predictable initial phase in SGD dynamics, guaranteeing that for a sufficiently large initialization, the expected alignment $\mathbb{E}[\theta_t]$ will monotonically decrease for a calculable duration of $t^*$ steps. The formula for $t^*$ explicitly links this phase duration to the initial state: a positive-length phase is guaranteed when the initial $\varrho_{\mathcal{D}}$, exceeds a threshold. Furthermore, the theorem shows that the length of this phase, $t^*$, grows logarithmically with this initial $\varrho_{\mathcal{D}}$.

\begin{theorem}\label{thm:late-theta}(Late Phase)
Under Assumption~\ref{asp:standing} and Assumption~\ref{assum-cssa}, the late-time asymptotic expected alignment is given by
\[
\theta_\infty := \lim_{t \to \infty} \lim_{d \to \infty} \mathbb{E}_t[\theta_t]
= \frac{\lim_{d \to \infty} \sum_{i \in \mathcal{D}} \lambda_i^2 \beta_i}{\lim_{d \to \infty} \sum_{i=1}^d \lambda_i^2 \beta_i},
\]
where $\beta_i = \dfrac{\eta \kappa_i^2}{2\lambda_i - \eta \lambda_i^2} > 0$. Equivalently,
\[
\theta_\infty = \frac{\lim_{d \to \infty} \sum_{i \in \mathcal{D}} \dfrac{\eta \lambda_i^2 \kappa_i^2}{2\lambda_i - \eta \lambda_i^2}}{\lim_{d \to \infty} \sum_{i=1}^d \dfrac{\eta \lambda_i^2 \kappa_i^2}{2\lambda_i - \eta \lambda_i^2}}.
\]
\end{theorem}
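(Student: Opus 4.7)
The plan is to work in the eigenbasis of $\mA$, identify the per-coordinate stationary second moment $\beta_i$ as the fixed point of a scalar recursion, and then use high-dimensional self-averaging to pass the expectation inside the ratio defining $\theta_t$.

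First I would project the CSGD update onto $\bm{u}_i$ to obtain $c_{i,t+1}=(1-\eta\lambda_i)\,c_{i,t}-\eta\,\xi_{i,t}$ with $\xi_{i,t}:=\langle\vxi_t,\bm{u}_i\rangle$, $\mathbb{E}[\xi_{i,t}]=0$, and $\mathbb{E}[\xi_{i,t}^2]=\kappa_i^2$. Using independence of $\vxi_t$ from $\bm{x}_t$, squaring and taking expectation yields the scalar recursion $\mathbb{E}[c_{i,t+1}^2]=(1-\eta\lambda_i)^2\,\mathbb{E}[c_{i,t}^2]+\eta^2\kappa_i^2$. Under Assumption~\ref{assum-cssa}, $\eta<2/\lambda_1$ forces $|1-\eta\lambda_i|<1$ for every $i$, so this linear recursion is a contraction with unique fixed point $\beta_i=\eta\kappa_i^2/(2\lambda_i-\eta\lambda_i^2)$, and the closed form $\mathbb{E}[c_{i,t}^2]=(1-\eta\lambda_i)^{2t}c_{i,0}^2+\beta_i\bigl(1-(1-\eta\lambda_i)^{2t}\bigr)$ shows $\mathbb{E}[c_{i,t}^2]\to\beta_i$ geometrically as $t\to\infty$.

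The central step is to exchange the expectation with the ratio in the high-dimensional limit: for each fixed $t$, I would establish $\lim_{d\to\infty}\mathbb{E}[\theta_t]=\lim_{d\to\infty}\mathbb{E}[s_t^{\mathcal D}]/\mathbb{E}[s_t]$ via self-averaging of the block sums. Decomposing $c_{i,t}=d_{i,t}+z_{i,t}$ into the deterministic drift $d_{i,t}:=(1-\eta\lambda_i)^t c_{i,0}$ and a centered jointly Gaussian vector $(z_{i,t})_i$ with covariance $(\Gamma_t)_{ij}=\eta^2(\mC)_{ij}\sum_{s=0}^{t-1}[(1-\eta\lambda_i)(1-\eta\lambda_j)]^{t-1-s}$, Isserlis' identity gives
\[
\mathrm{Var}(s_t^{\mathcal D})=2\sum_{i,j\in\mathcal D}\lambda_i^2\lambda_j^2(\Gamma_t)_{ij}^2+4\sum_{i,j\in\mathcal D}\lambda_i^2\lambda_j^2 d_{i,t}d_{j,t}(\Gamma_t)_{ij}.
\]
The first sum is bounded by $C\,\lambda_1^4\,\|\Gamma_t\|_F^2\le C'\,\mathrm{Tr}(\mSigma^2)=O(1)$ via $\mathrm{Tr}(\mSigma)<\infty$; the second is handled by Cauchy--Schwarz combined with trajectory boundedness and the fourth block moment $\lambda_{\mathcal D,4}$. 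Since $\mathbb{E}[s_t^{\mathcal D}]=\Theta(k)$ while $\mathrm{Var}(s_t^{\mathcal D})=o(k^2)$, the ratio $s_t^{\mathcal D}/\mathbb{E}[s_t^{\mathcal D}]\to 1$ in $L^2$, and the same argument applied to $s_t$ gives concentration of the denominator; combined with $\theta_t\in[0,1]$ and $\mathbb{E}[s_t]$ staying bounded away from zero, bounded convergence yields the desired exchange.

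Finally I would take $t\to\infty$ inside the ratio. Each $\mathbb{E}[c_{i,t}^2]$ is a convex combination of $c_{i,0}^2$ and $\beta_i$ and hence bounded uniformly in $t$; trajectory boundedness and the block spectral moments provide a $d$-normalized envelope, so dominated convergence on the block sums yields $\lim_{t\to\infty}\lim_{d\to\infty}\mathbb{E}[s_t^{\mathcal D}]/\mathbb{E}[s_t]=\lim_{d\to\infty}\sum_{i\in\mathcal D}\lambda_i^2\beta_i\big/\sum_{i=1}^d\lambda_i^2\beta_i$, which is the claimed formula. The hardest part will be the variance bound above: because $\vxi_t$ carries arbitrary covariance $\mSigma$, the $c_{i,t}$ are correlated across $i$, so the naive coordinate-independent bound fails and one must control the weighted Frobenius-type sum $\sum_{i,j}\lambda_i^2\lambda_j^2(\Gamma_t)_{ij}^2$ using $\mathrm{Tr}(\mSigma)$ together with the fourth-order block moment $\lambda_{\mathcal D,4}$ rather than only second-order moments; once this self-averaging is in hand, the remaining dominated-convergence steps are routine.
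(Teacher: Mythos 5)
Your overall strategy matches the paper's: derive the closed-form second moment $\mathbb{E}_t[c_{i,t}^2]=(1-\eta\lambda_i)^{2t}(c_{i,0}^2-\beta_i)+\beta_i$ from the eigenbasis recursion, send $t\to\infty$ to isolate $\beta_i$, and justify swapping expectation with the ratio defining $\theta_t$ via a self-averaging/concentration argument followed by bounded convergence. The paper packages the $d\to\infty$ exchange in a dedicated concentration lemma (its Lemma~\ref{lem:concentration-at-t}), and the Theorem~\ref{thm:late-theta} proof itself then simply passes to the $t\to\infty$ limit.

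Where you diverge from the paper is in the variance control, and this is in your favor. The paper's supporting lemma asserts that the random variables $\{c_{i,t}^2\}_{i}$ are independent across eigendirections, which would make $\mathrm{Var}_t(s_t^{\prime,\mathcal D})$ a diagonal sum; but that independence holds only when $\mC=\mU^\top\mSigma\mU$ is diagonal, i.e.\ when $\mSigma$ commutes with $\mA$, a restriction the paper never imposes (its setup keeps $\mSigma$ a general PSD matrix and explicitly defines $\kappa_i^2=(\mC)_{ii}$ as only the diagonal entries). Your Isserlis-based computation, which keeps the full covariance $\Gamma_t$ of the Gaussian fluctuation vector and bounds the weighted Frobenius sum $\sum_{i,j}\lambda_i^2\lambda_j^2(\Gamma_t)_{ij}^2$ via $\mathrm{Tr}(\mSigma^2)$ together with a Cauchy--Schwarz bound on the drift cross term, correctly handles the correlated-noise case that the paper's lemma silently skips. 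You get $\mathrm{Var}(s_t^{\mathcal S})=O(|\mathcal S|)$ rather than the paper's implicit $O(|\mathcal S|)$ under the wrong independence claim, so the concentration $s_t^{\mathcal S}/\mathbb{E}[s_t^{\mathcal S}]\to 1$ in $L^2$ goes through cleanly and the rest of your argument (bounded convergence for the ratio, then dominated convergence in $t$) is sound. In short: same route to the same formula, but your proof of the self-averaging step is the one that actually covers the stated generality of $\mSigma$.
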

The proof of Theorem \ref{thm:late-theta} is in Appendix \ref{sectionPCSD}.\\
\textbf{Interpretation} After the transient decay, the alignment settles to a stable level $\theta_\infty$ determined solely by the Hessian spectrum $\{\lambda_i\}$, the noise covariance $\{\kappa_i^2\}$, and the step size $\eta$. In the case of isotropic noise ($\mSigma = \sigma^2 \bm{I}$, so $\kappa_i^2 = \sigma^2$ for all $i$) and small step size $\eta \ll 1$, we have the approximation
\[
\theta_\infty \approx \frac{\lim_{d \to \infty} \sum_{i \in \mathcal{D}} \lambda_i}{\lim_{d \to \infty} \sum_{i=1}^d \lambda_i}.
\]
Furthermore, if the dominant--bulk eigenvalue gap grows such that $\lambda_k / \lambda_{k+1} \to \infty$ while the bulk spectrum remains bounded away from zero, then $\theta_\infty \to 1$. This provides a theoretical explanation for the long-run alignment of SGD with the dominant eigenspace.
\section{Numerical Experiment}
\label{NES}
\paragraph{Numerical Simulation with Different Spectrum Gap}

We performed several numerical simulations of constant step size SGD under varying spectral gaps. In these experiments, we fixed the dimension $d=500$, the dominant space dimension $k=50$, the step size $\eta=0.003$, and the total number of steps $T=30,000$. For each spectral gap 
$$m = \lambda_k / \lambda_{k+1} \in \{5, 10, 20, 50, 100, 200, 300, 400, 500\},$$
We randomly initialized positive definite symmetric matrices $A$ with the corresponding spectral gap and conducted comparative experiments. {For brevity,} We selected four experiments to present in the main text, with simulations for $m=5, 20, 50, 200$. We also plotted the loss decline curves to ensure that the algorithm converges to the convergence stage, with specific results shown in Figure \ref{simfig}. The complete experimental results {and setup
}are available at
Appendix \ref{appendix_simulation}. It can be observed that all experiments exhibit a two-phase phenomenon, where the alignment function initially decreases in the short term and then increases. Furthermore, as $m$ increases, the stable region of the alignment function gradually approaches 1. 

\begin{figure}[h]
    \centering
    \subfigure[m=5]{\includegraphics[width=0.45\textwidth]{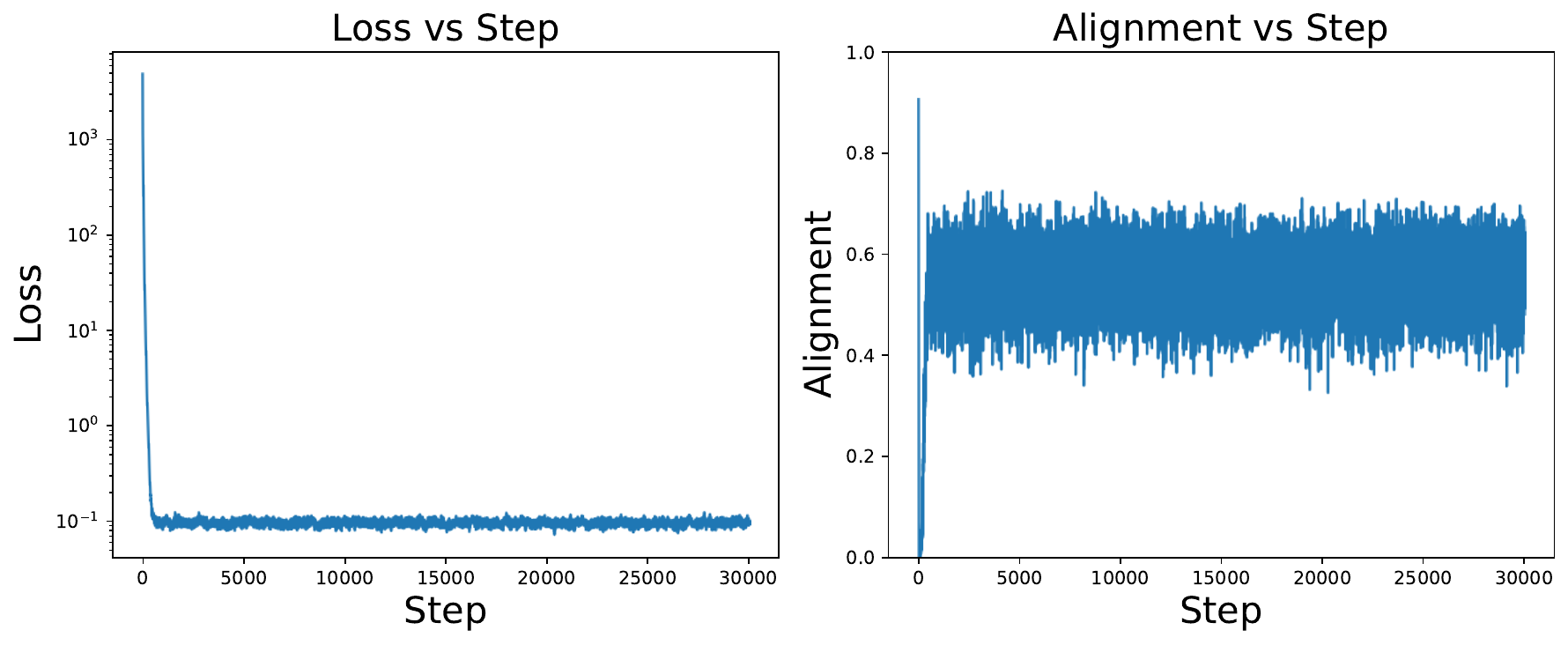}}%
    \subfigure[m=20]{\includegraphics[width=0.45\textwidth]{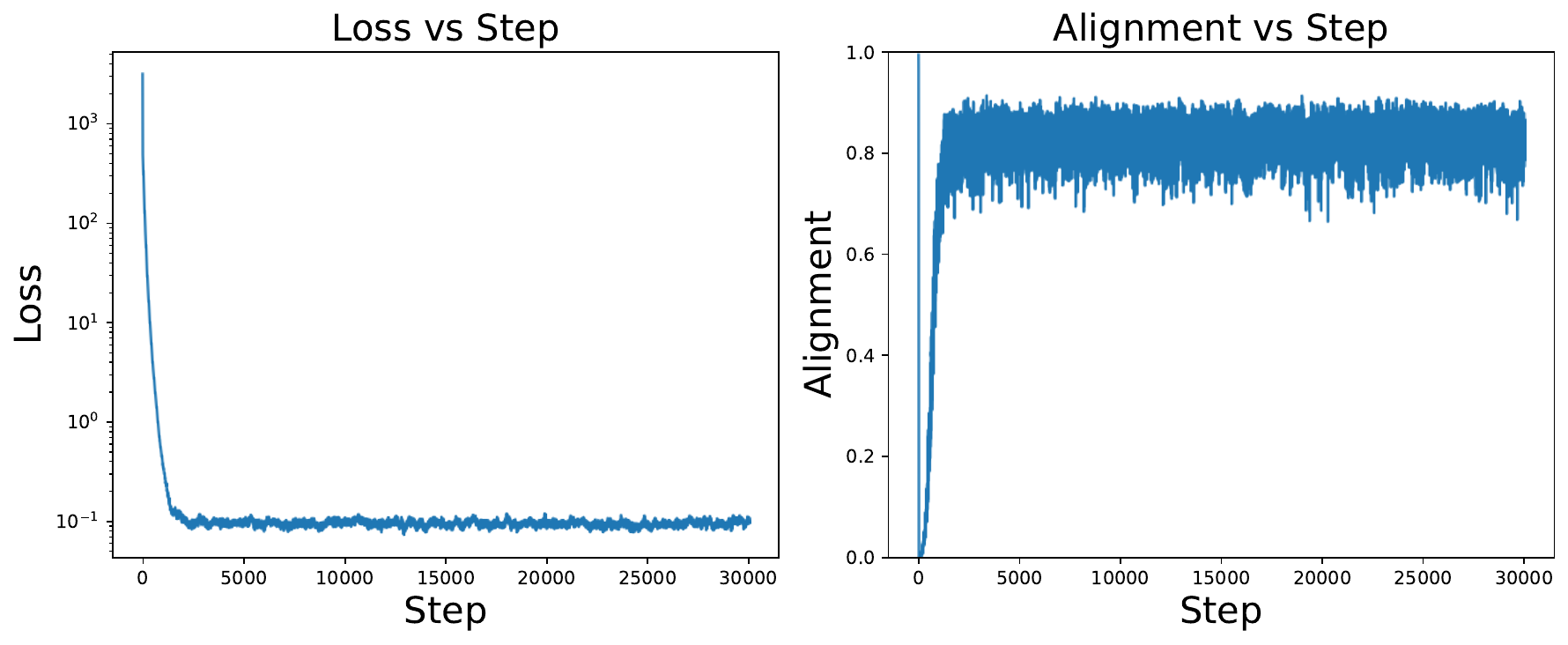}}%
     \\
    \subfigure[m=50]{\includegraphics[width=0.45\textwidth]{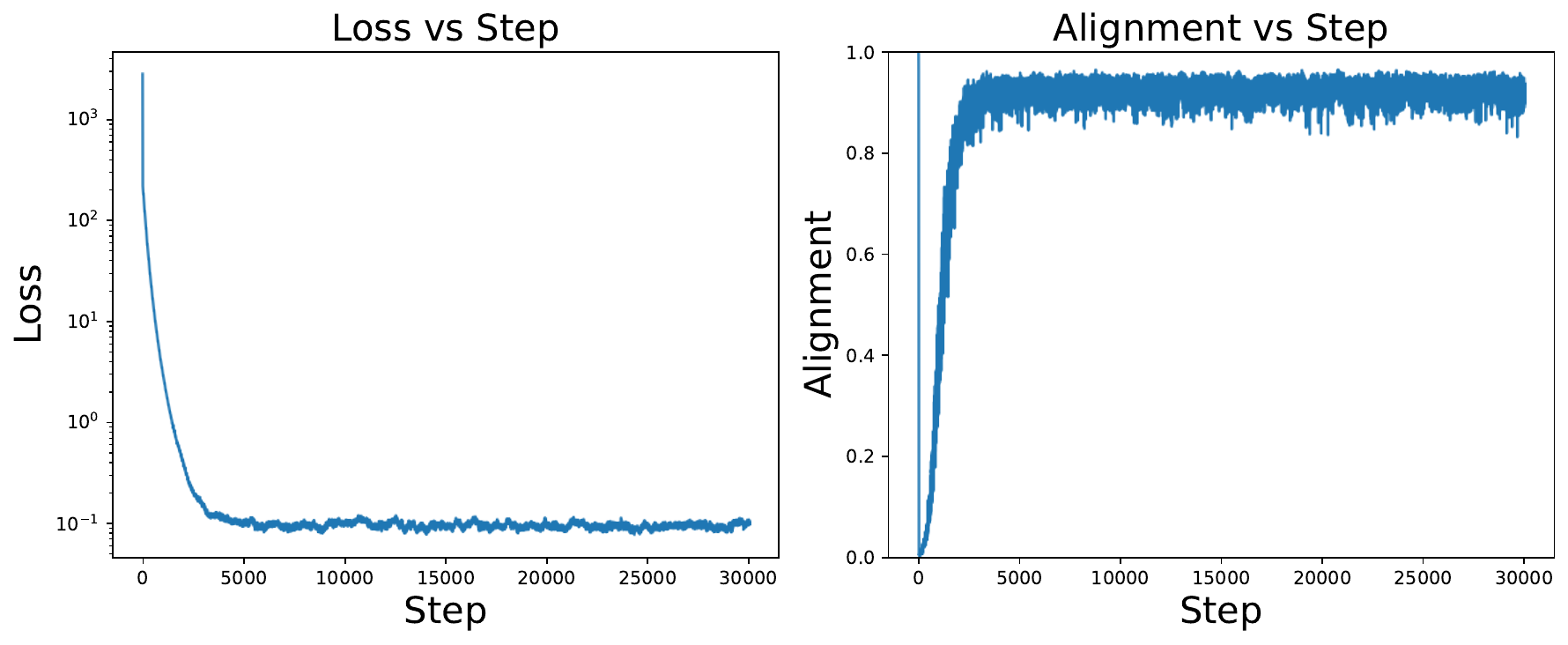}}%
    \subfigure[m=200]
{\includegraphics[width=0.45\textwidth]{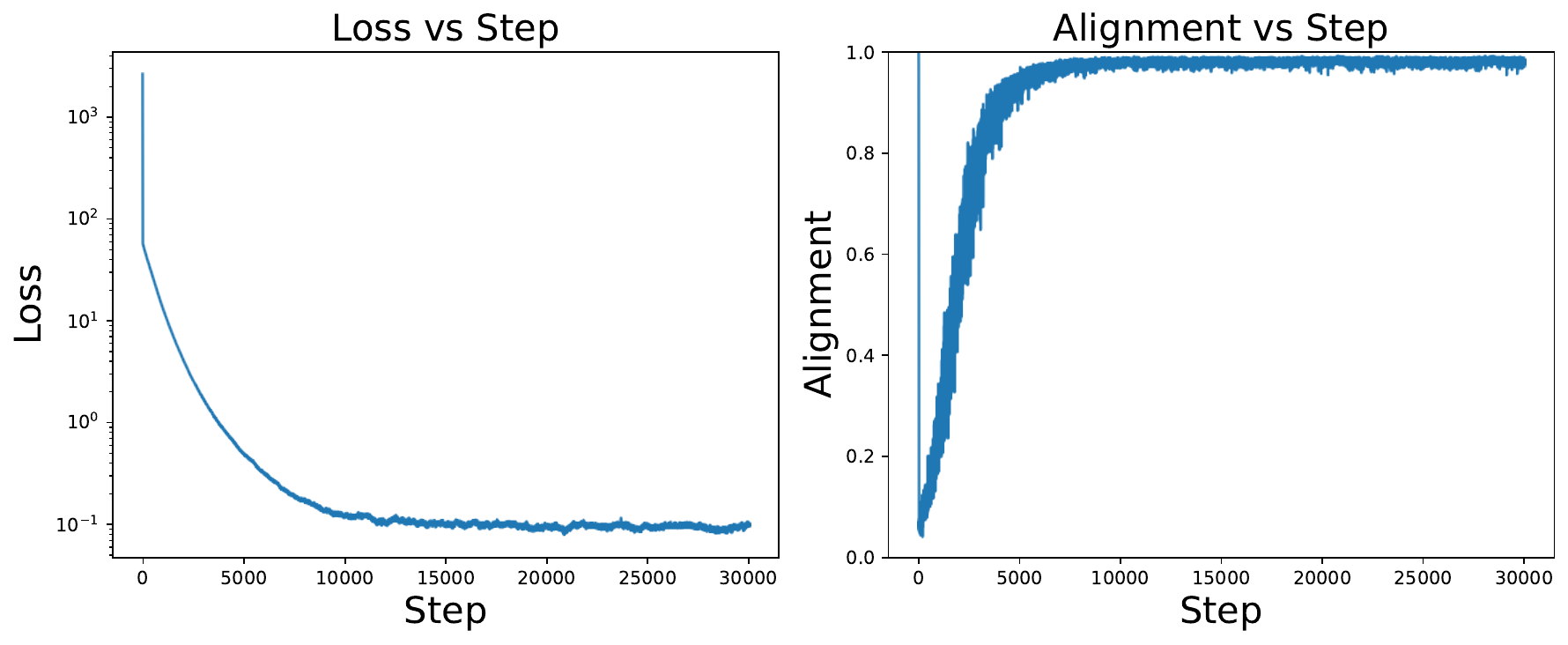}}%
    \caption{Numerical simulation experiments with different spectral gaps (\(m=\lambda_k/\lambda_{k+1}\))}\label{simfig}
\end{figure}

\addressedtianyu{I think for the following section We need to clarify which case this $\theta_{\infty}$ corresponds to— the constant lr setting, the decaying lr setting, or both (I think it is const lr? since you refer to theorem 16). Otherwise, it looks like the experiments above and below partially overlap.
 }
{\paragraph{The Simulation for $\mathbb{E}[\theta_{\infty}]$ and Spectrum Gap}
Following the setup from the previous experiment (detailed in Appendix \ref{appendix_simulation}, where a constant step size is employed for SGD), we performed a simulation study to investigate the relationship between the limiting stable value $\theta_\infty$ and the spectral gap $m = \lambda_k / \lambda_{k+1}$ in the expectation context. Here, $\theta_T$ denotes the value at time $T$, which marks the beginning of the second phase where the system enters stability, and $T_{\text{end}}$ denotes the end time of the experiment. We use the statistic:
\[
\mathbb{E}[\text{Alignment}] = \frac{1}{T_{\text{end}} - T} \sum_{t=T}^{T_{\text{end}}} \theta_t
\]
to estimate $\mathbb{E}[\theta_\infty]$. We plotted the trend of $\mathbb{E}[\text{Alignment}]$ and recorded the variance of this statistic as functions of $m$, the latter reflecting the trend in the volatility of $\theta_t$ in the second phase, as shown in Figure~\ref{AE2}. We also performed tests with different random seeds, the detail results can be found in Appendix \ref{appendix_simulation}. It can be observed that as $m$ gradually increases, $\mathbb{E}[\text{Alignment}]$ gradually approaches 1 with fewer fluctuations, which is consistent with the result of Theorem \ref{thm:late-theta}.
\begin{figure}
    \centering
    \includegraphics[width=0.8\linewidth]{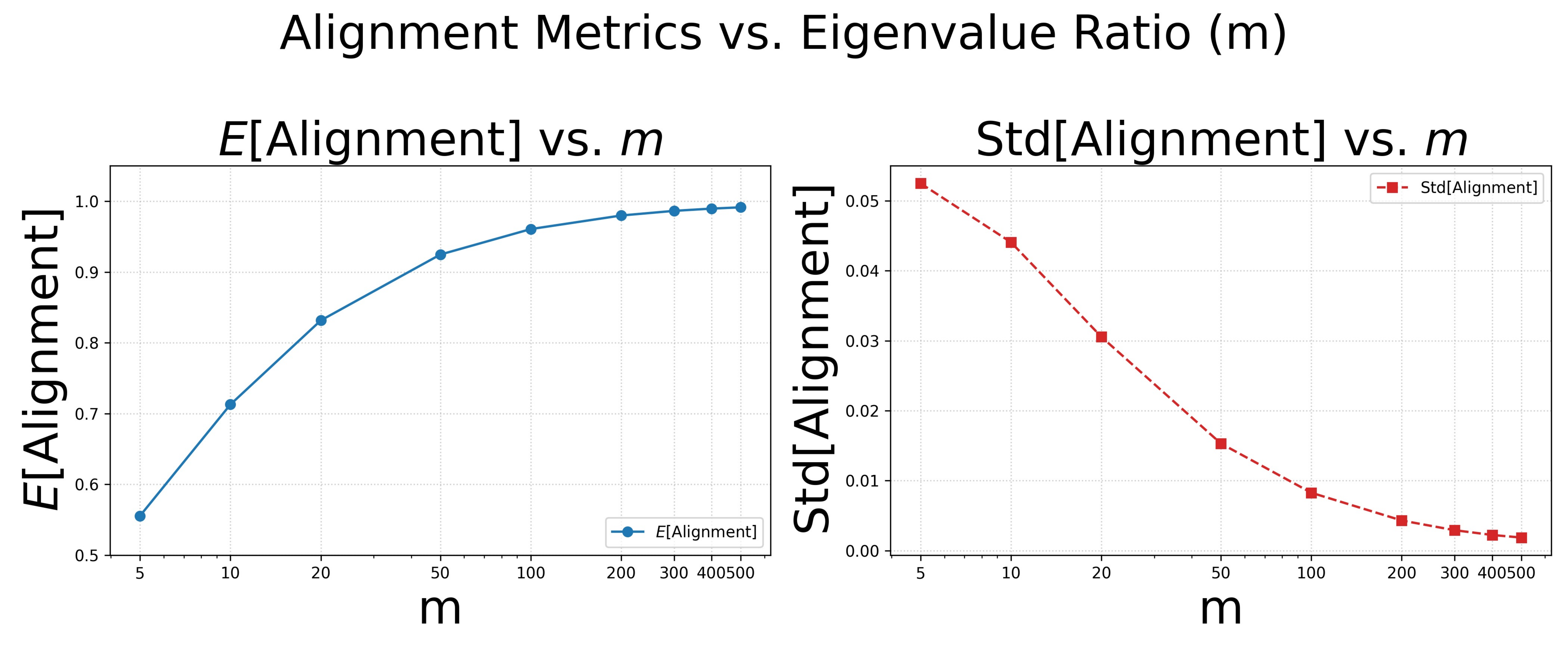}
    \caption{$\mathbb{E}[\text{Alignment}]$ and $\text{Std}[\text{Alignment}]$ vs $m = \lambda_k / \lambda_{k+1}$}
    \label{AE2}
\end{figure}
Besides the primary numerical simulation results discussed above, the Appendix\ref{appendix_simulation} contains additional simulation analyses, including results on the decay rate in the first phase and a discussion of the order in $m$ at which $\theta_\infty$ asymptotically approaches 1. The code for our simulation experiments is available at \href{https://github.com/xuan-lgbq/Suspicious-Alignment-of-SGD.git}{link}. }

\section{Conclusion}
Combining Sections~\ref{sec:step_size_condition}, \ref{secBA}, and \ref{csgdsection}, this work points out that for ill-conditioned loss structures, when optimized using SGD, the gradient dynamics exhibit two distinct alignment trends depending on the choice of step size. In the early stages of optimization, following sufficiently large initialization, if the step size $\eta_t$ is relatively large compared to the norm of the optimization variable $\bm{x}_t$, the gradient gradually aligns with the bulk space. However, as the optimization progresses and the norm of the optimization variable $\bm{x}_t$ decreases, entering a small-scale regime, if $\eta_t$ does not keep pace with the decay of the norm of $\bm{x}_t$, the gradient gradually aligns with the dominant space (Theorems~\ref{thm:step-dec-below}, \ref{thm:step-inc-above}, and \ref{thm:step-dec-large}). However, this alignment does not imply that the dominant direction of the gradient is beneficial to loss reduction. On the contrary, the direction that truly facilitates loss reduction remains the bulk space (Theorems~\ref{thm:proj-loss-corr}, \ref{thm:loss-crossover-rigorous}, and \ref{thm:loss-crossover-asymptotic}). Using the ``river-valley'' structure analogy \citep{wen2024understanding}, each step of SGD’s update has a critical step size $\eta_t^*$. When $\bm{x}_t$ is sufficiently far from the ``river'' or the update step size is smaller than $\eta_t^*$, $\bm{x}_t$ moves closer to the ``river.'' When $\bm{x}_t$ is sufficiently close to the ``river'' and the update step size is larger than $\eta_t^*$, $\bm{x}_t$ moves toward the ``valley.'' This critical step size $\eta_t^*$ gradually decreases and converges to approximately lower than $\approx \dfrac{1}{\lambda_1 + \lambda_d}$ (Theorem~\ref{thm:regime-separation}). This suggests that to track the ``river'' during the optimization process, the step size should be set to less than $\approx \dfrac{1}{\lambda_1 + \lambda_d}$. However, to ensure sufficiently fast movement in the direction of the ``river,'' the step size needs to be sufficiently large. This indicates that to balance tracking the ``river'' and maintaining fast optimization progress, the step size should be at most $\approx \dfrac{1}{\lambda_1 + \lambda_d}$. This proves that for SGD, tracking the ``river'' while preserving optimization efficiency is constrained by the ill-conditioned structure. This provides an explanation for SGD to maintain optimization effectiveness while ensuring efficiency. To achieve better optimization efficiency in such cases, further improvements to the descent direction are necessary, such as those found in many existing preconditioning methods, including methods that approximate the inverse Hessian or assign step sizes to individual parameters \citep{kingma2014adam,gupta2018shampoo,yao2021adahessianadaptivesecondorder,song2025through}.  The analysis of the relationship between step size selection and alignment for these preconditioning methods can be left as a direction for future research.

\acks{We thank our colleagues and funding agencies. This work is supported by DOE under Award Number DE-SC0025584, Dartmouth College, and Lambda AI.}

\bibliography{references}

@article{sagun2016eigenvalues,
  author = {Sagun, Levent and Bottou, Léon and LeCun, Yann},
  title = {Eigenvalues of the Hessian in Deep Learning: Singularity and Beyond},
  journal = {arXiv preprint arXiv:1611.07476},
  year = {2016}
}

@article{ghorbani2019investigation,
  author = {Ghorbani, Behrooz and Krishnan, Shankar and Xiao, Ying},
  title = {An Investigation into Neural Net Optimization via Hessian Eigenvalue Density},
  journal = {arXiv preprint arXiv:1901.10159},
  year = {2019},
  note = {Published in International Conference on Machine Learning (ICML)}
}

@article{wen2024understanding,
  author = {Wen, Kaiyue and Li, Zhiyuan and Wang, Jason and Hall, David and Liang, Percy and Ma, Tengyu},
  title = {Understanding Warmup-Stable-Decay Learning Rates: A River Valley Loss Landscape Perspective},
  journal = {arXiv preprint arXiv:2410.05192},
  year = {2024}
}

@article{song2024does,
  author = {Song, Minhak and Ahn, Kwangjun and Yun, Chulhee},
  title = {Does SGD really happen in tiny subspaces?},
  journal = {arXiv preprint arXiv:2405.16002},
  year = {2024}
}

@inproceedings{gupta2018shampoo,
  title={Shampoo: Preconditioned stochastic tensor optimization},
  author={Gupta, Vineet and Koren, Tomer and Singer, Yoram},
  booktitle={International Conference on Machine Learning},
  pages={1842--1850},
  year={2018},
  organization={PMLR}
}

@article{song2025through,
  title={Through the river: Understanding the benefit of schedule-free methods for language model training},
  author={Song, Minhak and Baek, Beomhan and Ahn, Kwangjun and Yun, Chulhee},
  journal={arXiv preprint arXiv:2507.09846},
  year={2025}
}

@article{kingma2014adam,
  title={Adam: A method for stochastic optimization},
  author={Kingma, Diederik P},
  journal={arXiv preprint arXiv:1412.6980},
  year={2014}
}

@article{cohen2021gradient,
  title={Gradient descent on neural networks typically occurs at the edge of stability},
  author={Cohen, Jeremy M and Kaur, Simran and Li, Yuanzhi and Kolter, J Zico and Talwalkar, Ameet},
  journal={arXiv preprint arXiv:2103.00065},
  year={2021}
}

@article{gur2018gradient,
  title={Gradient descent happens in a tiny subspace},
  author={Gur-Ari, Guy and Roberts, Daniel A and Dyer, Ethan},
  journal={arXiv preprint arXiv:1812.04754},
  year={2018}
}

@article{li2022understanding,
  title={Understanding gradient descent on edge of stability in deep learning},
  author={Li, Zhiyuan and others},
  journal={arXiv preprint arXiv:2205.09745},
  year={2022}
}

@article{papyan2020traces,
  title={Traces of class/cross-class structure pervade deep learning spectra},
  author={Papyan, Vardan},
  journal={Journal of Machine Learning Research},
  volume={21},
  number={252},
  pages={1--64},
  year={2020}
}

@article{sagun2017empirical,
  title={Empirical analysis of the hessian of over-parametrized neural networks},
  author={Sagun, Levent and Evci, Utku and Guney, V Ugur and Dauphin, Yann and Bottou, Leon},
  journal={arXiv preprint arXiv:1706.04454},
  year={2017}
}

@article{wu2020dissecting,
  title={Dissecting hessian: Understanding common structure of hessian in neural networks},
  author={Wu, Yikai and Zhu, Xingyu and Wu, Chenwei and Wang, Annie and Ge, Rong},
  journal={arXiv preprint arXiv:2010.04261},
  year={2020}
}

@article{rathore2024challenges,
  title={Challenges in training pinns: A loss landscape perspective},
  author={Rathore, Pratik and Lei, Weimu and Frangella, Zachary and Lu, Lu and Udell, Madeleine},
  journal={arXiv preprint arXiv:2402.01868},
  year={2024}
}

@article{liao2021hessian,
  title={Hessian eigenspectra of more realistic nonlinear models},
  author={Liao, Zhenyu and Mahoney, Michael W},
  journal={Advances in Neural Information Processing Systems},
  volume={34},
  pages={20104--20117},
  year={2021}
}

@article{schneider2024identifying,
  title={Identifying policy gradient subspaces},
  author={Schneider, Jan and Schumacher, Pierre and Guist, Simon and Chen, Le and H{\"a}ufle, Daniel and Sch{\"o}lkopf, Bernhard and B{\"u}chler, Dieter},
  journal={arXiv preprint arXiv:2401.06604},
  year={2024}
}

@article{papyan2019measurements,
  title={Measurements of three-level hierarchical structure in the outliers in the spectrum of deepnet hessians},
  author={Papyan, Vardan},
  journal={arXiv preprint arXiv:1901.08244},
  year={2019}
}

@article{zhang2019gradient,
  title={Why gradient clipping accelerates training: A theoretical justification for adaptivity},
  author={Zhang, Jingzhao and He, Tianxing and Sra, Suvrit and Jadbabaie, Ali},
  journal={arXiv preprint arXiv:1905.11881},
  year={2019}
}

@article{garipov2018loss,
  title={Loss surfaces, mode connectivity, and fast ensembling of dnns},
  author={Garipov, Timur and Izmailov, Pavel and Podoprikhin, Dmitrii and Vetrov, Dmitry P and Wilson, Andrew G},
  journal={Advances in neural information processing systems},
  volume={31},
  year={2018}
}

@article{yang2021taxonomizing,
  title={Taxonomizing local versus global structure in neural network loss landscapes},
  author={Yang, Yaoqing and Hodgkinson, Liam and Theisen, Ryan and Zou, Joe and Gonzalez, Joseph E and Ramchandran, Kannan and Mahoney, Michael W},
  journal={Advances in Neural Information Processing Systems},
  volume={34},
  pages={18722--18733},
  year={2021}
}

@article{pan2021eigencurve,
  title={Eigencurve: Optimal learning rate schedule for sgd on quadratic objectives with skewed hessian spectrums},
  author={Pan, Rui and Ye, Haishan and Zhang, Tong},
  journal={arXiv preprint arXiv:2110.14109},
  year={2021}
}

@article{li2022low,
  title={Low dimensional trajectory hypothesis is true: Dnns can be trained in tiny subspaces},
  author={Li, Tao and Tan, Lei and Huang, Zhehao and Tao, Qinghua and Liu, Yipeng and Huang, Xiaolin},
  journal={IEEE Transactions on Pattern Analysis and Machine Intelligence},
  volume={45},
  number={3},
  pages={3411--3420},
  year={2022},
  publisher={IEEE}
}

@article{gressmann2020improving,
  title={Improving neural network training in low dimensional random bases},
  author={Gressmann, Frithjof and Eaton-Rosen, Zach and Luschi, Carlo},
  journal={Advances in Neural Information Processing Systems},
  volume={33},
  pages={12140--12150},
  year={2020}
}

@article{singh2021analytic,
  title={Analytic insights into structure and rank of neural network hessian maps},
  author={Singh, Sidak Pal and Bachmann, Gregor and Hofmann, Thomas},
  journal={Advances in Neural Information Processing Systems},
  volume={34},
  pages={23914--23927},
  year={2021}
}

@article{li2018visualizing,
  title={Visualizing the loss landscape of neural nets},
  author={Li, Hao and Xu, Zheng and Taylor, Gavin and Studer, Christoph and Goldstein, Tom},
  journal={Advances in neural information processing systems},
  volume={31},
  year={2018}
}

@inproceedings{gauch2022few,
  title={Few-shot learning by dimensionality reduction in gradient space},
  author={Gauch, Martin and Beck, Maximilian and Adler, Thomas and Kotsur, Dmytro and Fiel, Stefan and Eghbal-zadeh, Hamid and Brandstetter, Johannes and Kofler, Johannes and Holzleitner, Markus and Zellinger, Werner and others},
  booktitle={Conference on Lifelong Learning Agents},
  pages={1043--1064},
  year={2022},
  organization={PMLR}
}

@article{hodgkinson2025models,
  title={Models of Heavy-Tailed Mechanistic Universality},
  author={Hodgkinson, Liam and Wang, Zhichao and Mahoney, Michael W},
  journal={arXiv preprint arXiv:2506.03470},
  year={2025}
}

@misc{yao2021adahessianadaptivesecondorder,
      title={ADAHESSIAN: An Adaptive Second Order Optimizer for Machine Learning}, 
      author={Zhewei Yao and Amir Gholami and Sheng Shen and Mustafa Mustafa and Kurt Keutzer and Michael W. Mahoney},
      year={2021},
      eprint={2006.00719},
      archivePrefix={arXiv},
      primaryClass={cs.LG},
      url={https://arxiv.org/abs/2006.00719}, 
}

\appendix
\section{Proof of Theorems}

\subsection{Preliminaries, Notation, and Assumptions (for reference)}
\paragraph{Spectral notation and projections.}
We recall $\mA=\mU\mLambda \mU^\top$ with $\mU=[\bm u_1,\dots,\bm u_d]$ orthonormal and $\Lambda=\mathrm{diag}(\lambda_1,\dots,\lambda_d)$ strictly decreasing:
\[
\lambda_1 \geq \cdots\geq \lambda_k  >\lambda_{k+1} \geq \cdots\geq \lambda_d > 0.
\]
Any $\bm{x}_t\in\mathbb \mathbb{R}^d$ is expanded in the eigenbasis as
\[
\bm{x}_t=\sum_{i=1}^d c_{i,t}\,\bm u_i,\qquad c_{i,t}:=\langle \bm{x}_t,\bm u_i\rangle\in\mathbb R.
\]
For a dominant–bulk partition $\mathcal D=\{1,\dots,k\}$, $\mathcal B=\{k+1,\dots,d\}$, define
\[
\mP^{\mathcal D}:=\sum_{i\in\mathcal D}\bm u_i\bm u_i^\top,\qquad
\mP^{\mathcal B}:=\sum_{i\in\mathcal B}\bm u_i\bm u_i^\top,
\]
and, for $p\in\mathbb N$,
\[
\mP^{\mathcal S}_{\lambda^p}:=\sum_{i\in\mathcal S}\lambda_i^p\,\bm u_i\bm u_i^\top\qquad (\mathcal S\in\{\mathcal D,\mathcal B\}).
\]

\paragraph{Alignment and subspace notion.}
The squared alignment function \(\theta(\bm{x}_t)\) of the gradient \(\mA \bm{x}_t\) with the dominant subspace at time \(t\) is defined as
\[
\theta(\bm{x}_t) := 
\begin{cases}
\dfrac{\|\bm{P}^{\mathcal{D}} \nabla L(\bm{x}_t)\|_2^2}{\|\nabla L(\bm{x}_t)\|_2^2} = \dfrac{\|\bm{P}^{\mathcal{D}}\mA\bm{x}_t\|_2^2}{\|\mA\bm{x}_t\|_2^2} = \dfrac{\sum_{i \in \mathcal{D}} \lambda_i^2 c_{i,t}^2}{\sum_{i=1}^d \lambda_i^2 c_{i,t}^2} \in [0,1] & \text{if } \bm{x}_t \neq \bm{0}, \\
0 & \text{if } \bm{x}_t = \bm{0},
\end{cases}
\]
The convention \(\theta(\bm{0}) = 0\) ensuring well-definedness in stochastic analysis).We will use $\theta_t$ for short in the later section. Define the subspace quantities as follow:
\begin{align}
\notag &\psi_{\mathcal D} := \sum_{i \in \mathcal D} \lambda_i^2, \quad \psi_{\mathcal B} := \sum_{i \in \mathcal B} \lambda_i^2,  \\
s_t^{\mathcal{D}} := \sum_{i \in \mathcal{D}} \lambda_i^2 c_{i,t}^2, \quad &s_t^{\mathcal{B}} := \sum_{i \in \mathcal{B}} \lambda_i^2 c_{i,t}^2, \quad s_t := s_t^{\mathcal{D}} + s_t^{\mathcal{B}} = \sum_{i=1}^d \lambda_i^2 c_{i,t}^2. 
\end{align}
and the spectral gaps is defined as:
\[
\mathrm{gap}_1:=\lambda_k-\lambda_{k+1}>0,\qquad
\mathrm{gap}_2:=\lambda_k^2-\lambda_{k+1}^2=(\lambda_k-\lambda_{k+1})(\lambda_k+\lambda_{k+1})>0.
\]

\paragraph{Noise in the eigenbasis.}
The SGD update is
\[
\bm{x}_{t+1}=\bm{x}_t-\eta_t(\mA \bm{x}_t+\boldsymbol{\xi}_t),\qquad \boldsymbol{\xi}_t\sim\mathcal N(0,\mSigma),\quad \mSigma=\mSigma^\top\succeq 0,\quad \{\boldsymbol{\xi}_t\}\ \text{i.i.d.}
\]
Let $\mC:=\mU^\top \mSigma \mU$ and define per-direction noise variances
\[
\kappa_i^2:=(C)_{ii}=\bm u_i^\top\mSigma \bm u_i\ge 0,\qquad
s_{\min}:=\lambda_{\min}(\mSigma),\ \ s_{\max}:=\lambda_{\max}(\mSigma).
\]
Set the block-wise noise energy be:
\[
e_{\mathcal D}:=\sum_{i\in\mathcal D}\lambda_i^2 \kappa_i^2,
\qquad
e_{\mathcal B}:=\sum_{i\in\mathcal B}\lambda_i^2 \kappa_i^2,
\]
and recall $\psi_{\mathcal S} := \sum_{i \in \mathcal S} \lambda_i^2$ for $\mathcal S \in \{\mathcal D, \mathcal B\}$. Then we have: $$s_{\min}\psi_{\mathcal D}\le e_{\mathcal D}\le s_{\max}\psi_{\mathcal D},s_{\min}\psi_{\mathcal B}\le e_{\mathcal B}\le s_{\max}\psi_{\mathcal B}.$$

And our assumptions are:
\begin{assumption-restated}\ref{asp:standing} (\textbf{Asymptotic Spectral Assumptions})
When we consider the high-dimensional regime, where both $d$ and $k(d) \to \infty$, we assume the following conditions, summarized in Table~\ref{tab:assumptions}:
\begin{table}[H]
\centering
\caption{Asymptotic Spectral Assumptions}

\begin{tabular}{>{\raggedright\arraybackslash}p{4cm} >{\raggedright\arraybackslash}p{8cm}}
\toprule
\textbf{Assumption} & \textbf{Description} \\
\midrule
Trajectory boundedness & The state remains bounded: $\sup_t \limsup_{d \to \infty} \frac{1}{d} \sum_{i=1}^d c_{i,t}^2 < \infty$. \\
Block proportion & The subspace dimension ratio is a fixed constant: $\rho :=\frac{k}{d-k} \in (0, \infty)$. Which implies $k=\frac{\rho}{1+\rho} d$. \\
Block spectral moments & For $p \in \{2, 3, 4, 6, 8\}$, the block-wise spectral moments converge: $\frac{1}{k} \sum_{i \in \mathcal{D}} \lambda_i^p \to \lambda_{\mathcal{D},p} \in (0, \infty)$, $\frac{1}{d-k} \sum_{i \in \mathcal{B}} \lambda_i^p \to \lambda_{\mathcal{B},p} \in [0, \infty)$. \\
Noise spectral bounds & The noise covariance has a bounded trace: $Tr(\mSigma) \in (0, +\infty)$  \\
\bottomrule
\end{tabular}
\end{table}
\end{assumption-restated}
\subsection{Next step alignment function}

\paragraph{Exact one-step alignment transform.}
The SGD update projected onto the eigenbasis gives the evolution of the coefficients $c_{i,t}$:
\[
c_{i,t+1} = \langle \bm{x}_{t+1}, \bm{u}_i \rangle = \langle \bm{x}_t - \eta_t(\mA\bm{x}_t + \boldsymbol{\xi}_t), \bm{u}_i \rangle = c_{i,t} - \eta_t(\lambda_i c_{i,t} + \zeta_{i,t}) = (1-\eta_t\lambda_i)c_{i,t} - \eta_t\zeta_{i,t},
\]
where $\zeta_{i,t} := \langle \boldsymbol{\xi}_t, \bm{u}_i \rangle$.
 At time $t+1$,$s_{t+1}^{\mathcal{S}} = \sum_{i \in \mathcal{S}} \lambda_i^2 c_{i,t+1}^2$. Thus,
\begin{equation}\label{eq:theta-next}
\theta_{t+1}
=\frac{\sum_{i\in\mathcal D}\lambda_i^2\big((1-\eta_t\lambda_i)c_{i,t} - \eta_t\zeta_{i,t}\big)^2}{\sum_{i=1}^d\lambda_i^2\big((1-\eta_t\lambda_i)c_{i,t} - \eta_t\zeta_{i,t}\big)^2}
=\frac{1}{1+\frac{s_{t+1}^{\mathcal B}}{s_{t+1}^{\mathcal D}}},
\end{equation}
where
\begin{equation}\label{eq:apbp-def}
s_{t+1}^{\mathcal S}:=\sum_{i\in\mathcal S}\lambda_i^2\big((1-\eta_t\lambda_i)c_{i,t} - \eta_t\zeta_{i,t}\big)^2 \qquad (\mathcal{S} \in \{\mathcal{D}, \mathcal{B}\}).
\end{equation}
\textbf{Remark.} When we fix a $\bm{x}_t$, we can reasonably assume that $s_{t+1} \ne0,s_{t+1}^{\mathcal{D}}\ne 0$, since the case that $\{\xi|s_{t+1}(\xi)=0\},\{\xi|s_{t+1}^{\mathcal{D}}(\xi)=0\}$ are zero measure set for $\xi$. \\
Notice that,when we fix $\bm{x}_t$ (such as we consider the conditional expectation $\mathbb{E}[\cdot|\bm{x}_t]$ later). $\theta_{t+1},s_{t+1}=\sum_{i=1}^d\lambda_i^2\big((1-\eta_t\lambda_i)c_{i,t} - \eta_t\zeta_{i,t}\big)^2$ is a function of $\xi_t$. Let $ \bm{x}_t^{\prime}$ be a vector that:
$$\forall i \in \mathcal{D},<\bm{x}_t^{\prime},\bm{u}_i>= \dfrac{(1-\eta_t\lambda_i)c_{i,t} }{\eta_t}$$
If $\xi_t=\bm{x}_t^{\prime}$, $s^{D}_{t+1}=0$. Since $\mathcal{V}=\{\xi_t| \forall i \in \mathcal{D},<\xi_t,\bm{u}_i>= \dfrac{(1-\eta_t\lambda_i)c_{i,t} }{\eta_t}\}$ is a $d-k$ dim hyperplane, and since $\xi$ is a $d$ dim gaussian. By \textit{Sard} Theorem, $\mathcal{V}$ is a zero measure set for $\xi$. Since $\theta_{t+1} $ is a bounded function, then $\theta_{t+1} $ is $\xi-\textit{integrable}$ ($\mathbb{E}_{\xi}[\theta_{t+1}]<\infty$). Then: 
$$
\mathbb{E}_{\xi}[\theta_{t+1}]=\int_{\mathbb{R}^d}\theta_{t+1} d \xi= \int_{\mathbb{R}^d \setminus \mathcal{V}}\theta_{t+1} d \xi+\int_{\mathcal{V}}\theta_{t+1} d\xi=\int_{\mathbb{R}^d \setminus \mathcal{V}}\theta_{t+1} d \xi.
$$ Therefore, we can assume $s_{t+1}^\mathcal{D} \ne 0$,and it does not affect the expectation on $\theta_{t+1}$. The case of $\{\xi|s_{t+1}^{\mathcal{D}}(\xi)=0\}$ is analogous.
\paragraph{Comparison functional and its meaning.}
To more conveniently obtain our conclusions in the asymptotic setting, we define the comparison functional:
\begin{equation}\label{eq:f-compare}
f_t(\eta_t):=s_t^{\mathcal B}\,s_{t+1}^{\mathcal D}-s_{t+1}^{\mathcal B}\,s_t^{\mathcal D},
\end{equation}
so that
\[
\theta_{t+1}>\theta_t\ \Longleftrightarrow\ f_t(\eta_t)>0,\qquad
\theta_{t+1}<\theta_t\ \Longleftrightarrow\ f_t(\eta_t)<0.
\]
The sign of $f_t(\eta_t)$ will be controlled via its conditional expectation given $\bm{x}_t$.

\subsection{Probabilistic Lemmas}

\begin{lemma}[Variance of Gaussian Forms]\label{lem:gauss-var}
Let $\bm{z}\sim\mathcal N(\bm{0},\mC)$ with $\mC=\mC^\top\succeq 0$ and $\|\mC\|_2<\infty$.
For any deterministic vector $\bm{g}\in\mathbb \mathbb{R}^d$ and any deterministic diagonal matrix $\mD=\mathrm{diag}(d_1,\dots,d_d)$, the following standard results hold:
\[
\mathrm{Var}(\bm{g}^\top \bm{z})=\bm{g}^\top \mC \bm{g}\le \|\mC\|_2\,\|\bm{g}\|_2^2,
\]
\[
\mathrm{Var}(\bm{z}^\top \mD \bm{z})=2\,\mathrm{tr}((\mC \mD)^2)\le 2\|\mC\|_2^2\,\mathrm{tr}(\mD^2).
\]
\end{lemma}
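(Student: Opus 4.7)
}
The plan is to handle the two identities separately, each by reducing to the covariance structure of $\bm{z}$, and then to derive the operator-norm bounds by a standard trace-inequality argument. For the linear form $\bm{g}^\top\bm{z}$, I would first note that it is a scalar with mean $\bm{g}^\top\mathbb{E}[\bm{z}]=0$, so $\mathrm{Var}(\bm{g}^\top\bm{z})=\mathbb{E}[(\bm{g}^\top\bm{z})^2]=\bm{g}^\top\mathbb{E}[\bm{z}\bm{z}^\top]\bm{g}=\bm{g}^\top\mC\bm{g}$. The upper bound then follows immediately from the variational characterization of $\|\mC\|_2=\lambda_{\max}(\mC)$ as a Rayleigh quotient, giving $\bm{g}^\top\mC\bm{g}\le\|\mC\|_2\|\bm{g}\|_2^2$. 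This step is clean and essentially notational.

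For the quadratic form $\bm{z}^\top\mD\bm{z}$, my first step would be to invoke the standard Gaussian quadratic-form identity $\mathrm{Var}(\bm{z}^\top\mM\bm{z})=2\,\mathrm{tr}((\mC\mM)^2)$ for any symmetric $\mM$, which can be derived either via Isserlis'/Wick's theorem applied to the fourth-moment tensor of $\bm{z}$, or equivalently by the whitening substitution $\bm{z}=\mC^{1/2}\bm{w}$ with $\bm{w}\sim\mathcal{N}(\bm 0,\mI)$ and then diagonalizing the symmetric matrix $\mC^{1/2}\mM\mC^{1/2}$ to reduce to a sum of independent centered $\chi^2_1$ variables. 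Specializing to $\mM=\mD$, which is symmetric since it is diagonal, yields the equality $\mathrm{Var}(\bm{z}^\top\mD\bm{z})=2\,\mathrm{tr}((\mC\mD)^2)$.

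The remaining task is the trace bound $\mathrm{tr}((\mC\mD)^2)\le\|\mC\|_2^2\,\mathrm{tr}(\mD^2)$. Here I would use cyclicity to rewrite $\mathrm{tr}(\mC\mD\mC\mD)=\mathrm{tr}\!\bigl(\mC\,(\mD\mC\mD)\bigr)$, observe that $\mD\mC\mD\succeq\bm 0$ since $\mC\succeq\bm 0$, and then apply the inequality $\mathrm{tr}(\mA\mB)\le\|\mA\|_2\,\mathrm{tr}(\mB)$ valid whenever $\mB\succeq\bm 0$. Iterating this once more on $\mathrm{tr}(\mD\mC\mD)=\mathrm{tr}(\mC\mD^2)$ with $\mD^2\succeq\bm 0$ gives another factor of $\|\mC\|_2$, yielding the desired bound. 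The only mildly non-routine point is making sure that the positive-semidefiniteness is tracked correctly through the two cyclic rearrangements; this is where I would be most careful, since the naive move of pulling $\|\mC\|_2$ outside a non-psd factor would be incorrect. Everything else is direct Gaussian moment computation.
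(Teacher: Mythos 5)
Your proposal is correct. The paper itself gives no proof for this lemma---it simply remarks that "the variance formulas in Lemma~\ref{lem:gauss-var} are standard results for Gaussian distributions"---so there is no paper argument to compare against; your fill-in is complete and sound. The linear-form part is the Rayleigh-quotient bound as you say. For the quadratic form, the identity $\mathrm{Var}(\bm{z}^\top\mD\bm{z})=2\,\mathrm{tr}((\mC\mD)^2)$ via whitening or Isserlis is standard, and your two-step trace bound is careful in exactly the right place: you apply $\mathrm{tr}(\mA\mB)\le\|\mA\|_2\,\mathrm{tr}(\mB)$ only with $\mA=\mC$ symmetric and $\mB\in\{\mD\mC\mD,\ \mD^2\}$ psd, which is the valid form of that inequality, and you use cyclicity to keep the psd factor inside the trace at each stage. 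One small point worth making explicit if you write this out in full is that the intermediate quantity $\mathrm{tr}(\mD\mC\mD)$ equals $\mathrm{tr}(\mC\mD^2)$ and is nonnegative, so the chain of inequalities is well-ordered; you gestured at this but it is the only spot where a reader might pause.
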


\begin{lemma}[A Weak Law of Large Numbers for Block Averages]\label{lem:wlln-blocks}
Let $\{\mathcal{S}_d\}_{d\in\mathbb{N}}$ be a sequence of index sets such that $|\mathcal{S}_d|\to\infty$ as $d\to\infty$. For each $d$, let $\{y_i\}_{i\in\mathcal S_d}$ be a collection of scalar random variables satisfying $\mathrm{Var}(\sum_{i\in\mathcal S_d} y_i)=O(|\mathcal S_d|)$. Then
\[
\frac{1}{|\mathcal S_d|}\sum_{i\in\mathcal S_d} y_i - \mathbb{E}\left[\frac{1}{|\mathcal S_d|}\sum_{i\in\mathcal S_d} y_i\right] \xrightarrow[d\to\infty]{p} 0.
\]
Furthermore, if $\lim_{d\to\infty} \frac{1}{|\mathcal S_d|}\sum_{i\in\mathcal S_d}\mathbb E[y_i] = \ell$ for some constant $\ell$, then $\frac{1}{|\mathcal S_d|}\sum_{i\in\mathcal S_d} y_i \xrightarrow[d\to\infty]{p} \ell$.
\end{lemma}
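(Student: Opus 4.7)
The plan is to reduce the entire statement to a single application of Chebyshev's inequality, followed by a standard $\varepsilon/2$ step for the refinement. Let $Y_d := \frac{1}{|\mathcal{S}_d|}\sum_{i\in\mathcal{S}_d} y_i$ denote the block average. The variance hypothesis $\mathrm{Var}\!\left(\sum_{i\in\mathcal{S}_d} y_i\right)=O(|\mathcal{S}_d|)$, after dividing by $|\mathcal{S}_d|^{2}$, yields $\mathrm{Var}(Y_d)=O(1/|\mathcal{S}_d|)$, which tends to $0$ as $d\to\infty$ since $|\mathcal{S}_d|\to\infty$. All of the probabilistic content of the lemma is captured by this single vanishing-variance fact.

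For the first conclusion, I would simply invoke Chebyshev's inequality: for any fixed $\varepsilon>0$,
$$\mathbb{P}\!\left(|Y_d-\mathbb{E}[Y_d]|>\varepsilon\right)\ \le\ \frac{\mathrm{Var}(Y_d)}{\varepsilon^{2}}\ \xrightarrow[d\to\infty]{}\ 0,$$
which is exactly the claimed convergence in probability of $Y_d-\mathbb{E}[Y_d]$ to $0$.

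For the second conclusion, assume in addition that $\mathbb{E}[Y_d]\to\ell$. Given any $\varepsilon>0$, pick $d_0$ large enough so that $|\mathbb{E}[Y_d]-\ell|<\varepsilon/2$ for all $d\ge d_0$; for such $d$ one has the deterministic inclusion $\{|Y_d-\ell|>\varepsilon\}\subseteq\{|Y_d-\mathbb{E}[Y_d]|>\varepsilon/2\}$, and hence $\mathbb{P}(|Y_d-\ell|>\varepsilon)\to 0$ by the first conclusion. This is the standard Slutsky-type fact that convergence in probability to $0$ of $Y_d-\mathbb{E}[Y_d]$ combined with convergence of $\mathbb{E}[Y_d]$ to a constant yields convergence in probability of $Y_d$ to that constant.

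I do not anticipate any essential obstacle: the argument is a textbook Chebyshev-plus-Slutsky step, and notably it does not require independence or any additional structure on the $\{y_i\}$, since the single aggregated variance bound supplied by the hypothesis is the only ingredient that drives the vanishing variance of $Y_d$. The only mild care needed is to preserve the distinction between the deterministic limit of $\mathbb{E}[Y_d]$ and the stochastic limit of $Y_d$, which the $\varepsilon/2$ decomposition handles cleanly.
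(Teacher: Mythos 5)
Your argument is correct and is essentially the same as the paper's: both reduce the claim to Chebyshev's inequality applied to the normalized block average, using the hypothesis $\mathrm{Var}(\sum_{i\in\mathcal S_d} y_i)=O(|\mathcal S_d|)$ divided by $|\mathcal S_d|^2$ to obtain vanishing variance. The only difference is cosmetic: you spell out the $\varepsilon/2$ step for the second conclusion, whereas the paper simply states that it "follows directly," so your write-up is if anything slightly more complete.
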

\begin{proof}
Let $s_d = \sum_{i \in \mathcal{S}_d} y_i$ and $\bar{s}_d = \frac{s_d}{|\mathcal{S}_d|}$. We want to show that for any $\epsilon > 0$, $\lim_{d\to\infty} P(|\bar{s}_d - \mathbb{E}[\bar{s}_d]| \ge \epsilon) = 0$.
By Chebyshev's inequality,
\[
P(|\bar{s}_d - \mathbb{E}[\bar{s}_d]| \ge \epsilon) \le \frac{\mathrm{Var}(\bar{s}_d)}{\epsilon^2}.
\]
We analyze the variance term:
\[
\mathrm{Var}(\bar{s}_d) = \mathrm{Var}\left(\frac{1}{|\mathcal{S}_d|}\sum_{i \in \mathcal{S}_d} y_i\right) = \frac{1}{|\mathcal{S}_d|^2} \mathrm{Var}\left(\sum_{i \in \mathcal{S}_d} y_i\right).
\]
By assumption, there exists a constant $k_0 < \infty$ such that $\mathrm{Var}(\sum_{i \in \mathcal{S}_d} y_i) \le k_0|\mathcal{S}_d|$. Substituting this into the inequality gives
\[
P(|\bar{s}_d - \mathbb{E}[\bar{s}_d]| \ge \epsilon) \le \frac{k_0|\mathcal{S}_d|}{|\mathcal{S}_d|^2 \epsilon^2} = \frac{k_0}{|\mathcal{S}_d|\epsilon^2}.
\]
As $d\to\infty$, we have $|\mathcal{S}_d|\to\infty$, so $\frac{k_0}{|\mathcal{S}_d|\epsilon^2} \to 0$. This proves convergence in probability. The second part of the lemma follows directly.
\end{proof}

\begin{lemma}[Continuous Mapping Theorem with Dominated Convergence]\label{lem:cmt-dc}
Let $\{y_n\}_{n\in\mathbb{N}}$ be a sequence of scalar random variables such that $y_n\xrightarrow[n\to\infty]{p}y_0$, where $y_0$ is a constant. Let $f$ be a function that is continuous at $y_0$ and bounded. Then $\lim_{n\to\infty}\mathbb E[f(y_n)]= f(y_0)$.
\end{lemma}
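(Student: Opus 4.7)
The plan is to prove this standard result via a direct $\varepsilon$-$\delta$ argument that leverages the boundedness of $f$ to control the contribution from rare events, rather than invoking abstract uniform-integrability machinery. The key observation is that boundedness lets us cap the worst-case pointwise discrepancy between $f(y_n)$ and $f(y_0)$ by $2M$ where $M := \sup_y |f(y)| < \infty$, while continuity of $f$ at $y_0$ controls the typical behavior whenever $y_n$ lands near $y_0$.

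First I would fix an arbitrary $\varepsilon > 0$ and use continuity of $f$ at the constant $y_0$ to extract a $\delta > 0$ such that $|y - y_0| < \delta$ implies $|f(y) - f(y_0)| < \varepsilon$. Next I would apply Jensen's inequality, writing $|\mathbb{E}[f(y_n)] - f(y_0)| \le \mathbb{E}[|f(y_n) - f(y_0)|]$, and split the latter according to the event $A_n := \{|y_n - y_0| < \delta\}$ and its complement. On $A_n$ the integrand is bounded by $\varepsilon$; on $A_n^c$ it is bounded by $2M$. This yields the clean estimate
\[
|\mathbb{E}[f(y_n)] - f(y_0)| \;\le\; \varepsilon \;+\; 2M \cdot \mathbb{P}(|y_n - y_0| \ge \delta).
\]

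Finally, I would invoke convergence in probability, $y_n \xrightarrow{p} y_0$, which forces $\mathbb{P}(|y_n - y_0| \ge \delta) \to 0$ as $n \to \infty$. Taking $\limsup_n$ of both sides gives $\limsup_n |\mathbb{E}[f(y_n)] - f(y_0)| \le \varepsilon$, and since $\varepsilon > 0$ was arbitrary the desired limit follows.

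There is no real obstacle here — the lemma is classical — but one subtlety worth flagging is that the hypothesis only requires continuity of $f$ at the single point $y_0$, not global continuity. The argument above uses exactly this local continuity (to choose $\delta$) together with the global boundedness hypothesis (to handle the tail event $A_n^c$), so the proof matches the stated hypotheses tightly. An alternative route would be to first apply the continuous mapping theorem to get $f(y_n) \xrightarrow{p} f(y_0)$ and then the bounded convergence theorem, but the self-contained $\varepsilon$-$\delta$ route above is shorter and avoids citing two separate theorems.
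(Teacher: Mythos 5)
Your proof is correct. The paper itself does not write out a proof of this lemma; it simply remarks that it is ``a standard result combining the Continuous Mapping Theorem with the Bounded Convergence Theorem,'' which is exactly the alternative two-step route you mention at the end of your proposal. So you and the paper arrive at the same place but by genuinely different paths: the paper outsources the work to two named theorems (first CMT to upgrade $y_n \xrightarrow{p} y_0$ to $f(y_n) \xrightarrow{p} f(y_0)$, then a bounded-convergence argument, which for convergence in probability typically requires a subsequence-extraction step), whereas you give a self-contained $\varepsilon$-$\delta$ argument that splits on the event $\{|y_n - y_0| < \delta\}$ and uses boundedness to control the complement. Your route is more elementary and arguably more transparent here, since it avoids the slightly delicate point that the textbook bounded convergence theorem is usually stated for almost-sure convergence and needs the subsequence principle to apply under convergence in probability; your direct estimate $|\mathbb{E}[f(y_n)] - f(y_0)| \le \varepsilon + 2M\,\mathbb{P}(|y_n - y_0| \ge \delta)$ sidesteps that entirely. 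One minor stylistic note: invoking Jensen's inequality for $|\mathbb{E}[X]| \le \mathbb{E}[|X|]$ is overkill --- this is just the triangle inequality for integrals --- and, as always with such statements, there is an implicit standing assumption that $f$ is Borel measurable so that $f(y_n)$ is a random variable; neither of these affects the validity of your argument.
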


\paragraph{Remarks.} The variance formulas in Lemma~\ref{lem:gauss-var} are standard results for Gaussian distributions. The weak law in Lemma~\ref{lem:wlln-blocks} is proven directly from Chebyshev's inequality. Lemma~\ref{lem:cmt-dc} is a standard result combining the Continuous Mapping Theorem with the Bounded Convergence Theorem.

\subsection{Asymptotic Property}
In this section, we need to introduce some important asymptotic properties of variables under our asymptotic region settings. In this section, all probability measures \( p \) we consider are conditional measures based on \( \bm{x}_t \), defined as \( p(\cdot | \bm{x}_t) \).
\begin{lemma}[Asymptotic ratio of expectations]\label{lem:ratio-asymp-app}
Under Assumption~\ref{asp:standing}, the normalized block sums converge in probability to their conditional expectations:
\[
\frac{1}{k}\,s_{t+1}^{\mathcal D}\xrightarrow[d \to \infty]{p}\ell'_{\mathcal D}(\bm{x}_t,\mSigma),\qquad
\frac{1}{d-k}\,s_{t+1}^{\mathcal B}\xrightarrow[d \to \infty]{p}\ell'_{\mathcal B}(\bm{x}_t,\mSigma),
\]
where $\ell'_{\mathcal S}(\bm{x}_t,\mSigma) = \lim_{d \to \infty} \frac{1}{|\mathcal{S}|}\mathbb{E}[s_{t+1}^{\mathcal S} \mid \bm{x}_t]$ are finite constants (or functions depend on $\bm{x}_t,\mSigma$). Consequently, the limit of the expected alignment is the ratio of the limits of expected energies:
\begin{equation}\label{eq:ratio-asymp}
\lim_{d\to\infty}\,\mathbb E\big[\theta_{t+1}\,\big|\,\bm{x}_t\big]
=\lim_{d\to\infty}\,\mathbb E\Big[\frac{1}{1+ \frac{1}{\rho}\rho\frac{s_{t+1}^{\mathcal B}}{s_{t+1}^{\mathcal D}}}\Big]
=\frac{1}{1+\frac{1}{\rho}\frac{\lim_{d \to \infty} \mathbb E[\frac{1}{d-k}s_{t+1}^{\mathcal B}\mid \bm{x}_t]}{\lim_{d \to \infty} \mathbb E[\frac{1}{k}s_{t+1}^{\mathcal D}\mid \bm{x}_t]}}
=\frac{1}{1+\frac{\ell'_{\mathcal B}(\bm{x}_t,\mSigma)}{\rho\,\ell'_{\mathcal D}(\bm{x}_t,\mSigma)}}
.
\end{equation}
\end{lemma}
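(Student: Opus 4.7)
The plan is to prove the two convergence-in-probability statements by applying the weak law of large numbers (Lemma~\ref{lem:wlln-blocks}) block-wise to $s_{t+1}^{\mathcal S}$ for $\mathcal S\in\{\mathcal D,\mathcal B\}$, and then to combine them with the bounded continuous mapping theorem (Lemma~\ref{lem:cmt-dc}) to deduce the ratio formula~\eqref{eq:ratio-asymp}. Conditionally on $\bm{x}_t$, the eigencoordinate update $(1-\eta_t\lambda_i)c_{i,t}-\eta_t\zeta_{i,t}$ is a Gaussian with mean $(1-\eta_t\lambda_i)c_{i,t}$, and the noise vector $(\zeta_{i,t})_i$ has covariance $\mC$, so $s_{t+1}^{\mathcal S}=\bm Y_{\mathcal S}^{\top}D_{\mathcal S}\bm Y_{\mathcal S}$ is a quadratic form in a Gaussian vector with diagonal weights $D_{\mathcal S}=\mathrm{diag}(\lambda_i^2)_{i\in\mathcal S}$, for which both the mean and the variance admit closed-form expressions.

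First I would compute $\mathbb E[s_{t+1}^{\mathcal S}\mid\bm{x}_t]=\sum_{i\in\mathcal S}\lambda_i^2\bigl[(1-\eta_t\lambda_i)^2 c_{i,t}^2+\eta_t^2\kappa_i^2\bigr]$. The noise contribution $\tfrac{\eta_t^2}{|\mathcal S|}e_{\mathcal S}$ is controlled via $s_{\min}\psi_{\mathcal S}\le e_{\mathcal S}\le s_{\max}\psi_{\mathcal S}$ combined with the $p=2$ block spectral moment, and the deterministic contribution, once divided by $|\mathcal S|$, converges to a constant that defines $\ell'_{\mathcal S}(\bm{x}_t,\mSigma)$. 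Next I would invoke Lemma~\ref{lem:gauss-var} on the quadratic form to obtain the bound $\mathrm{Var}(s_{t+1}^{\mathcal S}\mid\bm{x}_t)\le 2\eta_t^4\|\mSigma\|_2^2\sum_{i\in\mathcal S}\lambda_i^4+4\eta_t^2\|\mSigma\|_2\sum_{i\in\mathcal S}\lambda_i^4(1-\eta_t\lambda_i)^2 c_{i,t}^2$. The $p=4$ block moment bounds the first sum as $O(|\mathcal S|)$, and trajectory boundedness combined with the $p=8$ moment and Cauchy--Schwarz bounds the second sum by the same order, giving the variance hypothesis of Lemma~\ref{lem:wlln-blocks} and hence $\tfrac{1}{|\mathcal S|}s_{t+1}^{\mathcal S}\xrightarrow[d\to\infty]{p}\ell'_{\mathcal S}$.

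For the ratio formula, I would write $\theta_{t+1}=g_d\bigl(\tfrac{1}{k}s_{t+1}^{\mathcal D},\tfrac{1}{d-k}s_{t+1}^{\mathcal B}\bigr)$ with $g_d(u,v)=\bigl(1+\tfrac{d-k}{k}\cdot\tfrac{v}{u}\bigr)^{-1}$. Since $\tfrac{d-k}{k}\to 1/\rho$ by the block proportion assumption, $g_d$ converges pointwise to $g(u,v)=\bigl(1+v/(\rho u)\bigr)^{-1}$, which is continuous at $(\ell'_{\mathcal D},\ell'_{\mathcal B})$ whenever $\ell'_{\mathcal D}>0$ and uniformly bounded by $\theta_{t+1}\in[0,1]$. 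Joint convergence in probability of the pair $(\tfrac{1}{k}s_{t+1}^{\mathcal D},\tfrac{1}{d-k}s_{t+1}^{\mathcal B})$ to the constant $(\ell'_{\mathcal D},\ell'_{\mathcal B})$ follows directly from the two marginal convergences to constants, and Lemma~\ref{lem:cmt-dc} then yields $\lim_{d\to\infty}\mathbb E[\theta_{t+1}\mid\bm{x}_t]=g(\ell'_{\mathcal D},\ell'_{\mathcal B})$, which is the claimed~\eqref{eq:ratio-asymp}.

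The main obstacle I anticipate is twofold. First, Assumption~\ref{asp:standing} supplies convergence of block spectral moments $\tfrac{1}{|\mathcal S|}\sum_{i\in\mathcal S}\lambda_i^p$ and boundedness of the coefficient averages $\tfrac{1}{d}\sum c_{i,t}^2$, but does not by itself guarantee convergence of the mixed empirical moment $\tfrac{1}{|\mathcal S|}\sum_{i\in\mathcal S}\lambda_i^2(1-\eta_t\lambda_i)^2 c_{i,t}^2$ that determines $\ell'_{\mathcal S}$; either this has to be promoted to a joint empirical distributional convergence of $(\lambda_i,c_{i,t})_{i\in\mathcal S}$, or it must be absorbed into the very definition of $\ell'_{\mathcal S}(\bm{x}_t,\mSigma)$, as the lemma statement implicitly does. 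Second, the continuous mapping step requires $\ell'_{\mathcal D}>0$, a mild nondegeneracy condition that follows at once when $s_{\min}>0$ (since the noise term alone contributes at least $\eta_t^2 s_{\min}\lambda_{\mathcal D,2}>0$), but which otherwise must be checked from the trajectory structure to guarantee continuity of $g$ at the limit point.
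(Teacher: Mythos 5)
Your proposal is correct and follows essentially the same route as the paper: decompose $s_{t+1}^{\mathcal S}$ as a Gaussian quadratic form conditionally on $\bm{x}_t$, bound its variance via Lemma~\ref{lem:gauss-var} to get $\mathrm{Var}(s_{t+1}^{\mathcal S}\mid\bm{x}_t)=O(|\mathcal S|)$, invoke the block WLLN (Lemma~\ref{lem:wlln-blocks}) for concentration, and then apply the bounded continuous-mapping argument (Lemma~\ref{lem:cmt-dc}) to the ratio $s_{t+1}^{\mathcal B}/s_{t+1}^{\mathcal D}$ through $f(y)=(1+y)^{-1}$. The only cosmetic difference is that you work with the quadratic form in the full (mean-shifted) Gaussian vector, whereas the paper expands into constant, linear, and quadratic pieces in the zero-mean noise $\bm z_t$ before bounding each; your remark that existence of $\ell'_{\mathcal S}$ is effectively built into the statement, and that $\ell'_{\mathcal D}>0$ is needed for continuity of $g$, identifies genuine (minor) implicit assumptions that the paper's proof also glosses over.
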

\begin{proof}
We analyze the structure of the scalar sum $s_{t+1}^{\mathcal S}$. By expanding its definition, we can decompose it into terms that are constant, linear, and quadratic with respect to the noise vector $\bm{z}_t$ (whose components are $\zeta_{i,t}$).
\begin{align*}
s_{t+1}^{\mathcal S}
&= \chi_{\mathcal{S},t} + \bm{g}_{\mathcal{S},t}^\top \bm{z}_t + \bm{z}_t^\top D_{\mathcal{S},t} \bm{z}_t,
\end{align*}
where the scalar $\chi_{\mathcal{S},t}$ (constant given $\bm{x}_t$), vector $\bm{g}_{\mathcal{S},t}$, and diagonal matrix $D_{\mathcal{S},t}$ are defined as:
\begin{align*}
\chi_{\mathcal{S},t} &:= \sum_{i\in\mathcal S}\lambda_i^2(1-\eta_t\lambda_i)^2 c_{i,t}^2, \\
(\bm{g}_{\mathcal{S},t})_i &:= \begin{cases} -2\eta_t \lambda_i^2(1-\eta_t\lambda_i)c_{i,t} & \text{if } i \in \mathcal{S} \\ 0 & \text{if } i \notin \mathcal{S} \end{cases}, \\
(D_{\mathcal{S},t})_{ii} &:= \begin{cases} \eta_t^2 \lambda_i^2 & \text{if } i \in \mathcal{S} \\ 0 & \text{if } i \notin \mathcal{S} \end{cases}.
\end{align*}
The variance of $s_{t+1}^{\mathcal S}$ is the variance of the sum of the linear and quadratic forms. By Lemma~\ref{lem:gauss-var} and Assumption~\ref{asp:standing}, $\mathrm{Var}(s_{t+1}^{\mathcal S}) = O(|\mathcal S|)$, which satisfies the conditions for Lemma~\ref{lem:wlln-blocks}. This gives the convergence in probability.
Let the scalar random variable $y_{k,d}$ be the ratio
\[
y_{k,d}:=\frac{s_{t+1}^{\mathcal B}}{s_{t+1}^{\mathcal D}}
=\frac{\frac{1}{d-k}s_{t+1}^{\mathcal B}}{\frac{k}{d-k}\cdot \frac{1}{k}s_{t+1}^{\mathcal D}}
\xrightarrow[k,d \to \infty]{p}\ \frac{\ell'_{\mathcal B}(\bm{x}_t,\mSigma)}{\rho\,\ell'_{\mathcal D}(\bm{x}_t,\mSigma)}=:y_0\in(0,\infty).
\]
With $f(y)=(1+y)^{-1}$ being bounded and continuous, Lemma~\ref{lem:cmt-dc} yields
\[
\lim_{d\to\infty}\mathbb E[\theta_{t+1} \mid \bm{x}_t]
=\lim_{d\to\infty}\mathbb E\Big[\frac{1}{1+y_{k,d}}\Big]
=\frac{1}{1+y_0}=\frac{1}{1+\frac{\ell'_{\mathcal B}(\bm{x}_t,\mSigma)}{\rho\,\ell'_{\mathcal D}(\bm{x}_t,\mSigma)}}.
\]
The last identity in \eqref{eq:ratio-asymp} follows from the definition of $\ell'_{\mathcal S}(\bm{x}_t,\Sigma)$.
\end{proof}

Lemma~\ref{lem:ratio-asymp-app} justifies that, asymptotically, the expectation can be interchanged with the ratio defining $\theta_{t+1}$. 
Which implies if we want to have a decrease for $\theta_t $ asymptotically, we have:
\begin{equation}
 \begin{aligned}
&\lim_{d\to\infty}\,\mathbb E\big[\theta_{t+1}\,\big|\,\bm{x}_t\big]
=\frac{1}{1+\frac{1}{\rho}\frac{\lim_{d \to \infty} \mathbb E[\frac{1}{d-k}s_{t+1}^{\mathcal B}\mid \bm{x}_t]}{\lim_{d \to \infty} \mathbb E[\frac{1}{k}s_{t+1}^{\mathcal D}\mid \bm{x}_t]}}&<\lim_{d\to\infty}\,\mathbb \theta_{t}=\frac{1}{1-\frac{1}{\rho}\frac{\lim_{d \to \infty}\frac{1}{d-k}s_t^\mathcal{B}}{\lim_{d \to \infty}\frac{1}{k}s_t^\mathcal{D}}}\\
\implies &\lim_{d \to \infty} E\big[s_t^{\mathcal B}\,s_{t+1}^{\mathcal D}\,\big|\,\bm{x}_t\big]-\lim_{d \to \infty} E\big[s_{t+1}^{\mathcal B}\,s_t^{\mathcal D}\,\big|\,\bm{x}_t\big]<0
\\
\implies &\lim_{d \to \infty} E\big[f_t(\eta_t):=s_t^{\mathcal B}\,s_{t+1}^{\mathcal D}-s_{t+1}^{\mathcal B}\,s_t^{\mathcal D}\,\big|\,\bm{x}_t\big]<0
\end{aligned}   
\end{equation}

Therefore, to decide the sign of the drift of $\theta_t$ in expectation, it suffices to analyze the sign of the comparison functional’s expectation $\mathbb E[f_t(\eta_t)\mid \bm{x}_t]$ (a quadratic in $\eta_t$). This motivates computing elementwise expectations next.

\subsection{Comparison Lemma}
\begin{lemma}[Elementwise expectation of $\bm{x}_{t+1}$]\label{lem:blk-exp-t-app}
For each $i\in\{1,\dots,d\}$ and for a given $\bm{x}_t$,
\[
\mathbb E\big[\lambda_i^2 c_{i,t+1}^2 \mid \bm{x}_t\big]=(1-\eta_t\lambda_i)^2 \lambda_i^2 c_{i,t}^2+\eta_t^2\lambda_i^2\,\kappa_i^2.
\]
Consequently,
\begin{align}
\mathbb E\big[s_{t+1}^{\mathcal D}\mid \bm{x}_t\big]&=s_t^{\mathcal D}-2\eta_t\sum_{i\in\mathcal D}\lambda_i^3 c_{i,t}^2+\eta_t^2\Big(\sum_{i\in\mathcal D}\lambda_i^4 c_{i,t}^2+e_{\mathcal D}\Big),\label{eq:Eat}\\
\mathbb E\big[s_{t+1}^{\mathcal B}\mid \bm{x}_t\big]&=s_t^{\mathcal B}-2\eta_t\sum_{i\in\mathcal B}\lambda_i^3 c_{i,t}^2+\eta_t^2\Big(\sum_{i\in\mathcal B}\lambda_i^4 c_{i,t}^2+e_{\mathcal B}\Big).\label{eq:Ebt}
\end{align}
\end{lemma}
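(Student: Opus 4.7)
The plan is a direct moment calculation in the eigenbasis. The one-step update, when projected onto $\bm{u}_i$, gives the scalar recursion $c_{i,t+1}=(1-\eta_t\lambda_i)\,c_{i,t}-\eta_t\,\zeta_{i,t}$, where $\zeta_{i,t}:=\langle\boldsymbol{\xi}_t,\bm{u}_i\rangle$. Because $\boldsymbol{\xi}_t$ has mean $\bm{0}$ and covariance $\mSigma$, the scalars satisfy $\mathbb{E}[\zeta_{i,t}\mid\bm{x}_t]=0$ and $\mathbb{E}[\zeta_{i,t}^2\mid\bm{x}_t]=\bm{u}_i^\top\mSigma\bm{u}_i=\kappa_i^2$. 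These are the only facts I need.

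First I would square the recursion to obtain
\[
c_{i,t+1}^2=(1-\eta_t\lambda_i)^2 c_{i,t}^2\;-\;2\eta_t(1-\eta_t\lambda_i)\,c_{i,t}\,\zeta_{i,t}\;+\;\eta_t^2\,\zeta_{i,t}^2 .
\]
Taking the conditional expectation given $\bm{x}_t$ kills the cross term (since $c_{i,t}$ is $\bm{x}_t$-measurable and $\mathbb{E}[\zeta_{i,t}\mid\bm{x}_t]=0$) and replaces $\zeta_{i,t}^2$ by $\kappa_i^2$. Multiplying by $\lambda_i^2$ yields the first claim
\[
\mathbb{E}\big[\lambda_i^2 c_{i,t+1}^2\mid\bm{x}_t\big]=(1-\eta_t\lambda_i)^2\lambda_i^2 c_{i,t}^2+\eta_t^2\lambda_i^2\kappa_i^2 .
\]

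Next, I would sum this identity over $i\in\mathcal D$ (and separately over $i\in\mathcal B$), using linearity of the conditional expectation and the defining identity $s_{t+1}^{\mathcal S}=\sum_{i\in\mathcal S}\lambda_i^2 c_{i,t+1}^2$. Expanding $(1-\eta_t\lambda_i)^2=1-2\eta_t\lambda_i+\eta_t^2\lambda_i^2$ and grouping by powers of $\eta_t$ gives
\[
\mathbb{E}[s_{t+1}^{\mathcal S}\mid\bm{x}_t]=\sum_{i\in\mathcal S}\lambda_i^2 c_{i,t}^2-2\eta_t\!\sum_{i\in\mathcal S}\lambda_i^3 c_{i,t}^2+\eta_t^2\!\sum_{i\in\mathcal S}\lambda_i^4 c_{i,t}^2+\eta_t^2\!\sum_{i\in\mathcal S}\lambda_i^2\kappa_i^2 .
\]
Recognizing the first sum as $s_t^{\mathcal S}$ and the last as $e_{\mathcal S}$ delivers \eqref{eq:Eat} for $\mathcal S=\mathcal D$ and \eqref{eq:Ebt} for $\mathcal S=\mathcal B$.

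There is no genuine obstacle here: once the cross term is eliminated by the zero-mean property of $\boldsymbol{\xi}_t$ and $\zeta_{i,t}^2$ is replaced by the diagonal entry $\kappa_i^2=(\mU^\top\mSigma\mU)_{ii}$, the rest is bookkeeping. The only subtlety worth flagging is that the noise variance enters only through the diagonal of $\mC=\mU^\top\mSigma\mU$, because the elementwise identity involves a single index $i$; the off-diagonal covariances of $\mSigma$ play no role at this order and will only matter later when computing variances of block sums (as in Lemma~\ref{lem:ratio-asymp-app}).
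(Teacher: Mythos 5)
Your proposal is correct and follows essentially the same route as the paper's proof: square the scalar recursion $c_{i,t+1}=(1-\eta_t\lambda_i)c_{i,t}-\eta_t\zeta_{i,t}$, use $\mathbb{E}[\zeta_{i,t}\mid\bm{x}_t]=0$ and $\mathbb{E}[\zeta_{i,t}^2\mid\bm{x}_t]=\kappa_i^2$ to kill the cross term, multiply by $\lambda_i^2$, and sum over the block. Your closing remark that only the diagonal of $\mC=\mU^\top\mSigma\mU$ enters at this order is a correct and useful observation, though the paper does not spell it out.
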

\begin{proof}
We expand the square for $c_{i,t+1}$ and take the expectation conditional on $\bm{x}_t$:
\[
\mathbb E\big[ \lambda_i^2 c_{i,t+1}^2 \mid \bm{x}_t \big] = \mathbb E\big[ \lambda_i^2 \left( (1-\eta_t\lambda_i)c_{i,t} - \eta_t\zeta_{i,t} \right)^2 \mid \bm{x}_t \big].
\]
Expanding the squared term gives
\[
\mathbb E\big[ \lambda_i^2 \left( (1-\eta_t\lambda_i)^2 c_{i,t}^2 - 2\eta_t(1-\eta_t\lambda_i)c_{i,t}\zeta_{i,t} + \eta_t^2\zeta_{i,t}^2 \right) \mid \bm{x}_t\big].
\]
Using $\mathbb E[\zeta_{i,t}\mid \bm{x}_t]=0$ and $\mathbb E[\zeta_{i,t}^2\mid \bm{x}_t]=\kappa_i^2$, the cross-term vanishes. We are left with
\begin{align}
\mathbb E\big[ \lambda_i^2 c_{i,t+1}^2 \mid \bm{x}_t \big] &= \lambda_i^2 (1-\eta_t\lambda_i)^2 c_{i,t}^2 + \eta_t^2 \lambda_i^2 \kappa_i^2 \\
&= (1-2\eta_t\lambda_i+\eta_t^2\lambda_i^2)\lambda_i^2 c_{i,t}^2 + \eta_t^2 \lambda_i^2 \kappa_i^2.
\end{align}
Summing over $i \in \mathcal{D}$ (or $\mathcal{B}$) and regrouping terms yields
\[
\mathbb E\big[s_{t+1}^{\mathcal S}\mid \bm{x}_t\big]=\sum_{i\in\mathcal S} \lambda_i^2 c_{i,t}^2 - 2\eta_t \sum_{i\in\mathcal S} \lambda_i^3 c_{i,t}^2 + \eta_t^2 \left( \sum_{i\in\mathcal S} \lambda_i^4 c_{i,t}^2 + \sum_{i\in\mathcal S} \lambda_i^2 \kappa_i^2 \right).
\]
This simplifies to the expressions in \eqref{eq:Eat} and \eqref{eq:Ebt} using the definitions of $s_t^{\mathcal S}$ and $e_{\mathcal S}$.
\end{proof}

\begin{lemma}[Quadratic form for step size condition]\label{lem:quad-t-app}
With $f_t(\eta_t)$ as in \eqref{eq:f-compare}, for a given $\bm{x}_t$, its conditional expectation is a quadratic function of $\eta_t$ with no constant term:
\begin{equation}\label{eq:quad-form}
\mathbb E[f_t(\eta_t)\mid \bm{x}_t]=p_t\,\eta_t^2+q_t\,\eta_t,
\end{equation}
where the scalars $p_t$ and $q_t$ are defined as
\begin{align}
q_t&=2\Big(s_t^{\mathcal D}\sum_{i\in\mathcal B}\lambda_i^3 c_{i,t}^2-s_t^{\mathcal B}\sum_{i\in\mathcal D}\lambda_i^3 c_{i,t}^2\Big)\le 2s_t^{\mathcal B}s_t^{\mathcal D}(\lambda_{k+1}-\lambda_k)<0,\label{eq:q-sign}\\
p_t&=s_t^{\mathcal B}\Big(\sum_{j\in\mathcal D}\lambda_j^4 c_{j,t}^2+e_{\mathcal D}\Big)\ -\ s_t^{\mathcal D}\Big(\sum_{i\in\mathcal B}\lambda_i^4 c_{i,t}^2+e_{\mathcal B}\Big).\label{eq:p-def}
\end{align}
If $p_t\neq 0$, we define the adaptive critical step size as the non-zero root $\eta_t^*:=-q_t/p_t$.
\end{lemma}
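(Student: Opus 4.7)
\textbf{Proof Proposal for Lemma~\ref{lem:quad-t-app}.}

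The plan is a direct computation using linearity of conditional expectation and the per-block expectation formulas already established in Lemma~\ref{lem:blk-exp-t-app}. First, observe that conditional on $\bm{x}_t$ the scalars $s_t^{\mathcal D}$ and $s_t^{\mathcal B}$ are deterministic, so by the definition $f_t(\eta_t)=s_t^{\mathcal B}s_{t+1}^{\mathcal D}-s_{t+1}^{\mathcal B}s_t^{\mathcal D}$ I get
\[
\mathbb E[f_t(\eta_t)\mid \bm{x}_t]
= s_t^{\mathcal B}\,\mathbb E[s_{t+1}^{\mathcal D}\mid \bm{x}_t]\ -\ s_t^{\mathcal D}\,\mathbb E[s_{t+1}^{\mathcal B}\mid \bm{x}_t].
\]
Substituting the identities \eqref{eq:Eat} and \eqref{eq:Ebt} from Lemma~\ref{lem:blk-exp-t-app}, each of the two expectations is a quadratic in $\eta_t$ whose constant term is $s_t^{\mathcal S}$. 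Taking the difference, the constant terms $s_t^{\mathcal B}s_t^{\mathcal D}-s_t^{\mathcal D}s_t^{\mathcal B}$ cancel, confirming that $\mathbb E[f_t(\eta_t)\mid \bm{x}_t]$ has no constant term. Reading off the coefficients of $\eta_t$ and $\eta_t^2$ yields the claimed $q_t$ and $p_t$.

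Next, I establish the sign inequality for $q_t$. Starting from
\[
q_t=2\Big(s_t^{\mathcal D}\sum_{i\in\mathcal B}\lambda_i^3 c_{i,t}^2-s_t^{\mathcal B}\sum_{i\in\mathcal D}\lambda_i^3 c_{i,t}^2\Big),
\]
I use the spectral separation $\lambda_i\le \lambda_{k+1}$ for $i\in\mathcal B$ and $\lambda_j\ge \lambda_k$ for $j\in\mathcal D$ to extract one factor of $\lambda_i$ from each cubic term. This gives
\[
\sum_{i\in\mathcal B}\lambda_i^3 c_{i,t}^2 \le \lambda_{k+1}\,s_t^{\mathcal B}, \qquad
\sum_{i\in\mathcal D}\lambda_i^3 c_{i,t}^2 \ge \lambda_{k}\,s_t^{\mathcal D}.
\]
Plugging these bounds into $q_t$ produces
\[
q_t \le 2\big(s_t^{\mathcal D}\lambda_{k+1}s_t^{\mathcal B}-s_t^{\mathcal B}\lambda_k s_t^{\mathcal D}\big) = 2 s_t^{\mathcal B}s_t^{\mathcal D}(\lambda_{k+1}-\lambda_k),
\]
which is the first claimed inequality. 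Strict negativity then follows from $\mathrm{gap}_1=\lambda_k-\lambda_{k+1}>0$, together with the fact that $s_t^{\mathcal D},s_t^{\mathcal B}>0$ in the nontrivial regime where $\bm{x}_t$ has nonzero projections in both blocks (the degenerate cases $s_t^{\mathcal D}=0$ or $s_t^{\mathcal B}=0$ collapse $\theta_t$ to $0$ or $1$ and are excluded throughout the subsequent analysis). Finally, since $q_t\ne 0$, the equation $\mathbb E[f_t(\eta_t)\mid \bm{x}_t]=\eta_t(p_t\eta_t+q_t)=0$ has $\eta_t=0$ as one root and $\eta_t^*=-q_t/p_t$ as the other whenever $p_t\ne 0$, yielding the definition of the adaptive critical step size.

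This argument is essentially algebraic: no asymptotic arguments, no probabilistic inequalities beyond what Lemma~\ref{lem:blk-exp-t-app} already provides, and no regularity issues since the per-coordinate expectations are finite. The only substantive step is the bounding of $q_t$ via the spectral gap, which relies crucially on the partition $\mathcal D\cup\mathcal B$ being defined at the genuine gap boundary between $\lambda_k$ and $\lambda_{k+1}$; I expect no real obstacle here.
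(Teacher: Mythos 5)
Your proposal is correct and follows essentially the same route as the paper's proof: both substitute the per-block expectations from Lemma~\ref{lem:blk-exp-t-app} into the bilinear form $f_t$, observe the cancellation of the constant term, read off $q_t$ and $p_t$, and then bound $q_t$ using $\sum_{i\in\mathcal B}\lambda_i^3 c_{i,t}^2\le\lambda_{k+1}s_t^{\mathcal B}$ and $\sum_{i\in\mathcal D}\lambda_i^3 c_{i,t}^2\ge\lambda_k s_t^{\mathcal D}$. Your explicit remark about excluding the degenerate cases $s_t^{\mathcal D}=0$ or $s_t^{\mathcal B}=0$ is a small, harmless clarification the paper leaves implicit.
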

\begin{proof}
We explicitly compute the conditional expectation of $f_t(\eta_t)$ by substituting the results from Lemma~\ref{lem:blk-exp-t-app} into its definition and collecting terms by powers of $\eta_t$.
\begin{align*}
\mathbb E[f_t(\eta_t)\mid \bm{x}_t] &= s_t^{\mathcal B}\mathbb E[s_{t+1}^{\mathcal D}\mid \bm{x}_t] - s_t^{\mathcal D}\mathbb E[s_{t+1}^{\mathcal B}\mid \bm{x}_t] \\
&= s_t^{\mathcal B}\left(s_t^{\mathcal D}-2\eta_t\sum_{i\in\mathcal D}\lambda_i^3 c_{i,t}^2+\eta_t^2\Big(\sum_{j\in\mathcal D}\lambda_j^4 c_{j,t}^2+e_{\mathcal D}\Big)\right) \\
&\quad - s_t^{\mathcal D}\left(s_t^{\mathcal B}-2\eta_t\sum_{i\in\mathcal B}\lambda_i^3 c_{i,t}^2+\eta_t^2\Big(\sum_{i\in\mathcal B}\lambda_i^4 c_{i,t}^2+e_{\mathcal B}\Big)\right) \\
&= (s_t^{\mathcal B}s_t^{\mathcal D} - s_t^{\mathcal D}s_t^{\mathcal B}) \\
&\quad + \eta_t \left( -2s_t^{\mathcal B}\sum_{i\in\mathcal D}\lambda_i^3 c_{i,t}^2 + 2s_t^{\mathcal D}\sum_{i\in\mathcal B}\lambda_i^3 c_{i,t}^2 \right) \\
&\quad + \eta_t^2 \left( s_t^{\mathcal B}\Big(\sum_{j\in\mathcal D}\lambda_j^4 c_{j,t}^2+e_{\mathcal D}\Big) - s_t^{\mathcal D}\Big(\sum_{i\in\mathcal B}\lambda_i^4 c_{i,t}^2+e_{\mathcal B}\Big) \right) \\
&= \eta_t \underbrace{2\Big(s_t^{\mathcal D}\sum_{i\in\mathcal B}\lambda_i^3 c_{i,t}^2-s_t^{\mathcal B}\sum_{i\in\mathcal D}\lambda_i^3 c_{i,t}^2\Big)}_{q_t} \\
&\quad + \eta_t^2 \underbrace{ \left( s_t^{\mathcal B}\Big(\sum_{j\in\mathcal D}\lambda_j^4 c_{j,t}^2+e_{\mathcal D}\Big)\ -\ s_t^{\mathcal D}\Big(\sum_{i\in\mathcal B}\lambda_i^4 c_{i,t}^2+e_{\mathcal B}\Big) \right) }_{p_t}.
\end{align*}
This confirms the quadratic form $\mathbb E[f_t(\eta_t)\mid \bm{x}_t] = p_t\eta_t^2+q_t\eta_t$.
The bound on $q_t$ follows from $\sum_{i \in\mathcal B}\lambda_i^3 c_{i,t}^2\le \lambda_{k+1}s_t^{\mathcal B}$ and $\sum_{j \in\mathcal D}\lambda_j^3 c_{j,t}^2\ge \lambda_k s_t^{\mathcal D}$, which makes $q_t$ strictly negative since $\lambda_k > \lambda_{k+1}$.
\end{proof}

\paragraph{Remark.} The expected one-step alignment change functional $\mathbb E[f_t(\eta_t)\mid \bm{x}_t]$ is a quadratic function of the step size $\eta_t$ that always passes through the origin $(\eta_t=0, \mathbb E[f_t(\eta_t)\mid \bm{x}_t]=0)$. The behavior for any positive step size $\eta_t > 0$ is entirely determined by the sign of the quadratic coefficient $p_t$, as illustrated in Figure~\ref{fig:eta_behavior}.

\begin{figure}[h!]
\centering
\begin{minipage}{0.45\textwidth}
    \centering
    \includegraphics[width=\linewidth]{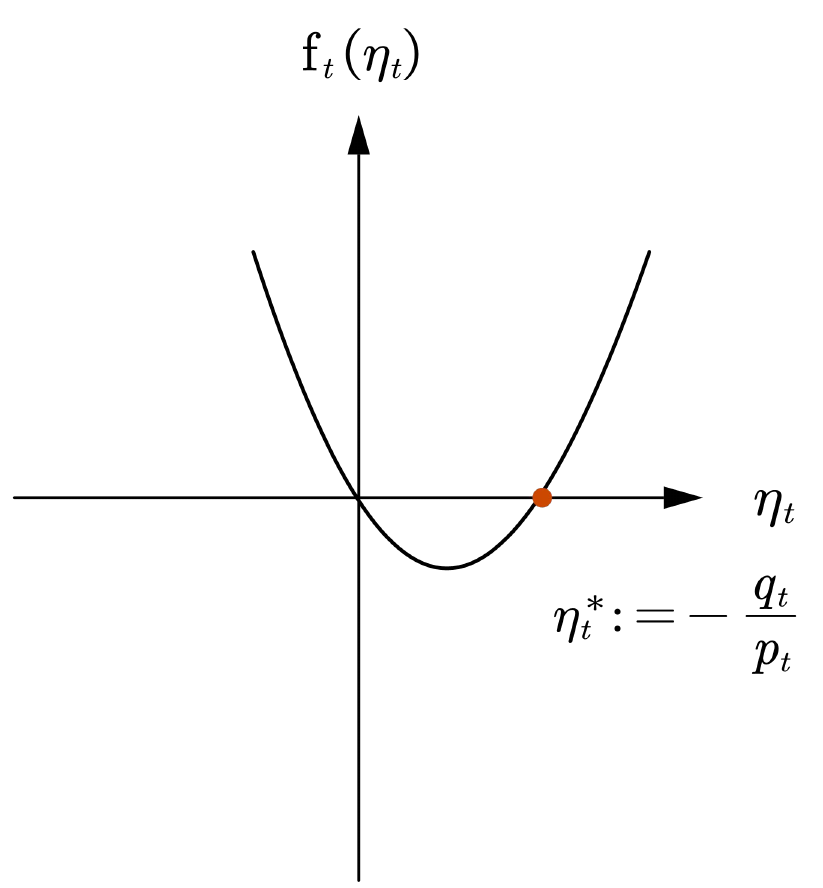}
    \centerline{(a)}
\end{minipage}
\hfill
\begin{minipage}{0.45\textwidth}
    \centering
    \includegraphics[width=\linewidth]{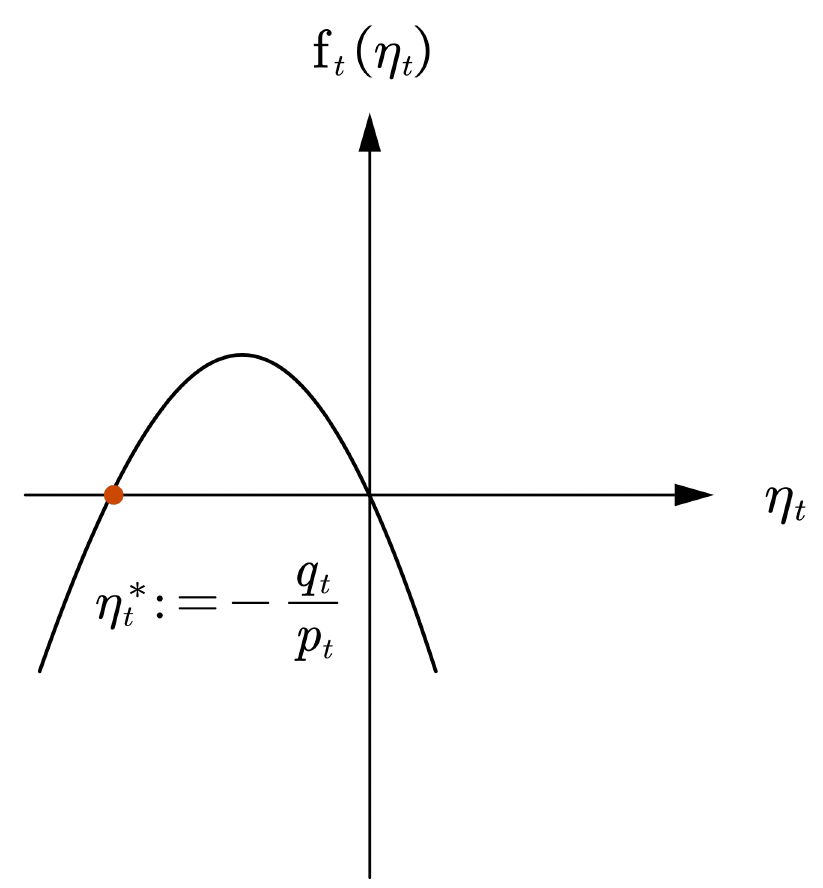}
    \centerline{(b)}
\end{minipage}
\caption{The sign of $\mathbb E[f_t(\eta_t)\mid \bm{x}_t]$ as a function of $\eta_t$. (a) shows the behavior when $p_t > 0$, where the parabola opens upwards. (b) shows the behavior when $p_t < 0$, where the parabola opens downwards.}
\label{fig:eta_behavior}
\end{figure}

When $p_t > 0$ (Figure~\ref{fig:eta_behavior}a), the parabola opens upwards. Since $q_t < 0$, there exists a positive critical step size $\eta_t^* = -q_t/p_t > 0$. For a small step size $0 < \eta_t < \eta_t^*$, we have $\mathbb E[f_t(\eta_t)] < 0$, implying the expected alignment $\mathbb E[\theta_{t+1}]$ decreases. For a large step size $\eta_t > \eta_t^*$, we have $\mathbb E[f_t(\eta_t)] > 0$, implying the expected alignment increases.

When $p_t < 0$ (Figure~\ref{fig:eta_behavior}b), the parabola opens downwards. Since both coefficients $p_t$ and $q_t$ are negative, the functional $\mathbb E[f_t(\eta_t)]$ is negative for all positive step sizes $\eta_t > 0$. This implies that, in this regime, any choice of step size will lead to a decrease in the expected alignment.

\subsection{Proofs of Step Size Condition Theory}
\label{proof-step-size-theory}
\begin{theorem-restated}\ref{thm:step-dec-below}(\textbf{\textit{Decreases Condition}})
Under Assumption \ref{asp:standing}, if $0<\eta_t<\eta_t^*(\bm{x}_t)$, then
\[
\lim_{d\to\infty}\,\mathbb E\big[\theta_{t+1}\,\big|\,\bm{x}_t\big]\ <\ \theta_t.
\]
\end{theorem-restated}

\begin{proof}
By Lemma~\ref{lem:quad-t-app}, $\mathbb E[f_t(\eta_t)\mid \bm{x}_t]=p_t\eta_t^2+q_t\eta_t$ with $q_t<0$. If $p_t > 0$, then for $0<\eta_t<\eta_t^*=-q_t/p_t$, we have $\mathbb E[f_t(\eta_t)\mid \bm{x}_t]<0$. Lemma~\ref{lem:ratio-asymp-app} transfers the sign to $\lim_{d\to\infty}\mathbb E[\theta_{t+1}\mid \bm{x}_t]<\theta_t$.
\end{proof}

\begin{theorem-restated}\ref{thm:step-inc-above}(\textbf{\textit{Increase Condition}})
Under Assumption \ref{asp:standing}, let
\[
\,g_{\mathrm{gap}}:=\frac{1}{1+\frac{s_{\max}}{s_{\min}}\cdot \frac{1}{\rho}\left(\frac{\lambda_{k+1}}{\lambda_k}\right)^2}\in(0,1).\,
\]
For $t$ and $\bm{x}_t$. If $\theta_t\le g_{\mathrm{gap}}$ and $\eta_t>\eta_t^*(\bm{x}_t)$, then
\[
\lim_{d\to\infty}\,\mathbb E\big[\theta_{t+1}\,\big|\,\bm{x}_t\big]\ >\ \theta_t.
\]
\end{theorem-restated}
\begin{proof}
We will show $p_t>0$ when $\theta_t\le g_{\mathrm{gap}}$, then apply Lemma \ref{lem:ratio-asymp-app} and ~\ref{lem:quad-t-app}. \\
First, recall the expression for $p_t$ (formula \ref{eq:p-def}). Let's consider the term which do not contain $ e_{\mathcal S}$ in $p_t$:
\begin{align}
s_t^{\mathcal B}\sum_{j\in\mathcal D}\lambda_j^4 c_{j,t}^2-s_t^{\mathcal D}\sum_{i\in\mathcal B}\lambda_i^4 c_{i,t}^2
&=\sum_{i\in\mathcal B}\sum_{j\in\mathcal D}\lambda_i^2\lambda_j^2 c_{i,t}^2 c_{j,t}^2(\lambda_j^2-\lambda_i^2)\nonumber\\
&\ge s_t^{\mathcal B}s_t^{\mathcal D}(\lambda_k^2-\lambda_{k+1}^2)>0.\label{eq:ptc-lb-app}
\end{align}
Next, using $e_{\mathcal D}\ge s_{\min}\psi_{\mathcal D}$ and $e_{\mathcal B}\le s_{\max}\psi_{\mathcal B}$,
\begin{equation}\label{eq:ptn-lb-app}
s_t^{\mathcal B} e_{\mathcal D}-s_t^{\mathcal D} e_{\mathcal B}\ \ge\ s_t\Big((1-\theta_t) s_{\min}\psi_{\mathcal D}-\theta_t s_{\max}\psi_{\mathcal B}\Big). 
\end{equation}
If $\theta_t\le g_{\mathrm{noise}}:=\big(1+\frac{s_{\max}\psi_{\mathcal B}}{s_{\min}\psi_{\mathcal D}}\big)^{-1}$, then the bracket is non-negative and so the term which do  contain $ e_{\mathcal S}$ is $\ge 0$. Using $\psi_{\mathcal D}\ge k\lambda_k^2$ and $\psi_{\mathcal B}\le (d-k)\lambda_{k+1}^2$,we can have $ g_{\mathrm{gap}} \le g_{\mathrm{noise}}$. Hence $p_t>0$ whenever $\theta_t\le g_{\mathrm{gap}}$. Therefore, if additionally $\eta_t>\eta_t^*$, then $\mathbb E[f_t(\eta_t)\mid \bm{x}_t]>0$ and Lemma~\ref{lem:ratio-asymp-app} yields
\[
\lim_{d\to\infty}\mathbb E[\theta_{t+1}\mid \bm{x}_t]>\theta_t.
\]
As $\lambda_k/\lambda_{k+1}\to\infty$, $(\lambda_{k+1}/\lambda_k)^2\to 0$ implies $g_{\mathrm{gap}}\to 1$.
\end{proof}

\begin{theorem-restated}\ref{thm:step-dec-large}(\textbf{\textit{Large Alignment Regime Condition}})
Under Assumption \ref{asp:standing}, there exists a critical alignment threshold $\theta^*_t \in (0,1)$ such that if the current alignment $\theta_t \ge \theta^*_t$, then for any positive step size $\eta_t > 0$, the expected alignment decreases:
\[
\lim_{d\to\infty}\,\mathbb E\big[\theta_{t+1}\,\big|\,\bm{x}_t\big]\ <\ \theta_t.
\]
The threshold is given by $\theta^*_t := r_{0,t}/(1+r_{0,t})$, where $r_{0,t}$ is the positive root of the quadratic equation:
\[
a_t^{\mathrm{aux}} r^2+(a_t^{\mathrm{aux}}-m_t^{\mathrm{aux}}-h_t^{\mathrm{aux}})r-h_t^{\mathrm{aux}} = 0.
\]
The coefficients are defined using the second-order spectral masses $\psi_{\mathcal S} := \sum_{i \in \mathcal S} \lambda_i^2$ and the total energy $s_t:=\sum_{i=1}^d \lambda_i^2 c_{i,t}^2$ as:
\begin{align*}
a_t^{\mathrm{aux}} &:=s_{\min}\psi_{\mathcal B}, \\
h_t^{\mathrm{aux}} &:=s_{\max}\psi_{\mathcal D}, \\
m_t^{\mathrm{aux}} &:=s_t(\lambda_1^2-\lambda_d^2).
\end{align*}
As $\psi_{\mathcal D}/\psi_{\mathcal B}\to\infty$, we have $\theta^*_t\to 1$.
\end{theorem-restated}

\begin{proof}
We derive a sufficient condition for $p_t\le 0$. Using
\[
\sum_{j\in\mathcal D}\lambda_j^4 c_{j,t}^2\le \lambda_1^2 s_t^{\mathcal D},\quad
\sum_{i\in\mathcal B}\lambda_i^4 c_{i,t}^2\ge \lambda_d^2 s_t^{\mathcal B},\quad
e_{\mathcal D}\le s_{\max}\psi_{\mathcal D},\quad e_{\mathcal B}\ge s_{\min}\psi_{\mathcal B},
\]
we get
\begin{align}
\label{eqptupbound}
p_t
&\le s_t^{\mathcal B}(\lambda_1^2 s_t^{\mathcal D}+s_{\max}\psi_{\mathcal D})-s_t^{\mathcal D}(\lambda_d^2 s_t^{\mathcal B}+s_{\min}\psi_{\mathcal B})\nonumber\\
&=s_t^2\theta_t(1-\theta_t)(\lambda_1^2-\lambda_d^2)+s_t\big((1-\theta_t)s_{\max}\psi_{\mathcal D}-\theta_t s_{\min}\psi_{\mathcal B}\big).
\end{align}
Let $r_t=\theta_t/(1-\theta_t)$ and define $a_t^{\mathrm{aux}}:=s_{\min}\psi_{\mathcal B}$, $h_t^{\mathrm{aux}}:=s_{\max}\psi_{\mathcal D}$, $m_t^{\mathrm{aux}}:=s_t(\lambda_1^2-\lambda_d^2)$. Then $p_t\le 0$ is implied by
$r_t m_t^{\mathrm{aux}} + (1+r_t)h_t^{\mathrm{aux}} - r_t(1+r_t)a_t^{\mathrm{aux}}\le 0$, which is equivalent to
\[
a_t^{\mathrm{aux}} r_t^2+(a_t^{\mathrm{aux}}-m_t^{\mathrm{aux}}-h_t^{\mathrm{aux}})r_t-h_t^{\mathrm{aux}}\ \ge\ 0.
\]
Hence, if $\theta_t$ is large enough so that the quadratic inequality holds, $p_t\le 0$ and, for any $\eta_t>0$, combine with Lemma~\ref{lem:ratio-asymp-app}:
\[
\lim_{d\to \infty}\mathbb E[f_t(\eta_t)\mid \bm{x}_t]\le 0\ \ \Rightarrow\ \ \lim_{d\to\infty}\mathbb E[\theta_{t+1}\mid \bm{x}_t]<\theta_t,
\]

\end{proof}

\begin{theorem-restated}\ref{thm:regime-separation} (\emph{\textbf{Separation of Alignment Regimes}})
Under Assumption \ref{asp:standing}, for any nontrivial problem with a non-zero spectral gap $(\mathrm{gap}_2 > 0)$ and bounded noise $(s_{\max} < \infty)$, the low-alignment threshold $g_{\mathrm{gap}}$ is strictly less than the high-alignment threshold $\theta^*_t$:
\[
g_{\mathrm{gap}} < \theta^*_t.
\]
\end{theorem-restated}

\begin{proof}
According to the proof of Theorem \ref{thm:step-inc-above}:
$$\theta_t<g_{gap}\implies p_t>0,$$
if $g_{gap}>\theta_t^*, \exists \theta_t \in (\theta^*_t,g_{gap})$, by the proof of Theorem \ref{thm:step-dec-large}. we have:
$$\theta_t>\theta_t^* \implies p_t<0$$
which contradicts the previous conclusion. Therefore $g_{gap}<\theta_t^*$
\end{proof}

{\begin{theorem-restated}\ref{thm:theta-star-rate}(\textbf{\textit{Asymptotic rate of $\theta_t^*$}})
Under Assumption \ref{asp:standing}, with $m := \lambda_k/\lambda_{k+1}>1$, there exist constants $\alpha, \beta>0$ such that
\[
1 - \frac{\beta}{m^2}
\;\le\;
\theta_t^*
\;\le\;
1 - \frac{\alpha}{m^2},
\]
and hence
\[
\theta_t^* = 1 - \Theta\!\left(\frac{1}{m^2}\right).
\]
\end{theorem-restated}

\begin{proof}
By Theorem~\ref{thm:step-dec-large} we have
\[
\theta_t^* \;=\; \frac{r_{0,t}}{\,1+r_{0,t}\,},
\]
where $r_{0,t}>0$ is the positive root of the quadratic
\[
a_t^{\mathrm{aux}}\,r^2 + \bigl(a_t^{\mathrm{aux}} - m_t^{\mathrm{aux}} - h_t^{\mathrm{aux}}\bigr)\,r - h_t^{\mathrm{aux}} = 0.
\]
Here
\[
a_t^{\mathrm{aux}} = s_{\min}\sum_{i\in\mathcal{B}}\lambda_i^2,\quad
h_t^{\mathrm{aux}} = s_{\max}\sum_{i\in\mathcal{D}}\lambda_i^2,\quad
m_t^{\mathrm{aux}} = s_t\bigl(\lambda_1^2 - \lambda_d^2\bigr),
\]
are all nonnegative scalars. Using the quadratic formula, the positive root can be written as
\[
r_{0,t}
=\frac{
m_t^{\mathrm{aux}} + h_t^{\mathrm{aux}} - a_t^{\mathrm{aux}}
+
\sqrt{
\bigl(m_t^{\mathrm{aux}}+h_t^{\mathrm{aux}}-a_t^{\mathrm{aux}}\bigr)^2
+ 4\,a_t^{\mathrm{aux}}\,h_t^{\mathrm{aux}}
}
}{2\,a_t^{\mathrm{aux}}}.
\]
Under Assumption~\ref{asp:standing} the eigenvalues satisfy
\[
\lambda_{k+1}^2 \le \lambda_i^2 \le \lambda_k^2\quad\text{for }i\in\mathcal{D},
\]
and
\[
\lambda_{k+1}^2 \le \lambda_i^2\quad\text{for }i\in\mathcal{B},
\]
so that, with the shorthand
\[
\ell_{\mathcal{B}} := |\mathcal{B}|\cdot\lambda_{k+1}^2,\qquad
u_{\mathcal{B}} := |\mathcal{B}|\cdot m^2\lambda_{k+1}^2,
\]
we have $\ell_{\mathcal{B}}\le\sum_{i\in\mathcal{B}}\lambda_i^2\le u_{\mathcal{B}}$ and
$\rho\,\ell_{\mathcal{B}}\le\sum_{i\in\mathcal{D}}\lambda_i^2\le\rho\,u_{\mathcal{B}}$, where $\rho = k/(d-k)$. 
Using the noise spectral bounds $s_{\min}\le\lambda_{\min}(\Sigma)\le\lambda_{\max}(\Sigma)\le s_{\max}$, it follows that
\[
s_{\min}\,\ell_{\mathcal{B}}\;\le\;a_t^{\mathrm{aux}}\;\le\;s_{\min}\,u_{\mathcal{B}},\qquad
s_{\max}\,\rho\,\ell_{\mathcal{B}}\;\le\;h_t^{\mathrm{aux}}\;\le\;s_{\max}\,\rho\,u_{\mathcal{B}}.
\]
Define the fixed scalar $\beta := \rho\,(s_{\max}/s_{\min})>0$. Using $m_t^{\mathrm{aux}}\ge 0$, we obtain
\[
r_{0,t}\;\ge\;
\frac{s_{\max}\,\rho\,\ell_{\mathcal{B}}-s_{\min}\,u_{\mathcal{B}}}{s_{\min}\,u_{\mathcal{B}}}
= \frac{\beta}{m^2}-1,
\]
and hence
\[
\theta_t^* = \frac{r_{0,t}}{1+r_{0,t}}\;\ge\;
\frac{\beta/m^2 - 1}{1 + (\beta/m^2 - 1)}
=1-\frac{1}{\beta/m^2}.
\]
For the upper bound, since $a_t^{\mathrm{aux}}\ge s_{\min}\ell_{\mathcal{B}}$, $h_t^{\mathrm{aux}}\le s_{\max}\rho\,u_{\mathcal{B}}$, and $m_t^{\mathrm{aux}}\le s_t(\lambda_1^2-\lambda_d^2)$, there exists a fixed scalar $\alpha>0$ such that
\[
r_{0,t}\;\le\;\alpha\,m^2.
\]
Then
\[
\theta_t^* = \frac{r_{0,t}}{1+r_{0,t}}\;\le\;
\frac{\alpha\,m^2}{1+\alpha\,m^2}
= 1 - \frac{1}{1+\alpha\,m^2},
\]
which gives
\[
1-\frac{\beta}{m^2}\;\le\; \theta_t^*\;\le\;
1-\frac{\alpha}{m^2}.
\]
The \(\Theta(1/m^2)\) form follows from these two-sided inequalities.
\end{proof}

}

\begin{theorem-restated}\ref{thm:bounds-xnorm}(\textbf{\textit{State- and gap-aware bounds on $\eta_t^*$ (with $\|\bm{x}_t\|_2$)}})
Under Assumption \ref{asp:standing}, for any $x_t$, we have 
\[
\eta_t^*\ \ge\ \frac{2\,\mathrm{gap}_1}{(\lambda_1^2-\lambda_d^2)\ +\ \dfrac{s_{\max}\psi_{\mathcal D}}{\lambda_d^2\,\|\bm{x}_t\|^2_2\,\theta_t}},
\]

\end{theorem-restated}

\begin{proof}
The bounds for $\eta_t^*=-q_t/p_t$ are derived by finding lower and upper bounds for $-q_t$ and $p_t$. \\
By formula \eqref{eq:q-sign}, we have:

$$-q_t\ge 2s_t^2\theta_t(1-\theta_t)\mathrm{gap}_1$$. 

And by formula \eqref{eqptupbound}, we have:
$$p_t\le s_t^2\theta_t(1-\theta_t)(\lambda_1^2-\lambda_d^2)+s_t(1-\theta_t)s_{\max}\psi_{\mathcal D}$$.
Therefore
\[
\eta_t^*=\frac{-q_t}{p_t}\ \ge\ \frac{2\,\mathrm{gap}_1}{(\lambda_1^2-\lambda_d^2)+\dfrac{s_{\max}\psi_{\mathcal D}}{s_t\,\theta_t}}.
\]
Using $s_t\ge \lambda_d^2\|\bm{x}_t\|_2^2$ yields the stated lower bounds.\\
\end{proof}

{\begin{theorem-restated}\ref{thm:state-gap-aware}(\textbf{\textit{State- and gap-aware upper bounds on $\eta^*$}})
Under Assumption \ref{asp:standing}, providing that the alignment satisfies $\theta_t \ge \frac{e_{\mathcal{B}}}{e_{\mathcal{B}} + e_{\mathcal{D}}}$, where $e_{\mathcal{B}}, e_{\mathcal{D}}$ represent the noise magnitudes in the principal and tail subspaces respectively, we have:
\[
\eta_t^* \ \le \ \frac{2(\lambda_1 - \lambda_d)}{\lambda_k \lambda_1 - \lambda_{k+1} \lambda_d}.
\]
\end{theorem-restated}
\begin{proof}
Recall that the optimal step size is given by the ratio $\eta_t^* = -q_t / p_t$. We first derive an upper bound for $-q_t$. By definition, and noting that $q_t$ contains no noise terms under the orthogonality assumption, we have:
\begin{equation}
\begin{aligned}
    -q_t \ &= \ s_t^{\mathcal{B}}\sum_{i\in\mathcal{D}}\lambda_i^3 c_{i,t}^2 - s_t^{\mathcal{D}}\sum_{i\in\mathcal{B}}\lambda_i^3 c_{i,t}^2 \\
    &\le \ s_t^{\mathcal{B}} \lambda_1 \left(\sum_{i\in\mathcal{D}}\lambda_i^2 c_{i,t}^2\right) - s_t^{\mathcal{D}} \lambda_d \left(\sum_{i\in\mathcal{B}}\lambda_i^2 c_{i,t}^2\right) \\
    &= \ \lambda_1 s_t^{\mathcal{B}} s_t^{\mathcal{D}} - \lambda_d s_t^{\mathcal{D}} s_t^{\mathcal{B}} \\
    &= \ (\lambda_1 - \lambda_d) s_t^{\mathcal{D}} s_t^{\mathcal{B}}.
\end{aligned}
\end{equation}

Next, we derive a lower bound for $p_t$. Expanding the expression for $p_t$ yields:
\begin{equation}
\begin{aligned}
    p_t \ &= \ s_t^{\mathcal{B}}\Big(\sum_{j\in\mathcal{D}}\lambda_j^4 c_{j,t}^2+e_{\mathcal{D}}\Big) - s_t^{\mathcal{D}}\Big(\sum_{i\in\mathcal{B}}\lambda_i^4 c_{i,t}^2+e_{\mathcal{B}}\Big) \\
    &= \ \left( s_t^{\mathcal{B}}\sum_{j\in\mathcal{D}}\lambda_j^4 c_{j,t}^2 - s_t^{\mathcal{D}}\sum_{i\in\mathcal{B}}\lambda_i^4 c_{i,t}^2 \right) + (s_t^{\mathcal{B}}e_{\mathcal{D}} - s_t^{\mathcal{D}}e_{\mathcal{B}}).
\end{aligned}
\end{equation}
Applying the spectral bounds $\lambda_j \ge \lambda_k$ for $j \in \mathcal{D}$ and $\lambda_i \le \lambda_{k+1}$ for $i \in \mathcal{B}$ to the signal components:
\begin{equation}
    p_t \ \ge \ (\lambda_k \lambda_1 - \lambda_{k+1} \lambda_d) s_t^{\mathcal{D}} s_t^{\mathcal{B}} + (s_t^{\mathcal{B}}e_{\mathcal{D}} - s_t^{\mathcal{D}}e_{\mathcal{B}}).
\end{equation}
The given condition $\theta_t \ge \frac{e_{\mathcal{B}}}{e_{\mathcal{B}} + e_{\mathcal{D}}}$ implies:
\[
\frac{s_t^{\mathcal{D}}}{s_t^{\mathcal{D}} + s_t^{\mathcal{B}}} \ge \frac{e_{\mathcal{B}}}{e_{\mathcal{B}} + e_{\mathcal{D}}} \iff s_t^{\mathcal{D}} e_{\mathcal{D}} + s_t^{\mathcal{D}} e_{\mathcal{B}} \ge s_t^{\mathcal{D}} e_{\mathcal{B}} + s_t^{\mathcal{B}} e_{\mathcal{B}} \iff s_t^{\mathcal{B}} e_{\mathcal{D}} - s_t^{\mathcal{D}} e_{\mathcal{B}} \ge 0.
\]
Since the noise residual term is non-negative, we may lower bound $p_t$ by omitting it:
\begin{equation}
    p_t \ \ge \ (\lambda_k \lambda_1 - \lambda_{k+1} \lambda_d) s_t^{\mathcal{D}} s_t^{\mathcal{B}}.
\end{equation}
Substituting the bounds for $-q_t$ and $p_t$ back into the expression for $\eta_t^*$:
\begin{equation}
    \eta_t^* \ \le \ \frac{(\lambda_1 - \lambda_d) s_t^{\mathcal{D}} s_t^{\mathcal{B}}}{(\lambda_k \lambda_1 - \lambda_{k+1} \lambda_d) s_t^{\mathcal{D}} s_t^{\mathcal{B}}} \ = \ \frac{\lambda_1 - \lambda_d}{\lambda_k \lambda_1 - \lambda_{k+1} \lambda_d}.
\end{equation}
\end{proof}

\begin{corollary-restated}\ref{cor:conservative-limit}(\emph{\textbf{The comparison between $\eta_t^\ast$ and $\frac{2}{\lambda_1}$}})\label{cor:conservative-limit}
Suppose the conditions of Theorem \ref{thm:state-gap-aware} hold.  We have the following relation between $\eta_t^*$ and the convergence step size for Quadratic Programming:
\[
\eta_t^* \ \le \ \frac{2}{\lambda_1}.
\]
\end{corollary-restated}

\begin{proof}
Using the tight bound derived in the proof of Theorem \ref{thm:state-gap-aware}:
\[
\eta_t^* \le \frac{(\lambda_1 - \lambda_d)}{\lambda_k \lambda_1 - \lambda_{k+1} \lambda_d}.
\]
We first verify $\eta_t^* \le \frac{1}{\lambda_1}$, it suffices to show:
\[
\lambda_1 (\lambda_1 - \lambda_d) \le \lambda_k \lambda_1 - \lambda_{k+1} \lambda_d.
\]
Noting that $\lambda_k \le \lambda_1$, the inequality holds if $\lambda_1^2 - \lambda_1 \lambda_d \le \lambda_1^2 - \lambda_{k+1} \lambda_d$, which simplifies to $\lambda_{k+1} \le \lambda_1$. This is naturally satisfied by the definition of principal and tail eigenvalues. Therefore:
\[
\eta_t^* \le \frac{1}{\lambda_1}<\frac{2}{\lambda_1}
\]
\end{proof}
}

\subsection{Proofs for Projected SGD Theorem }
\label{secpfpst}
This section provides the detailed proofs for the theorems and lemma presented in the main text. For clarity, we first recall the central definitions for a given subspace $\mathcal{S} \in \{\mathcal{D}, \mathcal{B}\}$:
\begin{align*}
s_t^{\mathcal S} &:=\sum_{i\in\mathcal S}\lambda_i^2 c_{i,t}^2, & \tau_t^{\mathcal S} &:=\sum_{i\in\mathcal S}\lambda_i^3 c_{i,t}^2, & u_t^{\mathcal S} &:=\sum_{i\in\mathcal S}\lambda_i^4 c_{i,t}^2, \\
n^{\mathrm{loss}}_{\mathcal S} &:=\sum_{i\in\mathcal S}\lambda_i \kappa_i^2, & \mu_t^{\mathcal S}(p, q) &:= \frac{\sum_{i\in\mathcal S}\lambda_i^p c_{i,t}^2}{\sum_{i\in\mathcal S}\lambda_i^q c_{i,t}^2}, & \theta_t &:= \frac{s_t^{\mathcal D}}{s_t^{\mathcal D}+s_t^{\mathcal B}}.
\end{align*}

We begin with the proof of the first theorem, which establishes the one-step expected loss condition.

\begin{theorem-restated}\ref{thm:proj-loss-corr}(\textbf{\textit{Condition Differences on Different Alignment Regime}})
For a given state $\bm{x}_t$, the one-step expected loss decreases if and only if the step size $\eta_t$ satisfies the following conditions for the dominant and bulk projected updates, respectively:
\\
\noindent\fbox{%
\begin{minipage}{0.97\linewidth}
\begin{align*}
\text{For } \mathcal{D}: \quad \mathbb E[L(\bm{x}_{t+1})-L(\bm{x}_t)\mid \bm{x}_t]<0 \quad &\Longleftrightarrow \quad 0<\eta_t< \eta^{\mathrm{loss}}_{\mathcal D}(\bm{x}_t):=\frac{2\,s_t^{\mathcal D}}{\tau_t^{\mathcal D}+n^{\mathrm{loss}}_{\mathcal D}} \\[2.5ex]
\text{For } \mathcal{B}: \quad \mathbb E[L(\bm{x}_{t+1})-L(\bm{x}_t)\mid \bm{x}_t]<0 \quad &\Longleftrightarrow \quad 0<\eta_t< \eta^{\mathrm{loss}}_{\mathcal B}(\bm{x}_t):=\frac{2\,s_t^{\mathcal B}}{\tau_t^{\mathcal B}+n^{\mathrm{loss}}_{\mathcal B}}
\end{align*}
\end{minipage}
}
\end{theorem-restated}

\begin{proof}
Let the projected stochastic gradient be $\bm{g}_t^{\mathcal S} = \mP^{\mathcal S}(\mA\bm{x}_t + \boldsymbol{\vxi}_t)$. The update rule is $\bm{x}_{t+1} = \bm{x}_t - \eta_t \bm{g}_t^{\mathcal S}$. The one-step change in the loss function $L(\bm{x}) = \frac{1}{2}\bm{x}^\top A \bm{x}$ is:
\begin{align*}
\Delta L &= L(\bm{x}_{t+1}) - L(\bm{x}_t) = \frac{1}{2}(\bm{x}_t - \eta_t \bm{g}_t^{\mathcal S})^\top \mA (\bm{x}_t - \eta_t \bm{g}_t^{\mathcal S}) - \frac{1}{2}\bm{x}_t^\top \mA \bm{x}_t \\
&= -\eta_t (\bm{g}_t^{\mathcal S})^\top \mA \bm{x}_t + \frac{1}{2}\eta_t^2 (\bm{g}_t^{\mathcal S})^\top \mA \bm{g}_t^{\mathcal S}.
\end{align*}
We now take the expectation with respect to the noise $\boldsymbol{\vxi}_t$, conditioned on $\bm{x}_t$. The linear term becomes $\mathbb{E}[(\bm{g}_t^{\mathcal S})^\top \mA \bm{x}_t] = (\mP^{\mathcal S}\mA\bm{x}_t)^\top A \bm{x}_t = \bm{x}_t^\top \mA \mP^{\mathcal S} \mA \bm{x}_t = s_t^{\mathcal S}$. \addressedtianyu{check this.}
The quadratic term becomes $\mathbb{E}[(\bm{g}_t^{\mathcal S})^\top \mA \bm{g}_t^{\mathcal S}] = (\mP^{\mathcal S}\mA\bm{x}_t)^\top \mA (\mP^{\mathcal S}\mA\bm{x}_t) + \mathbb{E}[(\mP^{\mathcal S}\boldsymbol{\vxi}_t)^\top \mA (\mP^{\mathcal S}\boldsymbol{\vxi}_t)] = \tau_t^{\mathcal S} + n^{\mathrm{loss}}_{\mathcal S}$.
Combining these, the expected change in loss is $\mathbb{E}[\Delta L \mid \bm{x}_t] = -\eta_t s_t^{\mathcal S} + \frac{1}{2}\eta_t^2 (\tau_t^{\mathcal S} + n^{\mathrm{loss}}_{\mathcal S})$. Setting this to be less than zero holds for $0 < \eta_t < \frac{2s_t^{\mathcal S}}{\tau_t^{\mathcal S} + n^{\mathrm{loss}}_{\mathcal S}}$.
\end{proof}

The proof of Theorem \ref{thm:loss-crossover-rigorous} relies on the fundamental properties of its governing quadratic function, which we first prove in the following lemma.

\paragraph{Additional Notion}
To give a further analysis of the projected SGD, we need to introduce the following quantities $\alpha(\bm{x}_t),\beta(\bm{x}_t),\gamma(\bm{x}_t)$ which depend on $\bm{x}_t$. (We will use $\alpha_t,\beta_t,\gamma_t$ later for short.):

\begin{align}
\alpha_t &=  s_t \left( \mu_t^{\mathcal D}(3,2) - \mu_t^{\mathcal B}(3,2) \right) \\
\beta_t &= -\alpha_t + n^{\mathrm{loss}}_{\mathcal B} + n^{\mathrm{loss}}_{\mathcal D}  \\
\gamma_t &= -n^{\mathrm{loss}}_{\mathcal D}
\end{align}

\begin{lemma}
\label{lem:crossover-quadratic-properties}
Let $h(\theta) := \alpha_t \theta^2 + \beta_t \theta + \gamma_t$ be the crossover quadratic, with coefficients as defined above. The following properties hold:
\begin{enumerate}
    \item $\forall\, \bm{x}_t,\ \alpha_t > 0$ \quad (positive definite in $\bm{x}_t$),
    \item $\forall\, \bm{x}_t,\ \gamma_t < 0$ \quad (negative definite in $\bm{x}_t$).
\end{enumerate}
Additionally, $\forall\, \bm{x}_t$, $h(0) = \gamma_t < 0$ and $h(1) = \alpha_t + \beta_t + \gamma_t > 0$.
\end{lemma}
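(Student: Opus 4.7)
The plan is to split the four claims into two parts: a spectral-separation argument for $\alpha_t > 0$, and direct algebraic calculations for $\gamma_t < 0$ together with the two endpoint values $h(0)$ and $h(1)$. The punchline I want to exploit is that $h(1)$ telescopes: $h(1) = \alpha_t + \beta_t + \gamma_t = \alpha_t + (-\alpha_t + n^{\mathrm{loss}}_{\mathcal{B}} + n^{\mathrm{loss}}_{\mathcal{D}}) - n^{\mathrm{loss}}_{\mathcal{D}} = n^{\mathrm{loss}}_{\mathcal{B}}$, so both endpoint signs reduce to strict positivity of the subspace noise energies.

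For the sign of $\alpha_t$, I would first rewrite the ratio as a convex combination of eigenvalues,
\[
\mu_t^{\mathcal{S}}(3,2) = \frac{\sum_{i \in \mathcal{S}} \lambda_i^3 c_{i,t}^2}{\sum_{i \in \mathcal{S}} \lambda_i^2 c_{i,t}^2} = \sum_{i \in \mathcal{S}} w_i^{\mathcal{S}}\, \lambda_i, \qquad w_i^{\mathcal{S}} := \frac{\lambda_i^2 c_{i,t}^2}{s_t^{\mathcal{S}}} \ge 0, \quad \sum_{i\in\mathcal S} w_i^{\mathcal S}=1.
\]
Since every $\lambda_i$ with $i \in \mathcal{D}$ is at least $\lambda_k$ and every $\lambda_j$ with $j \in \mathcal{B}$ is at most $\lambda_{k+1}$, the spectral-gap hypothesis $\mathrm{gap}_1 = \lambda_k - \lambda_{k+1} > 0$ gives $\mu_t^{\mathcal{D}}(3,2) \ge \lambda_k > \lambda_{k+1} \ge \mu_t^{\mathcal{B}}(3,2)$. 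Multiplying the difference by $s_t > 0$ yields the quantitative bound $\alpha_t \ge s_t \cdot \mathrm{gap}_1 > 0$, which is more than what the lemma asks for. (Implicit here is the standing convention that both $s_t^{\mathcal{D}}$ and $s_t^{\mathcal{B}}$ are positive, so that $\mu_t^{\mathcal{S}}(3,2)$ is well-defined; the degenerate edge case can be handled by the same $\theta(\bm 0)=0$ convention already used for $\theta_t$.)

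For $\gamma_t$ and the endpoint values, everything is a one-line calculation from the definitions. Directly $\gamma_t = -n^{\mathrm{loss}}_{\mathcal{D}} = -\sum_{i \in \mathcal{D}} \lambda_i \kappa_i^2$, which gives both $\gamma_t < 0$ and $h(0) = \gamma_t < 0$; the $h(1) = n^{\mathrm{loss}}_{\mathcal{B}} > 0$ calculation was indicated above. Strict negativity/positivity at the two endpoints is exactly the content of $n^{\mathrm{loss}}_{\mathcal{D}} > 0$ and $n^{\mathrm{loss}}_{\mathcal{B}} > 0$, and combined with $\alpha_t > 0$ they will then let the downstream Theorem~\ref{thm:loss-crossover-rigorous} conclude that $h$ has a unique root in $(0,1)$ by the intermediate value theorem and Descartes' rule of signs.

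The only real obstacle is a modelling one rather than a technical one: strict positivity of $n^{\mathrm{loss}}_{\mathcal{D}}$ and $n^{\mathrm{loss}}_{\mathcal{B}}$ requires the noise covariance $\boldsymbol{\Sigma}$ to have nontrivial support in \emph{both} spectral blocks. The standing assumption $\mathrm{Tr}(\boldsymbol{\Sigma}) \in (0,\infty)$ does not, by itself, rule out a pathological $\boldsymbol{\Sigma}$ supported entirely in $\mathcal{D}$ or entirely in $\mathcal{B}$. I would therefore either (i) tighten the hypothesis to $s_{\min} = \lambda_{\min}(\boldsymbol{\Sigma}) > 0$, so that $\kappa_i^2 \ge s_{\min} > 0$ for every $i$ and both endpoint inequalities are automatic, or (ii) keep $\boldsymbol{\Sigma} \succeq 0$ and explicitly carry the assumption ``each block carries some noise.'' Either route leaves the rest of the crossover analysis unchanged.
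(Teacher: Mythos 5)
Your proof matches the paper's argument essentially verbatim: the same convex-combination bound $\mu_t^{\mathcal D}(3,2)\ge\lambda_k>\lambda_{k+1}\ge\mu_t^{\mathcal B}(3,2)$ via the spectral gap for $\alpha_t>0$, and the same direct telescoping for $\gamma_t$, $h(0)$, and $h(1)=n^{\mathrm{loss}}_{\mathcal B}$. Your closing observation is genuine and sharp — the paper's proof simply asserts $n^{\mathrm{loss}}_{\mathcal D}>0$ and $n^{\mathrm{loss}}_{\mathcal B}>0$ with no justification, and these strict inequalities implicitly need $s_{\min}=\lambda_{\min}(\boldsymbol{\Sigma})>0$ (a condition the paper uses tacitly elsewhere, e.g.\ in $g_{\mathrm{gap}}$ where $s_{\min}$ appears in a denominator, but does not state in Assumption~\ref{asp:standing}); your proposed fix of adding $s_{\min}>0$ to the hypotheses is the clean way to close this gap.
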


\begin{proof}
First, we prove that the leading coefficient $\alpha_t$ is positive definite. By definition,
\[
\alpha_t = s_t \left( \mu_t^{\mathcal D}(3,2) - \mu_t^{\mathcal B}(3,2) \right) .
\]
For any $\bm{x}_t$, $s_t > 0$, so the sign of $\alpha_t$ is determined by the term in the parenthesis. This term is positive if and only if the following inequality holds:
\[
\mu_t^{\mathcal D}(3,2) >\mu_t^{\mathcal B}(3,2).
\]
For the dominant subspace, $\forall i\in\mathcal{D}, \lambda_i \ge \lambda_k$, which provides a lower bound:
\[
\mu_t^{\mathcal D}(3,2) = \frac{\sum_{i\in\mathcal{D}}\lambda_i \cdot (\lambda_i^2 c_{i,t}^2)}{\sum_{i\in\mathcal{D}}\lambda_i^2 c_{i,t}^2} \ge \frac{\sum_{i\in\mathcal{D}}\lambda_k \cdot (\lambda_i^2 c_{i,t}^2)}{\sum_{i\in\mathcal{D}}\lambda_i^2 c_{i,t}^2} = \lambda_k.
\]
For the bulk subspace, $\forall j\in\mathcal{B}, \lambda_j \le \lambda_{k+1}$, which provides an upper bound:
\[
\mu_t^{\mathcal B}(3,2) = \frac{\sum_{j\in\mathcal{B}}\lambda_j \cdot (\lambda_j^3 c_{j,t}^2)}{\sum_{j\in\mathcal{B}}\lambda_j^2 c_{j,t}^2} \le \frac{\sum_{j\in\mathcal{B}}\lambda_{k+1} \cdot (\lambda_j^2 c_{j,t}^2)}{\sum_{j\in\mathcal{B}}\lambda_j^2 c_{j,t}^2} = \lambda_{k+1}.
\]
The spectral gap condition $\lambda_k > \lambda_{k+1}$ combines these bounds into the strict inequality chain:
\[
\mu_t^{\mathcal D}(3,2) \ge \lambda_k > \lambda_{k+1} \ge \mu_t^{\mathcal B}(3,2).
\]
This proves $\mu_t^{\mathcal D}(3,2) > \mu_t^{\mathcal B}(3,2)$, which implies that $\alpha_t > 0$.

Second, we prove that the constant coefficient $\gamma_t$ is negative definite. By definition,
\[
\gamma_t = -n^{\mathrm{loss}}_{\mathcal D}.
\]
For any $\bm{x}_t$ and noise $\vxi$, $n^{\mathrm{loss}}_{\mathcal D} = \sum_{i\in\mathcal{D}} \lambda_i \kappa_i^2 > 0$. Since $\gamma_t$ is the negative product of two positive definite quantities, it follows that $\gamma_t < 0$.

Finally, we verify the function's values at the boundaries. At $\theta=0$, the value is the constant term, so $h(0) = \gamma_t$. As just shown, this is negative definite. At $\theta=1$, the value is the sum of the coefficients. Substituting their definitions and simplifying yields:
\begin{align*}
h(1) &= \alpha_t + \beta_t + \gamma_t \\
&= \alpha_t + \left( -\alpha_t + n^{\mathrm{loss}}_{\mathcal B} + n^{\mathrm{loss}}_{\mathcal D}\right) + \left( -n^{\mathrm{loss}}_{\mathcal D}\right) \\
&= n^{\mathrm{loss}}_{\mathcal B}.
\end{align*}
Since $n^{\mathrm{loss}}_{\mathcal B} > 0$ , we have $h(1) > 0$. This completes the proof of all claims.
\end{proof}

\begin{lemma}
\label{lem:crossover-quadratic-critical-root}
Under the conditions of Lemma~\ref{lem:crossover-quadratic-properties}, there exists a unique critical function $\theta^{\text{crit}}(\bm{x}_t): \mathbb{R}^d \to (0,1)$. For brevity, we will use $\theta^{\text{crit}}_t$ for $\theta^{\text{crit}}(\bm{x}_t)$ in the following. For  $h(\theta) = \alpha_t \theta^2 + \beta_t \theta + \gamma_t$, the following hold:
\begin{enumerate}
    \item $\forall\, \bm{x}_t,\ \forall\, \theta \in \bigl(0, \theta^{\text{crit}}_t\bigr),\ h(\theta) < 0$,
    \item $\forall\, \bm{x}_t,\ \forall\, \theta \in \bigl(\theta^{\text{crit}}_t, 1\bigr),\ h(\theta) > 0$.
\end{enumerate}
Moreover, $\theta^{\text{crit}}_t$ is the unique root of $h(\theta) = 0$ in $(0,1)$, given explicitly by:
\[
\theta^{\text{crit}}_t = \frac{ -\beta_t + \sqrt{\beta_t^2 - 4 \alpha_t \gamma_t} }{2 \alpha_t}.
\]
\end{lemma}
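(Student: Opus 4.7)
The plan is to combine the three facts already established in Lemma~\ref{lem:crossover-quadratic-properties}, namely $\alpha_t>0$, $h(0)=\gamma_t<0$, and $h(1)=n^{\mathrm{loss}}_{\mathcal{B}}>0$, with elementary properties of an upward-opening quadratic to obtain existence, uniqueness, and the explicit formula for $\theta_t^{\mathrm{crit}}$ all at once. Since $h$ is a polynomial in $\theta$, it is continuous on $[0,1]$; together with $h(0)<0<h(1)$ this immediately produces a root in $(0,1)$ by the Intermediate Value Theorem.

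The next step is to pin down uniqueness. Because $\alpha_t>0$, the two roots $\theta_-,\theta_+$ of $h$ satisfy, by Vieta's formulas,
\[
\theta_-\,\theta_+ \;=\; \frac{\gamma_t}{\alpha_t} \;<\; 0,
\]
so the two roots have opposite signs; in particular there is exactly one root in $(0,\infty)$. Combined with the IVT step, this root must lie in $(0,1)$, and I will define it to be $\theta_t^{\mathrm{crit}}$. For the closed-form expression, I would apply the quadratic formula and select the $+\sqrt{\cdot}$ branch, yielding $\theta_t^{\mathrm{crit}}=\frac{-\beta_t+\sqrt{\beta_t^2-4\alpha_t\gamma_t}}{2\alpha_t}$. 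Positivity of this expression is a quick sanity check: since $-4\alpha_t\gamma_t>0$ we have $\sqrt{\beta_t^2-4\alpha_t\gamma_t}>|\beta_t|\ge -\beta_t$, so the numerator is strictly positive; the $-\sqrt{\cdot}$ branch, by contrast, returns the negative root $\theta_-$ and is discarded.

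Finally, the sign pattern on $(0,1)$ follows directly from the shape of an upward-opening parabola with one negative and one positive root. On $(\theta_-,\theta_+)=(\theta_-,\theta_t^{\mathrm{crit}})$ the parabola lies below the horizontal axis, and since $\theta_-<0$, this interval contains $(0,\theta_t^{\mathrm{crit}})$, giving $h(\theta)<0$ there. On $(\theta_t^{\mathrm{crit}},\infty)$ the parabola lies above the axis, and this interval contains $(\theta_t^{\mathrm{crit}},1)$ because $h(1)>0$ forces $\theta_t^{\mathrm{crit}}<1$, giving $h(\theta)>0$ on $(\theta_t^{\mathrm{crit}},1)$.

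I do not expect a genuine obstacle here: once Lemma~\ref{lem:crossover-quadratic-properties} supplies the sign of $\alpha_t$ and the boundary values of $h$, the remaining argument is the standard sign analysis of an upward-opening quadratic with a negative constant term. The only point that benefits from an explicit sentence is discarding the $-\sqrt{\cdot}$ branch and confirming $\theta_t^{\mathrm{crit}}<1$; both are immediate from the signs of $\gamma_t$ and $h(1)$ respectively, so no further machinery is needed.
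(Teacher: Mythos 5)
Your proposal is correct and follows essentially the same route as the paper's proof: Intermediate Value Theorem for existence in $(0,1)$, Vieta's formulas ($\theta_-\theta_+ = \gamma_t/\alpha_t < 0$) for uniqueness of the positive root, selection of the $+\sqrt{\cdot}$ branch of the quadratic formula via the bound $\sqrt{\beta_t^2-4\alpha_t\gamma_t}>|\beta_t|$, and the standard sign pattern of an upward-opening parabola. The only difference is a minor reordering of these steps, which has no bearing on the substance of the argument.
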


\begin{proof}
The proof relies on the properties of the quadratic function $h(\theta) = \alpha_t \theta^2 + \beta_t \theta + \gamma_t$ established in Lemma~\ref{lem:crossover-quadratic-properties} for any given $\bm{x}_t$.

First, we analyze the roots of the equation $h(\theta)=0$. The discriminant is $\Delta_t = \beta_t^2 - 4\alpha_t\gamma_t$. From Lemma~\ref{lem:crossover-quadratic-properties}, we have $\alpha_t > 0$ and $\gamma_t < 0$. This implies that the term $-4\alpha_t\gamma_t$ is strictly positive. Therefore,
\[
\Delta_t = \beta_t^2 - 4\alpha_t\gamma_t > 0.
\]
Since $\Delta_t > 0$, the equation $h(\theta)=0$ has two distinct real roots.

Next, we analyze the location of these roots using Vieta's formulas. The product of the roots is given by:
\[
\theta_1 \theta_2 = \frac{\gamma_t}{\alpha_t}.
\]
Since $\gamma_t < 0$ and $\alpha_t > 0$, their ratio is strictly negative, $\frac{\gamma_t}{\alpha_t} < 0$. This proves that one root is positive and the other is negative.

Let the unique positive root be denoted by $\theta^{\text{crit}}_t$. We now prove that this root must lie in the interval $(0,1)$. Lemma~\ref{lem:crossover-quadratic-properties} states that $h(0) < 0$ and $h(1) > 0$. Since $h(\theta)$ is a continuous function, the Intermediate Value Theorem guarantees that there must be at least one root in $(0,1)$. As we have already established that there is only one positive root, this positive root must be the one that lies in $(0,1)$. Thus, there exists a unique root $\theta^{\text{crit}}_t$ in the interval $(0,1)$.

Now, we derive the explicit formula for this root. The two roots of the quadratic equation are given by $\frac{-\beta_t \pm \sqrt{\Delta_t}}{2\alpha_t}$. We must identify which sign corresponds to the positive root. We analyze the magnitude of the square root term:
\[
\Delta_t = \beta_t^2 - 4\alpha_t\gamma_t > \beta_t^2 \implies \sqrt{\Delta_t} > \sqrt{\beta_t^2} = |\beta_t|.
\]
Consider the root with the minus sign: $-\beta_t - \sqrt{\Delta_t}$. Since $\sqrt{\Delta_t} > |\beta_t| \ge -\beta_t$, this term is always negative. Thus, $\frac{-\beta_t - \sqrt{\Delta_t}}{2\alpha_t}$ is the negative root.
Consider the root with the plus sign: $-\beta_t + \sqrt{\Delta_t}$. Since $\sqrt{\Delta_t} > |\beta_t| \ge \beta_t$, this term is always positive. Thus, $\frac{-\beta_t + \sqrt{\Delta_t}}{2\alpha_t}$ is the unique positive root. We conclude:
\[
\theta^{\text{crit}}_t = \frac{ -\beta_t + \sqrt{\beta_t^2 - 4 \alpha_t \gamma_t} }{2 \alpha_t}.
\]
Finally, we prove the sign of $h(\theta)$ on either side of $\theta^{\text{crit}}_t$. From Lemma~\ref{lem:crossover-quadratic-properties}, we know that $h(\theta)$ is an upward-opening parabola ($\alpha_t>0$). A continuous, upward-opening parabola with a single root in an interval must be negative before that root and positive after it within that interval.
\begin{itemize}
    \item For any $\theta \in (0, \theta^{\text{crit}}_t)$, since $h(0)<0$ and the only root in this range is at the endpoint, $h(\theta)$ must remain negative.
    \item For any $\theta \in (\theta^{\text{crit}}_t, 1)$, since $h(1)>0$ and the only root in this range is at the startpoint, $h(\theta)$ must remain positive.
\end{itemize}
This completes the proof of all claims.
\end{proof}

\begin{theorem-restated}[\ref{thm:loss-crossover-rigorous}](\textbf{\textit{Condition Differences on Different Alignment Regime}})
Under Assumption \ref{asp:standing}, the relative ordering of the loss thresholds is determined by the alignment $\theta_t$ of a given $\bm{x}_t$ relative to the unique critical threshold, $\theta^{\text{crit}}_t \in (0,1)$. This threshold is the root of the quadratic equation $h(\theta) = \alpha_t \theta^2 + \beta_t \theta + \gamma_t = 0$, with coefficients defined by:

\begin{align}
\alpha_t &=  s_t \left( \mu_t^{\mathcal D}(3,2) - \mu_t^{\mathcal B}(3,2) \right) \\
\beta_t &= -\alpha_t + n^{\mathrm{loss}}_{\mathcal B} + n^{\mathrm{loss}}_{\mathcal D}  \\
\gamma_t &= -n^{\mathrm{loss}}_{\mathcal D}
\end{align}
The ordering of the thresholds in the two resulting regimes is as follows:
\\
\noindent\fbox{%
\begin{minipage}{0.97\linewidth}
\centering
\begin{tabular}{c@{\qquad\qquad}c}
\boxed{\theta_t < \theta^{\text{crit}}_t} & \boxed{\theta_t > \theta^{\text{crit}}_t} \\[1.5ex]
$\underbrace{\eta^{\mathrm{loss}}_{\mathcal D}(\bm{x}_t) < \eta^{\mathrm{loss}}_{\mathcal B}(\bm{x}_t)}_{\text{Low-Alignment Regime}}$
&
$\underbrace{\eta^{\mathrm{loss}}_{\mathcal D}(\bm{x}_t) > \eta^{\mathrm{loss}}_{\mathcal B}(\bm{x}_t)}_{\text{High-Alignment Regime}}$
\end{tabular}
\vspace{0.2cm}
\end{minipage}
}
\end{theorem-restated}

\begin{proof}
We begin by analyzing the sign of the difference between the two loss thresholds.
\[
\eta^{\mathrm{loss}}_{\mathcal D}(\bm{x}_t) - \eta^{\mathrm{loss}}_{\mathcal B}(\bm{x}_t) = \frac{2s_t^{\mathcal D}}{\tau_t^{\mathcal D}+n^{\mathrm{loss}}_{\mathcal D}} - \frac{2s_t^{\mathcal B}}{u_t^{\mathcal B}+n^{\mathrm{loss}}_{\mathcal B}} = \frac{2\left( s_t^{\mathcal D}(\tau_t^{\mathcal B}+n^{\mathrm{loss}}_{\mathcal B}) - s_t^{\mathcal B}(\tau_t^{\mathcal D}+n^{\mathrm{loss}}_{\mathcal D}) \right)}{(\tau_t^{\mathcal D}+n^{\mathrm{loss}}_{\mathcal D})(\tau_t^{\mathcal B}+n^{\mathrm{loss}}_{\mathcal B})}.
\]
Since the denominator is a product of positive definite terms, the sign of this difference is strictly determined by the sign of the numerator.
\begin{equation}
\mathrm{sign}\left(\eta^{\mathrm{loss}}_{\mathcal{D}}(\bm{x}_t) - \eta^{\mathrm{loss}}_{\mathcal{B}}(\bm{x}_t)\right) = \mathrm{sign}\left(s_t^{\mathcal D}(u_t^{\mathcal{B}} + n^{\mathrm{loss}}_{\mathcal{B}}) - s_t^{\mathcal D}(u_t^{\mathcal{D}} + n^{\mathrm{loss}}_{\mathcal{D}})\right).
\label{eq:Eq-DiffLossSign-1}
\end{equation}

We now perform a detailed expansion of the numerator expression. Notice that $\tau_t^{\mathcal S} = \mu_t^{\mathcal S}(3,2) s_t^{\mathcal S}$ and $\theta_t= \frac{s_t^{\mathcal D}}{s_t}=1-\frac{s_t^{\mathcal B}}{s_t}$:
\begin{align*}
&\quad s_t^{\mathcal D}(\tau_t^{\mathcal B} + n^{\mathrm{loss}}_{\mathcal B}) - s_t^{\mathcal B}(\tau_t^{\mathcal D} + n^{\mathrm{loss}}_{\mathcal D}) \\
&= s_t^{\mathcal D} \left( \mu_t^{\mathcal B}(3,2)s_t^{\mathcal B} + n^{\mathrm{loss}}_{\mathcal B} \right) - s_t^{\mathcal B} \left( \mu_t^{\mathcal D}(3,2)s_t^{\mathcal D} + n^{\mathrm{loss}}_{\mathcal D} \right) \\
&= s_t^{\mathcal D} s_t^{\mathcal B} \left( \mu_t^{\mathcal B}(3,2) - \mu_t^{\mathcal D}(3,2)\right) + s_t^{\mathcal D} n^{\mathrm{loss}}_{\mathcal B}- s_t^{\mathcal B} n^{\mathrm{loss}}_{\mathcal D}\\
&= s_t^2 \theta_t (1-\theta_t) \left( \mu_t^{\mathcal B}(3,2) - \mu_t^{\mathcal D}(3,2)\right) \\
& \qquad + s_t \theta_t n^{\mathrm{loss}}_{\mathcal B}- s_t(1-\theta_t) n^{\mathrm{loss}}_{\mathcal D}.
\end{align*}
For non-zero $\bm{x}_t$, we have $s_t > 0$. We can factor out $s_t$ from the entire expression and collect terms by powers of $\theta_t$:
\begin{align*}
= s_t \cdot \Bigg( &\theta_t^2 \Big[ s_t \left( \mu_t^{\mathcal D}(3,2) - \mu_t^{\mathcal B}(3,2) \right) \Big] \\
+ \quad &\theta_t \Big[ s_t \left( \mu_t^{\mathcal D}(3,2) - \mu_t^{\mathcal B}(3,2) \right)  + n^{\mathrm{loss}}_{\mathcal B} + n^{\mathrm{loss}}_{\mathcal D} \Big] \\
+ \quad &\Big[ -n^{\mathrm{loss}}_{\mathcal D} \Big] \Bigg).
\end{align*}
The expression inside the parentheses is a quadratic function of $\theta_t$. We define this normalized quadratic as our crossover function $h(\theta_t)$:
\[
h(\theta_t) := \alpha_t \theta_t^2 + \beta_t \theta_t + \gamma_t,
\]
Where the coefficients $\alpha_t, \beta_t, \gamma_t$ are defined as:

\begin{align}
\alpha_t &=  s_t \left( \mu_t^{\mathcal D}(3,2) - \mu_t^{\mathcal B}(3,2) \right) \\
\beta_t &= -\alpha_t + n^{\mathrm{loss}}_{\mathcal B} + n^{\mathrm{loss}}_{\mathcal D} \\
\gamma_t &= -n^{\mathrm{loss}}_{\mathcal D}
\end{align}
Which precisely match the coefficients of Lemma~\ref{lem:crossover-quadratic-properties}. We have thus established the central equivalence:
\[
\mathrm{sign}\left(\eta^{\mathrm{loss}}_{\mathcal D}(\bm{x}_t) - \eta^{\mathrm{loss}}_{\mathcal B}(\bm{x}_t)\right) = \mathrm{sign}\left(h(\theta_t(\bm{x}_t))\right).
\]
The remainder of the proof follows directly from Lemma~\ref{lem:crossover-quadratic-properties} and Lemma~\ref{lem:crossover-quadratic-critical-root}. These lemmas establish that for any given $\bm{x}_t$, the function $h(\theta)$ has a unique root $\theta^{\text{crit}}_t \in (0,1)$, is strictly negative for all $\theta \in (0, \theta^{\text{crit}}_t)$, and is strictly positive for all $\theta \in (\theta^{\text{crit}}_t, 1)$. This proves the two regimes stated in the theorem.
\end{proof}

\begin{theorem-restated}\ref{thm:loss-crossover-asymptotic}(\textbf{\textit{Asymptotic Limit of the Alignment Threshold}})
Under Assumption \ref{asp:standing}, let the spectral gap be denoted by $m := \lambda_k/\lambda_{k+1}$. In the limit as the gap grows infinitely large, the critical threshold converges to 1:
\[
\lim_{m \to \infty} \theta^{\eta}_{\mathrm{crit}}(\bm{x}_t) = 1.
\]
\end{theorem-restated}

\begin{proof}
Recall that the critical threshold $\theta^{\text{crit}}_t$ is defined as the alignment where the loss thresholds are equal. Citing the condition from Eq.~\eqref{eq:Eq-DiffLossSign-1}, any state $\bm{x}_t$ at this threshold must satisfy the exact equality:
\[
s_t^{\mathcal D}(\tau_t^{\mathcal B} + n^{\mathrm{loss}}_{\mathcal B}) - s_t^{\mathcal B}(\tau_t^{\mathcal D} + n^{\mathrm{loss}}_{\mathcal D}) = 0.
\]
This equality can be rearranged by separating $\tau_t^{\mathcal S}$ and $n^{\mathrm{loss}}_{\mathcal S}$:
\[
s_t^{\mathcal B}\tau_t^{\mathcal D} - s_t^{\mathcal D}\tau_t^{\mathcal B} = s_t^{\mathcal D}n^{\mathrm{loss}}_{\mathcal B} - s_t^{\mathcal B}n^{\mathrm{loss}}_{\mathcal D}.
\]
Dividing by $s_t^{\mathcal D}s_t^{\mathcal B}$ gives the equivalent :
\[
\mu_t^{\mathcal D}(3,2) - \mu_t^{\mathcal B}(3,2) = \frac{n^{\mathrm{loss}}_{\mathcal B}}{s_t^{\mathcal B}} - \frac{n^{\mathrm{loss}}_{\mathcal D}}{s_t^{\mathcal D}}.
\]
We now analyze this equality in the limit as the spectral gap $m := \lambda_k/\lambda_{k+1} \to \infty$. We can establish a lower bound for the left-hand side (LHS) and an upper bound for the right-hand side (RHS):
\[
\text{LHS} = \mu_t^{\mathcal D}(3,2) - \mu_t^{\mathcal B}(3,2) \ge \lambda_k - \lambda_{k+1},
\]
\[
\text{RHS} = \frac{n^{\mathrm{loss}}_{\mathcal B}}{s_t^{\mathcal B}} - \frac{n^{\mathrm{loss}}_{\mathcal D}}{s_t^{\mathcal D}} < \frac{n^{\mathrm{loss}}_{\mathcal B}}{s_t^{\mathcal B}}.
\]
For the equality to hold, the lower bound of the LHS must be less than the upper bound of the RHS. This provides a necessary condition that any state at the crossover must satisfy:
\[
\lambda_k - \lambda_{k+1} < \frac{n^{\mathrm{loss}}_{\mathcal B}}{s_t^{\mathcal B}}.
\]
This inequality provides a strict upper bound on the third-order bulk energy, $s_t^{\mathcal B}$:
\[
0 \le s_t^{\mathcal B} < \frac{n^{\mathrm{loss}}_{\mathcal B}}{\lambda_k - \lambda_{k+1}}.
\]

\paragraph{Remark}
{\itshape It is crucial to note that the term on the left-hand side of the inequality, $s_t^{\mathcal B}$, is intrinsically dependent on the vector $\bm{x}_t$, since its definition $s_t^{\mathcal B} = \sum_{j\in\mathcal{B}} \lambda_j^3 c_{j,t}^2$ explicitly involves the state's components $c_{j,t}$. Conversely, the upper bound on the right-hand side is independent of $\bm{x}_t$, as both the numerator $n^{\mathrm{loss}}_{\mathcal B} = \sum_{j\in\mathcal{B}} \lambda_j \kappa_j^2$ and the denominator $\lambda_k - \lambda_{k+1}$ are determined solely by the constant Hessian eigenvalues and noise covariance structure. This inequality therefore provides a state-independent upper bound for a state-dependent quantity.}\\

We now show that this upper bound converges to zero as $m \to \infty$, independently of the state components $c_{j,t}$. The numerator is bounded by $n^{\mathrm{loss}}_{\mathcal B} = \sum_{j\in\mathcal{B}} \lambda_j \kappa_j^2 \le \lambda_{k+1} \sum_{j\in\mathcal{B}} \kappa_j^2$. The denominator is $\lambda_{k+1}(m-1)$. Combining these gives:
\[
\frac{n^{\mathrm{loss}}_{\mathcal B}}{\lambda_k - \lambda_{k+1}} \le \frac{\lambda_{k+1} \sum_{j\in\mathcal{B}} \kappa_j^2}{\lambda_{k+1}(m-1)} = \frac{1}{m-1} \sum_{j\in\mathcal{B}} \kappa_j^2\leq \frac{1}{m-1}  Tr(\mSigma).
\]
The term $Tr(\mSigma)$ is a finite constant dependent only on the noise covariance, not on the state $\bm{x}_t$. Thus, the upper bound for $s_t^{\mathcal B}$ converges to zero as $m \to \infty$. By the Squeeze Theorem, we rigorously conclude that $\lim_{m\to\infty} s_t^{\mathcal B} = 0$.

Finally, the limit of the critical alignment $\theta^{\text{crit}}_t = s_t^{\mathcal D} / (s_t^{\mathcal D} + s_t^{\mathcal B})$ can be determined. For a non-trivial state where the total energy does not vanish, we conclude:
\[
\lim_{m\to\infty} \theta^{\text{crit}}_t = \frac{\lim_{m\to\infty} s_t^{\mathcal D}}{\lim_{m\to\infty} s_t^{\mathcal D} + \lim_{m\to\infty} s_t^{\mathcal B}} = \frac{s_t^{\mathcal D}}{s_t^{\mathcal D} + 0} = 1.
\]
\end{proof}

{\begin{theorem-restated}\ref{thm:theta-crit-rate}(\textbf{\textit{Asymptotic rate of $\theta_t^{\mathrm{crit}}$}})
Under Assumption~\ref{asp:standing}, let $m := \frac{\lambda_k}{\lambda_{k+1}} > 1.$ Then the critical alignment threshold $\theta_t^{\mathrm{crit}}\in(0,1)$ satisfies
\[
\frac{n_{\mathcal{B}}^{\mathrm{loss}}}
{s_t\,\lambda_{k+1}(m-1)
+n_{\mathcal{B}}^{\mathrm{loss}}
+n_{\mathcal{D}}^{\mathrm{loss}}}
\;\le\;
1-\theta_t^{\mathrm{crit}}
\;\le\;
\frac{n_{\mathcal{B}}^{\mathrm{loss}}}
{s_t\,\lambda_{k+1}(m-1)}.
\]
If $\lambda_{k+1}=\Theta(1)$, consequently,
\[
\lambda_{k}=\Theta(m),  1-\theta_t^{\mathrm{crit}}
\in
\Theta\!\left(\frac{1}{s_t\,(m-1)}\right).
\]
\end{theorem-restated}

\begin{proof}
By Theorem~\ref{thm:loss-crossover-rigorous}, the critical threshold
$\theta_t^{\mathrm{crit}}\in(0,1)$ is the unique solution in $(0,1)$ of
\[
\alpha_t \theta^2 + \beta_t \theta + \gamma_t = 0,
\]
where
\[
\alpha_t
=
s_t\bigl(
\mu_t^{\mathcal D}(3,2)-\mu_t^{\mathcal B}(3,2)
\bigr),\quad
\beta_t
=
-\alpha_t
+
n_{\mathcal B}^{\mathrm{loss}}
+
n_{\mathcal D}^{\mathrm{loss}},\quad
\gamma_t
=
-\,n_{\mathcal D}^{\mathrm{loss}}.
\]
Introduce the change of variables $\delta := 1-\theta$.
Substituting $\theta = 1-\delta$ into the quadratic equation yields
\[
\alpha_t(1-\delta)^2
+
\bigl(
-\alpha_t
+
n_{\mathcal B}^{\mathrm{loss}}
+
n_{\mathcal D}^{\mathrm{loss}}
\bigr)(1-\delta)
-
n_{\mathcal D}^{\mathrm{loss}}
=0.
\]
Expanding and collecting terms gives
\[
\alpha_t
-
2\alpha_t\delta
+
\alpha_t\delta^2
-
\alpha_t
+
n_{\mathcal B}^{\mathrm{loss}}
+
n_{\mathcal D}^{\mathrm{loss}}
-
\bigl(
-\alpha_t
+
n_{\mathcal B}^{\mathrm{loss}}
+
n_{\mathcal D}^{\mathrm{loss}}
\bigr)\delta
-
n_{\mathcal D}^{\mathrm{loss}}
=0,
\]
which simplifies to
\[
\alpha_t\delta^2
+
\bigl(
n_{\mathcal B}^{\mathrm{loss}}
+
n_{\mathcal D}^{\mathrm{loss}}
\bigr)\delta
-
n_{\mathcal B}^{\mathrm{loss}}
=0.
\]
Rearranging yields the equivalent identity
\[
\delta\bigl(
\alpha_t\delta
+
n_{\mathcal B}^{\mathrm{loss}}
+
n_{\mathcal D}^{\mathrm{loss}}
\bigr)
=
n_{\mathcal B}^{\mathrm{loss}}.
\]
Since $\delta>0$, this identity immediately implies the bounds
\[
\frac{n_{\mathcal B}^{\mathrm{loss}}}
{\alpha_t
+
n_{\mathcal B}^{\mathrm{loss}}
+
n_{\mathcal D}^{\mathrm{loss}}}
\;\le\;
\delta
\;\le\;
\frac{n_{\mathcal B}^{\mathrm{loss}}}{\alpha_t}.
\]
Next, by definition,
\[
\mu_t^{\mathcal D}(3,2)
=
\frac{\sum_{i\in\mathcal D}\lambda_i^3 c_{i,t}^2}
{\sum_{i\in\mathcal D}\lambda_i^2 c_{i,t}^2}
=
\sum_{i\in\mathcal D}
\left(
\frac{\lambda_i^2 c_{i,t}^2}
{\sum_{j\in\mathcal D}\lambda_j^2 c_{j,t}^2}
\right)\lambda_i,
\]
which is a convex combination of $\{\lambda_i:i\in\mathcal D\}$ and therefore
satisfies $\mu_t^{\mathcal D}(3,2)\ge\lambda_k$. Similarly,
$\mu_t^{\mathcal B}(3,2)\le\lambda_{k+1}$.
Consequently,
\[
\alpha_t
=
s_t\bigl(
\mu_t^{\mathcal D}(3,2)-\mu_t^{\mathcal B}(3,2)
\bigr)
\ge
s_t(\lambda_k-\lambda_{k+1})
=
s_t\,\lambda_{k+1}(m-1).
\]
Substituting this bound into the previous inequalities and recalling that
$\delta = 1-\theta_t^{\mathrm{crit}}$ yields
\[
\frac{n_{\mathcal{B}}^{\mathrm{loss}}}
{s_t\,\lambda_{k+1}(m-1)
+n_{\mathcal{B}}^{\mathrm{loss}}
+n_{\mathcal{D}}^{\mathrm{loss}}}
\;\le\;
1-\theta_t^{\mathrm{crit}}
\;\le\;
\frac{n_{\mathcal{B}}^{\mathrm{loss}}}
{s_t\,\lambda_{k+1}(m-1)}.
\]
If $\lambda_{k+1}=\Theta(1)$, then $\lambda_k=\Theta(m)$ and
$\lambda_k-\lambda_{k+1}=\Theta(m-1)$, and the above bounds imply
\[
1-\theta_t^{\mathrm{crit}}
\in
\Theta\!\left(\frac{1}{s_t\,(m-1)}\right),
\]
which completes the proof.
\end{proof}

}

\subsection{Lemma Of CSGD}
Let $\mathbb{E}_t[\cdot \mid \bm{x}_j],\mathrm{Var}_t[\cdot \mid \bm{x}_j
]$ denote expectation,variance over the noise sequence $\ \{\boldsymbol{\xi}_s\}_{s=j+1}^{t-1}$. 
\begin{lemma}[Asymptotic representation of conditional alignment]\label{lem:asymp-rep}
Under Assumption~\ref{asp:standing}, define the normalized block statistics at time $t$ for $\mathcal{S} \in \{\mathcal{D}, \mathcal{B}\}$ as
\[
s_t^{\prime,\mathcal{S}} := \frac{1}{|\mathcal{S}|} \sum_{i \in \mathcal{S}} \lambda_i^2 c_{i,t}^2, \quad
u_t^{\prime,\mathcal{S}} := \frac{1}{|\mathcal{S}|} \sum_{i \in \mathcal{S}} \lambda_i^4 c_{i,t}^2, \quad
w_t^{\prime,\mathcal{S}} := \frac{1}{|\mathcal{S}|} \sum_{i \in \mathcal{S}} \lambda_i^3 c_{i,t}^2,
\]
and let $\bar{e}_{\mathcal{D}} := \frac{1}{k} e_{\mathcal{D}}$, $\bar{e}_{\mathcal{B}} := \frac{1}{d-k} e_{\mathcal{B}}$.
Define the deterministic function $k: (0,\infty)^6 \to [0,1]$ by
\[
k(s_1, s_2, u_1, u_2, w_1, w_2) 
= \frac{1}{1 + \frac{1}{\rho} \cdot 
\frac{s_2 - 2\eta_t w_2 + \eta_t^2 (u_2 + \bar{e}_{\mathcal{B}})}
{s_1 - 2\eta_t w_1 + \eta_t^2 (u_1 + \bar{e}_{\mathcal{D}})}},
\quad \text{where } \rho = \lim_{d\rightarrow\infty}\frac{k}{d - k}.
\]
Then, under the spectral and noise assumptions, we have:
\begin{align}
    &k(s_1, s_2, u_1, u_2, w_1, w_2) \notag\\
    &= \lim_{d \to \infty} \mathbb{E}_{t+1}\left[ \theta_{t+1} \,\middle|\, 
    s_t^{\prime,\mathcal{D}} = s_1,\, s_t^{\prime,\mathcal{B}} = s_2,\, 
    u_t^{\prime,\mathcal{D}} = u_1,\, u_t^{\prime,\mathcal{B}} = u_2,\, 
    w_t^{\prime,\mathcal{D}} = w_1,\, w_t^{\prime,\mathcal{B}} = w_2 \right]
\end{align}
Notice that for any deterministic $\bm{x}$ satisfy :$$s^{\prime,\mathcal{D}} = s_1,\, s^{\prime,\mathcal{B}} = s_2,\, 
    u^{\prime,\mathcal{D}} = u_1,\, u^{\prime,\mathcal{B}} = u_2,\, 
    w^{\prime,\mathcal{D}} = w_1,\, w^{\prime,\mathcal{B}} = w_2$$
We have:
$$
 k(s_1, s_2, u_1, u_2, w_1, w_2)=\lim_{d \to \infty} \mathbb{E}_{t+1}\left[ \theta_{t+1} \,\middle|\, 
    \bm{x}_t=\bm{x}\right]
$$

\end{lemma}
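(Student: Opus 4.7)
The plan is to show that Lemma~\ref{lem:ratio-asymp-app}, combined with the elementwise expectation formulas of Lemma~\ref{lem:blk-exp-t-app}, already encodes exactly the claim once the quantities are rewritten in the normalized block form. The key observation is that, asymptotically, $\lim_{d\to\infty}\mathbb{E}[\theta_{t+1}\mid\bm{x}_t]$ depends on $\bm{x}_t$ only through the six normalized block statistics
\[
(s_t^{\prime,\mathcal{D}},\,s_t^{\prime,\mathcal{B}},\,u_t^{\prime,\mathcal{D}},\,u_t^{\prime,\mathcal{B}},\,w_t^{\prime,\mathcal{D}},\,w_t^{\prime,\mathcal{B}}),
\]
so conditioning on the full state or on these six statistics yields the same asymptotic limit.

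First, I would invoke Lemma~\ref{lem:blk-exp-t-app} to write, for $\mathcal{S}\in\{\mathcal{D},\mathcal{B}\}$,
\[
\mathbb{E}[s_{t+1}^{\mathcal{S}}\mid \bm{x}_t]
= s_t^{\mathcal{S}} - 2\eta_t \sum_{i\in\mathcal{S}}\lambda_i^3 c_{i,t}^2
+ \eta_t^2 \Bigl(\sum_{i\in\mathcal{S}}\lambda_i^4 c_{i,t}^2 + e_{\mathcal{S}}\Bigr),
\]
and then divide both sides by $|\mathcal{S}|$ to obtain
\[
\frac{1}{|\mathcal{S}|}\,\mathbb{E}[s_{t+1}^{\mathcal{S}}\mid \bm{x}_t]
= s_t^{\prime,\mathcal{S}} - 2\eta_t\, w_t^{\prime,\mathcal{S}} + \eta_t^2\bigl(u_t^{\prime,\mathcal{S}} + \bar{e}_{\mathcal{S}}\bigr).
\]
This identification is the crucial algebraic step: it expresses the normalized conditional expectation as an explicit function of precisely the six block statistics in the hypothesis, together with the fixed noise constants $\bar e_{\mathcal{S}}$.

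Next, I would apply Lemma~\ref{lem:ratio-asymp-app}, which established
\[
\lim_{d\to\infty}\mathbb{E}[\theta_{t+1}\mid \bm{x}_t]
= \frac{1}{1 + \tfrac{1}{\rho}\cdot \tfrac{\ell'_{\mathcal{B}}(\bm{x}_t,\mSigma)}{\ell'_{\mathcal{D}}(\bm{x}_t,\mSigma)}},
\qquad \ell'_{\mathcal{S}}(\bm{x}_t,\mSigma) := \lim_{d\to\infty}\frac{1}{|\mathcal{S}|}\mathbb{E}[s_{t+1}^{\mathcal{S}}\mid\bm{x}_t].
\]
Combining with the previous paragraph gives $\ell'_{\mathcal{D}}(\bm{x}_t,\mSigma) = s_1 - 2\eta_t w_1 + \eta_t^2(u_1+\bar e_{\mathcal{D}})$ and analogously for $\mathcal{B}$, whenever $\bm{x}_t$ attains the prescribed block-statistic values. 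Substituting into the ratio yields the function $k(s_1,s_2,u_1,u_2,w_1,w_2)$ exactly as defined. For deterministic $\bm{x}$, this proves the second displayed equality in the lemma directly.

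Finally, to handle conditioning on the block statistics rather than the full $\bm{x}_t$, I would invoke the tower property: since $(s_t^{\prime,\mathcal{D}},\ldots,w_t^{\prime,\mathcal{B}})$ are $\sigma(\bm{x}_t)$-measurable,
\[
\mathbb{E}_{t+1}\!\left[\theta_{t+1}\,\middle|\,\text{block stats}\right]
= \mathbb{E}\!\left[\mathbb{E}_{t+1}[\theta_{t+1}\mid \bm{x}_t]\,\middle|\,\text{block stats}\right].
\]
The inner expectation converges (uniformly on any event where the six statistics are fixed) to the deterministic value $k(s_1,\ldots,w_2)$, so the outer expectation converges to the same constant by bounded convergence ($\theta_{t+1}\in[0,1]$). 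The only mildly technical point, which I would note but not belabor, is justifying the interchange of the conditional expectation with the $d\to\infty$ limit; this follows from boundedness of $\theta_{t+1}$ together with the convergence in probability already established in Lemma~\ref{lem:ratio-asymp-app}. There is no substantial obstacle beyond this bookkeeping, as the algebraic identification in the first two steps does essentially all the work.
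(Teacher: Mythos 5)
Your proposal is correct and follows essentially the same path as the paper: expand $\mathbb{E}[s_{t+1}^{\mathcal{S}}\mid\bm{x}_t]$ via Lemma~\ref{lem:blk-exp-t-app}, normalize by $|\mathcal{S}|$ to exhibit dependence only on the six block statistics, and pass to the limit via concentration plus the continuous mapping argument — you invoke Lemma~\ref{lem:ratio-asymp-app} for that last step whereas the paper re-derives the Chebyshev/continuous-mapping bound inline, but the content is identical. Your tower-property remark for conditioning on block statistics rather than on the full state $\bm{x}_t$ is sensible bookkeeping that the paper's proof handles only implicitly; as you note, the only care needed is that, since the asymptotic conditional expectation is a deterministic function of those six statistics, the inner conditional expectation becomes asymptotically constant on the conditioning event, which boundedness of $\theta_{t+1}$ then upgrades to convergence of the outer expectation.
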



\begin{proof}
Recall that $\theta_{t+1} = \dfrac{s_{t+1}^{\mathcal{D}}}{s_{t+1}^{\mathcal{D}} + s_{t+1}^{\mathcal{B}}}$, where the unnormalized energies are
\[
s_{t+1}^{\mathcal{D}} = \sum_{i \in \mathcal{D}} \lambda_i^2 c_{i,t+1}^2, \quad
s_{t+1}^{\mathcal{B}} = \sum_{i \in \mathcal{B}} \lambda_i^2 c_{i,t+1}^2.
\]
From Lemma~\ref{lem:blk-exp-t-app}, their conditional expectations (with respect to the noise $\boldsymbol{\xi}_t$ at step $t$) are
\begin{align*}
\mathbb{E}_{t+1}[s_{t+1}^{\mathcal{D}} \mid \bm{x}_t] &= s_t^{\mathcal{D}} - 2\eta_t \sum_{i \in \mathcal{D}} \lambda_i^3 c_{i,t}^2 + \eta_t^2 \left( \sum_{i \in \mathcal{D}} \lambda_i^4 c_{i,t}^2 + e_{\mathcal{D}} \right), \\
\mathbb{E}_{t+1}[s_{t+1}^{\mathcal{B}} \mid \bm{x}_t] &= s_t^{\mathcal{B}} - 2\eta_t \sum_{i \in \mathcal{B}} \lambda_i^3 c_{i,t}^2 + \eta_t^2 \left( \sum_{i \in \mathcal{B}} \lambda_i^4 c_{i,t}^2 + e_{\mathcal{B}} \right).
\end{align*}
Dividing by block sizes yields
\begin{align*}
\frac{1}{k} \mathbb{E}_{t+1}[s_{t+1}^{\mathcal{D}} \mid \bm{x}_t] &= s_t^{\prime,\mathcal{D}} - 2\eta_t w_t^{\prime,\mathcal{D}} + \eta_t^2 (u_t^{\prime,\mathcal{D}} + \bar{e}_{\mathcal{D}}), \\
\frac{1}{d-k} \mathbb{E}_{t+1}[s_{t+1}^{\mathcal{B}} \mid \bm{x}_t] &= s_t^{\prime,\mathcal{B}} - 2\eta_t w_t^{\prime,\mathcal{B}} + \eta_t^2 (u_t^{\prime,\mathcal{B}} + \bar{e}_{\mathcal{B}}).
\end{align*}
Define the deterministic scalars
\[
\vartheta_1 := s_t^{\prime,\mathcal{D}} - 2\eta_t w_t^{\prime,\mathcal{D}} + \eta_t^2 (u_t^{\prime,\mathcal{D}} + \bar{e}_{\mathcal{D}}), \quad
\vartheta_2 := s_t^{\prime,\mathcal{B}} - 2\eta_t w_t^{\prime,\mathcal{B}} + \eta_t^2 (u_t^{\prime,\mathcal{B}} + \bar{e}_{\mathcal{B}}).
\]

We now analyze the concentration of the normalized energies. Expanding the SGD update
\[
\bm{x}_{t+1} = \bm{x}_t - \eta_t (\mA \bm{x}_t + \boldsymbol{\xi}_t)
\]
in the eigenbasis $\mU = [\bm{u}_1, \dots, \bm{u}_d]$, we have for each $i$:
\[
c_{i,t+1} = (1 - \eta_t \lambda_i) c_{i,t} - \eta_t \zeta_{i,t},
\]
where $\boldsymbol{\zeta}_t := \mU^\top \boldsymbol{\xi}_t \sim \mathcal{N}(\bm{0}, \mC)$ with $\mC = \mU^\top \mSigma \mU$ and $\kappa_i^2 = (\mC)_{ii}$. Hence,
\[
s_{t+1}^{\mathcal{S}} = \sum_{i \in \mathcal{S}} \lambda_i^2 c_{i,t+1}^2 
= \text{(deterministic)} + \text{(linear in } \boldsymbol{\zeta}_t) + \text{(quadratic in } \boldsymbol{\zeta}_t).
\]

By Lemma~\ref{lem:gauss-var} and Assumption~\ref{asp:standing}, the conditional variance satisfies
\[
\mathrm{Var}_{t+1}\left( \frac{1}{|\mathcal{S}|} s_{t+1}^{\mathcal{S}} \,\middle|\, \bm{x}_t \right) = O\left( \frac{1}{|\mathcal{S}|} \right).
\]
Thus, by Chebyshev's inequality,
\[
\lim_{d \to \infty} \mathbb{P}\left( \left| \frac{1}{|\mathcal{S}|} s_{t+1}^{\mathcal{S}} - \vartheta_{\mathcal{S}} \right| > \epsilon \,\middle|\, \bm{x}_t \right) = 0,
\]
where $\vartheta_{\mathcal{D}} = \vartheta_1$ and $\vartheta_{\mathcal{B}} = \vartheta_2$. Consequently,
\[
\lim_{d \to \infty} \frac{s_{t+1}^{\mathcal{B}}}{s_{t+1}^{\mathcal{D}}} = \frac{1}{\rho} \cdot \frac{\vartheta_2}{\vartheta_1} \quad \text{in probability (conditional on } \bm{x}_t\text{)}.
\]

Since $f(x) = 1/(1 + x)$ is continuous and bounded on $(0, \infty)$, Lemma~\ref{lem:cmt-dc} implies
\[
\lim_{d \to \infty} \mathbb{E}_{t+1}\big[ \theta_{t+1} \mid \bm{x}_t \big] 
= f\left( \frac{1}{\rho} \cdot \frac{\vartheta_2}{\vartheta_1} \right)
= k\big( s_t^{\prime,\mathcal{D}}, s_t^{\prime,\mathcal{B}}, u_t^{\prime,\mathcal{D}}, u_t^{\prime,\mathcal{B}}, w_t^{\prime,\mathcal{D}}, w_t^{\prime,\mathcal{B}} \big),
\]
which completes the proof.
\end{proof}

\begin{lemma}[Concentration of macroscopic statistics at time $t$ under CSGD]\label{lem:concentration-at-t}
Under Assumptions~\ref{asp:standing} and \ref{assum-cssa}, with deterministic initialization $\bm{x}_0$ (so that $c_{i,0}$ are fixed scalars), define for $\mathcal{S} \in \{\mathcal{D}, \mathcal{B}\}$ the normalized statistics at time $t$:
\[
s_t^{\prime,\mathcal{S}} := \frac{1}{|\mathcal{S}|} \sum_{i \in \mathcal{S}} \lambda_i^2 c_{i,t}^2, \quad
u_t^{\prime,\mathcal{S}} := \frac{1}{|\mathcal{S}|} \sum_{i \in \mathcal{S}} \lambda_i^4 c_{i,t}^2, \quad
w_t^{\prime,\mathcal{S}} := \frac{1}{|\mathcal{S}|} \sum_{i \in \mathcal{S}} \lambda_i^3 c_{i,t}^2.
\]
Then, as $d \to \infty$, each statistic converges in probability to its expectation, which is a deterministic function of the initial eigencoordinates $\{c_{i,0}\}$:
\begin{align*}
s_t^{\prime,\mathcal{S}} &\xrightarrow{p} \bar{s}_t^{\mathcal{S}} := \frac{1}{|\mathcal{S}|} \sum_{i \in \mathcal{S}} \lambda_i^2 \left[ (1 - \eta \lambda_i)^{2t} (c_{i,0}^2 - \beta_i) + \beta_i \right], \\
u_t^{\prime,\mathcal{S}} &\xrightarrow{p} \bar{u}_t^{\mathcal{S}} := \frac{1}{|\mathcal{S}|} \sum_{i \in \mathcal{S}} \lambda_i^4 \left[ (1 - \eta \lambda_i)^{2t} (c_{i,0}^2 - \beta_i) + \beta_i \right], \\
w_t^{\prime,\mathcal{S}} &\xrightarrow{p} \bar{w}_t^{\mathcal{S}} := \frac{1}{|\mathcal{S}|} \sum_{i \in \mathcal{S}} \lambda_i^3 \left[ (1 - \eta \lambda_i)^{2t} (c_{i,0}^2 - \beta_i) + \beta_i \right],
\end{align*}
where $\beta_i = \dfrac{\eta \kappa_i^2}{2\lambda_i - \eta \lambda_i^2} > 0$.
\end{lemma}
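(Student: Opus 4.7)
The plan is to first obtain an exact closed-form for $c_{i,t}$ by iterating the scalar recursion $c_{i,t+1}=(1-\eta\lambda_i)c_{i,t}-\eta\zeta_{i,t}$ in the eigenbasis, yielding
\[
c_{i,t}=(1-\eta\lambda_i)^t c_{i,0}-\eta\sum_{s=0}^{t-1}(1-\eta\lambda_i)^{t-1-s}\zeta_{i,s}.
\]
Because $\bm{x}_0$ is deterministic and the noise $\{\boldsymbol{\xi}_s\}$ is independent Gaussian with covariance $\mSigma$, each $c_{i,t}$ is a Gaussian random variable with mean $(1-\eta\lambda_i)^t c_{i,0}$ and variance $\eta^2\kappa_i^2\sum_{r=0}^{t-1}(1-\eta\lambda_i)^{2r}$. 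Using $1-(1-\eta\lambda_i)^2=\eta(2\lambda_i-\eta\lambda_i^2)$ (positive because $\eta<2/\lambda_1$), the geometric sum telescopes to give
\[
\mathbb{E}[c_{i,t}^2]=(1-\eta\lambda_i)^{2t}(c_{i,0}^2-\beta_i)+\beta_i, \qquad \beta_i=\frac{\eta\kappa_i^2}{2\lambda_i-\eta\lambda_i^2}.
\]
Multiplying by $\lambda_i^p$ for $p\in\{2,3,4\}$ and averaging over the block yields the claimed limits for $\mathbb{E}[s_t^{\prime,\mathcal{S}}]$, $\mathbb{E}[w_t^{\prime,\mathcal{S}}]$, $\mathbb{E}[u_t^{\prime,\mathcal{S}}]$.

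Next I would establish concentration via a variance bound. Since $\bm{c}_t$ is jointly Gaussian, Isserlis'/Wick's formula gives
\[
\operatorname{Cov}(c_{i,t}^2,c_{j,t}^2)=2\,\sigma_{ij}(t)^2+4\,\mu_{i,t}\mu_{j,t}\,\sigma_{ij}(t),
\]
where $\mu_{i,t}=(1-\eta\lambda_i)^t c_{i,0}$ and $\sigma_{ij}(t)=\operatorname{Cov}(c_{i,t},c_{j,t})=\eta^2(\mC)_{ij}\sum_{s=0}^{t-1}((1-\eta\lambda_i)(1-\eta\lambda_j))^{t-1-s}$. Under $\eta<2/\lambda_1$ the geometric factor is uniformly bounded, so $|\sigma_{ij}(t)|\le K_t |(\mC)_{ij}|$ for a constant $K_t$ depending only on $t$, $\eta$, and the spectrum. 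Writing
\[
\operatorname{Var}(s_t^{\prime,\mathcal{S}})=\frac{1}{|\mathcal{S}|^2}\sum_{i,j\in\mathcal{S}}\lambda_i^2\lambda_j^2\bigl[2\sigma_{ij}(t)^2+4\mu_{i,t}\mu_{j,t}\sigma_{ij}(t)\bigr],
\]
the quadratic-in-$\mC$ piece is bounded by $\frac{2\lambda_1^4 K_t^2}{|\mathcal{S}|^2}\|\mC\|_F^2\le \frac{2\lambda_1^4 K_t^2}{|\mathcal{S}|^2}\|\mSigma\|_2\operatorname{tr}(\mSigma)=O(|\mathcal{S}|^{-2})$. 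For the linear-in-$\mC$ piece, set $a_i=\lambda_i^2\mu_{i,t}$ restricted to $\mathcal{S}$; then that piece equals $\frac{4}{|\mathcal{S}|^2}\bm{a}^\top\Sigma_t^{(\mathcal{S})}\bm{a}$, which is bounded by $\frac{4 K_t s_{\max}}{|\mathcal{S}|^2}\|\bm{a}\|_2^2\le \frac{4 K_t s_{\max}\lambda_1^4}{|\mathcal{S}|^2}\|\bm{x}_0\|_2^2=O(|\mathcal{S}|^{-1})$, where the last estimate uses Assumption~\ref{asp:standing}'s trajectory boundedness to control $\|\bm{x}_0\|_2^2=O(d)$.

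With $\operatorname{Var}(s_t^{\prime,\mathcal{S}})=O(|\mathcal{S}|^{-1})\to 0$, Chebyshev's inequality gives $s_t^{\prime,\mathcal{S}}\xrightarrow{p}\bar{s}_t^{\mathcal{S}}$. The same argument, only with the factors $\lambda_i^2\lambda_j^2$ replaced by $\lambda_i^p\lambda_j^p$ for $p\in\{3,4\}$, establishes concentration of $w_t^{\prime,\mathcal{S}}$ and $u_t^{\prime,\mathcal{S}}$, since Assumption~\ref{asp:standing} provides finite block moments up to $p=8$ which bound all prefactors uniformly.

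The main technical obstacle is the control of cross-coordinate noise correlation: the naive diagonal estimate $(\mC)_{ij}\approx\kappa_i^2\delta_{ij}$ is not available because $\mSigma$ is not assumed diagonal in the eigenbasis of $\mA$. The key idea to handle this is to replace the entrywise bound $\sum_{i,j}(\mC)_{ij}^2=\|\mC\|_F^2$ by the spectral bound $\|\mC\|_F^2\le\|\mSigma\|_2\operatorname{tr}(\mSigma)$, and to bound the quadratic form $\bm{a}^\top\Sigma_t^{(\mathcal{S})}\bm{a}$ by $\|\mSigma\|_2\|\bm{a}\|_2^2$; both only require the bounded-trace/operator-norm assumption on $\mSigma$ and do not need further structural assumptions on the noise.
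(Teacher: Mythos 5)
Your proposal follows the same structural blueprint as the paper's proof---exact iteration of the scalar recursion, Gaussian marginal law for $c_{i,t}$, computation of $\mathbb{E}[c_{i,t}^2]$ via the geometric sum, and a variance bound plus Chebyshev---but it takes a genuinely more careful route at the variance step, and this difference matters. The paper's proof asserts that ``the random variables $\{c_{i,t}^2\}_{i=1}^d$ are independent'' because ``the noise is independent across eigendirections,'' so the variance of $s_t^{\prime,\mathcal{S}}$ is a single sum of $\operatorname{Var}(c_{i,t}^2)$. This is only valid if $\mC=\mU^\top\mSigma\mU$ is diagonal, i.e. if $\mSigma$ commutes with $\mA$, which the setup does not assume (it only introduces the diagonal entries $\kappa_i^2=(\mC)_{ii}$). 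You correctly identify this and handle the general non-diagonal $\mC$ by Wick's formula, splitting $\operatorname{Cov}(c_{i,t}^2,c_{j,t}^2)$ into a $2\sigma_{ij}^2$ piece controlled by $\|\mC\|_F^2\le\|\mSigma\|_2\operatorname{tr}(\mSigma)$ and a $4\mu_{i,t}\mu_{j,t}\sigma_{ij}$ piece controlled as a quadratic form $\bm a^\top\Sigma_t^{(\mathcal S)}\bm a\le K_t s_{\max}\|\bm a\|_2^2$. Both give $O(|\mathcal{S}|^{-1})$, matching the paper's rate without its implicit diagonality assumption.

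Your treatment of the linear-in-$\mC$ piece is also slightly sharper at the level of the initial condition: you only need $\|\bm{x}_0\|_2^2=O(d)$, which is exactly what the trajectory-boundedness assumption provides, whereas the paper bounds $\operatorname{Var}(c_{i,t}^2)\le 2M^2$ using a uniform-in-$i$ constant $M$ on $c_{i,0}^2$, a per-coordinate boundedness that trajectory boundedness (an averaged statement) does not literally guarantee. Both approaches reach the same concentration conclusion, but yours does so under the assumptions as actually stated. If anything, you could note that your $K_t$ is in fact uniform in $t$ (bounded by $\eta^2/(1-\rho^2)$ with $\rho=\max_i|1-\eta\lambda_i|<1$), which makes the bound cleaner, but this is cosmetic. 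The proof is correct.
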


\begin{proof}
We prove the result for $s_t^{\prime,\mathcal{D}}$; the other five statistics follow identically by replacing the weight $\lambda_i^2$ with $\lambda_i^4$ or $\lambda_i^3$, and/or changing the block to $\mathcal{B}$. Under constant step size SGD, the eigencoordinate evolves as
\[
c_{i,t} = (1 - \eta \lambda_i)^t c_{i,0} - \eta \sum_{s=0}^{t-1} (1 - \eta \lambda_i)^{t-1-s} \zeta_{i,s},
\]
where $\zeta_{i,s} = \bm{u}_i^\top \boldsymbol{\xi}_s \sim \mathcal{N}(0, \kappa_i^2)$ are independent across $s$. Since $c_{i,0}$ is deterministic, $c_{i,t}$ is Gaussian with mean $\mu_{i,t} = (1 - \eta \lambda_i)^t c_{i,0}$ and variance
\[
\sigma_{i,t}^2 = \eta^2 \kappa_i^2 \sum_{s=0}^{t-1} (1 - \eta \lambda_i)^{2(t-1-s)} = \eta^2 \kappa_i^2 \frac{1 - (1 - \eta \lambda_i)^{2t}}{1 - (1 - \eta \lambda_i)^2} = \beta_i \left(1 - (1 - \eta \lambda_i)^{2t} \right),
\]
where we used the definition $\beta_i = \eta \kappa_i^2 / (2\lambda_i - \eta \lambda_i^2)$ and the identity $1 - (1 - \eta \lambda_i)^2 = \eta \lambda_i (2 - \eta \lambda_i)$. For a Gaussian random variable, $\mathbb{E}_t[c_{i,t}^2] = \mu_{i,t}^2 + \sigma_{i,t}^2$, so
\[
\mathbb{E}_t[c_{i,t}^2] = (1 - \eta \lambda_i)^{2t} c_{i,0}^2 + \beta_i \left(1 - (1 - \eta \lambda_i)^{2t} \right) = (1 - \eta \lambda_i)^{2t} (c_{i,0}^2 - \beta_i) + \beta_i.
\]
Thus, the expectation of $s_t^{\prime,\mathcal{D}}$ is
\[
\mathbb{E}_t[s_t^{\prime,\mathcal{D}}] = \frac{1}{k} \sum_{i \in \mathcal{D}} \lambda_i^2 \mathbb{E}_t[c_{i,t}^2] = \bar{s}_t^{\mathcal{D}},
\]
which is deterministic and depends only on $\{c_{i,0}\}$. Since the noise is independent across eigendirections, the random variables $\{c_{i,t}^2\}_{i=1}^d$ are independent. Hence,
\[
\mathrm{Var}_t(s_t^{\prime,\mathcal{D}}) = \frac{1}{k^2} \sum_{i \in \mathcal{D}} \lambda_i^4 \mathrm{Var}_t(c_{i,t}^2).
\]
For a Gaussian $X \sim \mathcal{N}(\mu, \sigma^2)$, $\mathrm{Var}(X^2) = 2\sigma^4 + 4\mu^2 \sigma^2 \leq 2(\mu^2 + \sigma^2)^2 = 2(\mathbb{E}[X^2])^2$. Therefore,
\[
\mathrm{Var}_t(c_{i,t}^2) \leq 2 \left( \mathbb{E}_t[c_{i,t}^2] \right)^2 \leq 2M^2,
\]
for some constant $M < \infty$, because $c_{i,0}^2$ is bounded (by trajectory boundedness in Assumption~\ref{asp:standing}), $\beta_i \leq \frac{\eta s_{\max}}{2\lambda_d - \eta \lambda_d^2} < \infty$ (since $\lambda_d > 0$ and $\eta < 2/\lambda_1 \leq 2/\lambda_d$), and $(1 - \eta \lambda_i)^{2t} \in [0,1]$. Thus,
\[
\mathrm{Var}_t(s_t^{\prime,\mathcal{D}}) \leq \frac{1}{k^2} \sum_{i \in \mathcal{D}} \lambda_i^4 \cdot 2M^2 = \frac{2M^2}{k} \cdot \left( \frac{1}{k} \sum_{i \in \mathcal{D}} \lambda_i^4 \right).
\]
By Assumption~\ref{asp:standing}, $\frac{1}{k} \sum_{i \in \mathcal{D}} \lambda_i^4 \to \lambda_{\mathcal{D},4} < \infty$, and $k \to \infty$ as $d \to \infty$. Hence,
\[
\mathrm{Var}_t(s_t^{\prime,\mathcal{D}}) = O\left( \frac{1}{k} \right) \xrightarrow{d \to \infty} 0.
\]
By Chebyshev's inequality, for any $\epsilon > 0$,
\[
\mathbb{P}\left( \left| s_t^{\prime,\mathcal{D}} - \mathbb{E}_t[s_t^{\prime,\mathcal{D}}] \right| > \epsilon \right) \leq \frac{\mathrm{Var}_t(s_t^{\prime,\mathcal{D}})}{\epsilon^2} \xrightarrow{d \to \infty} 0.
\]
Therefore, $s_t^{\prime,\mathcal{D}} \xrightarrow{p} \bar{s}_t^{\mathcal{D}}$. The same argument applies to $s_t^{\prime,\mathcal{B}}$ (with $|\mathcal{B}| = d - k \to \infty$), and to $u_t^{\prime,\mathcal{S}}, w_t^{\prime,\mathcal{S}}$ by replacing $\lambda_i^2$ with $\lambda_i^4$ or $\lambda_i^3$ (the boundedness of spectral moments for $p=3,4$ is assumed in Assumption~\ref{asp:standing}). This completes the proof.
\end{proof}

\begin{lemma}[Interchange of expectation and $k(\cdot)$ at time $t$]\label{lem:expectation-interchange-at-t}
Let $k: (0,\infty)^6 \to [0,1]$ be the deterministic function defined in Lemma~\ref{lem:asymp-rep}. Under the conditions of Lemma~\ref{lem:concentration-at-t}, we have
\[
\lim_{d \to \infty} \mathbb{E}_t\left[ k\left( s_t^{\prime,\mathcal{D}}, s_t^{\prime,\mathcal{B}}, u_t^{\prime,\mathcal{D}}, u_t^{\prime,\mathcal{B}}, w_t^{\prime,\mathcal{D}}, w_t^{\prime,\mathcal{B}} \right) \right] 
= k\left( \bar{s}_t^{\mathcal{D}}, \bar{s}_t^{\mathcal{B}}, \bar{u}_t^{\mathcal{D}}, \bar{u}_t^{\mathcal{B}}, \bar{w}_t^{\mathcal{D}}, \bar{w}_t^{\mathcal{B}} \right),
\]
where $\mathbb{E}_t[\cdot] = \mathbb{E}[\cdot \mid \bm{x}_0]$ denotes expectation over the noise up to time $t-1$, and the limits $\bar{s}_t^{\mathcal{S}}, \bar{u}_t^{\mathcal{S}}, \bar{w}_t^{\mathcal{S}}$ are as defined in Lemma~\ref{lem:concentration-at-t}.
\end{lemma}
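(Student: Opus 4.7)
The plan is to reduce the lemma to a direct application of the continuous mapping theorem together with bounded convergence (Lemma~\ref{lem:cmt-dc}). First, I would package the six normalized statistics into a single random vector
\[
\bm{v}_t^d := \bigl( s_t^{\prime,\mathcal{D}},\, s_t^{\prime,\mathcal{B}},\, u_t^{\prime,\mathcal{D}},\, u_t^{\prime,\mathcal{B}},\, w_t^{\prime,\mathcal{D}},\, w_t^{\prime,\mathcal{B}} \bigr) \in (0,\infty)^6,
\]
with deterministic limit $\bar{\bm{v}}_t := (\bar{s}_t^{\mathcal{D}}, \bar{s}_t^{\mathcal{B}}, \bar{u}_t^{\mathcal{D}}, \bar{u}_t^{\mathcal{B}}, \bar{w}_t^{\mathcal{D}}, \bar{w}_t^{\mathcal{B}})$. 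By Lemma~\ref{lem:concentration-at-t}, each coordinate of $\bm{v}_t^d$ converges in probability to the corresponding coordinate of $\bar{\bm{v}}_t$. Since the limit is deterministic, a union-bound argument on the six component tail events immediately upgrades this to joint convergence in probability: $\bm{v}_t^d \xrightarrow{p} \bar{\bm{v}}_t$.

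Next I would verify that the map $k$ defined in Lemma~\ref{lem:asymp-rep} is continuous at $\bar{\bm{v}}_t$. The only way $k$ can fail to be continuous is if the denominator terms $s_1 - 2\eta w_1 + \eta^2(u_1 + \bar e_{\mathcal{D}})$ or $s_2 - 2\eta w_2 + \eta^2(u_2 + \bar e_{\mathcal{B}})$ vanish at the limit. Using the identity $\lambda_i^2 - 2\eta \lambda_i^3 + \eta^2 \lambda_i^4 = \lambda_i^2(1-\eta\lambda_i)^2 \ge 0$, I would rewrite
\[
\bar{s}_t^{\mathcal{S}} - 2\eta\,\bar{w}_t^{\mathcal{S}} + \eta^2\bigl(\bar{u}_t^{\mathcal{S}} + \bar e_{\mathcal{S}}\bigr)
= \frac{1}{|\mathcal{S}|}\sum_{i\in\mathcal{S}} \lambda_i^2 (1-\eta\lambda_i)^2 \mathbb{E}_t[c_{i,t}^2] + \eta^2 \bar e_{\mathcal{S}},
\]
which is strictly positive because $\eta^2\bar e_{\mathcal{S}} > 0$ (the trace of $\mSigma$ and the block spectral moments are strictly positive by Assumption~\ref{asp:standing}, and $0 < \eta < 2/\lambda_1$ by Assumption~\ref{assum-cssa}). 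Hence both denominators are bounded away from zero at $\bar{\bm{v}}_t$, and $k$ is smooth (in particular continuous) in a neighborhood of the limit point.

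Having continuity at the limit and joint convergence in probability, the continuous mapping theorem yields
\[
k(\bm{v}_t^d) \xrightarrow{p} k(\bar{\bm{v}}_t).
\]
Finally, $k$ takes values in $[0,1]$ by construction, so the sequence $\{k(\bm{v}_t^d)\}_d$ is uniformly bounded. Applying Lemma~\ref{lem:cmt-dc} (convergence in probability plus boundedness implies convergence of expectations) delivers
\[
\lim_{d\to\infty} \mathbb{E}_t\bigl[k(\bm{v}_t^d)\bigr] = k(\bar{\bm{v}}_t),
\]
which is the claimed identity. The only step requiring genuine care is the continuity verification, that is, ruling out vanishing denominators at the limit; once the noise term $\eta^2 \bar e_{\mathcal{S}} > 0$ is identified as the natural lower bound, the remaining steps are standard probabilistic manipulations.
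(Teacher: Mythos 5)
Your proof is correct and follows essentially the same route as the paper's: package the six normalized statistics into a vector, invoke the concentration results of Lemma~\ref{lem:concentration-at-t}, verify that $k$ is continuous at the deterministic limit (by confirming the denominators are strictly positive), and conclude via Lemma~\ref{lem:cmt-dc} using boundedness of $k$. Your algebraic rewriting of the denominator as $\frac{1}{|\mathcal{S}|}\sum_{i\in\mathcal{S}}\lambda_i^2(1-\eta\lambda_i)^2\,\mathbb{E}_t[c_{i,t}^2] + \eta^2\bar e_{\mathcal{S}}$ is a slightly cleaner justification of strict positivity than the paper gives.
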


\begin{proof}
Define the random vector
\[
\bm{Z}_d := \left( s_t^{\prime,\mathcal{D}}, s_t^{\prime,\mathcal{B}}, u_t^{\prime,\mathcal{D}}, u_t^{\prime,\mathcal{B}}, w_t^{\prime,\mathcal{D}}, w_t^{\prime,\mathcal{B}} \right).
\]
By Lemma~\ref{lem:concentration-at-t}, $\bm{Z}_d \xrightarrow{p} \bar{\bm{z}}_t := (\bar{s}_t^{\mathcal{D}}, \bar{s}_t^{\mathcal{B}}, \bar{u}_t^{\mathcal{D}}, \bar{u}_t^{\mathcal{B}}, \bar{w}_t^{\mathcal{D}}, \bar{w}_t^{\mathcal{B}})$ as $d \to \infty$. The function $k(\cdot)$ is continuous at $\bar{\bm{z}}_t$ because its explicit form
\[
k(s_1, s_2, u_1, u_2, w_1, w_2) = \frac{1}{1 + \frac{1}{\rho} \cdot \frac{s_2 - 2\eta w_2 + \eta^2 (u_2 + \bar{e}_{\mathcal{B}})}{s_1 - 2\eta w_1 + \eta^2 (u_1 + \bar{e}_{\mathcal{D}})}}
\]
involves only continuous operations, and the denominator is strictly positive under Assumption~\ref{assum-cssa} (since $\eta < 2/\lambda_1$ ensures $(1 - \eta \lambda_i)^2 > 0$ for all $i$, and the noise terms $\bar{e}_{\mathcal{D}}, \bar{e}_{\mathcal{B}} \geq 0$ add positivity). Moreover, $k(\cdot)$ is bounded between 0 and 1 since it represents the asymptotic limit of the alignment $\theta_{t+1} \in [0,1]$. Therefore, by Lemma~\ref{lem:cmt-dc} (the Continuous Mapping Theorem combined with the Bounded Convergence Theorem), the convergence in probability of $\bm{Z}_d$ to $\bar{\bm{z}}_t$ implies that
\[
\lim_{d \to \infty} \mathbb{E}_t\left[ k(\bm{Z}_d) \right] = k(\bar{\bm{z}}_t),
\]
which is precisely the claimed equality.
\end{proof}

\begin{lemma}[Equivalence of conditional and unconditional alignment expectation]\label{lem:cond-uncond-equivalence}
Under Assumptions~\ref{asp:standing} and \ref{assum-cssa}, with deterministic initialization $\bm{x}_0$, define the deterministic limits
\begin{align*}
\bar{s}_t^{\mathcal{D}} &:= \lim_{d \to \infty} \mathbb{E}_t[s_t^{\prime,\mathcal{D}}], &
\bar{s}_t^{\mathcal{B}} &:= \lim_{d \to \infty} \mathbb{E}_t[s_t^{\prime,\mathcal{B}}], \\
\bar{u}_t^{\mathcal{D}} &:= \lim_{d \to \infty} \mathbb{E}_t[u_t^{\prime,\mathcal{D}}], &
\bar{u}_t^{\mathcal{B}} &:= \lim_{d \to \infty} \mathbb{E}_t[u_t^{\prime,\mathcal{B}}], \\
\bar{w}_t^{\mathcal{D}} &:= \lim_{d \to \infty} \mathbb{E}_t[w_t^{\prime,\mathcal{D}}], &
\bar{w}_t^{\mathcal{B}} &:= \lim_{d \to \infty} \mathbb{E}_t[w_t^{\prime,\mathcal{B}}].
\end{align*}
Then the following equality holds:
\begin{align*}
&\lim_{d \to \infty} \mathbb{E}_{t+1}\!\left[ \theta_{t+1} \,\middle|\, 
\begin{array}{l}
s_t^{\prime,\mathcal{D}} = \bar{s}_t^{\mathcal{D}},\, 
s_t^{\prime,\mathcal{B}} = \bar{s}_t^{\mathcal{B}}, \\
u_t^{\prime,\mathcal{D}} = \bar{u}_t^{\mathcal{D}},\, 
u_t^{\prime,\mathcal{B}} = \bar{u}_t^{\mathcal{B}}, \\
w_t^{\prime,\mathcal{D}} = \bar{w}_t^{\mathcal{D}},\, 
w_t^{\prime,\mathcal{B}} = \bar{w}_t^{\mathcal{B}}
\end{array}
\right] \\
&\qquad = \lim_{d \to \infty} \mathbb{E}_{t+1}[\theta_{t+1}].
\end{align*}
\end{lemma}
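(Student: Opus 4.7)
The plan is to show that both sides of the claimed equality reduce to the same deterministic quantity $k(\bar{\bm{z}}_t)$, where $\bar{\bm{z}}_t := (\bar{s}_t^{\mathcal{D}}, \bar{s}_t^{\mathcal{B}}, \bar{u}_t^{\mathcal{D}}, \bar{u}_t^{\mathcal{B}}, \bar{w}_t^{\mathcal{D}}, \bar{w}_t^{\mathcal{B}})$ and $k(\cdot)$ is the deterministic function introduced in Lemma~\ref{lem:asymp-rep}. The argument chains the three preceding lemmas together via the tower property of conditional expectation.

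For the left-hand side, Lemma~\ref{lem:asymp-rep} directly yields the asymptotic conditional expectation for any deterministic assignment of the six macroscopic statistics. Instantiating that lemma at the specific values $(\bar{s}_t^{\mathcal{D}}, \bar{s}_t^{\mathcal{B}}, \bar{u}_t^{\mathcal{D}}, \bar{u}_t^{\mathcal{B}}, \bar{w}_t^{\mathcal{D}}, \bar{w}_t^{\mathcal{B}})$ gives $\mathrm{LHS} = k(\bar{\bm{z}}_t)$ by substitution, with no further work required.

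For the right-hand side, the tower property yields $\mathbb{E}_{t+1}[\theta_{t+1}] = \mathbb{E}_t\!\left[\mathbb{E}_{t+1}[\theta_{t+1} \mid \bm{x}_t]\right]$. By the ``$\bm{x}_t$-conditioned'' form of Lemma~\ref{lem:asymp-rep}, the inner conditional expectation is asymptotically equivalent to $k(\bm{Z}_d)$, where $\bm{Z}_d$ denotes the random vector of normalized statistics computed from $\bm{x}_t$. Lemma~\ref{lem:concentration-at-t} gives $\bm{Z}_d \xrightarrow{p} \bar{\bm{z}}_t$, and Lemma~\ref{lem:expectation-interchange-at-t} then yields $\lim_{d\to\infty}\mathbb{E}_t[k(\bm{Z}_d)] = k(\bar{\bm{z}}_t)$. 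Combining these identifications shows $\mathrm{RHS} = k(\bar{\bm{z}}_t)$, so both sides coincide.

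The main technical obstacle is justifying the interchange of the outer limit $d \to \infty$ with the outer expectation $\mathbb{E}_t[\cdot]$ when replacing $\mathbb{E}_{t+1}[\theta_{t+1} \mid \bm{x}_t]$ by its asymptotic form $k(\bm{Z}_d)$. Because $\theta_{t+1} \in [0,1]$ and $k$ maps into $[0,1]$, the integrands are uniformly bounded, so a bounded-convergence argument suffices once the $\bm{x}_t$-pointwise convergence from Lemma~\ref{lem:asymp-rep} is upgraded to convergence sufficient (e.g., in $L^1$ under the law of $\bm{x}_t$) for the swap. Apart from this routine integrability step, the rest of the proof is bookkeeping among the three previously established lemmas.
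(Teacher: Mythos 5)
Your proof follows the same route as the paper's: identify the left side with $k(\bar{\bm z}_t)$ via Lemma~\ref{lem:asymp-rep}, apply the tower property to the right side, replace the inner conditional expectation by its asymptotic form $k(\bm Z_d)$, and invoke Lemmas~\ref{lem:concentration-at-t} and~\ref{lem:expectation-interchange-at-t} to conclude $\lim_{d\to\infty}\mathbb E_t[k(\bm Z_d)] = k(\bar{\bm z}_t)$. You are slightly more careful than the paper in flagging that swapping the outer $d\to\infty$ limit past $\mathbb E_t[\cdot]$ (when replacing the inner conditional expectation by its asymptotic limit) requires a bounded-convergence argument rather than a bare substitution; this is a legitimate observation, and boundedness of $\theta_{t+1}$ and $k$ indeed closes the gap.
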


\begin{proof}
By Lemma~\ref{lem:asymp-rep}, for any fixed values of the six statistics at time $t$, the conditional expectation satisfies
\[
\lim_{d \to \infty} \mathbb{E}_{t+1}[\theta_{t+1} \mid s_t^{\prime,\mathcal{D}}, s_t^{\prime,\mathcal{B}}, u_t^{\prime,\mathcal{D}}, u_t^{\prime,\mathcal{B}}, w_t^{\prime,\mathcal{D}}, w_t^{\prime,\mathcal{B}}] 
= k(s_t^{\prime,\mathcal{D}}, s_t^{\prime,\mathcal{B}}, u_t^{\prime,\mathcal{D}}, u_t^{\prime,\mathcal{B}}, w_t^{\prime,\mathcal{D}}, w_t^{\prime,\mathcal{B}}).
\]
Substituting the deterministic limits $\bar{s}_t^{\mathcal{D}}, \dots, \bar{w}_t^{\mathcal{B}}$ into this identity yields
\[
\lim_{d \to \infty} \mathbb{E}_{t+1}[\theta_{t+1} \mid s_t^{\prime,\mathcal{D}} = \bar{s}_t^{\mathcal{D}}, \dots, w_t^{\prime,\mathcal{B}} = \bar{w}_t^{\mathcal{B}}] 
= k(\bar{s}_t^{\mathcal{D}}, \bar{s}_t^{\mathcal{B}}, \bar{u}_t^{\mathcal{D}}, \bar{u}_t^{\mathcal{B}}, \bar{w}_t^{\mathcal{D}}, \bar{w}_t^{\mathcal{B}}).
\]

On the other hand, by the law of total expectation,
\[
\mathbb{E}_{t+1}[\theta_{t+1}] = \mathbb{E}_t\!\left[ \mathbb{E}_{t+1}[\theta_{t+1} \mid s_t^{\prime,\mathcal{D}}, \dots, w_t^{\prime,\mathcal{B}}] \right].
\]
Taking the limit as $d \to \infty$ on both sides gives
\[
\lim_{d \to \infty} \mathbb{E}_{t+1}[\theta_{t+1}] 
= \lim_{d \to \infty} \mathbb{E}_t\!\left[ \mathbb{E}_{t+1}[\theta_{t+1} \mid s_t^{\prime,\mathcal{D}}, \dots, w_t^{\prime,\mathcal{B}}] \right].
\]
Applying Lemma~\ref{lem:asymp-rep} inside the expectation, the right-hand side becomes
\[
\lim_{d \to \infty} \mathbb{E}_t\!\left[ k(s_t^{\prime,\mathcal{D}}, s_t^{\prime,\mathcal{B}}, u_t^{\prime,\mathcal{D}}, u_t^{\prime,\mathcal{B}}, w_t^{\prime,\mathcal{D}}, w_t^{\prime,\mathcal{B}}) \right].
\]
Finally, by Lemma~\ref{lem:expectation-interchange-at-t}, this limit equals
\[
k(\bar{s}_t^{\mathcal{D}}, \bar{s}_t^{\mathcal{B}}, \bar{u}_t^{\mathcal{D}}, \bar{u}_t^{\mathcal{B}}, \bar{w}_t^{\mathcal{D}}, \bar{w}_t^{\mathcal{B}}).
\]

Thus, both sides of the claimed equality converge to the same deterministic value, and the result follows.
\end{proof}

\subsection{Proofs of theorems in CSGD}
\label{sectionPCSD}
Recall our assumption and notion:
\begin{assumption-restated}\ref{assum-cssa} Our analysis is based on the following assumptions:
\begin{table}[H]
\centering
\caption{Assumptions for CSGD}
\begin{tabular}{ll}
\hline
\textbf{Assumption} & \textbf{Description} \\ \hline \\
Constant Step Size & 
$\eta_t = \eta, \quad 0 < \eta < \min\left\{ \frac{2}{\lambda_1}, \frac{2(\lambda_k - \lambda_{k+1})}{\lambda_1^2 - \lambda_d^2} \right\}$ \\[4ex]
Initialization for $\bm{x}_0$& 
$\begin{array}{l}
\forall i \in \mathcal{D} ,\quad c_{i,0}^2 > \beta_i, \\
\varrho_{\mathcal{D}} > \delta - \sum_{i \in \mathcal{D}} \beta_i
\end{array}$ \\ \\ \hline
\end{tabular}
\end{table}
\end{assumption-restated}
For the analysis, we define several key quantities:
\[
\beta_i := \frac{\eta\,\kappa_i^2}{2\lambda_i-\eta\lambda_i^2} > 0, \varrho_{\mathcal{D}} := \sum_{i \in \mathcal{D}} (c_{i,0}^2 - \beta_i).
\]
\[
\delta := \frac{s_{\max}\psi_{\mathcal D}\lambda_1^2}{\lambda_d^2 \lambda_k^2 \left(\frac{2(\lambda_k-\lambda_{k+1})}{\eta} - (\lambda_1^2 - \lambda_d^2)\right)}.
\]

\begin{theorem-restated}\ref{thm:monotone-decrease-expected-alignment}(\textbf{\textit{Initial Decrease Phase}})
Under Assumption \ref{asp:standing} and Assumption \ref{assum-cssa}, let the  $t^*$ be defined as
\[
t^* := \left\lfloor \frac{\log\left( \dfrac{\varrho_{\mathcal{D}}}{\delta - \sum_{i \in \mathcal{D}} \beta_i} \right)}{-2\log(1-\eta\lambda_1)} \right\rfloor.
\]
Then for all time steps $t \in \{0, 1, \dots, t^*-1\}$, the expected alignment is strictly decreasing:
\[
\lim_{d\to\infty}\,\mathbb{E}[\theta_{t+1}] < \lim_{d\to\infty}\mathbb{E}[\theta_t].
\]

\end{theorem-restated}

\begin{proof}
By Theorem~\ref{thm:step-dec-below}, the desired inequality follows if we can show that
\[
\lim_{d \to \infty} \mathbb{E}_t[\eta_t^* \mid s_t^{\prime,\mathcal{D}} = \bar{s}_t^{\mathcal{D}}, \dots, w_t^{\prime,\mathcal{B}} = \bar{w}_t^{\mathcal{B}}] > \eta,
\]
where $\bar{s}_t^{\mathcal{S}} = \lim_{d \to \infty} \mathbb{E}_t[s_t^{\prime,\mathcal{S}}]$, etc., are the deterministic limits from Lemma~\ref{lem:concentration-at-t}.

To lower-bound this expectation, we use the state-dependent bound from Theorem~\ref{thm:bounds-xnorm}:
\[
\eta_t^* \geq \frac{2\,\mathrm{gap}_1}{(\lambda_1^2 - \lambda_d^2) + \dfrac{s_{\max} \psi_{\mathcal{D}}}{\lambda_d^2 \|\bm{x}_t\|^2 \theta_t}}.
\]
Taking conditional expectation given the six macroscopic statistics at time $t$, we obtain
\[
\mathbb{E}_t[\eta_t^* \mid \bar{\bm{z}}_t] \geq \mathbb{E}_t\left[ \frac{2\,\mathrm{gap}_1}{(\lambda_1^2 - \lambda_d^2) + \dfrac{s_{\max} \psi_{\mathcal{D}}}{\lambda_d^2 \|\bm{x}_t\|^2 \theta_t}} \,\middle|\, \bar{\bm{z}}_t \right].
\]

Now, observe that $\|\bm{x}_t\|^2 \theta_t$ is a continuous function of the six statistics. Specifically,
\[
\|\bm{x}_t\|^2 \theta_t = \left( \sum_{i=1}^d c_{i,t}^2 \right) \cdot \frac{\sum_{j \in \mathcal{D}} \lambda_j^2 c_{j,t}^2}{\sum_{i=1}^d \lambda_i^2 c_{i,t}^2}.
\]
By Lemma~\ref{lem:concentration-at-t}, each block average $\frac{1}{|\mathcal{S}|} \sum_{i \in \mathcal{S}} c_{i,t}^2$, $\frac{1}{|\mathcal{S}|} \sum_{i \in \mathcal{S}} \lambda_i^2 c_{i,t}^2$, etc., concentrates around its expectation as $d \to \infty$. Therefore, $\|\bm{x}_t\|^2 \theta_t$ concentrates around its deterministic limit
\[
\overline{\|\bm{x}_t\|^2 \theta_t} := \lim_{d \to \infty} \mathbb{E}_t[\|\bm{x}_t\|^2 \theta_t].
\]
Since the function $x \mapsto 1 / (a + b/x)$ is continuous and bounded for $x > 0$, Lemma~\ref{lem:cmt-dc} implies
\[
\lim_{d \to \infty} \mathbb{E}_t\left[ \frac{2\,\mathrm{gap}_1}{(\lambda_1^2 - \lambda_d^2) + \dfrac{s_{\max} \psi_{\mathcal{D}}}{\lambda_d^2 \|\bm{x}_t\|^2 \theta_t}} \,\middle|\, \bar{\bm{z}}_t \right]
= \frac{2\,\mathrm{gap}_1}{(\lambda_1^2 - \lambda_d^2) + \dfrac{s_{\max} \psi_{\mathcal{D}}}{\lambda_d^2 \overline{\|\bm{x}_t\|^2 \theta_t}}}.
\]

We now lower-bound $\overline{\|\bm{x}_t\|^2 \theta_t}$. Using $\lambda_i \geq \lambda_k$ for $i \in \mathcal{D}$ and $\lambda_i \leq \lambda_1$ for all $i$, we have
\[
\overline{\|\bm{x}_t\|^2 \theta_t} \geq \frac{\lambda_k^2}{\lambda_1^2} \lim_{d \to \infty} \mathbb{E}_t\left[ \sum_{i \in \mathcal{D}} c_{i,t}^2 \right]
= \frac{\lambda_k^2}{\lambda_1^2} \left( (1 - \eta \lambda_1)^{2t} \varrho_{\mathcal{D}} + \sum_{i \in \mathcal{D}} \beta_i \right).
\]
By the definition of $t^*$, for all $t < t^*$, 
\[
(1 - \eta \lambda_1)^{2t} \varrho_{\mathcal{D}} + \sum_{i \in \mathcal{D}} \beta_i > \delta.
\]
Therefore,
\[
\overline{\|\bm{x}_t\|^2 \theta_t} > \frac{\lambda_k^2}{\lambda_1^2} \delta.
\]
Substituting the definition of $\delta$ yields
\[
\frac{s_{\max} \psi_{\mathcal{D}}}{\lambda_d^2 \overline{\|\bm{x}_t\|^2 \theta_t}} < \frac{s_{\max} \psi_{\mathcal{D}} \lambda_1^2}{\lambda_d^2 \lambda_k^2 \delta}
= \frac{2(\lambda_k - \lambda_{k+1})}{\eta} - (\lambda_1^2 - \lambda_d^2).
\] 
Hence,
\[
\frac{2\,\mathrm{gap}_1}{(\lambda_1^2 - \lambda_d^2) + \dfrac{s_{\max} \psi_{\mathcal{D}}}{\lambda_d^2 \overline{\|\bm{x}_t\|^2 \theta_t}}}
> \frac{2(\lambda_k - \lambda_{k+1})}{(\lambda_1^2 - \lambda_d^2) + \left( \dfrac{2(\lambda_k - \lambda_{k+1})}{\eta} - (\lambda_1^2 - \lambda_d^2) \right)} = \eta.
\]

Combining the above, we conclude that
\[
\lim_{d \to \infty} \mathbb{E}_t[\eta_t^* \mid \bar{\bm{z}}_t] > \eta,
\]
which implies
\[
\lim_{d \to \infty} \mathbb{E}_{t+1}[\theta_{t+1} \mid \bar{\bm{z}}_t] < \lim_{d \to \infty} \theta_t.
\]
Finally, by Lemma~\ref{lem:cond-uncond-equivalence} and the concentration of $\theta_t$, we obtain
\[
\lim_{d \to \infty} \mathbb{E}_{t+1}[\theta_{t+1}] < \lim_{d \to \infty} \mathbb{E}_{t}[\theta_t]
\]
for all $t \in \{0, 1, \dots, t^* - 1\}$, completing the proof.
\end{proof}

\begin{theorem-restated}\ref{thm:late-theta}(\textbf{\textit{Late Phase}})
Under Assumption~\ref{asp:standing} and Assumption~\ref{assum-cssa}, the late-time asymptotic expected alignment is given by
\[
\theta_\infty := \lim_{t \to \infty} \lim_{d \to \infty} \mathbb{E}_t[\theta_t]
= \frac{\lim_{d \to \infty} \sum_{i \in \mathcal{D}} \lambda_i^2 \beta_i}{\lim_{d \to \infty} \sum_{i=1}^d \lambda_i^2 \beta_i},
\]
where $\beta_i = \dfrac{\eta \kappa_i^2}{2\lambda_i - \eta \lambda_i^2} > 0$. Equivalently,
\[
\theta_\infty = \frac{\lim_{d \to \infty} \sum_{i \in \mathcal{D}} \dfrac{\eta \lambda_i^2 \kappa_i^2}{2\lambda_i - \eta \lambda_i^2}}{\lim_{d \to \infty} \sum_{i=1}^d \dfrac{\eta \lambda_i^2 \kappa_i^2}{2\lambda_i - \eta \lambda_i^2}}.
\]

\end{theorem-restated}

\begin{proof}
From the CSGD dynamics, the second moment of each eigencoordinate satisfies
\[
\mathbb{E}_t[c_{i,t}^2] = (1 - \eta \lambda_i)^{2t} (c_{i,0}^2 - \beta_i) + \beta_i,
\quad \text{where } \beta_i = \frac{\eta \kappa_i^2}{2\lambda_i - \eta \lambda_i^2}.
\]
Since $0 < \eta < 2 / \lambda_1$, we have $|1 - \eta \lambda_i| < 1$ for all $i$, so $(1 - \eta \lambda_i)^{2t} \to 0$ as $t \to \infty$. Therefore,
\[
\lim_{t \to \infty} \mathbb{E}_t[c_{i,t}^2] = \beta_i.
\]

The expected unnormalized energies are
\[
\mathbb{E}_t[s_t^{\mathcal{D}}] = \sum_{i \in \mathcal{D}} \lambda_i^2 \mathbb{E}_t[c_{i,t}^2], \quad
\mathbb{E}_t[s_t] = \sum_{i=1}^d \lambda_i^2 \mathbb{E}_t[c_{i,t}^2].
\]
Taking the limit as $t \to \infty$, we obtain
\[
\lim_{t \to \infty} \mathbb{E}_t[s_t^{\mathcal{D}}] = \sum_{i \in \mathcal{D}} \lambda_i^2 \beta_i, \quad
\lim_{t \to \infty} \mathbb{E}_t[s_t] = \sum_{i=1}^d \lambda_i^2 \beta_i.
\]

By Lemma~\ref{lem:concentration-at-t}, for each fixed $t$, the normalized statistics $s_t^{\prime,\mathcal{D}} = s_t^{\mathcal{D}}/k$ and $s_t^{\prime} = s_t/d$ concentrate around their expectations as $d \to \infty$:
\[
\lim_{d \to \infty} \mathbb{E}_t[\theta_t] = \frac{\lim_{d \to \infty} \mathbb{E}_t[s_t^{\mathcal{D}}]}{\lim_{d \to \infty} \mathbb{E}_t[s_t]}.
\]
Moreover, the convergence $\mathbb{E}_t[c_{i,t}^2] \to \beta_i$ is uniform in $i$ under Assumption~\ref{asp:standing} (bounded spectrum and noise). Therefore, we can interchange the limits $t \to \infty$ and $d \to \infty$, yielding
\[
\theta_\infty = \lim_{t \to \infty} \lim_{d \to \infty} \mathbb{E}_t[\theta_t]
= \lim_{t \to \infty} \lim_{d \to \infty} \frac{\mathbb{E}_t[s_t^{\mathcal{D}}]}{\mathbb{E}_t[s_t]}
= \frac{\lim_{d \to \infty} \sum_{i \in \mathcal{D}} \lambda_i^2 \beta_i}{\lim_{d \to \infty} \sum_{i=1}^d \lambda_i^2 \beta_i}.
\]
This completes the proof.
\end{proof}

\section{Numerical Simulation Results}
\label{appendix_simulation}
{
In this subsection, we detail the parameter configurations for our numerical simulations. We consider constant-step-size SGD on a deep linear network with the following global settings:
\begin{enumerate}
    \item  Input dimension: $d=500$, target rank $k=50$.
    \item Learning rate: $\eta=0.003$, total steps $T=30,000$.
    \item  Initialization: The matrices $A$ are randomly initialized as positive definite symmetric matrices $A$ with different spectral gap $m=\lambda_k/\lambda_{k+1}$. We conduct experiments across a range of values: $m \in \{5, 10, 20, 50, 100, 200, 300, 400, 500\}$.
    \item  Random seeds: We choose the various random seeds $\{42, 87, 568, \dots, 4008001\}$.
\end{enumerate}
The code is available via \href{https://github.com/xuan-lgbq/Suspicious-Alignment-of-SGD.git}{link}. 
\subsection{General Trends across Spectral Gaps}
The alignment results for different values of $m$ are presented in Figure \ref{complete_sim_fig}. These results demonstrate how the spectral gap influences the speed and stability of the alignment process. For each experiment group, the left panel plots loss as a function of training steps, and the right panel plots alignment as a function of training steps.
}
\begin{figure}[h!] 
    \centering
    \subfigure[m=5]{\includegraphics[width=0.45\textwidth]{img/quad_sim_m_5.pdf}}
    \subfigure[m=10]{\includegraphics[width=0.45\textwidth]{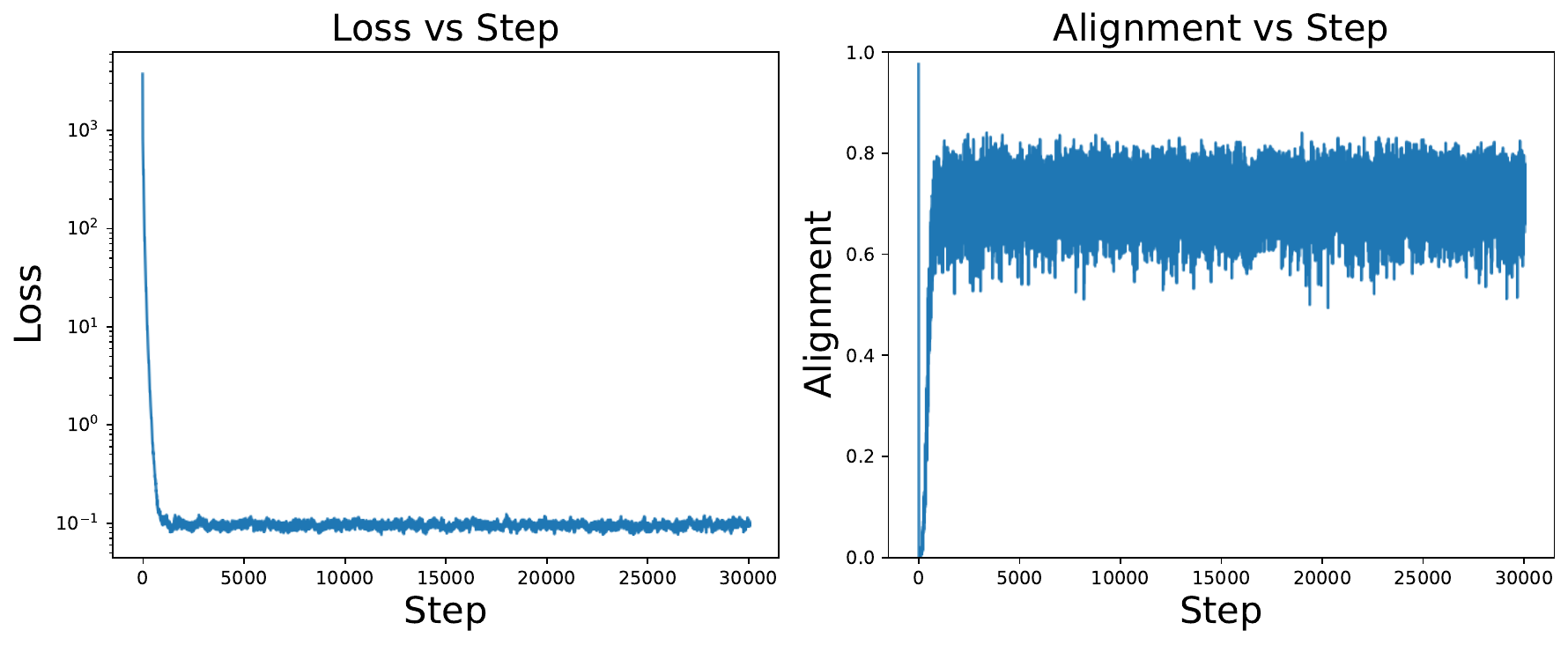}}
     \\
    \subfigure[m=20]{\includegraphics[width=0.45\textwidth]{img/quad_sim_m_20.pdf}}
    \subfigure[m=50]{\includegraphics[width=0.45\textwidth]{img/quad_sim_m_50.pdf}}
    \\
    \subfigure[m=100]{\includegraphics[width=0.45\textwidth]{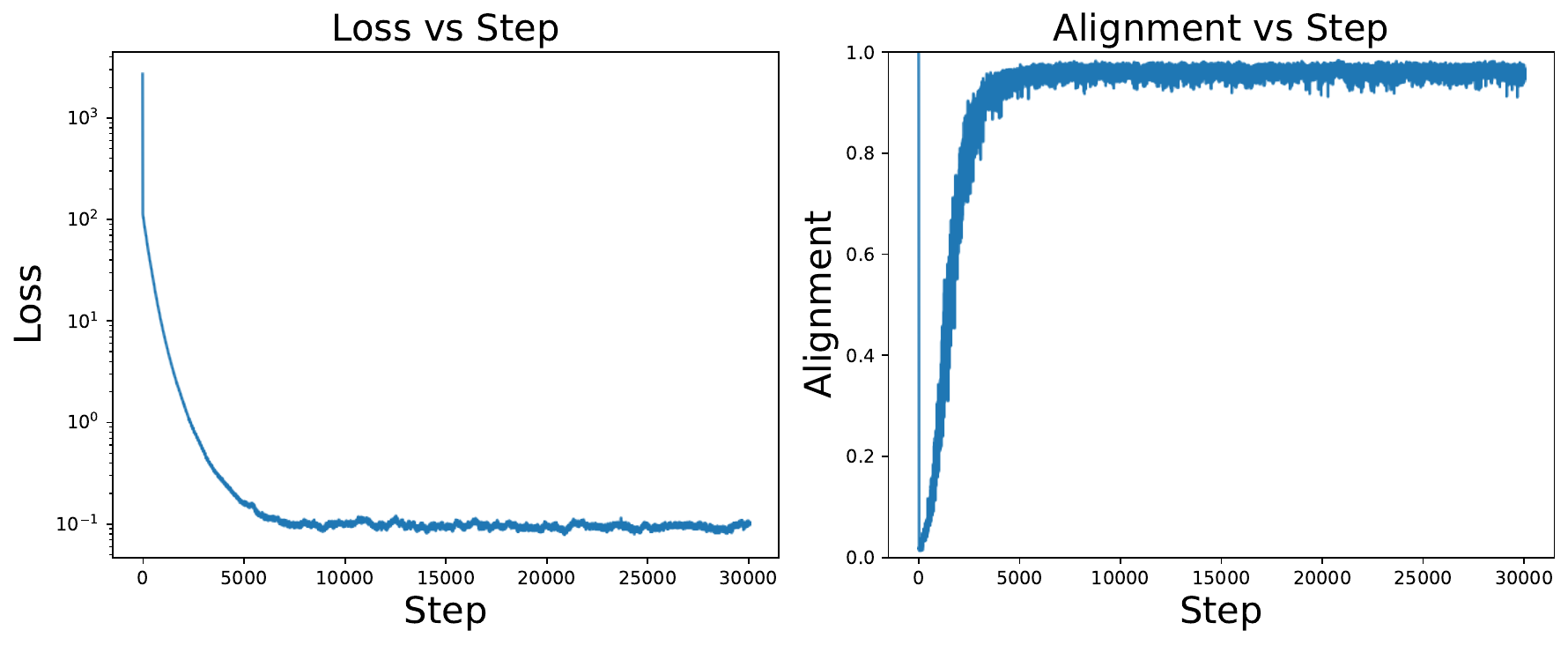}}
    \subfigure[m=200]{\includegraphics[width=0.45\textwidth]{img/quad_sim_m_200.pdf}}
    \\
    \subfigure[m=300]{\includegraphics[width=0.45\textwidth]{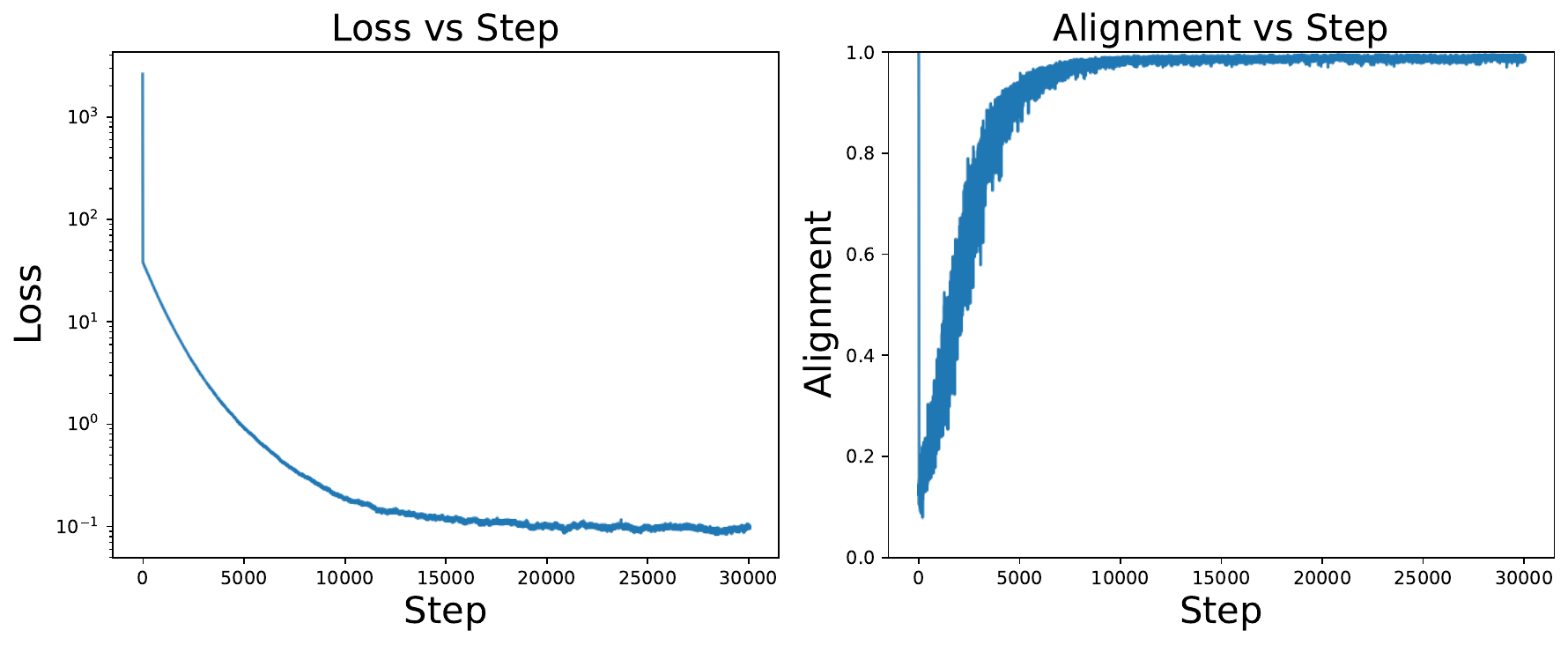}}
    \subfigure[m=400]{\includegraphics[width=0.45\textwidth]{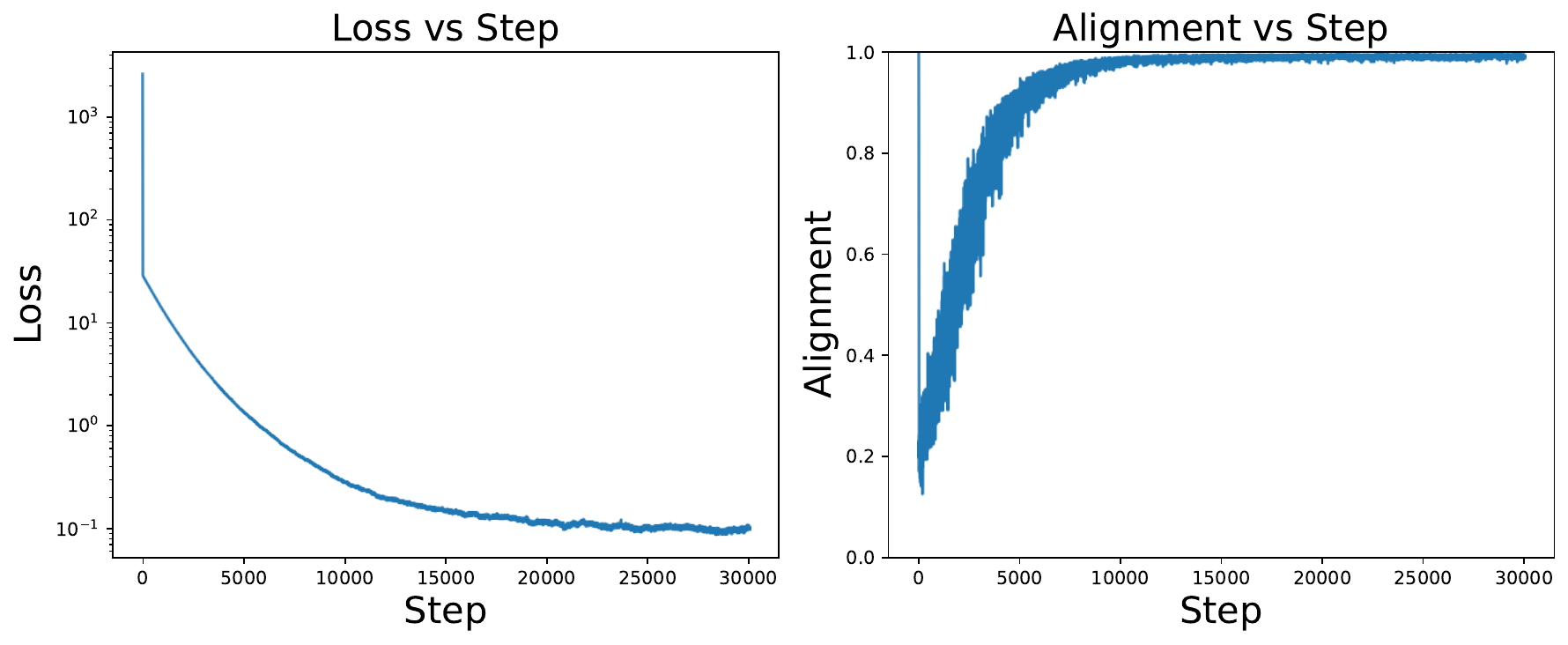}}
\\
  \subfigure[m=500]{\includegraphics[width=0.45\textwidth]{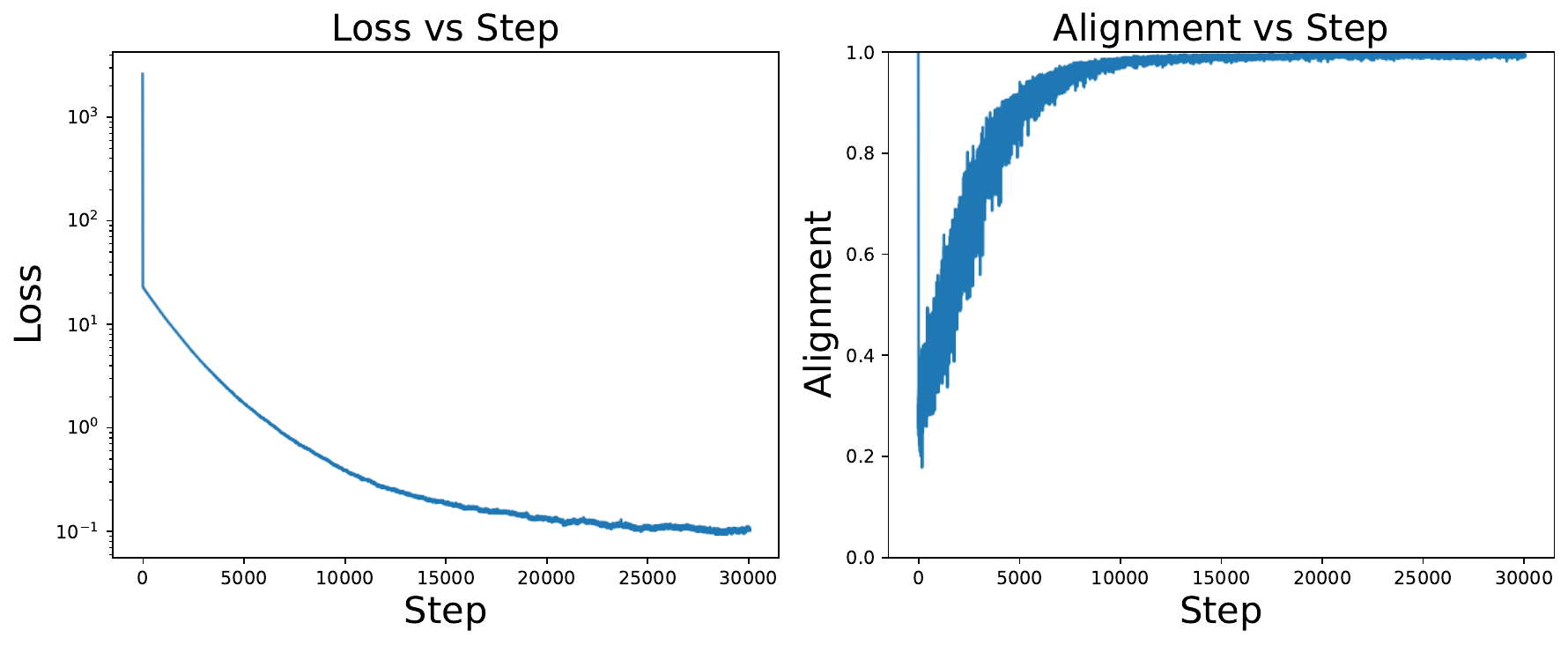}}
\caption{Numerical simulation experiments with different spectral gaps (\(m=\lambda_k/\lambda_{k+1}\))}\label{complete_sim_fig}
\end{figure}

{
\subsection{Expectation and standard deviation of the alignment for different seeds} To account for randomness in initialization and SGD trajectories, we repeated the experiments across various random seeds $\{42, 87, 568, \dots, 4008001\}$. Figure \ref{expectation and std} illustrates the expectation and standard deviation of the alignment, confirming the consistency of our theoretical predictions.}
\begin{figure}[h!] 
    \centering
    \subfigure[seed=42]{
        \includegraphics[width=0.45\textwidth, height=3.65cm, keepaspectratio]{ALT_final/Alignment_expectation_and_std/seed_42.pdf}
    }
    \subfigure[seed=87]{
        \includegraphics[width=0.45\textwidth, height=3.65cm, keepaspectratio]{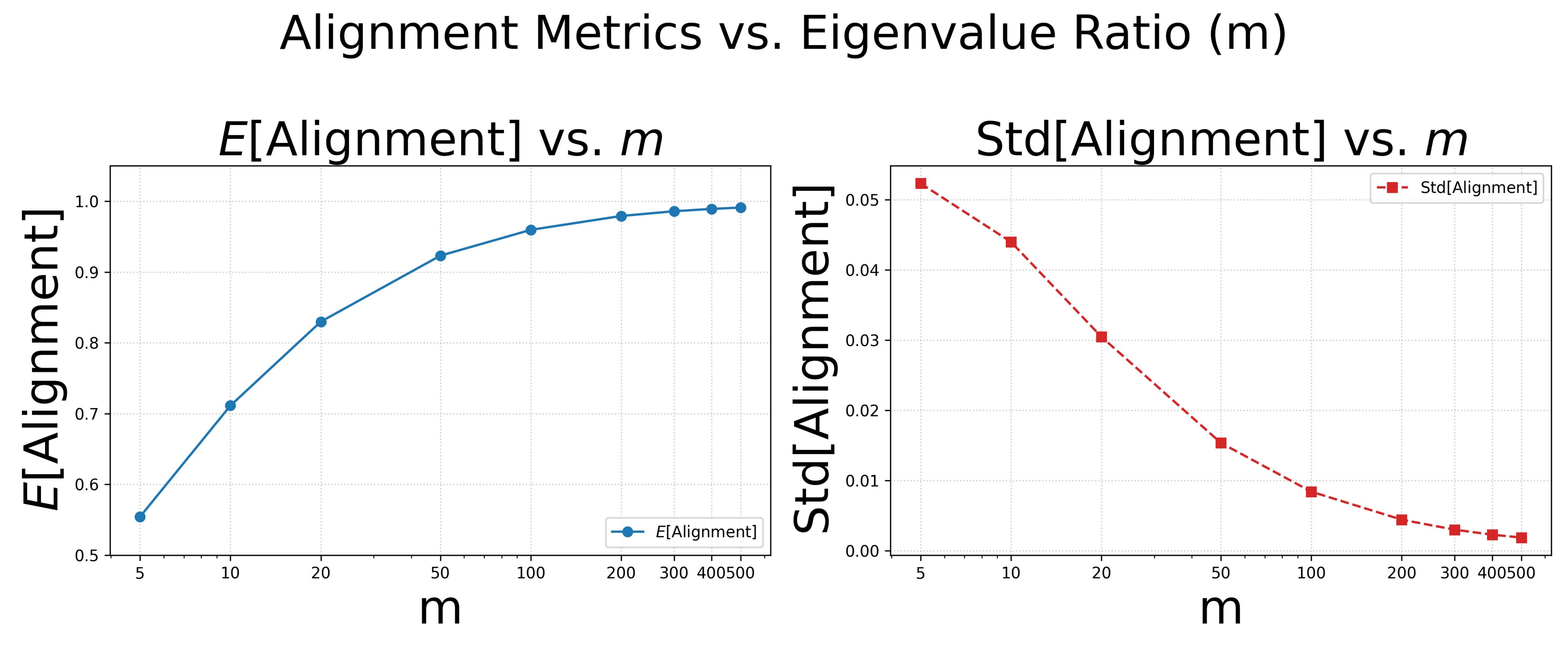}}
    \\ 
    \subfigure[seed=568]{
        \includegraphics[width=0.45\textwidth, height=3.65cm, keepaspectratio]{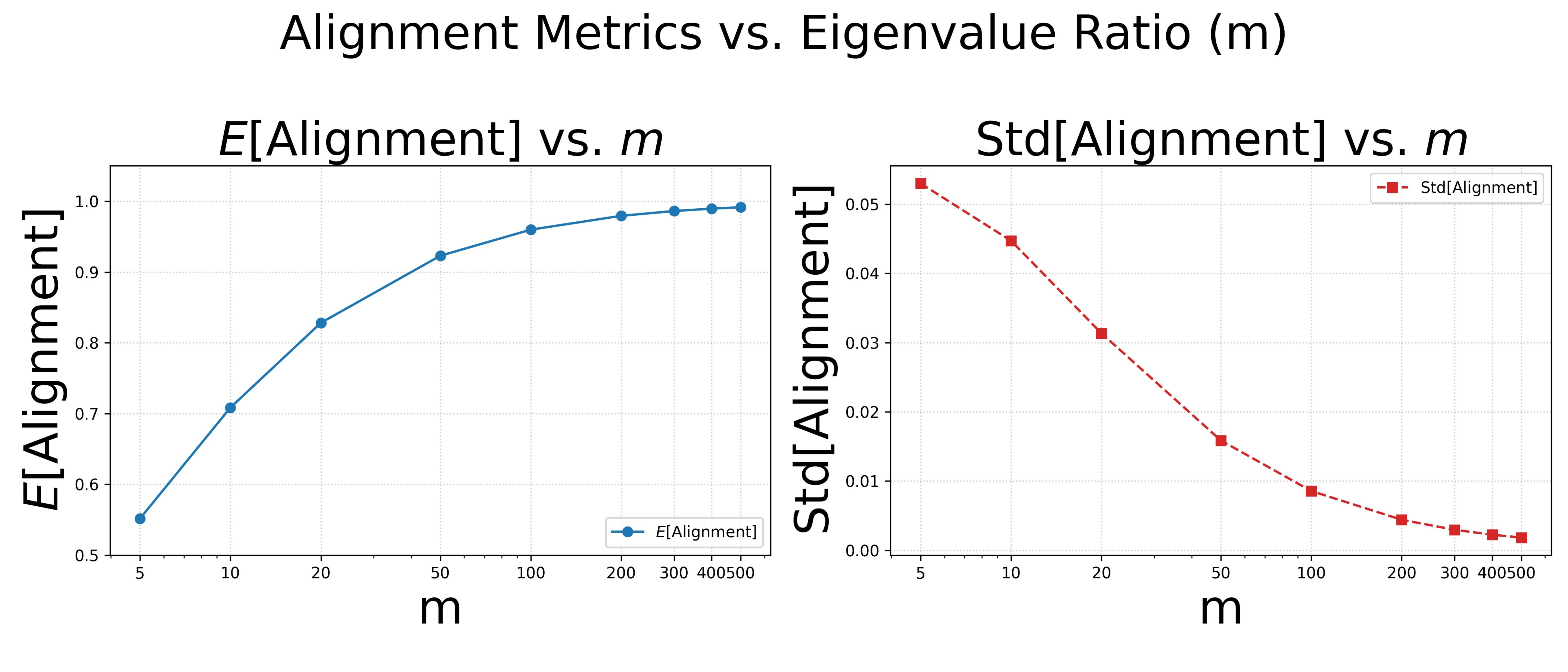}}
    \subfigure[seed=1101]{
        \includegraphics[width=0.45\textwidth, height=3.65cm, keepaspectratio]{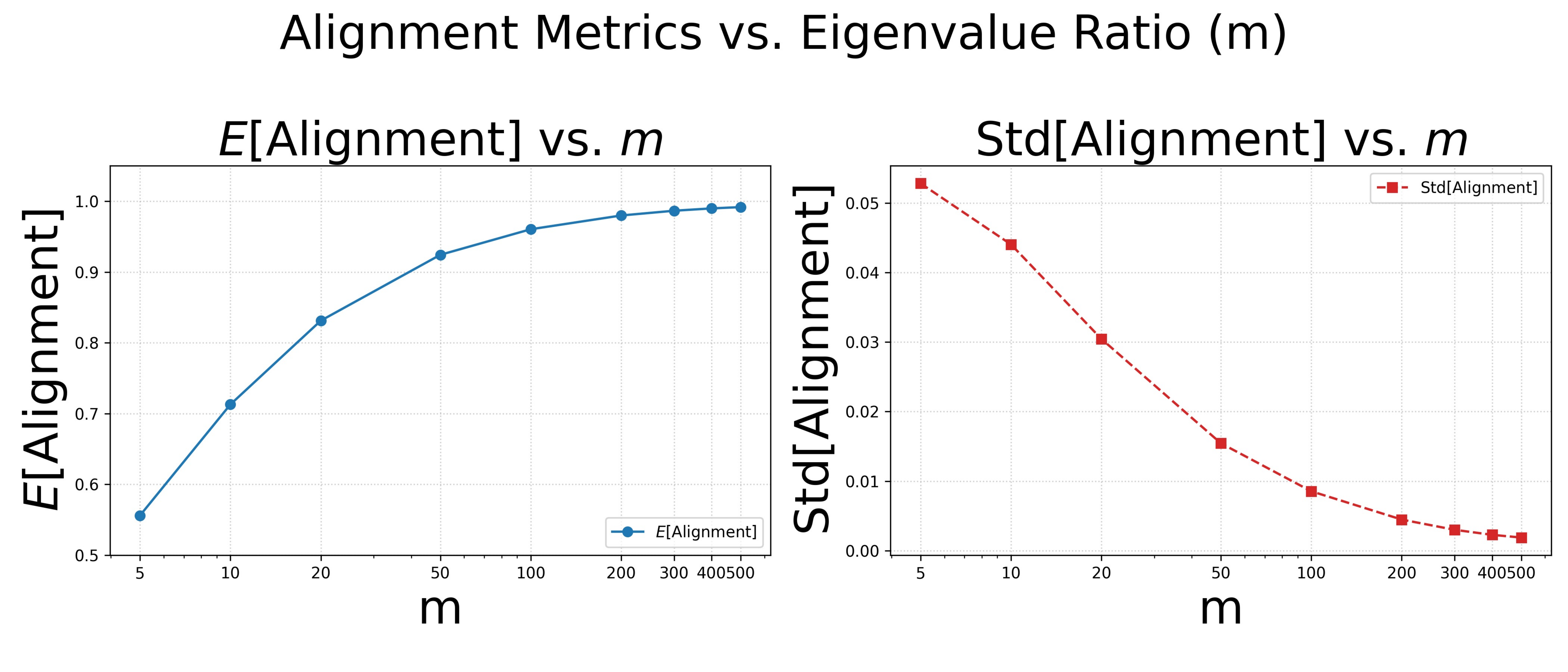}}
    
    \subfigure[seed=12138]{
        \includegraphics[width=0.45\textwidth, height=3.65cm, keepaspectratio]{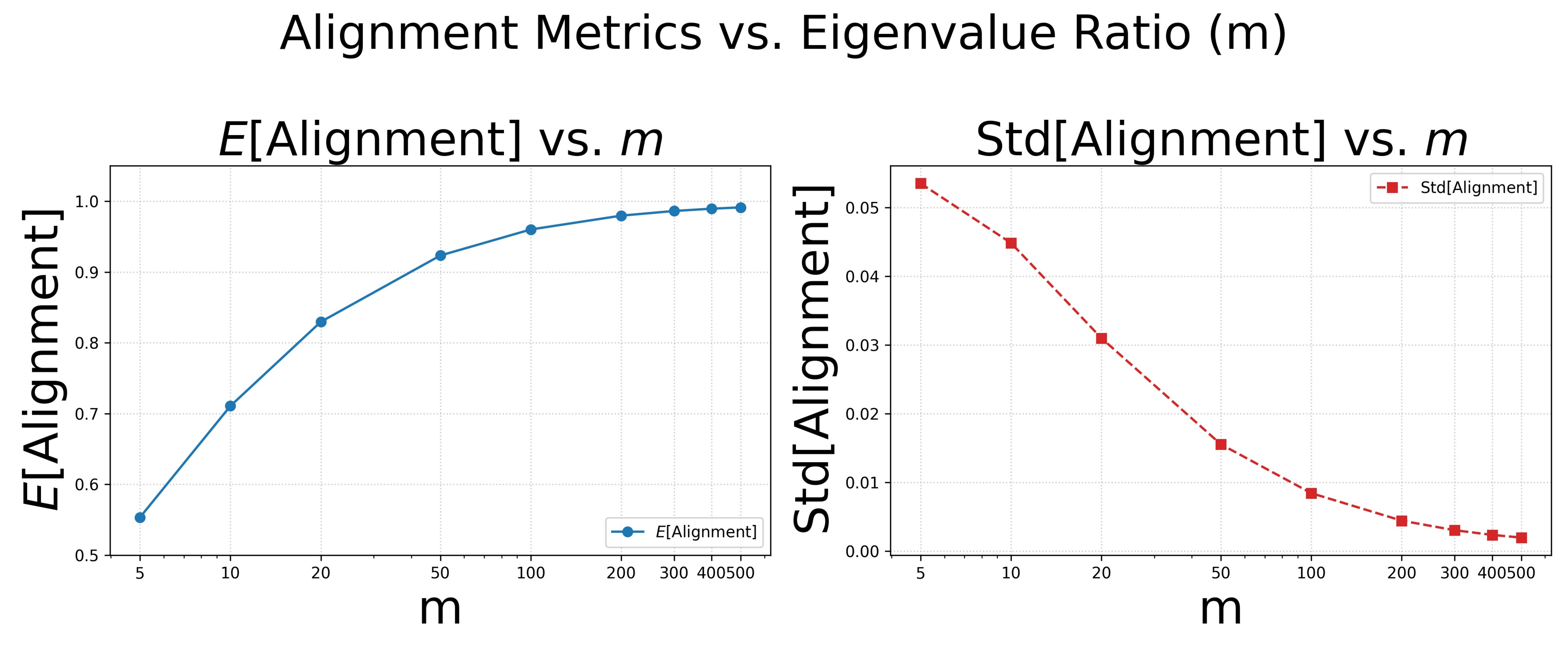}}
    \subfigure[seed=70425]{
        \includegraphics[width=0.45\textwidth, height=3.65cm, keepaspectratio]{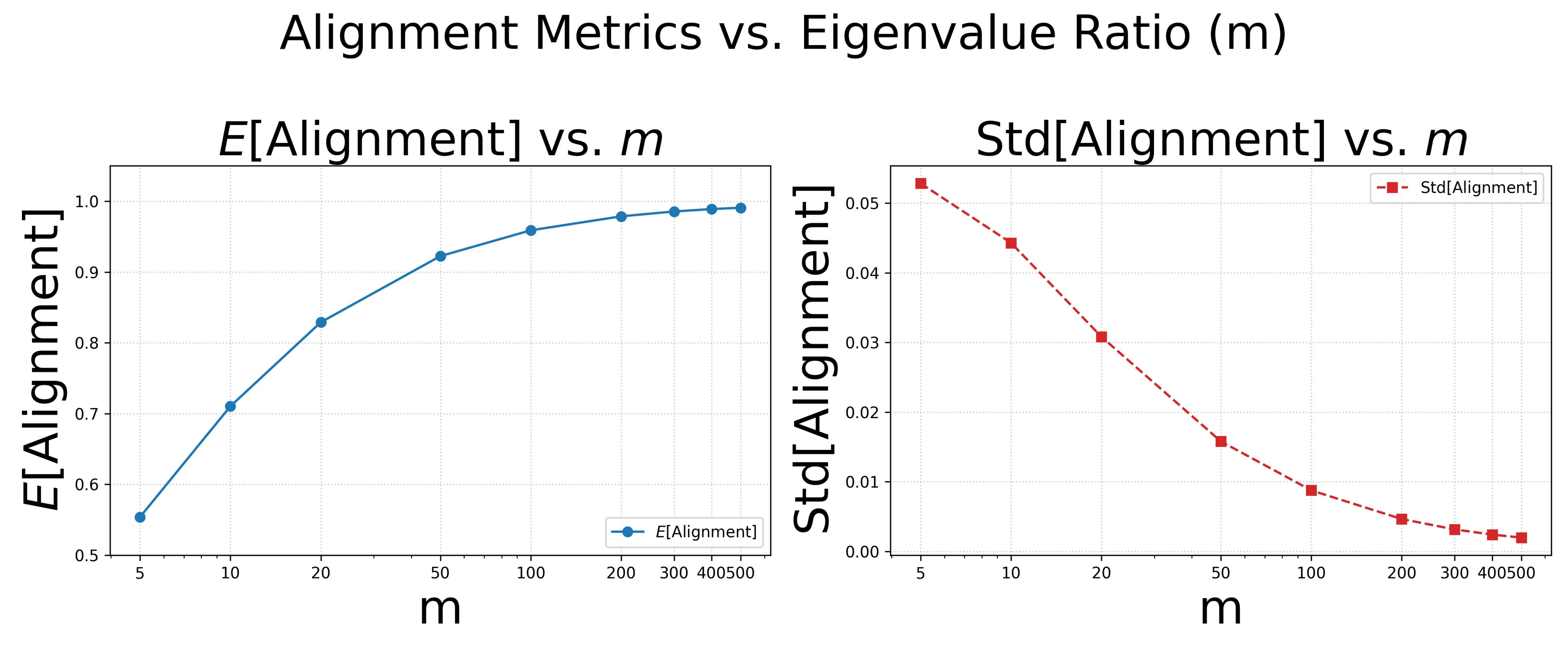}}
    \\ 
    \subfigure[seed=4008001]{
        \includegraphics[width=0.45\textwidth, height=3.65cm, keepaspectratio]{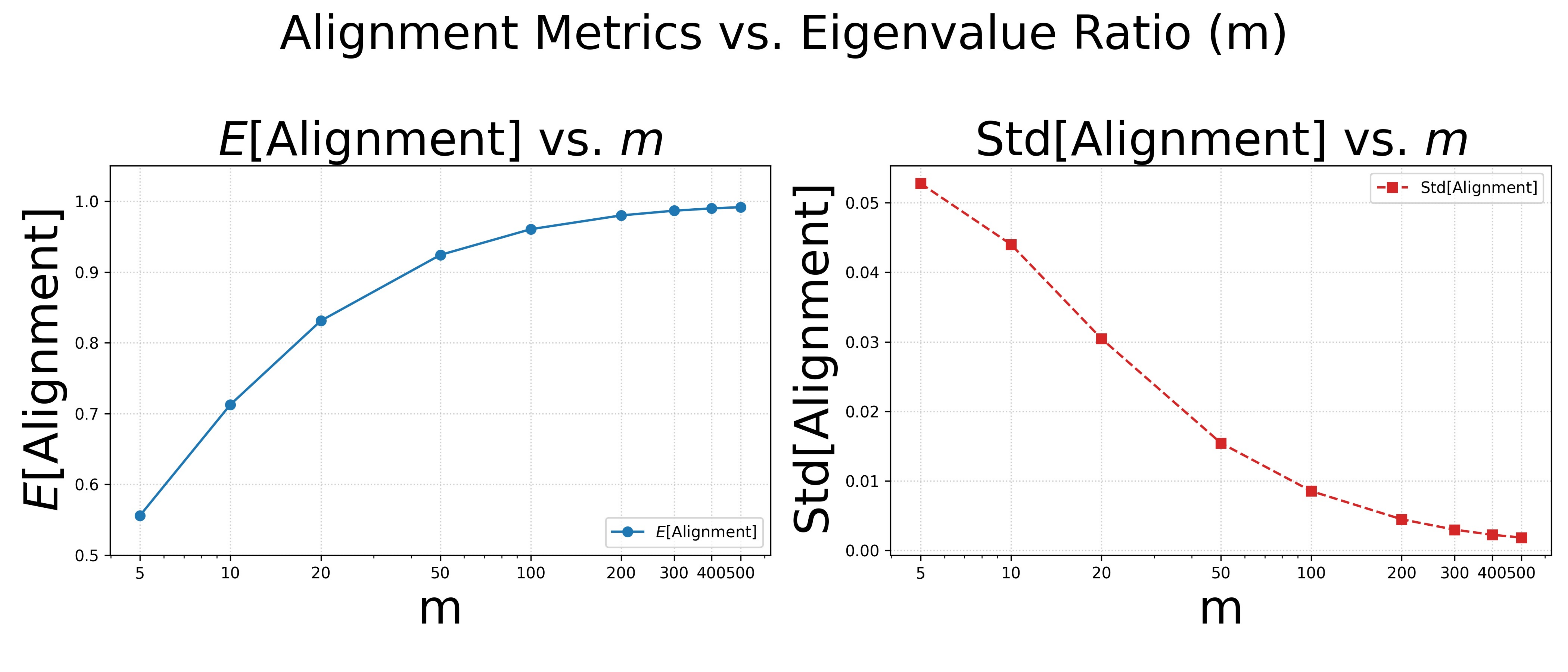}}
    
     \caption{The limitation of the expectation and standard deviation of alignment with respect to $m=\frac{\lambda_k}{\lambda_{k+1}}$. \label{expectation and std}} 
\end{figure}

{
\subsection{Analysis of Phase I Alignment Decay Rate}
As illustrated in Figures \ref{decay, seed=42} through \ref{fig:decay_4008001}, we analyze the decay rate of alignment for various seeds. The empirical results consistently indicate that the alignment exhibits a polynomial decay rate during Phase I.}
{
\subsection{Convergence Rates and Spectral Gap Scaling}
Finally, we investigate how the spectral gap $m$ influences the final alignment state. We plot the relationship between the alignment and the parameter $m$ in Figure \ref{fig:rate_convergence}, which shows that the alignment scales logarithmically with respect to $m$. This logarithmic dependency suggests that while a larger spectral gap facilitates alignment, the marginal benefit diminishes as $m$ increases.
}

\begin{figure}[p]
    \centering
    \subfigure[m=5]{\includegraphics[width=0.9\linewidth, height=3.65cm, keepaspectratio]{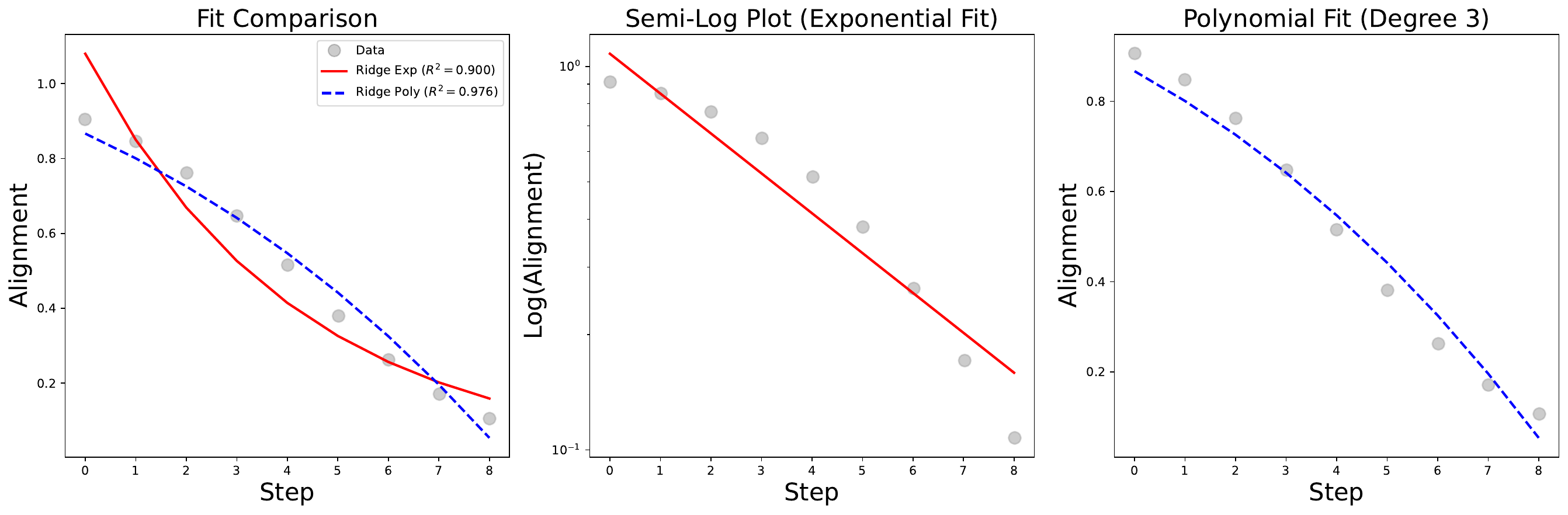}} \\
    \subfigure[m=10]{\includegraphics[width=0.9\linewidth, height=3.65cm, keepaspectratio]{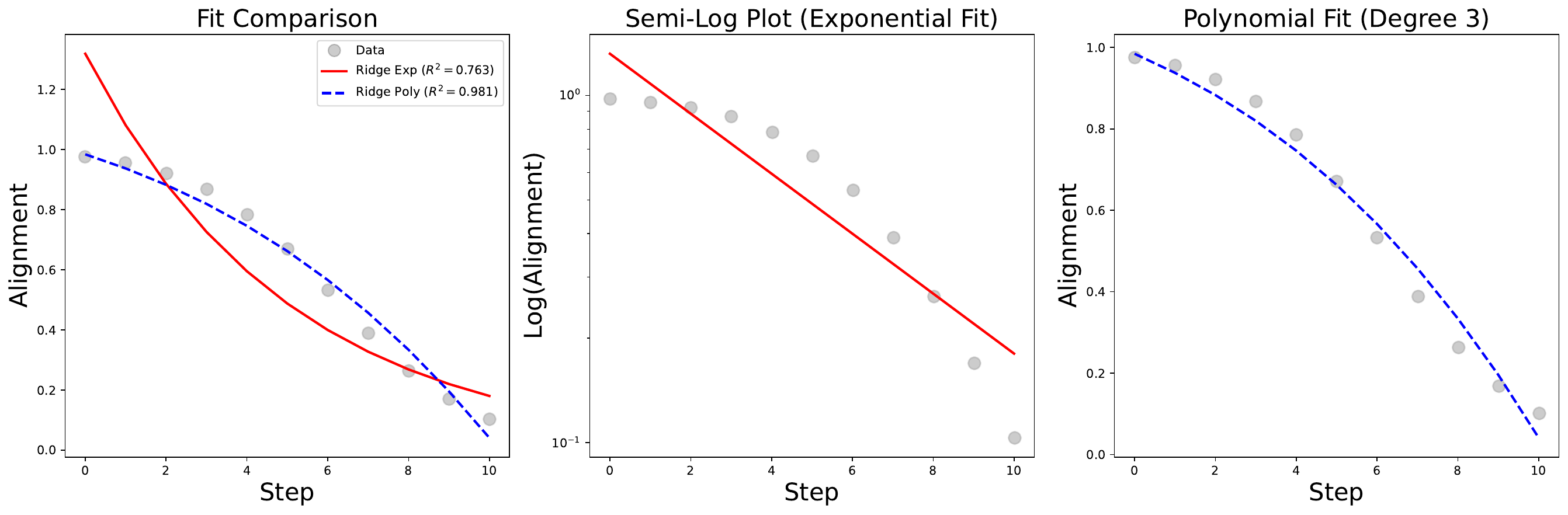}} \\
    \subfigure[m=20]{\includegraphics[width=0.9\linewidth, height=3.65cm, keepaspectratio]{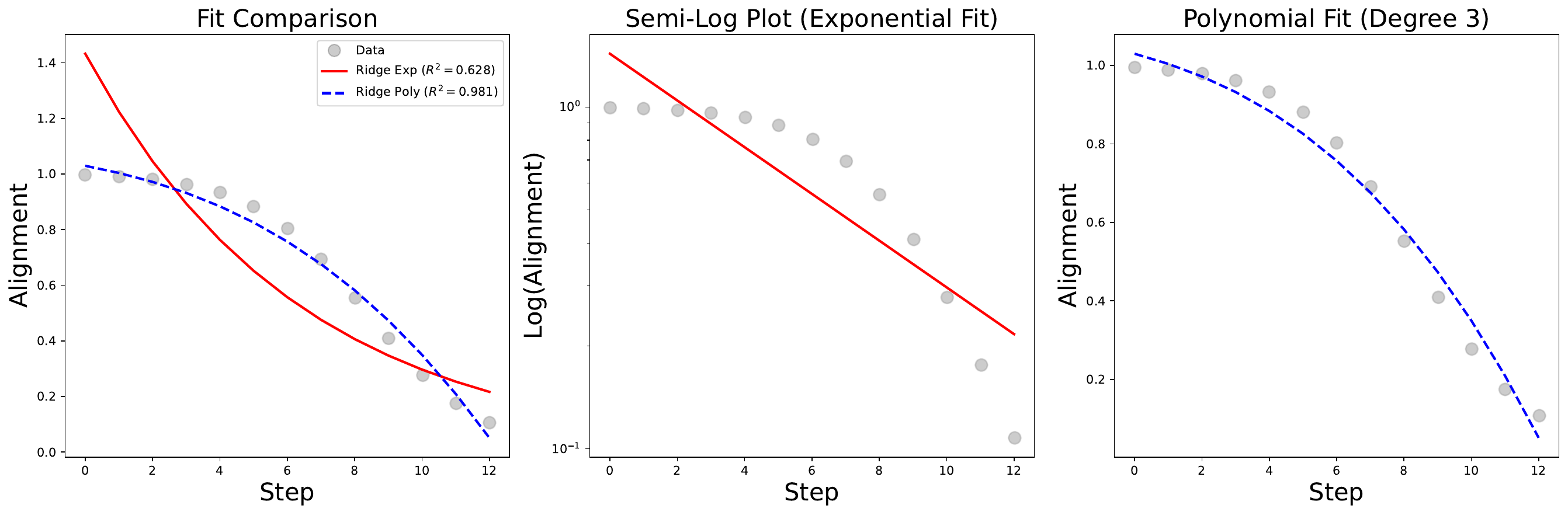}} \\
    \subfigure[m=50]{\includegraphics[width=0.9\linewidth, height=3.65cm, keepaspectratio]{ALT_final/42/phaseI_m_5_seed_42.pdf}}
    
    \caption{The decay rate of phase I when seed=42 for various $m=\frac{\lambda_k}{\lambda_{k+1}}$ (Part 1). \label{decay, seed=42}}
\end{figure}
\clearpage

\begin{figure}[p]
    \ContinuedFloat
    \centering
    \subfigure[m=100]{\includegraphics[width=0.9\linewidth, height=3.65cm, keepaspectratio]{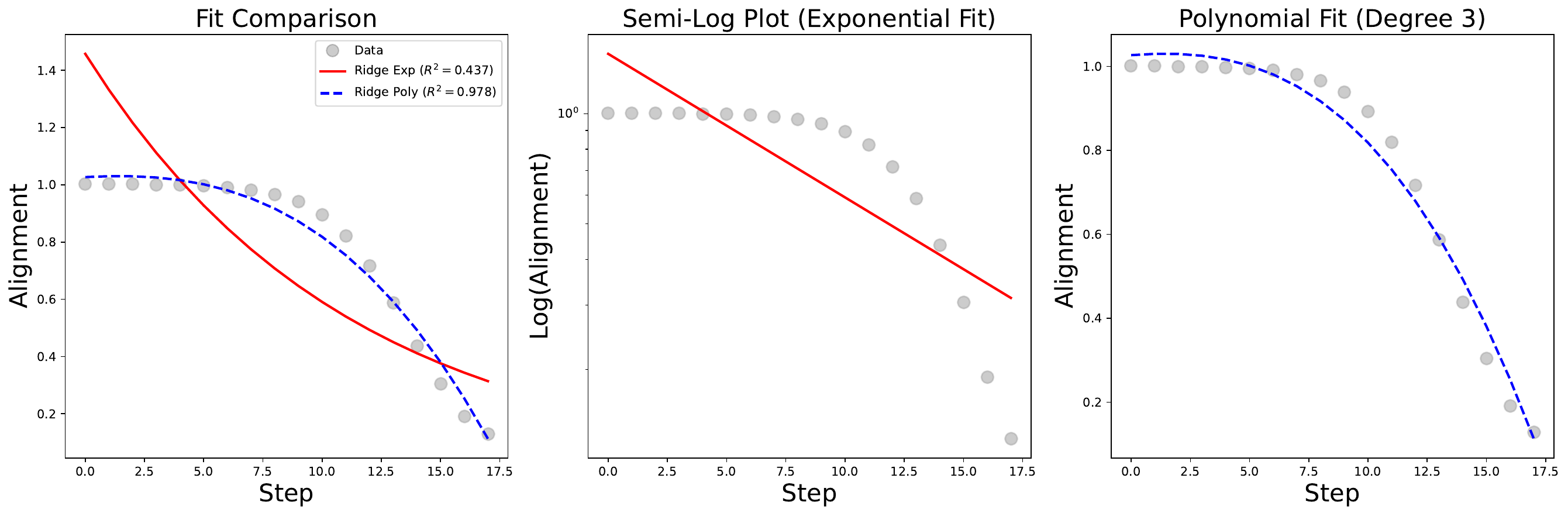}} \\
    \subfigure[m=200]{\includegraphics[width=0.9\linewidth, height=3.65cm, keepaspectratio]{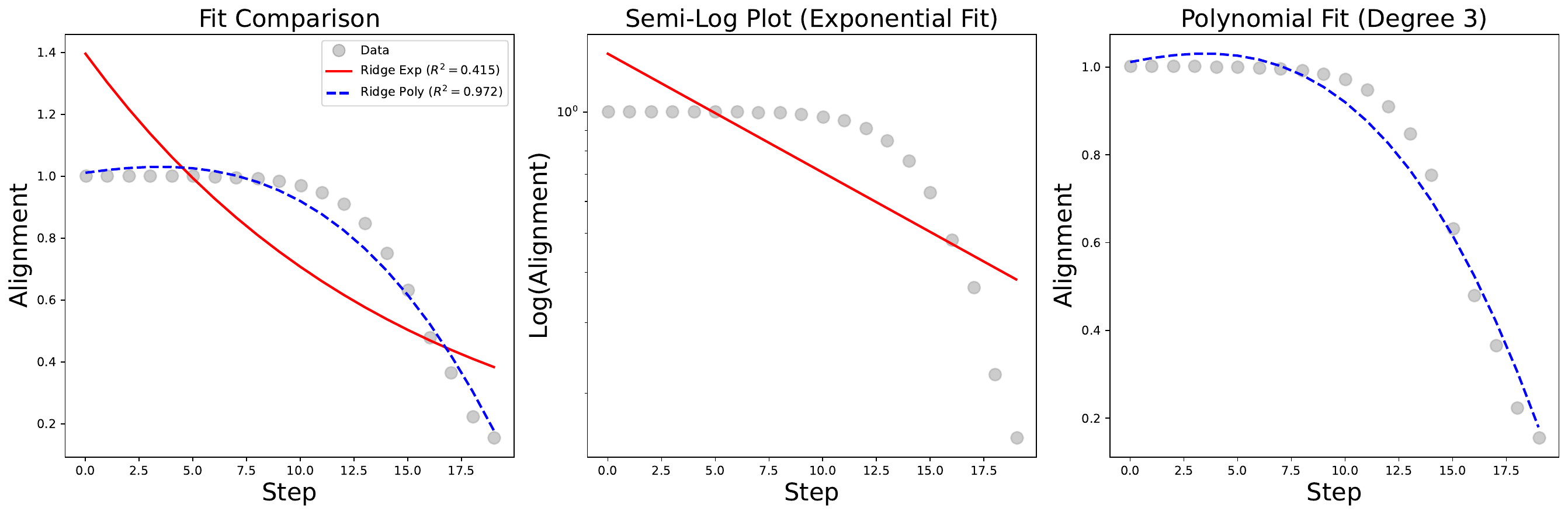}} \\
    \subfigure[m=300]{\includegraphics[width=0.9\linewidth, height=3.65cm, keepaspectratio]{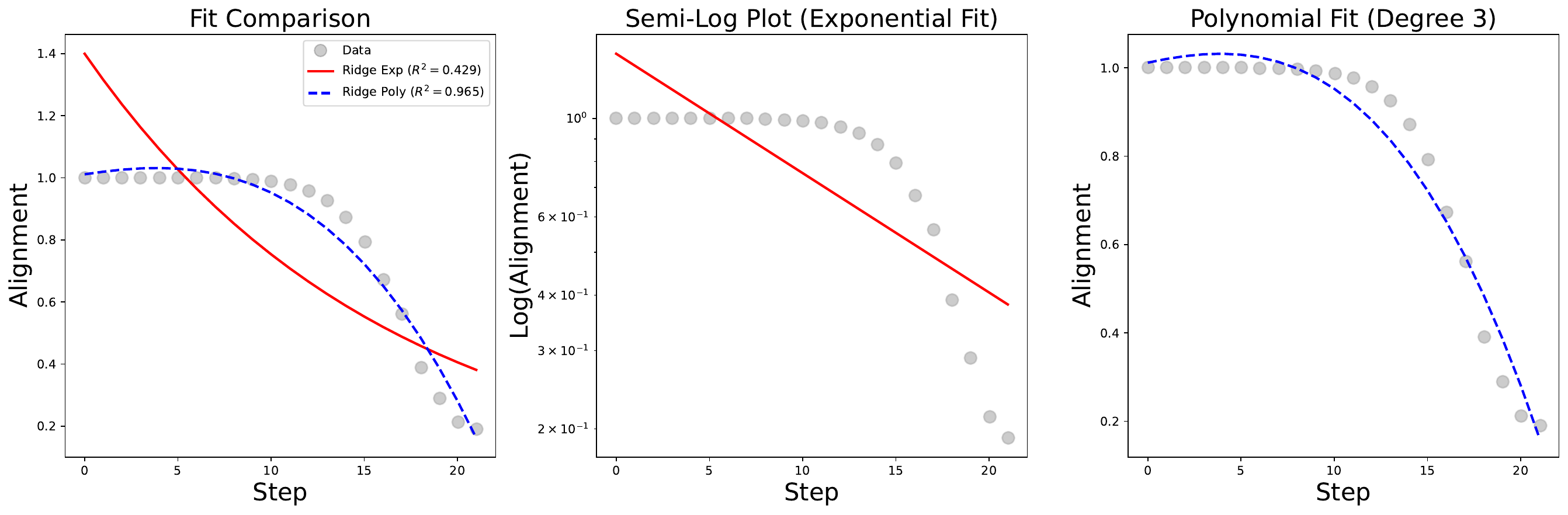}} \\
    \subfigure[m=400]{\includegraphics[width=0.9\linewidth, height=3.65cm, keepaspectratio]{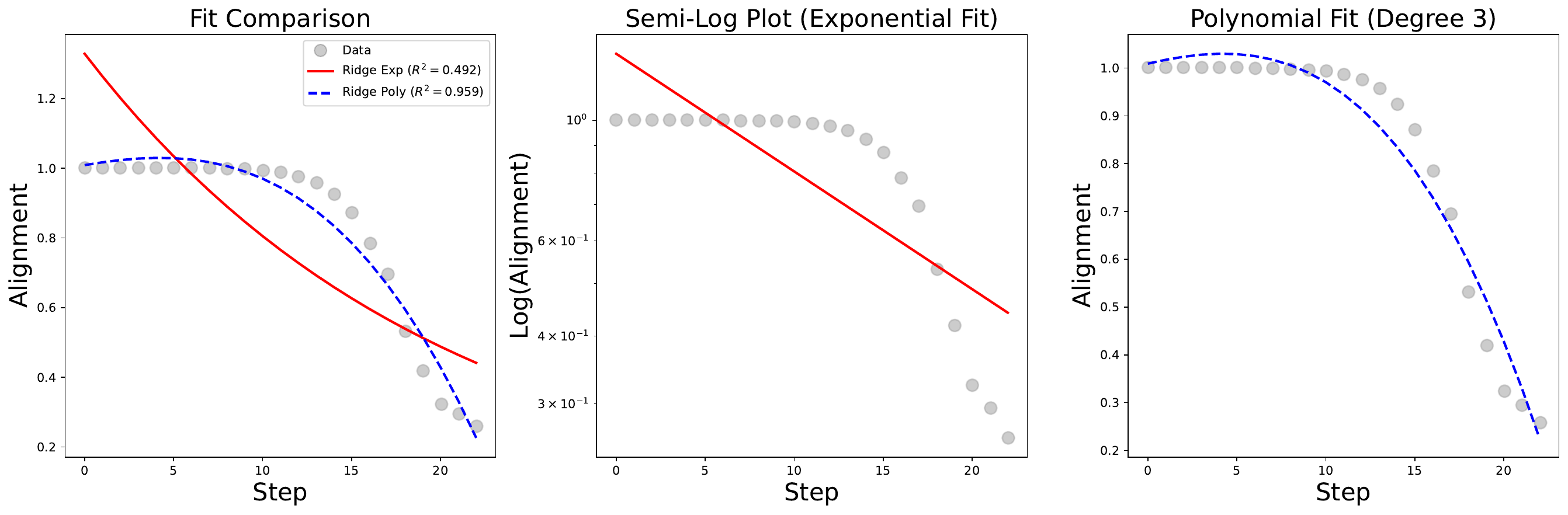}}
    
    \caption{The decay rate of phase I when seed=42 for various $m=\frac{\lambda_k}{\lambda_{k+1}}$ (Part 2). \label{decay, seed=42，2}}
\end{figure}
\clearpage

\begin{figure}[t!]
    \ContinuedFloat
    \centering
    \subfigure[m=500]{\includegraphics[width=0.9\linewidth, height=3.65cm, keepaspectratio]{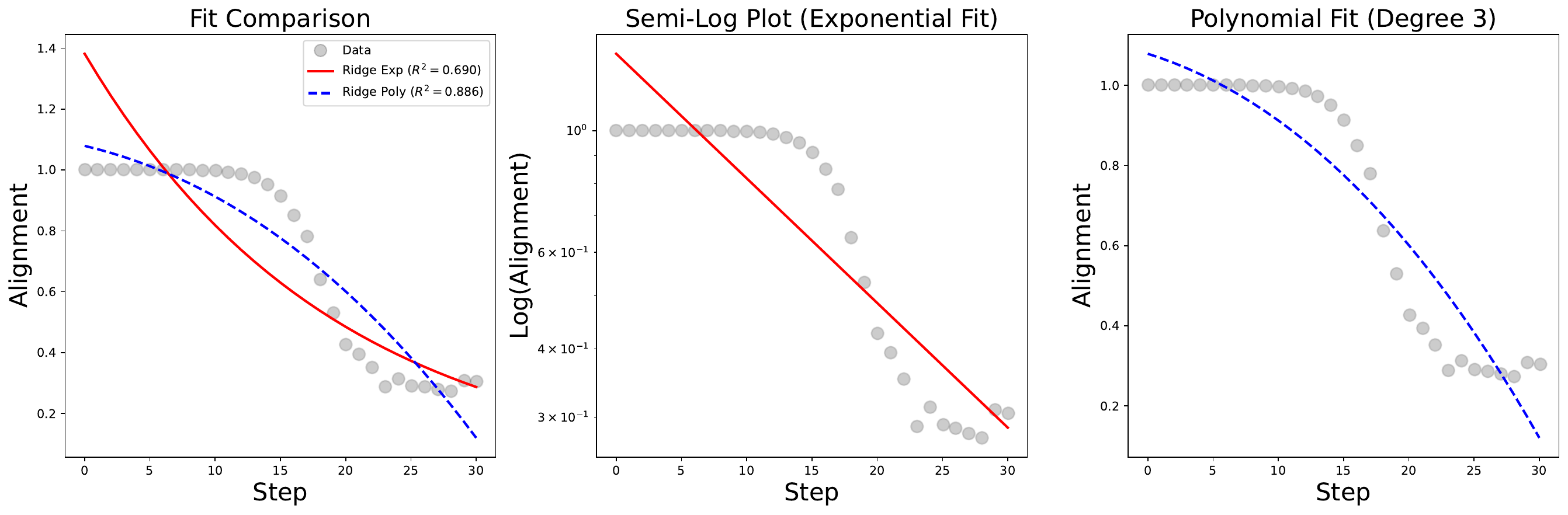}}
    \caption{The decay rate of phase I when seed=42 for various $m=\frac{\lambda_k}{\lambda_{k+1}}$ (Part 3).}
\end{figure}

\begin{figure}[h!]
    \centering
    \subfigure[m=5]{\includegraphics[width=0.9\linewidth, height=3.65cm, keepaspectratio]{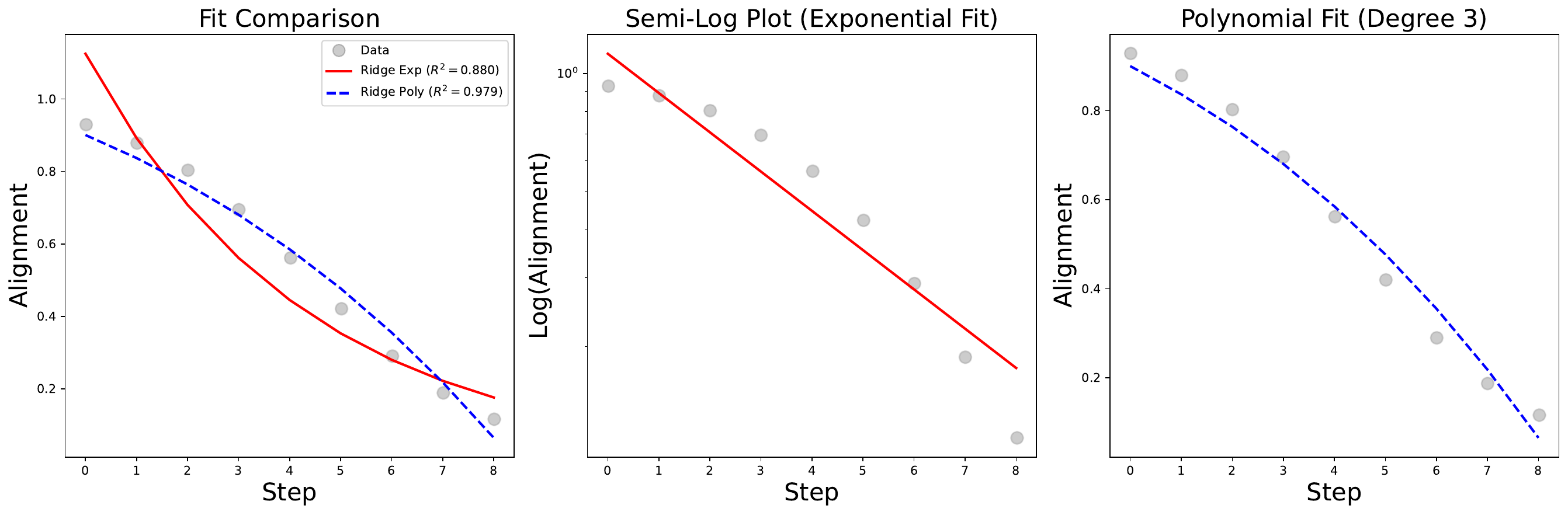}} \\
    \subfigure[m=10]{\includegraphics[width=0.9\linewidth, height=3.65cm, keepaspectratio]{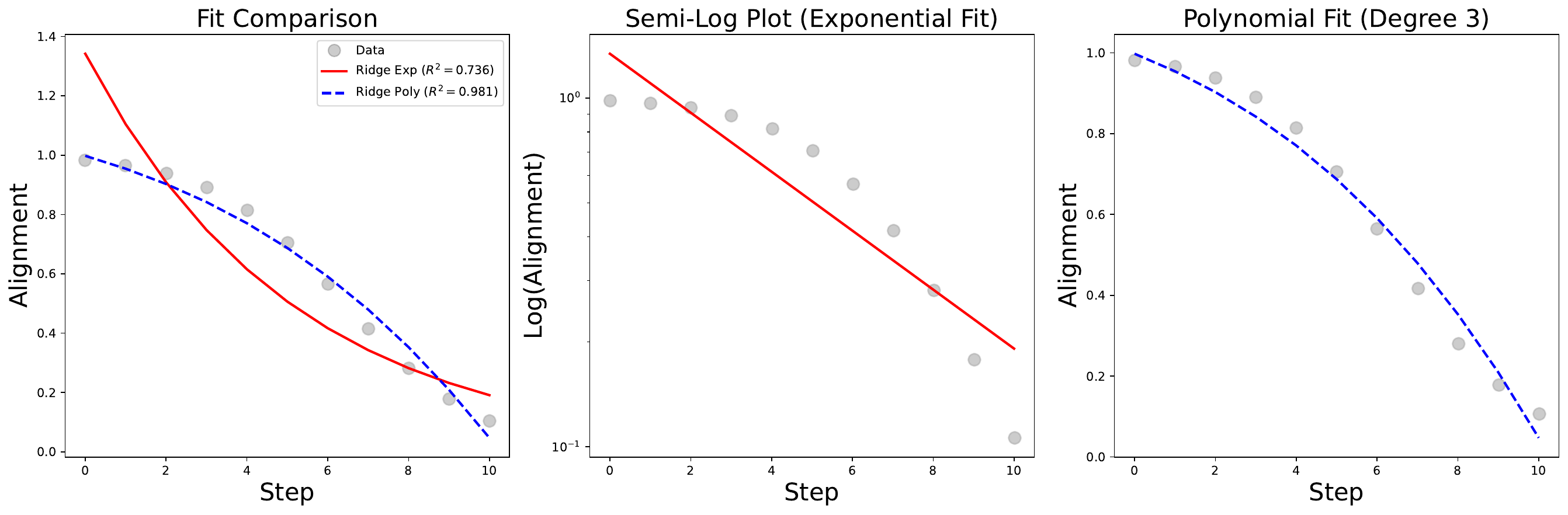}} \\
    \subfigure[m=20]{\includegraphics[width=0.9\linewidth, height=3.65cm, keepaspectratio]{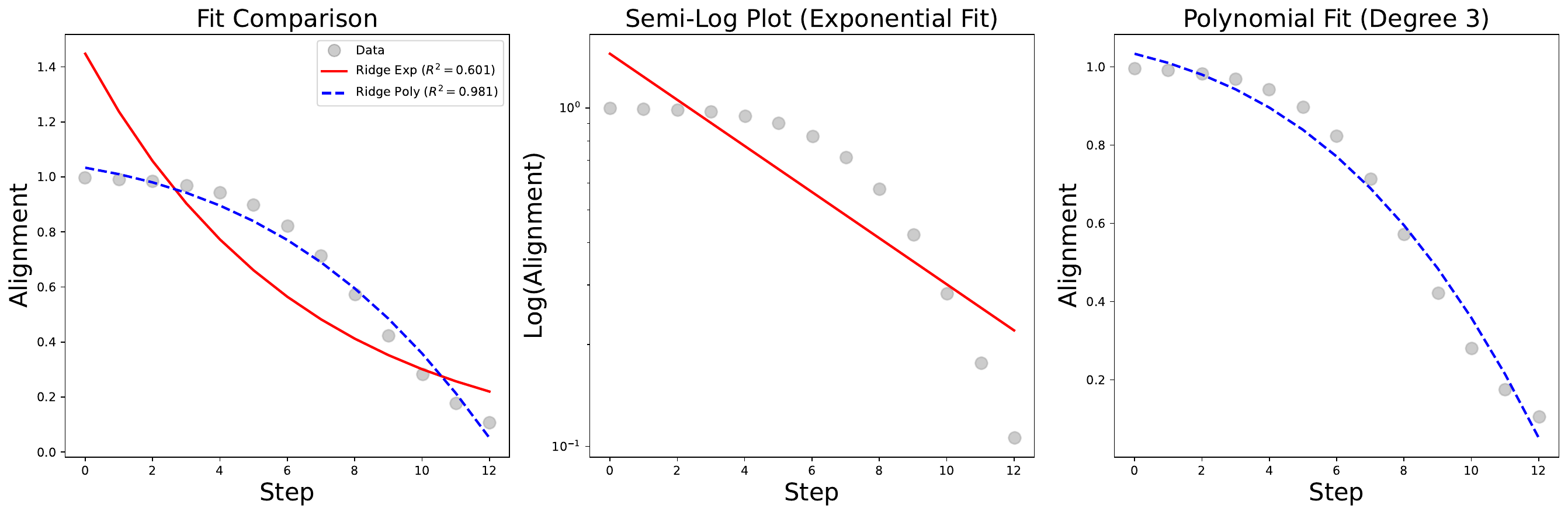}}
    
    \caption{The decay rate of phase I when seed=87 for various $m=\frac{\lambda_k}{\lambda_{k+1}}$ (Part 1)}
    \label{fig:decay_87}
\end{figure}
\clearpage

\begin{figure}[p]
    \ContinuedFloat
    \centering
    \subfigure[m=50]{\includegraphics[width=0.9\linewidth, height=3.65cm, keepaspectratio]{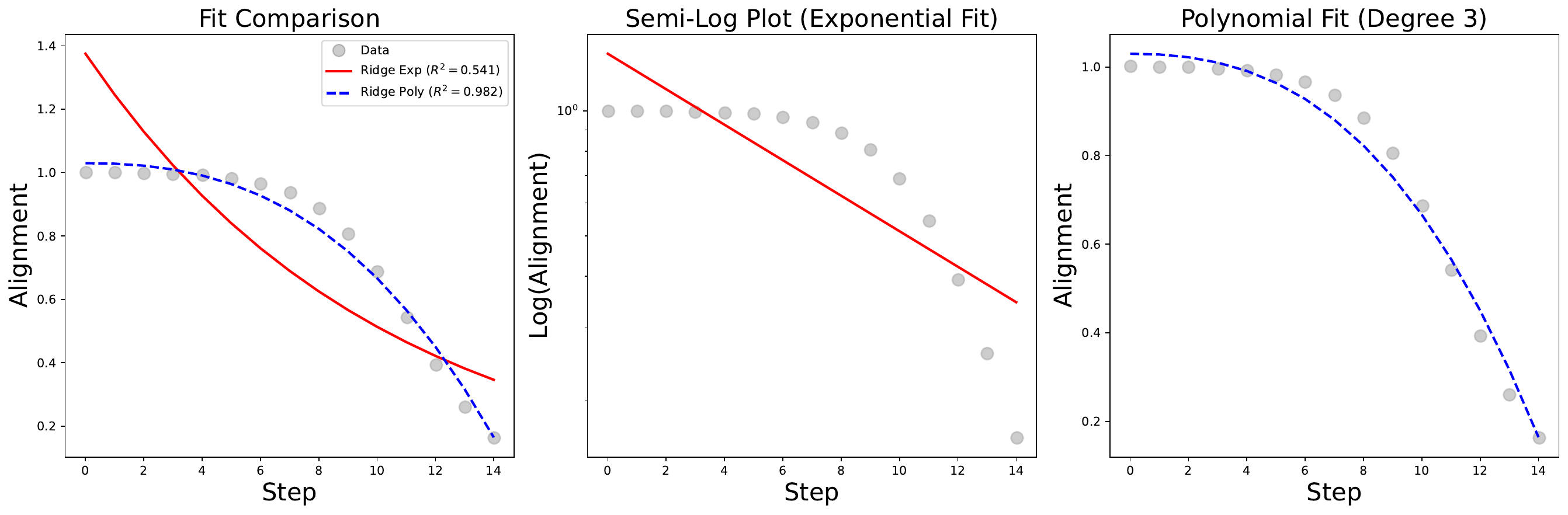}} \\
    \subfigure[m=100]{\includegraphics[width=0.9\linewidth, height=3.65cm, keepaspectratio]{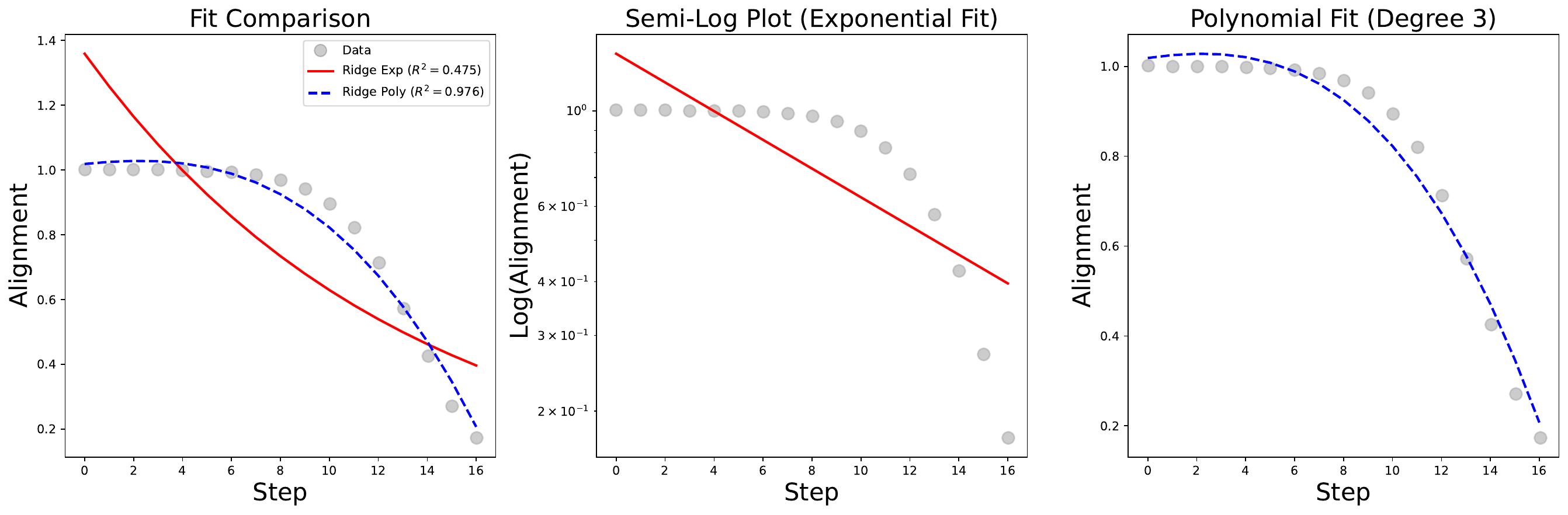}} \\
    \subfigure[m=200]{\includegraphics[width=0.9\linewidth, height=3.65cm, keepaspectratio]{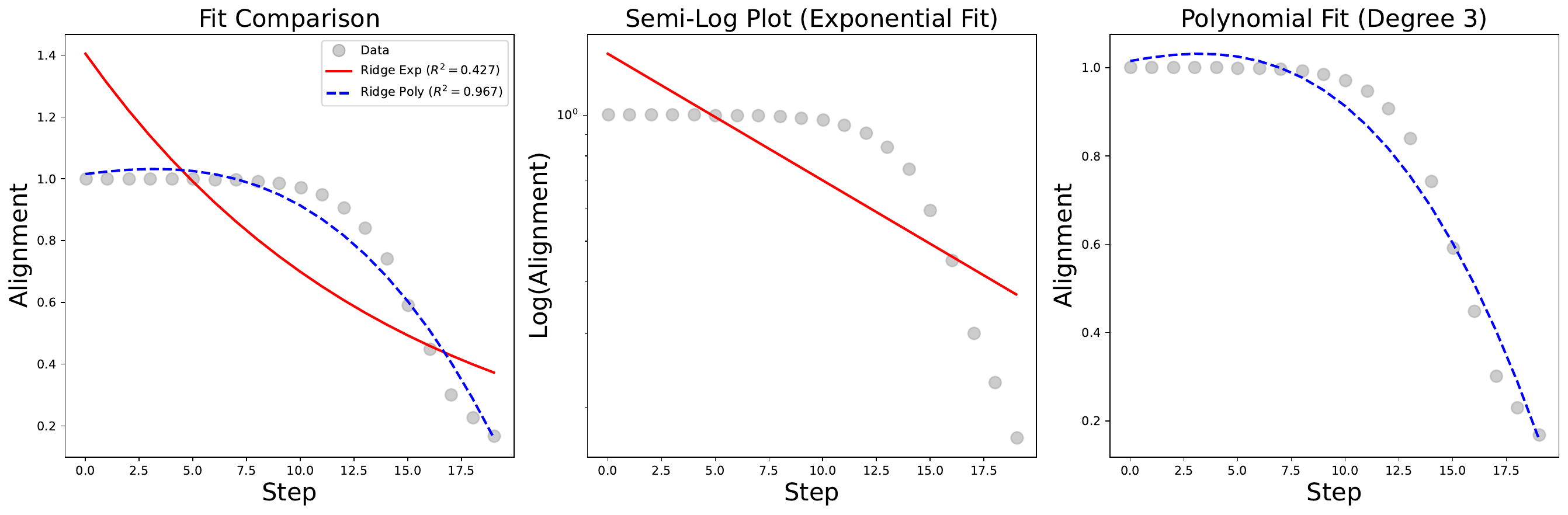}} \\
    \subfigure[m=300]{\includegraphics[width=0.9\linewidth, height=3.65cm, keepaspectratio]{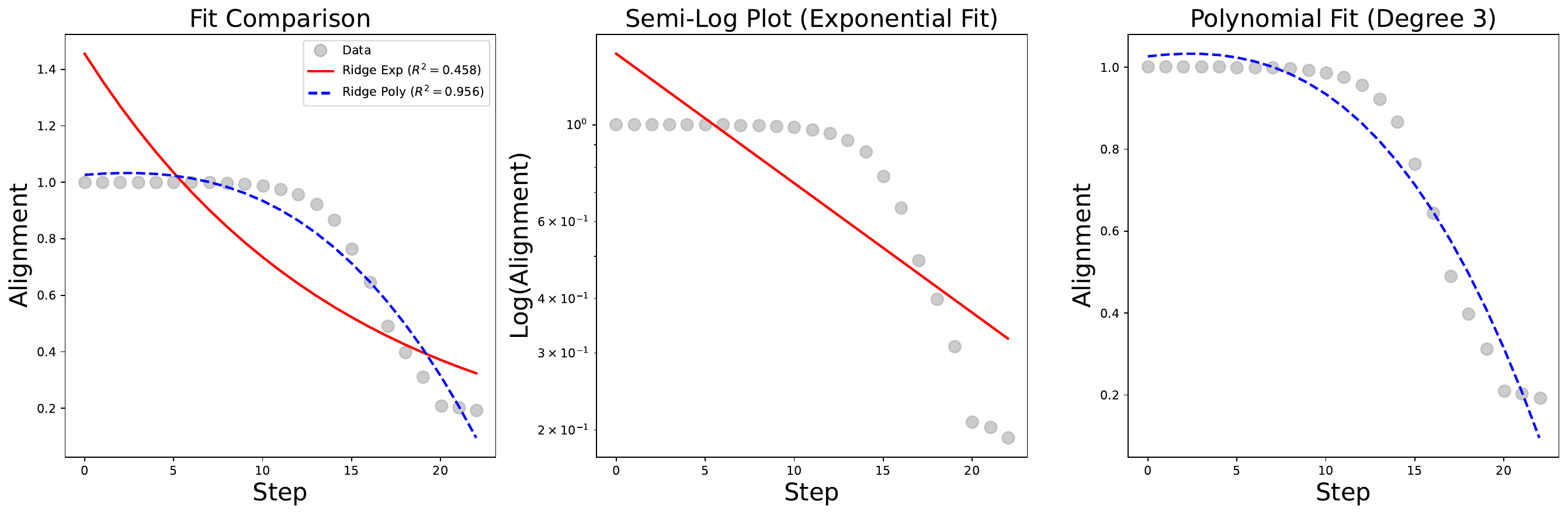}}
    
    \caption{The decay rate of phase I when seed=87 for various $m=\frac{\lambda_k}{\lambda_{k+1}}$ (Part 2)}
\end{figure}
\clearpage

\begin{figure}[t!]
    \ContinuedFloat
    \centering
    \subfigure[m=400]{\includegraphics[width=0.9\linewidth, height=3.65cm, keepaspectratio]{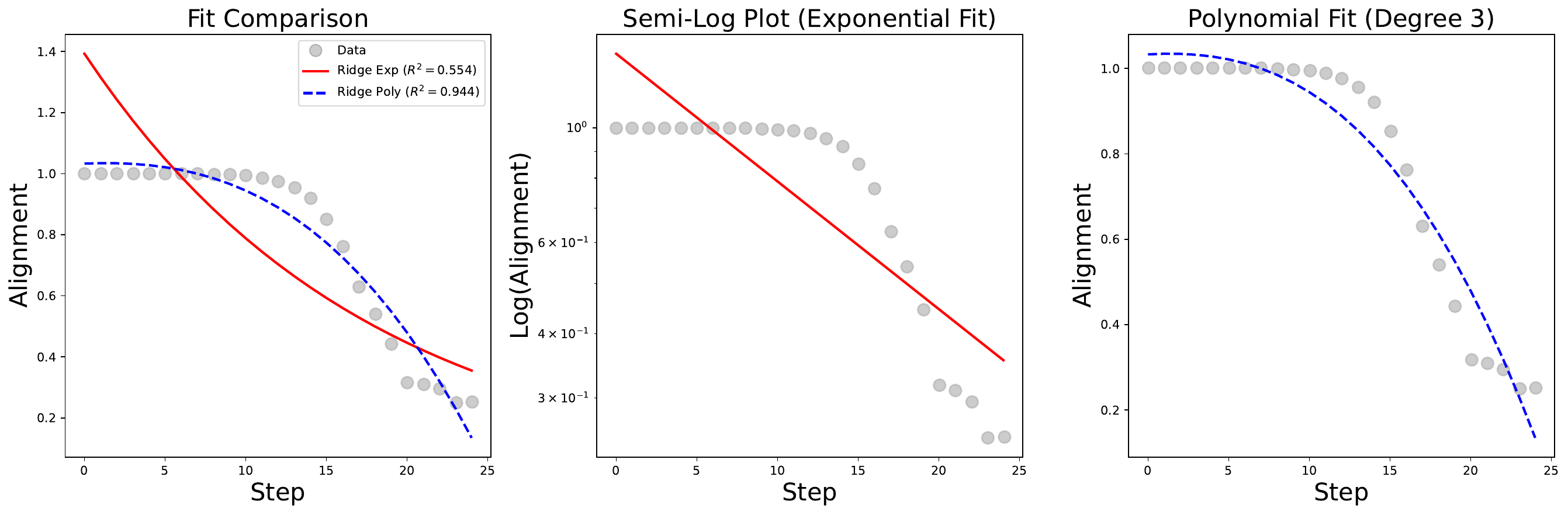}} \\
    \subfigure[m=500]{\includegraphics[width=0.9\linewidth, height=3.65cm, keepaspectratio]{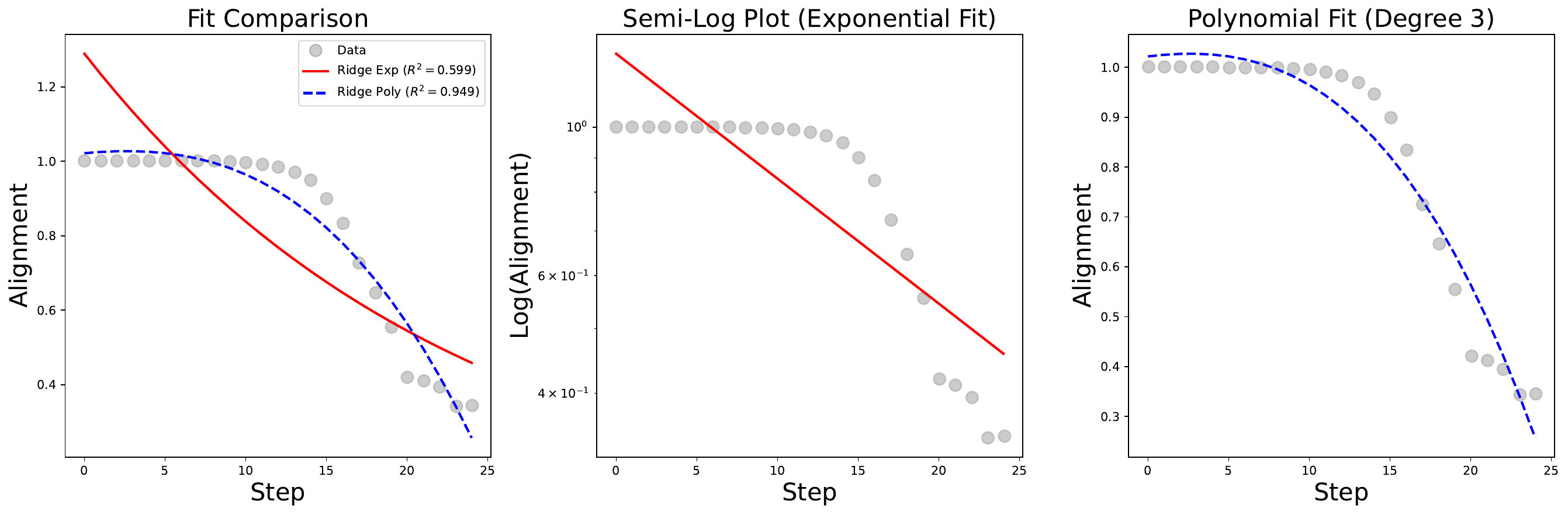}}
    \caption{The decay rate of phase I when seed=87 for various $m=\frac{\lambda_k}{\lambda_{k+1}}$ (Part 3)}
\end{figure}

\begin{figure}[h!]
    \centering
    \subfigure[m=5]{\includegraphics[width=0.9\linewidth, height=3.65cm, keepaspectratio]{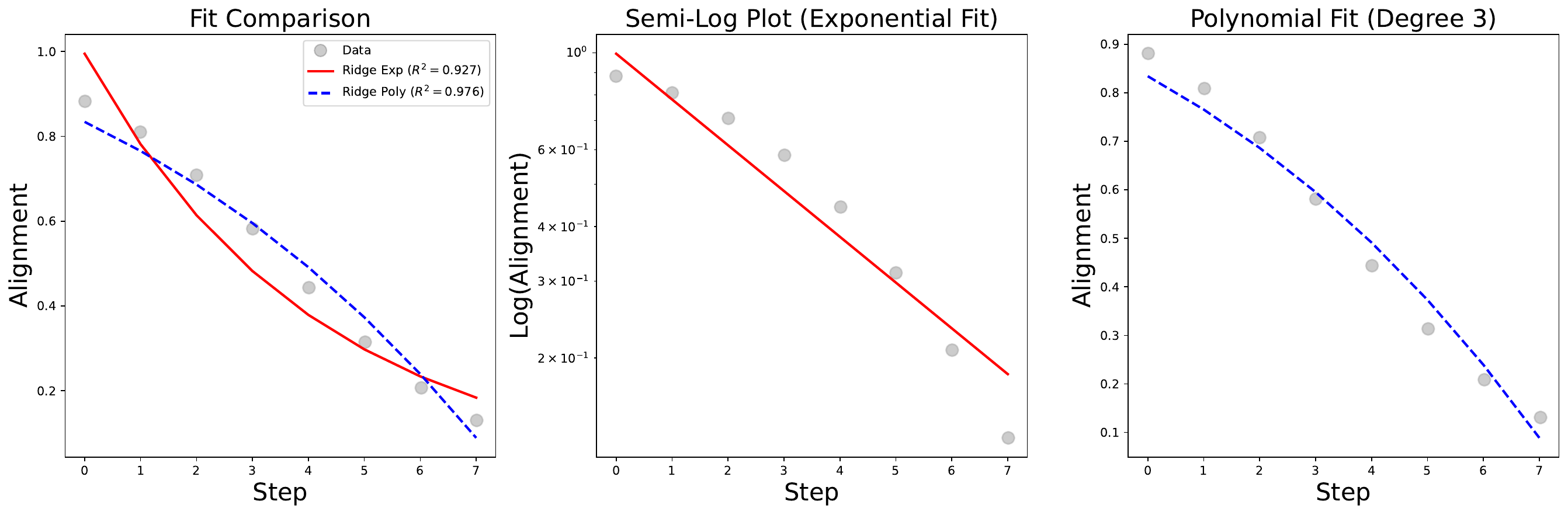}} \\
    \subfigure[m=10]{\includegraphics[width=0.9\linewidth, height=3.65cm, keepaspectratio]{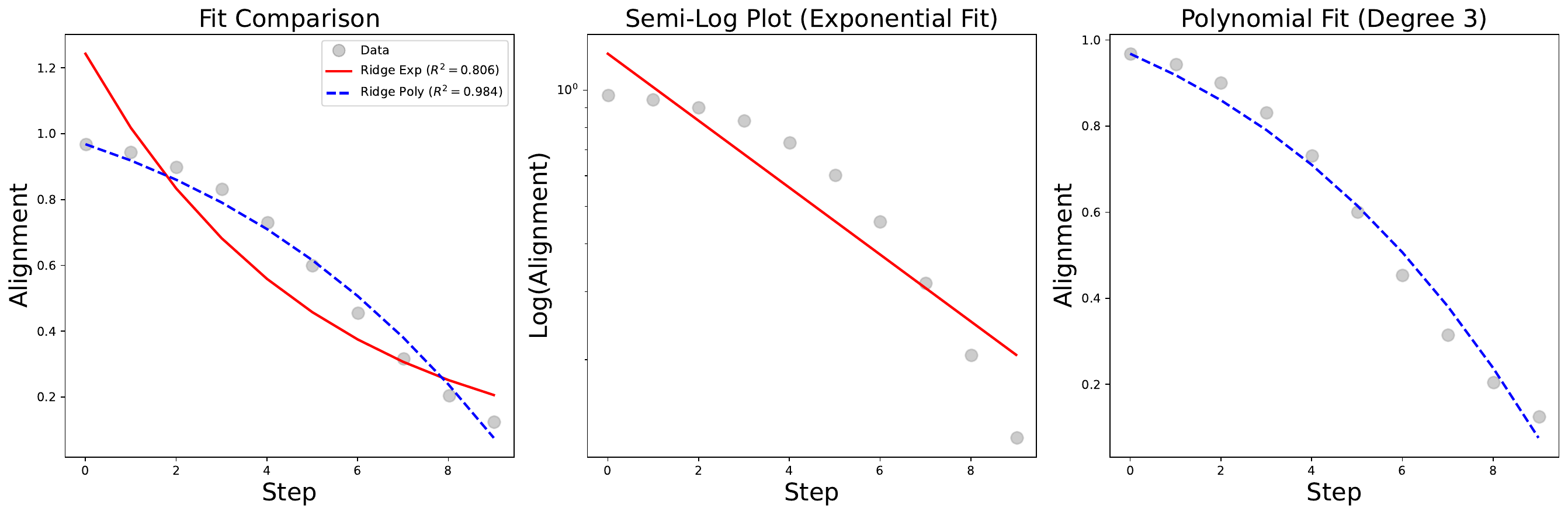}}
    
    \caption{The decay rate of phase I when seed=568 for various $m=\frac{\lambda_k}{\lambda_{k+1}}$ (Part 1)}
    \label{fig:decay_568}
\end{figure}
\clearpage

\begin{figure}[p]
    \ContinuedFloat
    \centering
    \subfigure[m=20]{\includegraphics[width=0.9\linewidth, height=3.65cm, keepaspectratio]{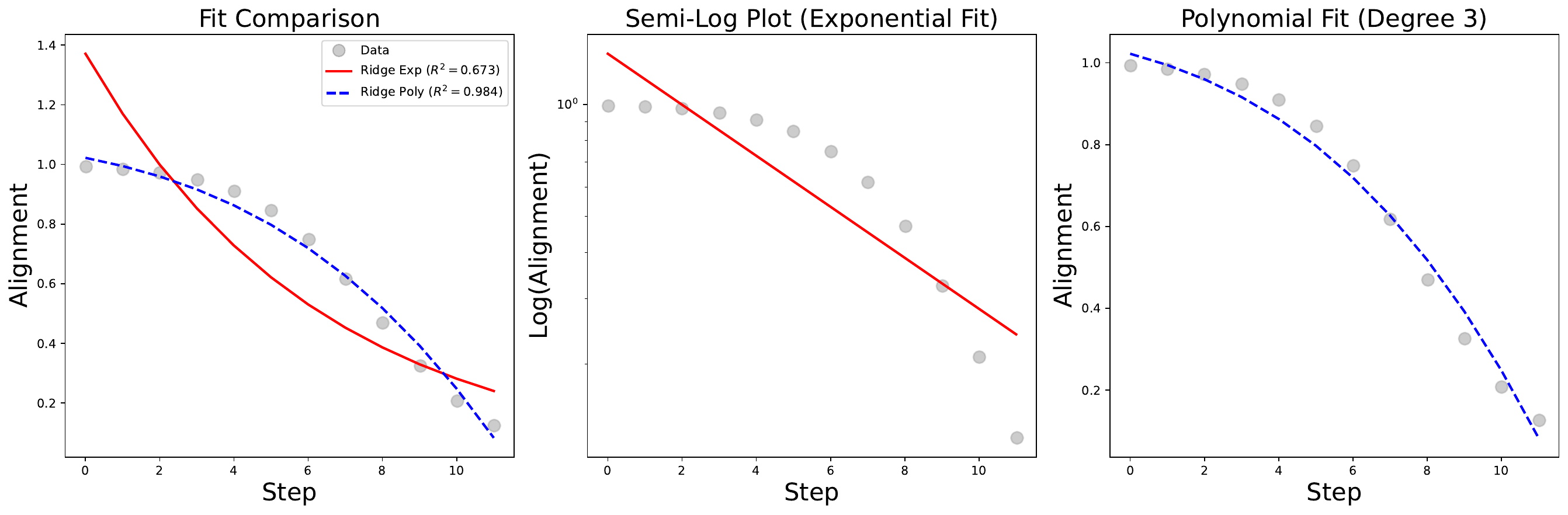}} \\
    \subfigure[m=50]{\includegraphics[width=0.9\linewidth, height=3.65cm, keepaspectratio]{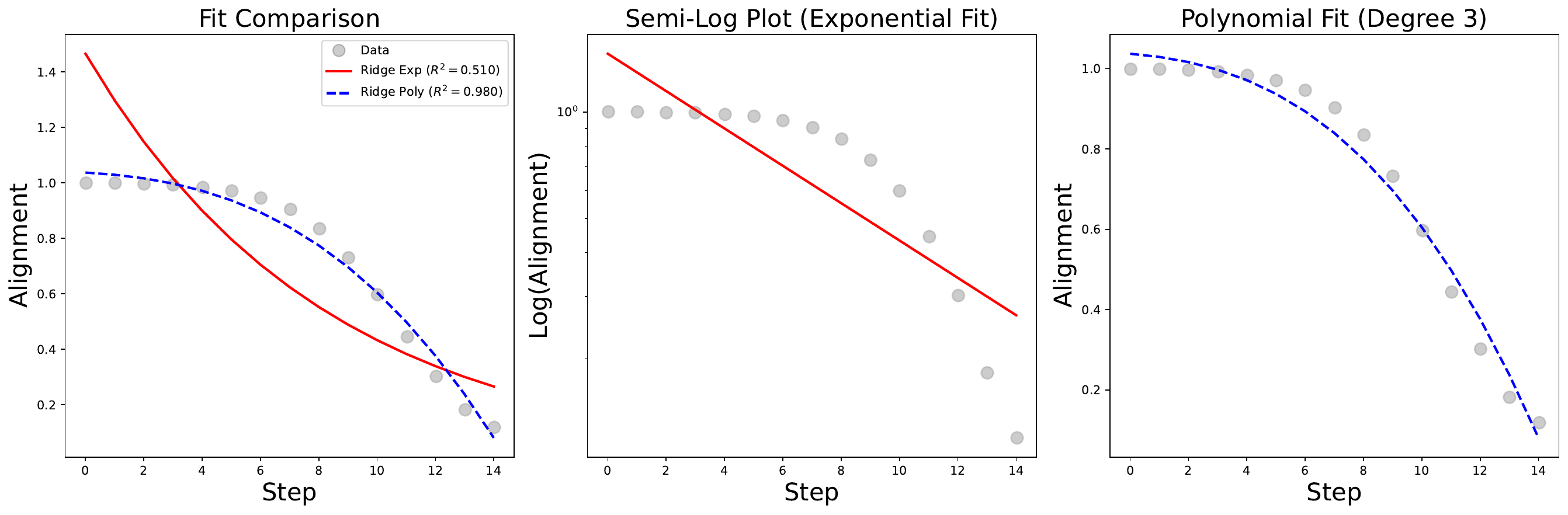}} \\
    \subfigure[m=100]{\includegraphics[width=0.9\linewidth, height=3.65cm, keepaspectratio]{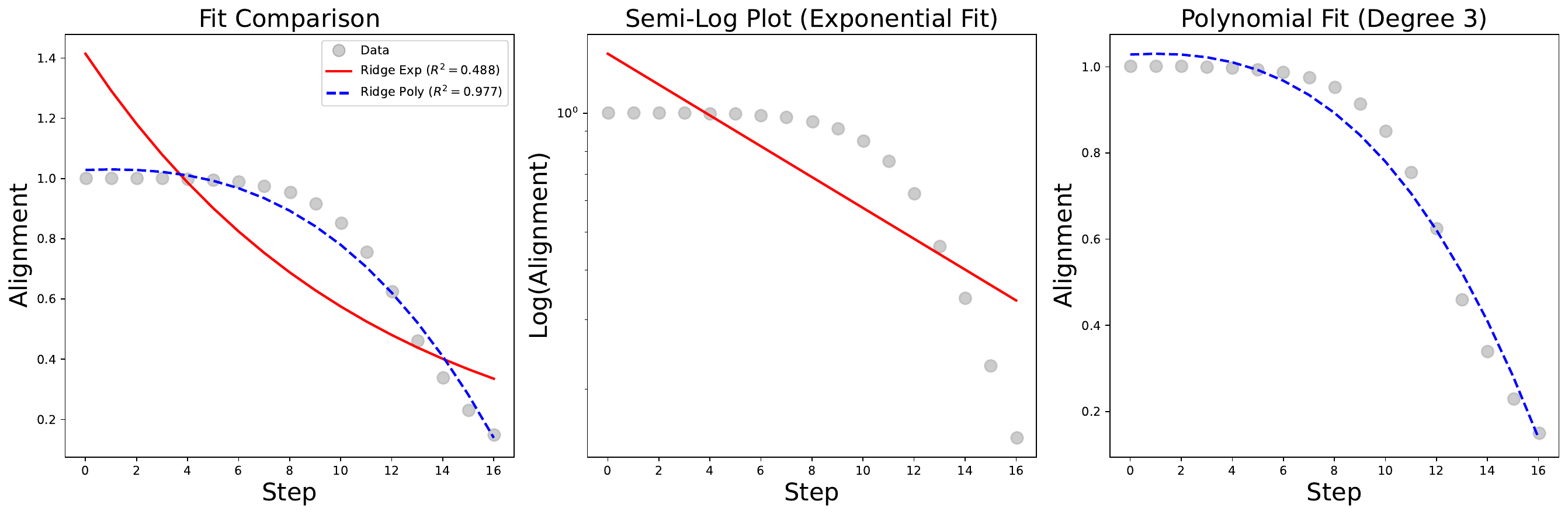}} \\
    \subfigure[m=200]{\includegraphics[width=0.9\linewidth, height=3.65cm, keepaspectratio]{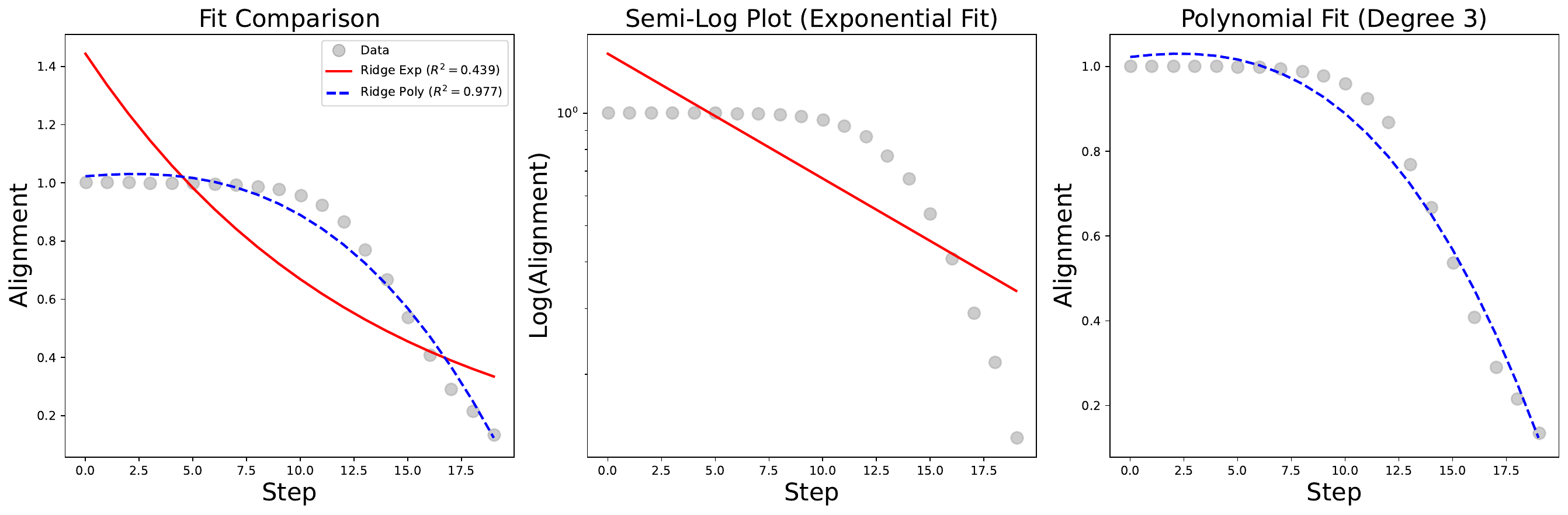}}
    
    \caption{The decay rate of phase I when seed=568 for various $m=\frac{\lambda_k}{\lambda_{k+1}}$ (Part 2)}
\end{figure}
\clearpage

\begin{figure}[t!]
    \ContinuedFloat
    \centering
    \subfigure[m=300]{\includegraphics[width=0.9\linewidth, height=3.65cm, keepaspectratio]{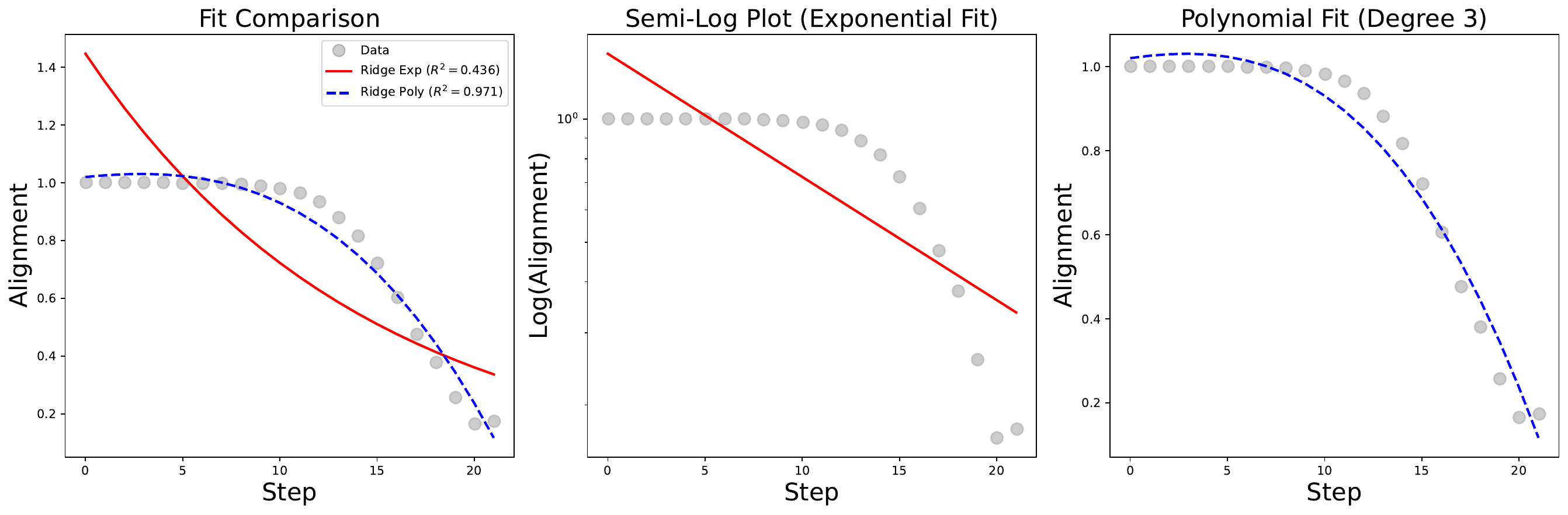}} \\
    \subfigure[m=400]{\includegraphics[width=0.9\linewidth, height=3.65cm, keepaspectratio]{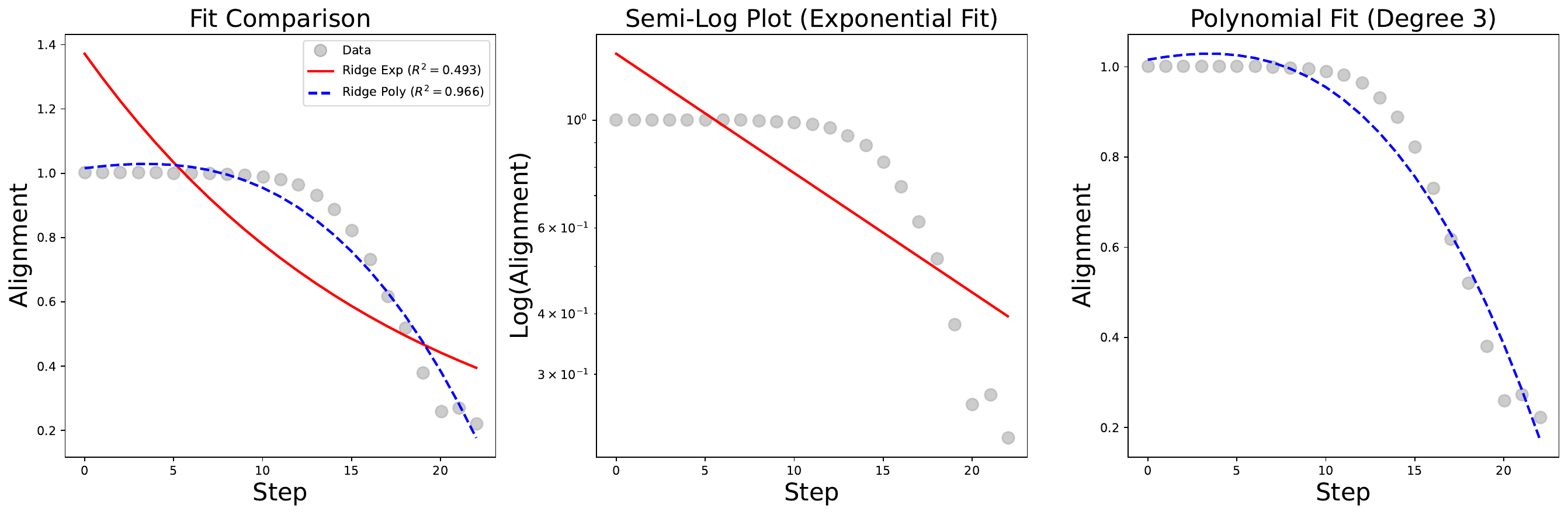}} \\
    \subfigure[m=500]{\includegraphics[width=0.9\linewidth, height=3.65cm, keepaspectratio]{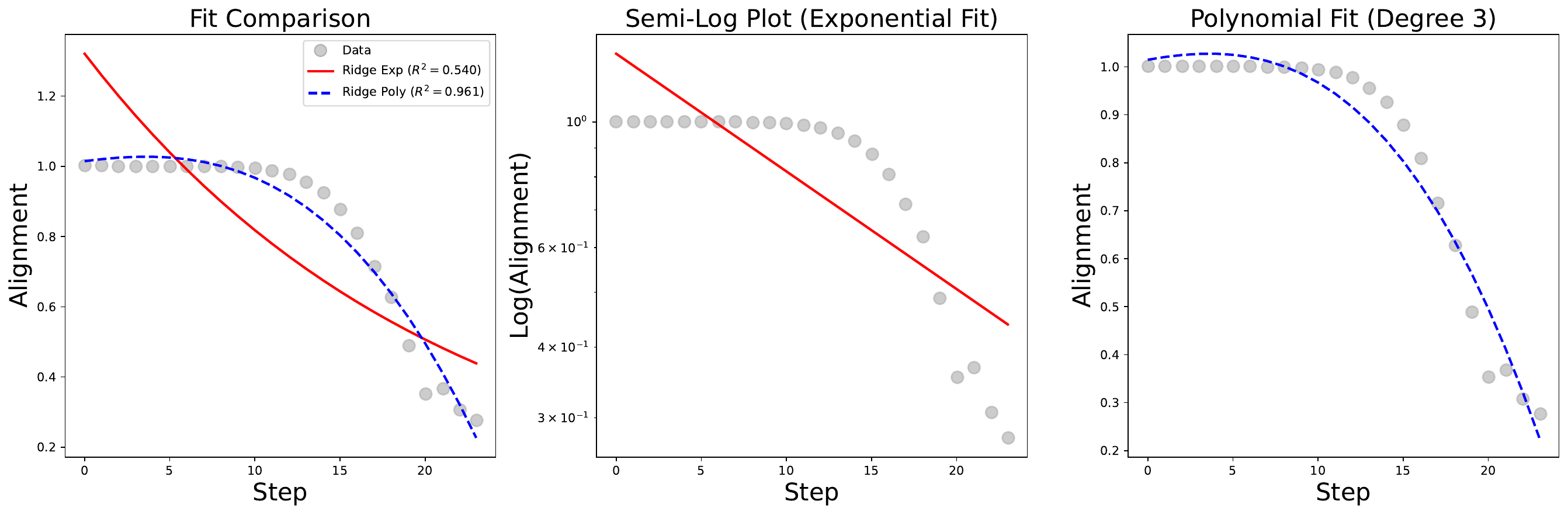}}
    \caption{The decay rate of phase I when seed=568 for various $m=\frac{\lambda_k}{\lambda_{k+1}}$ (Part 3)}
\end{figure}

\begin{figure}[h!]
    \centering
    \subfigure[m=5]{\includegraphics[width=0.9\linewidth, height=3.65cm, keepaspectratio]{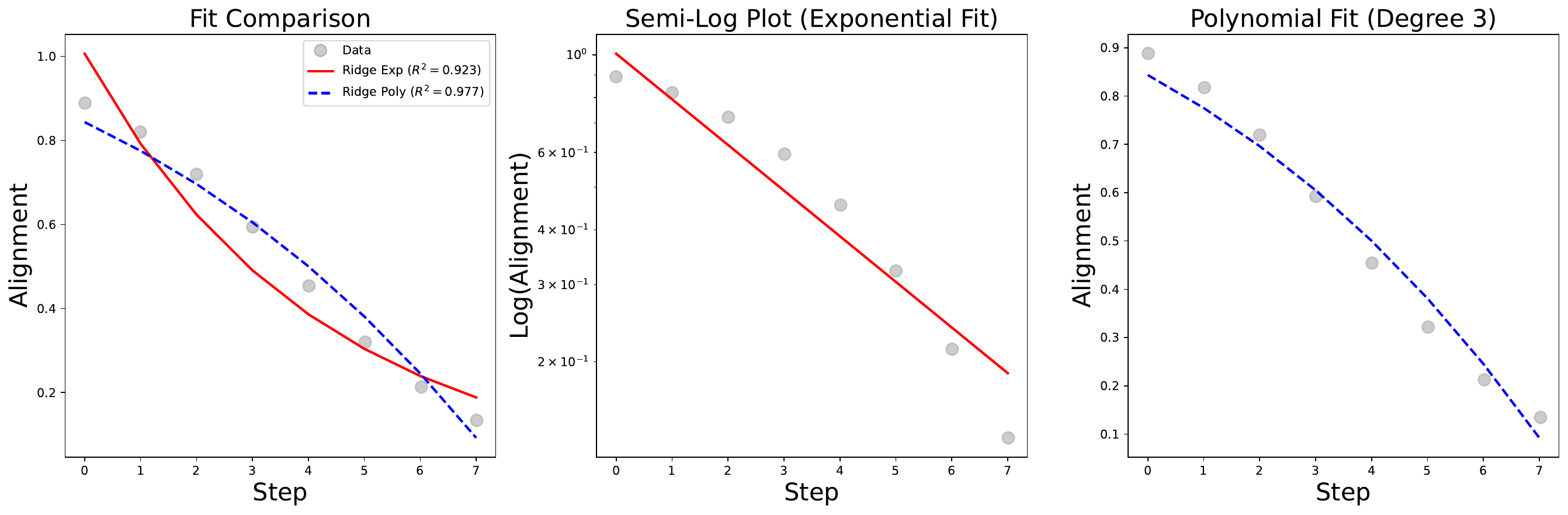}}
    
    \caption{The decay rate of phase I when seed=1101 for various $m=\frac{\lambda_k}{\lambda_{k+1}}$ (Part 1)}
    \label{fig:decay_1101}
\end{figure}
\clearpage

\begin{figure}[p]
    \ContinuedFloat
    \centering
    \subfigure[m=10]{\includegraphics[width=0.9\linewidth, height=3.65cm, keepaspectratio]{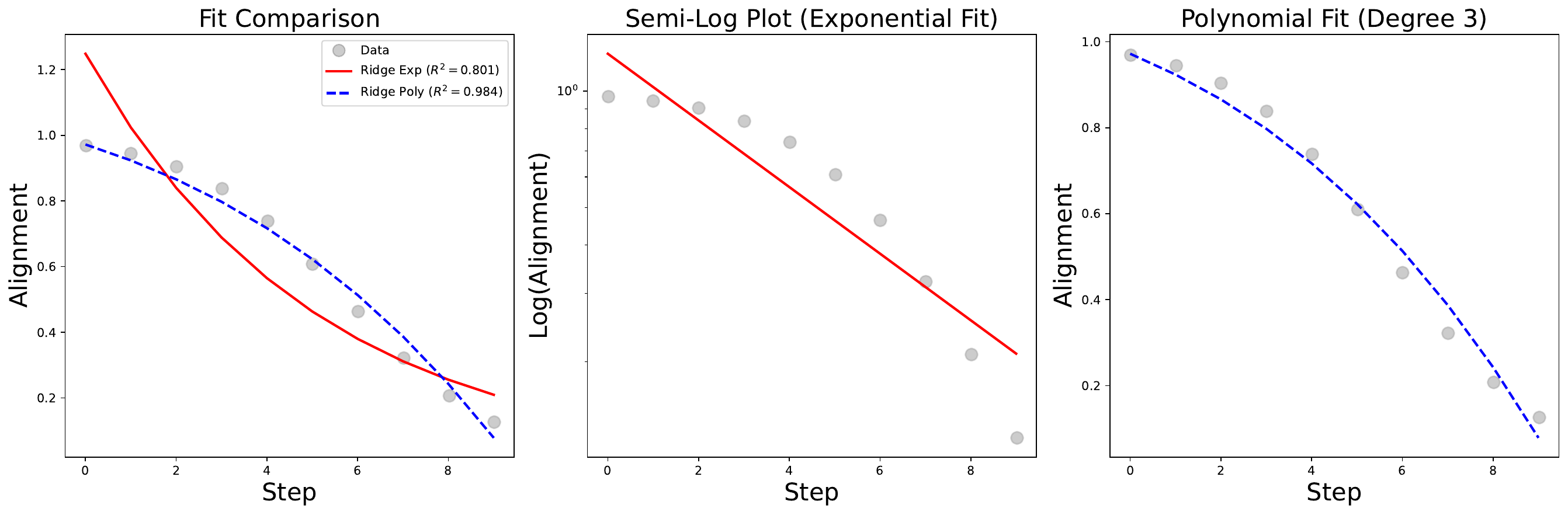}} \\
    \subfigure[m=20]{\includegraphics[width=0.9\linewidth, height=3.65cm, keepaspectratio]{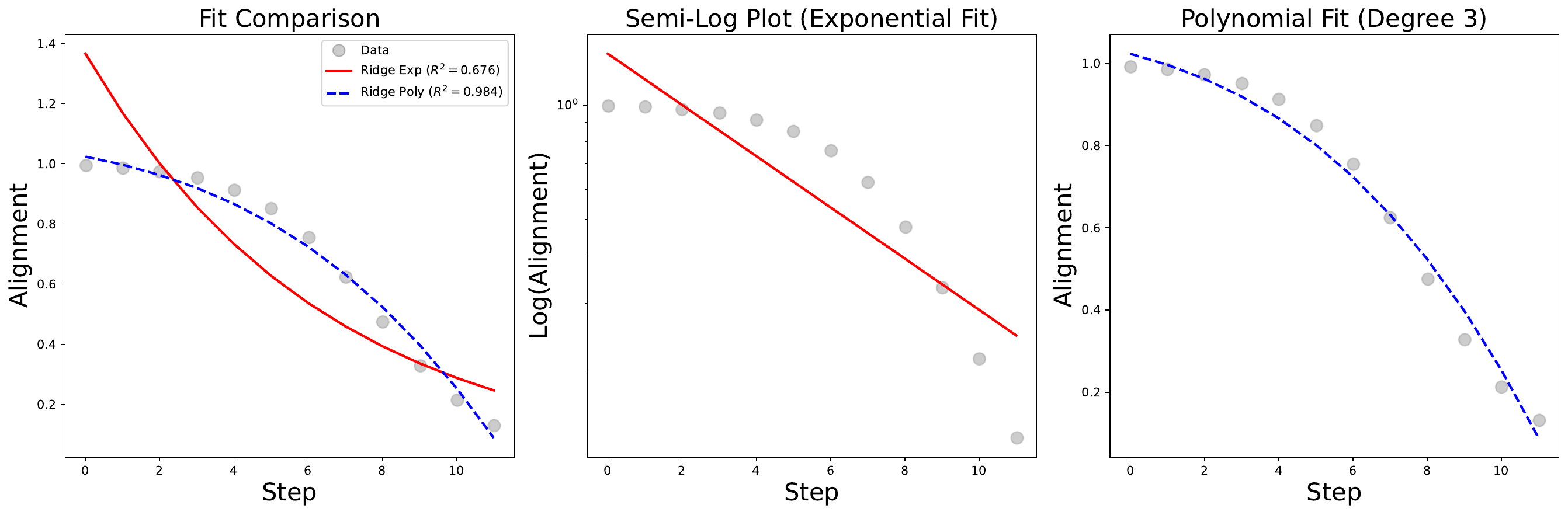}} \\
    \subfigure[m=50]{\includegraphics[width=0.9\linewidth, height=3.65cm, keepaspectratio]{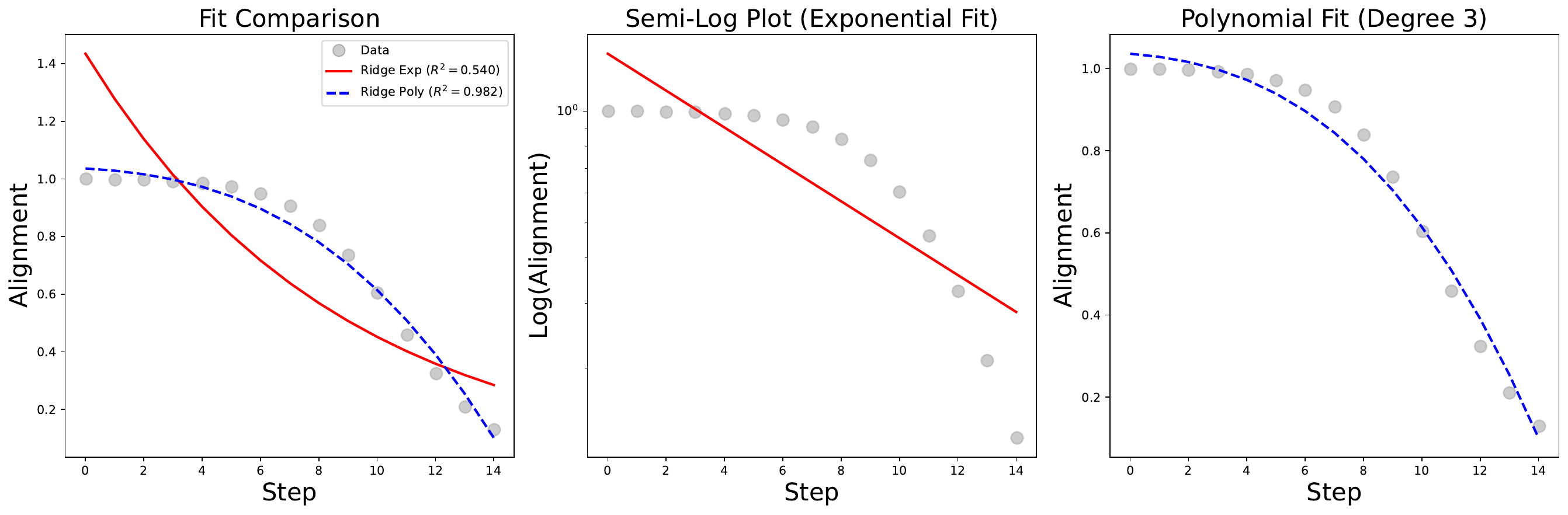}} \\
    \subfigure[m=100]{\includegraphics[width=0.9\linewidth, height=3.65cm, keepaspectratio]{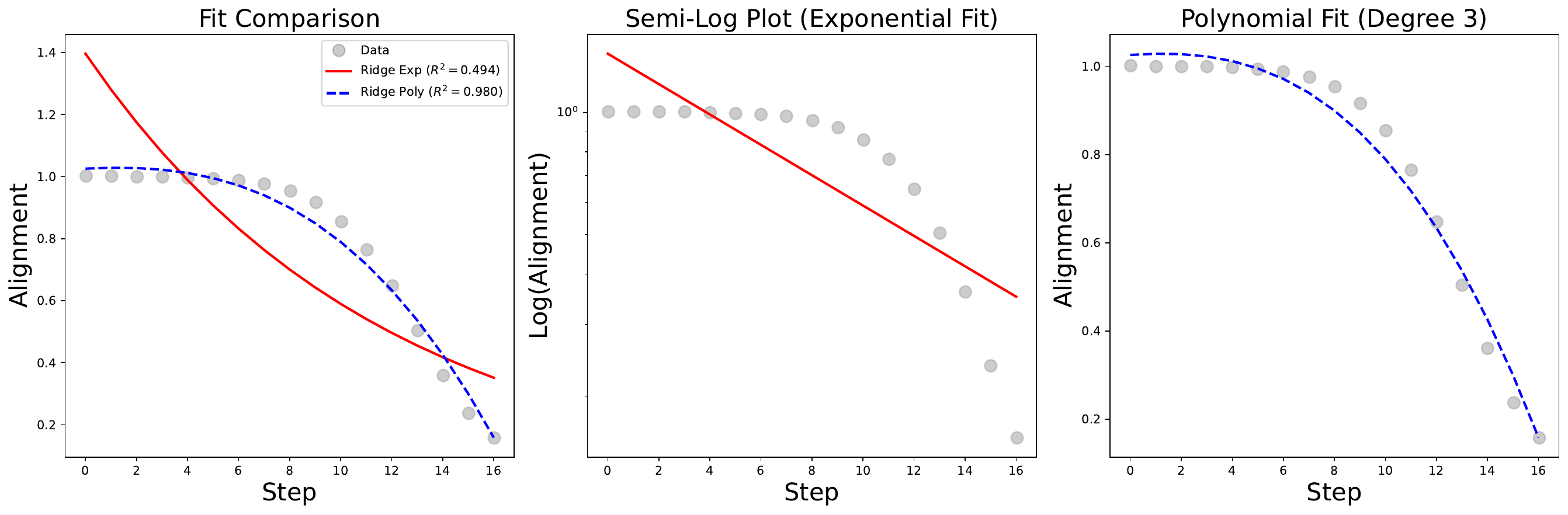}}
    
    \caption{The decay rate of phase I when seed=1101 for various $m=\frac{\lambda_k}{\lambda_{k+1}}$ (Part 2)}
\end{figure}
\clearpage

\begin{figure}[p]
    \ContinuedFloat
    \centering
    \subfigure[m=200]{\includegraphics[width=0.9\linewidth, height=3.65cm, keepaspectratio]{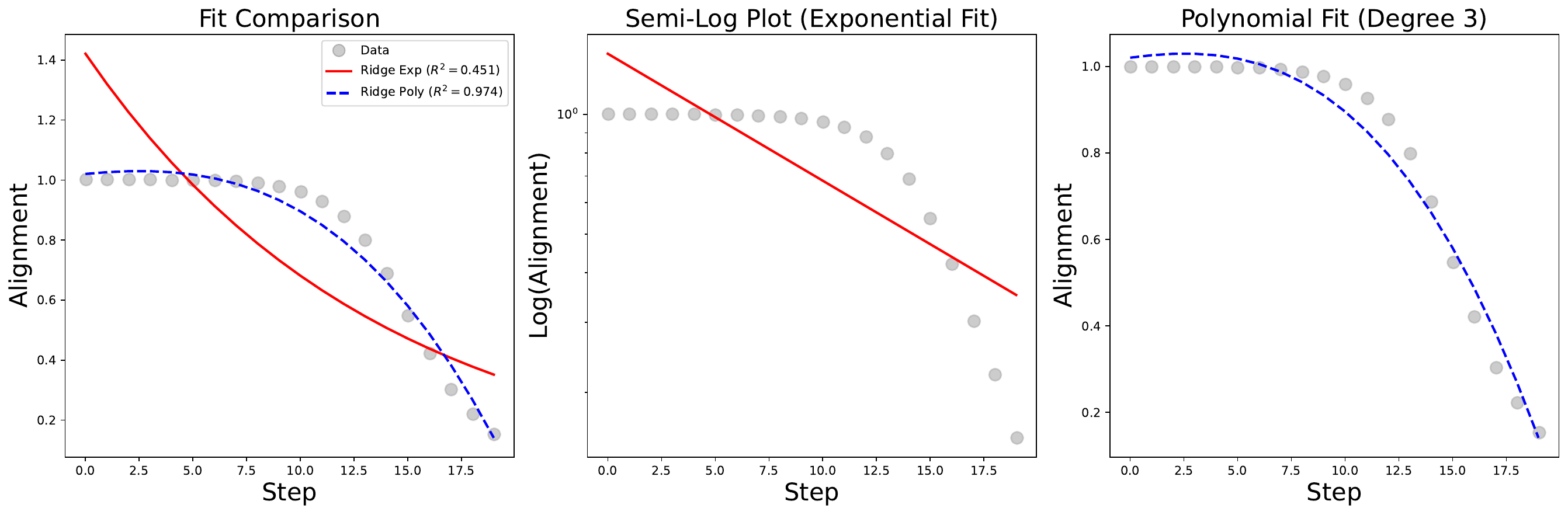}} \\
    \subfigure[m=300]{\includegraphics[width=0.9\linewidth, height=3.65cm, keepaspectratio]{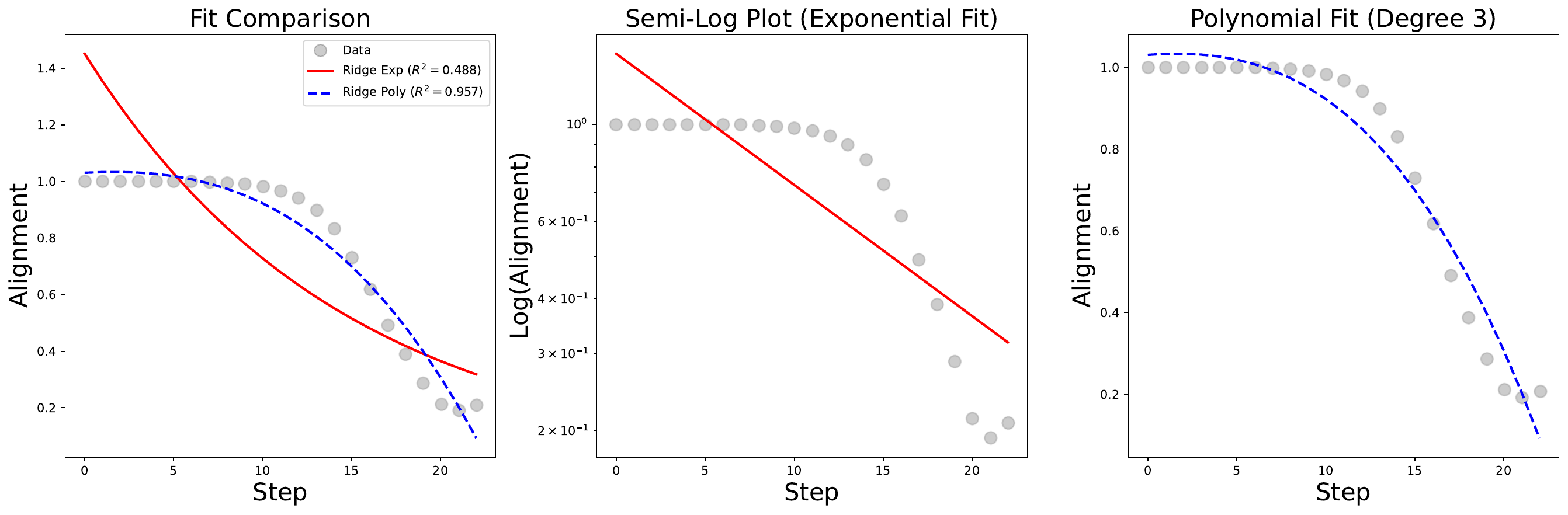}} \\
    \subfigure[m=400]{\includegraphics[width=0.9\linewidth, height=3.65cm, keepaspectratio]{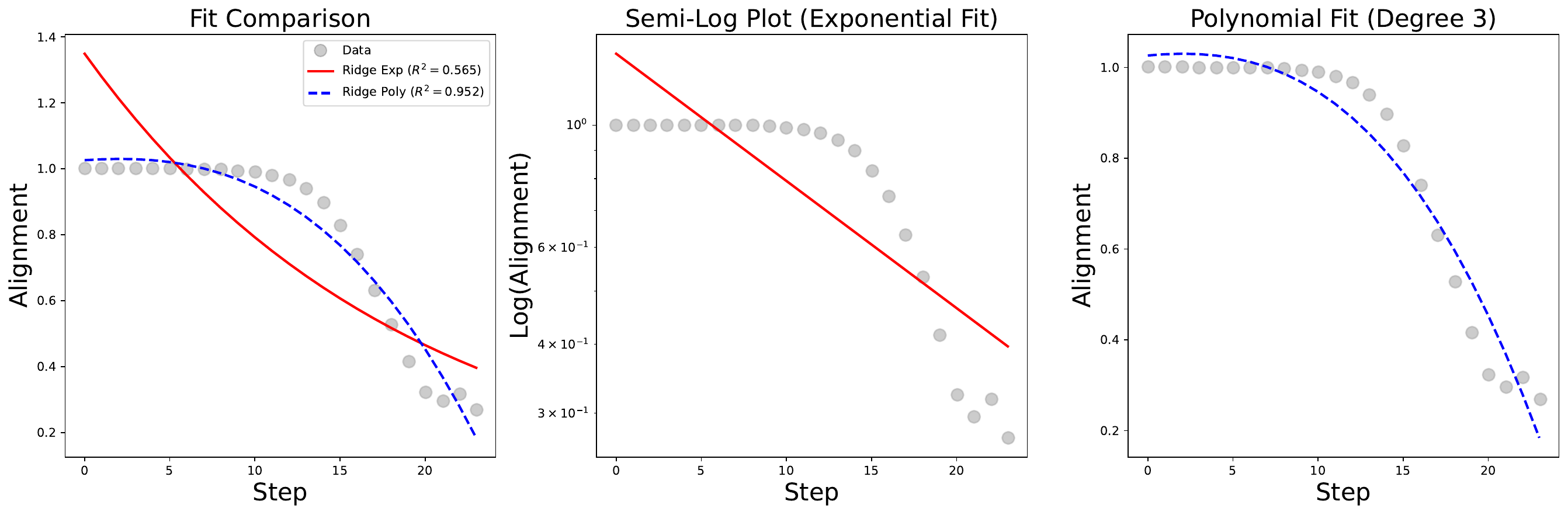}} \\
    \subfigure[m=500]{\includegraphics[width=0.9\linewidth, height=3.65cm, keepaspectratio]{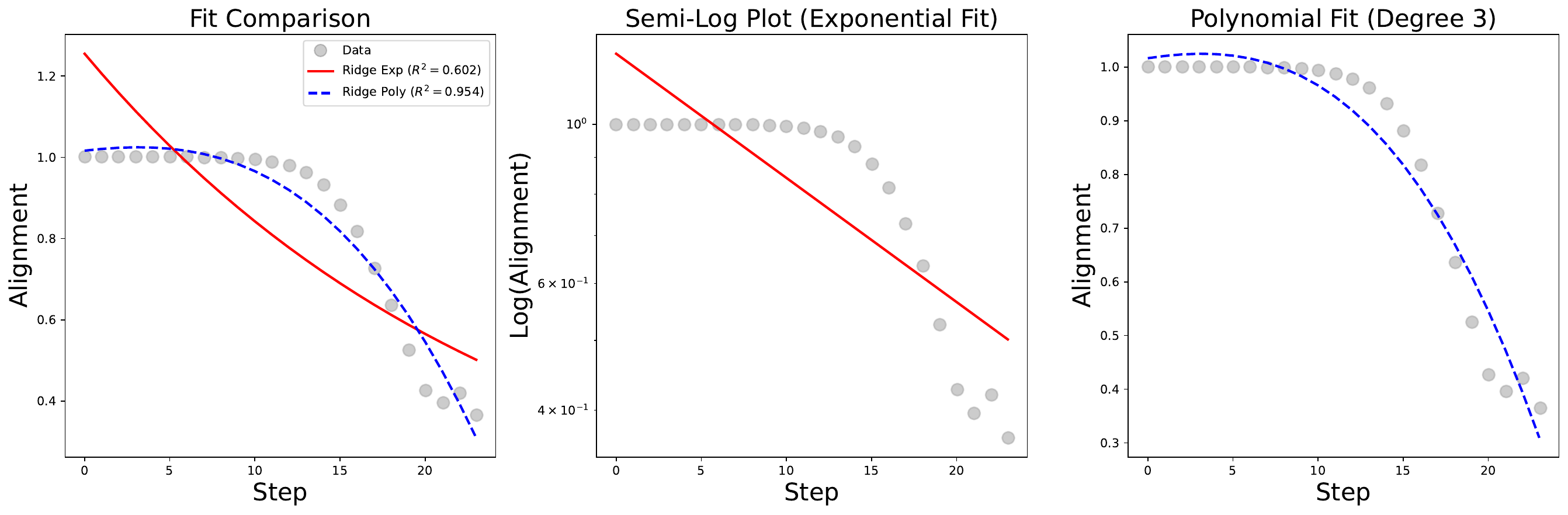}}
    
    \caption{The decay rate of phase I when seed=1101 for various $m=\frac{\lambda_k}{\lambda_{k+1}}$ (Part 3)}
\end{figure}
\clearpage

\begin{figure}[p]
    \centering
    \subfigure[m=5]{\includegraphics[width=0.9\linewidth, height=3.65cm, keepaspectratio]{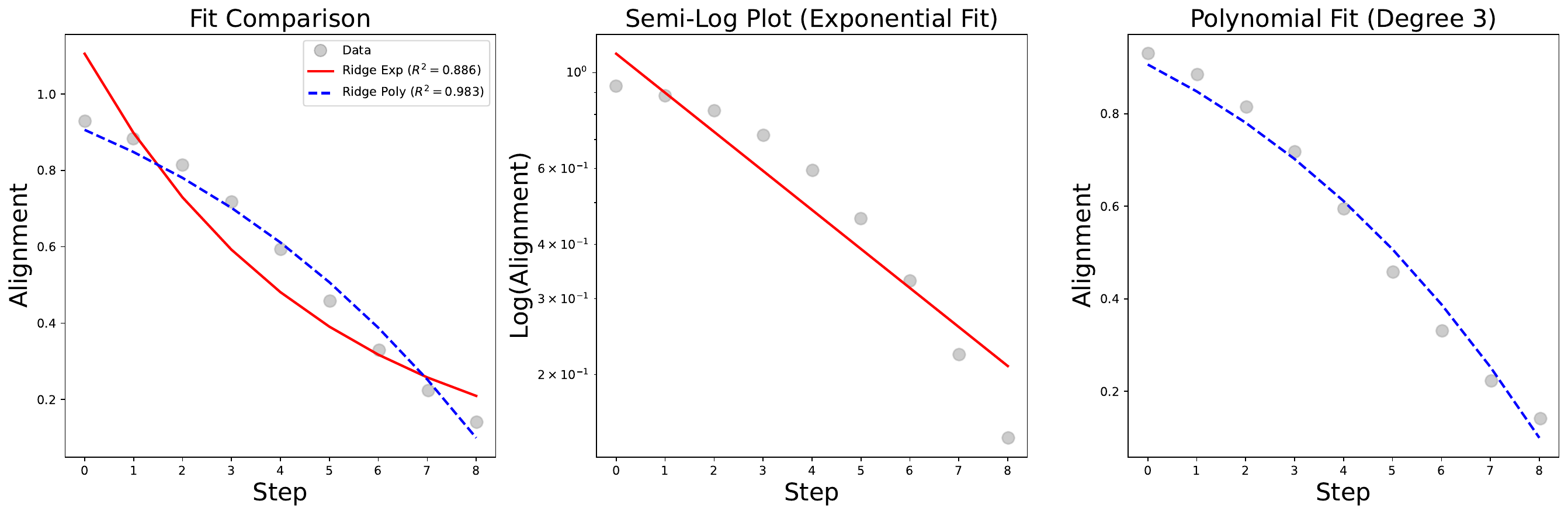}} \\
    \subfigure[m=10]{\includegraphics[width=0.9\linewidth, height=3.65cm, keepaspectratio]{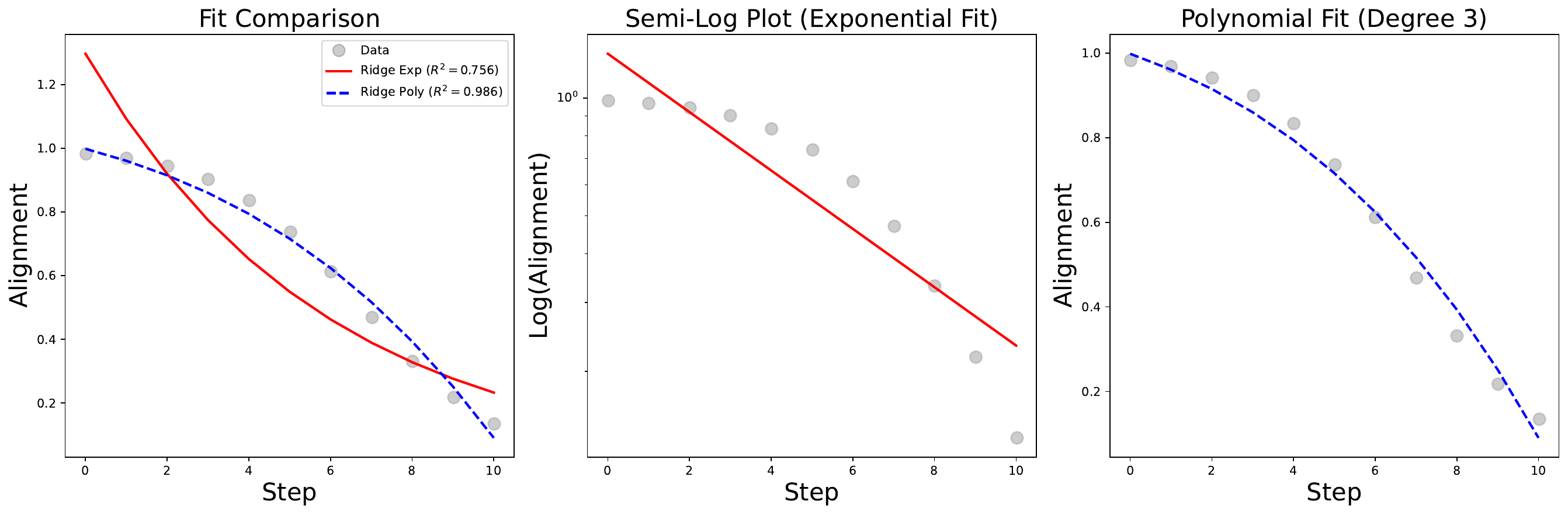}} \\
    \subfigure[m=20]{\includegraphics[width=0.9\linewidth, height=3.65cm, keepaspectratio]{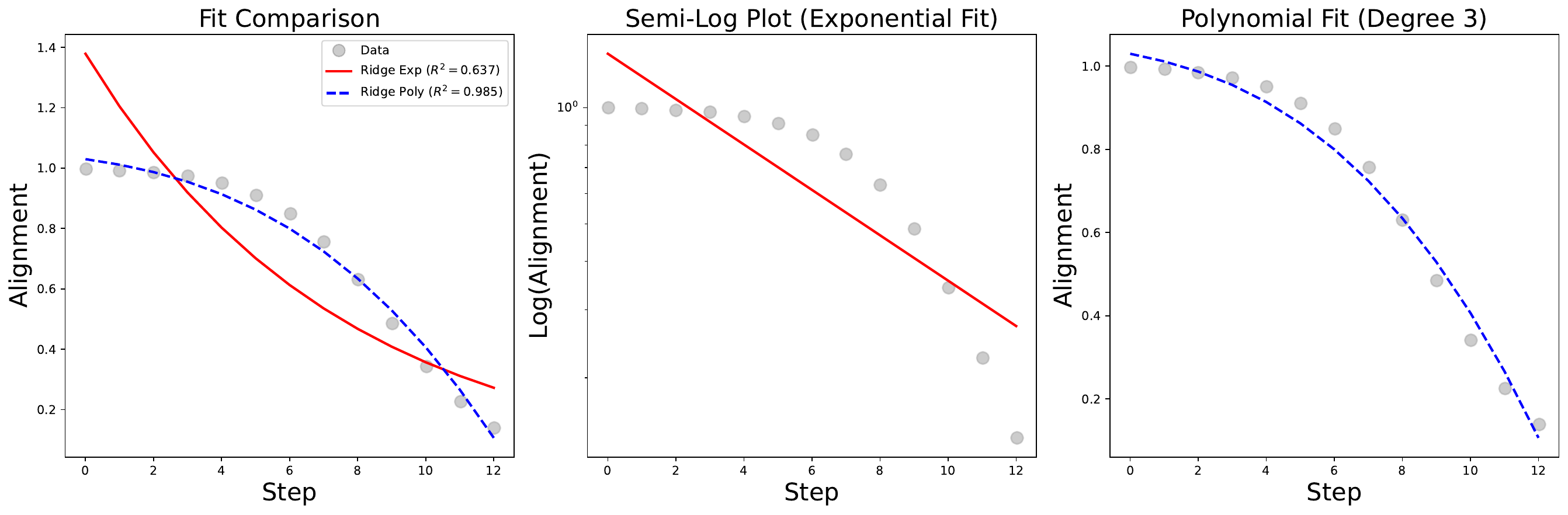}} \\
    \subfigure[m=50]{\includegraphics[width=0.9\linewidth, height=3.65cm, keepaspectratio]{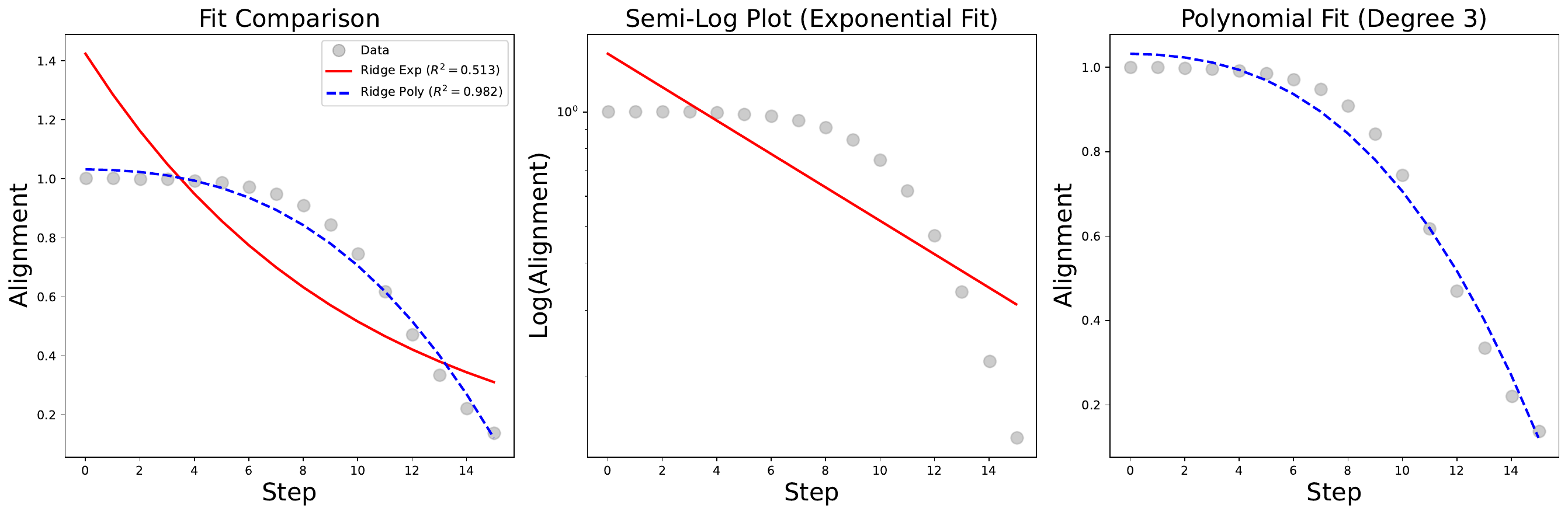}}
    
    \caption{The decay rate of phase I when seed=12138 for various $m=\frac{\lambda_k}{\lambda_{k+1}}$ (Part 1)}
    \label{fig:decay_12138}
\end{figure}
\clearpage

\begin{figure}[p]
    \ContinuedFloat
    \centering
    \subfigure[m=100]{\includegraphics[width=0.9\linewidth, height=3.65cm, keepaspectratio]{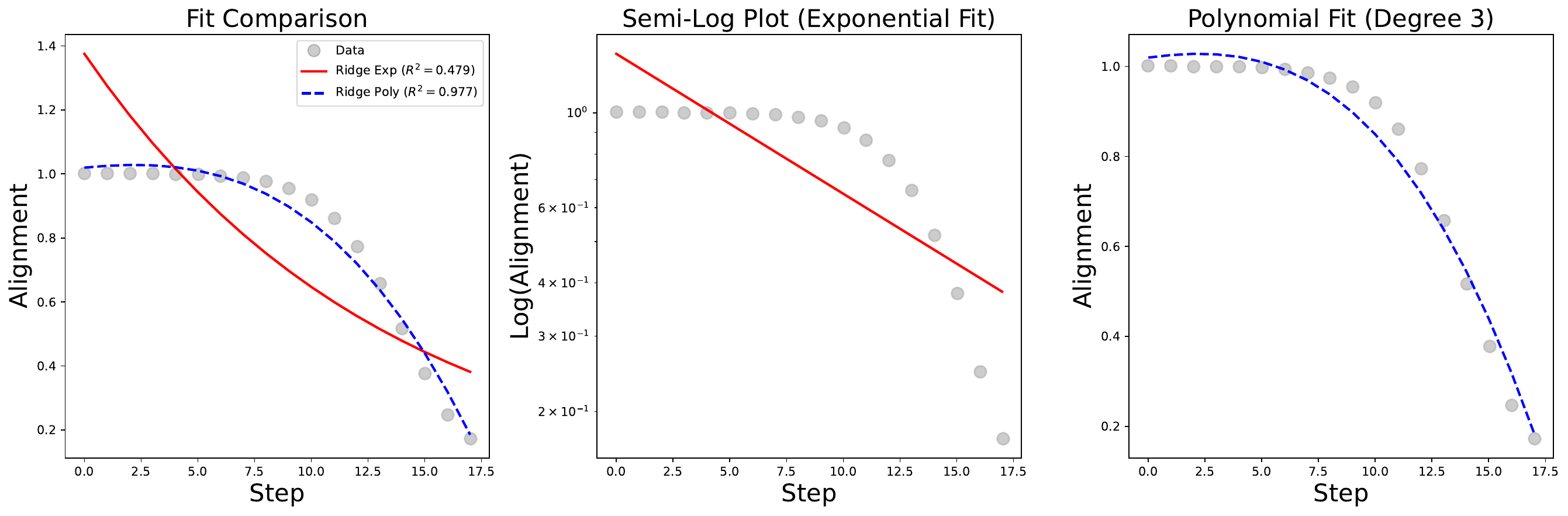}} \\
    \subfigure[m=200]{\includegraphics[width=0.9\linewidth, height=3.65cm, keepaspectratio]{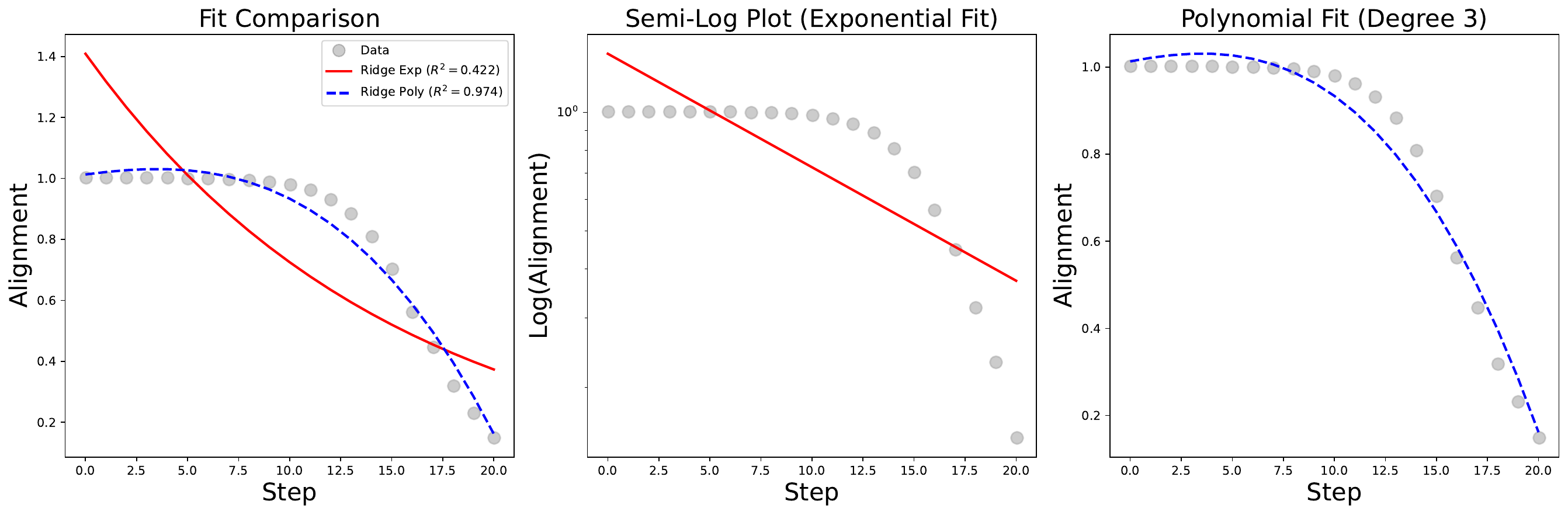}} \\
    \subfigure[m=300]{\includegraphics[width=0.9\linewidth, height=3.65cm, keepaspectratio]{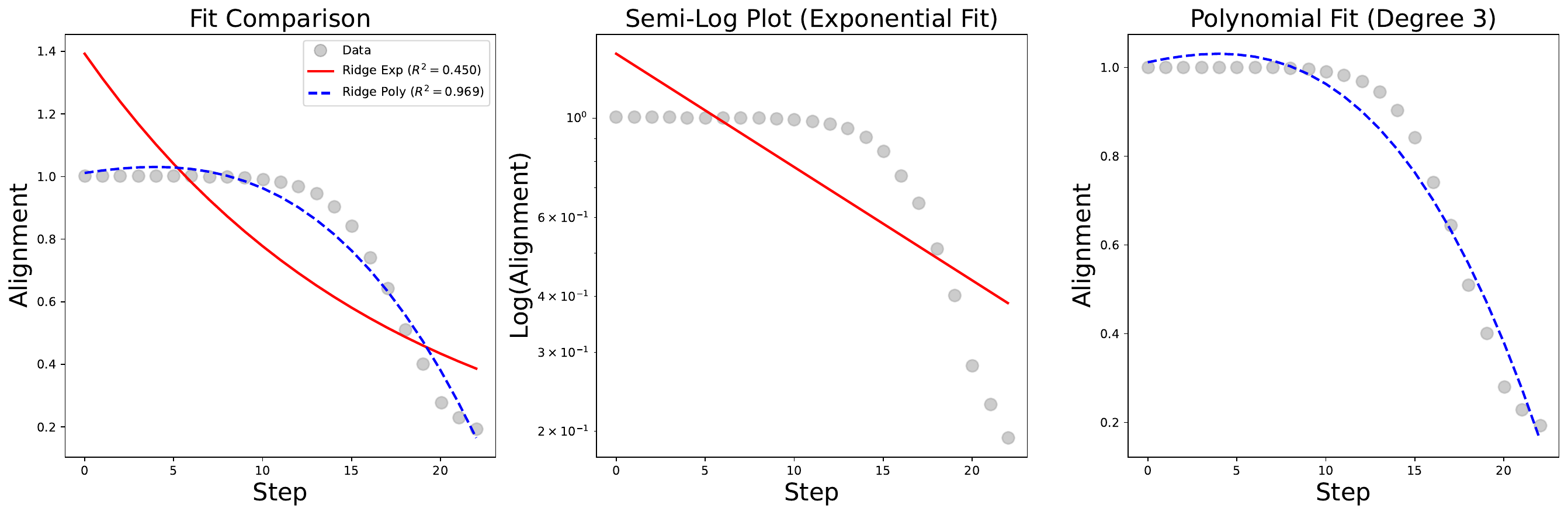}} \\
    \subfigure[m=400]{\includegraphics[width=0.9\linewidth, height=3.65cm, keepaspectratio]{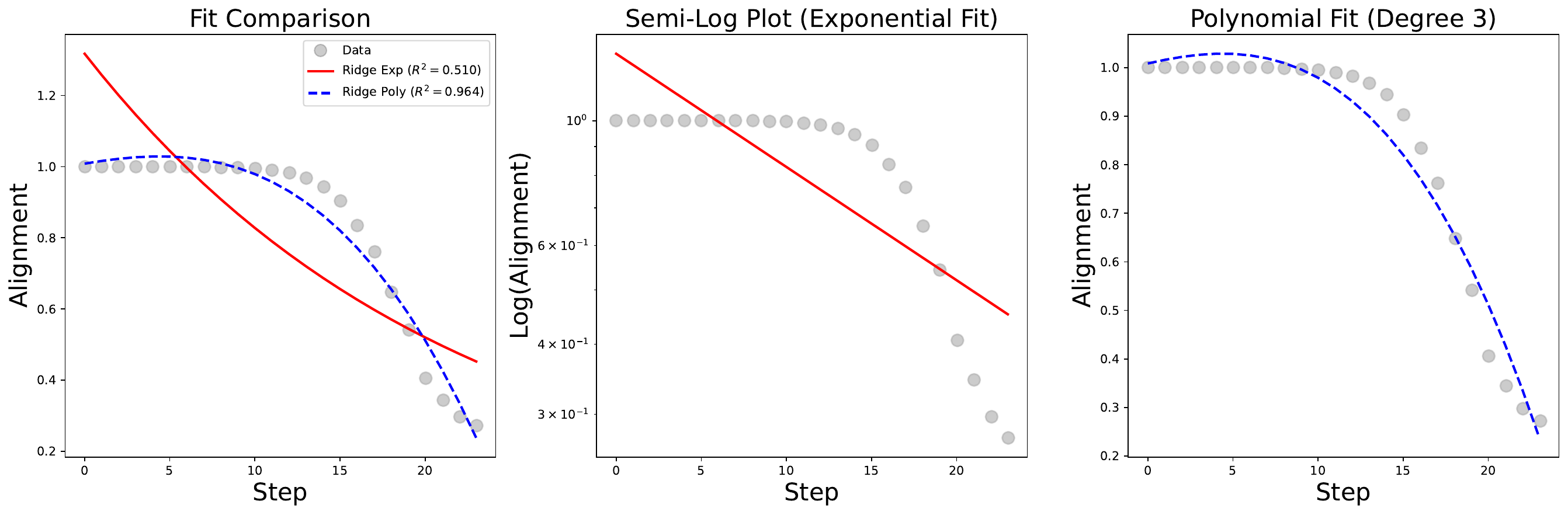}}
    
    \caption{The decay rate of phase I when seed=12138 for various $m=\frac{\lambda_k}{\lambda_{k+1}}$ (Part 2)}
\end{figure}
\clearpage

\begin{figure}[t!]
    \ContinuedFloat
    \centering
    \subfigure[m=500]{\includegraphics[width=0.9\linewidth, height=3.65cm, keepaspectratio]{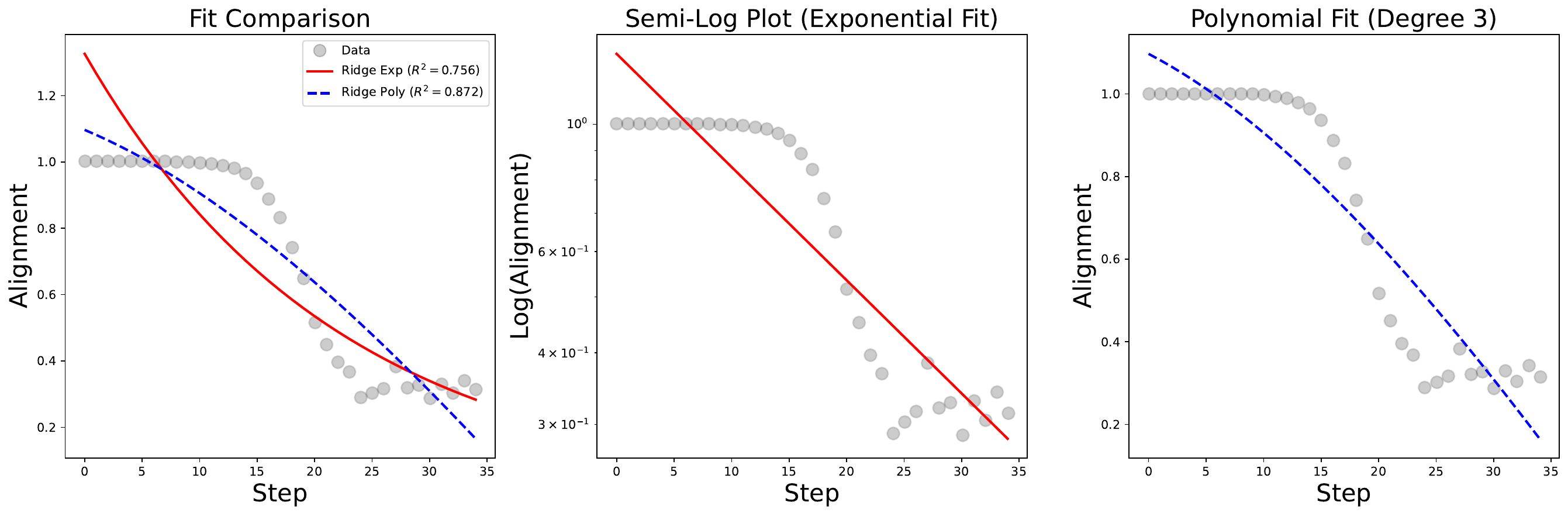}}
    \caption{The decay rate of phase I when seed=12138 for various $m=\frac{\lambda_k}{\lambda_{k+1}}$ (Part 3)}
\end{figure}

\begin{figure}[h!]
    \centering
    \subfigure[m=5]{\includegraphics[width=0.9\linewidth, height=3.65cm, keepaspectratio]{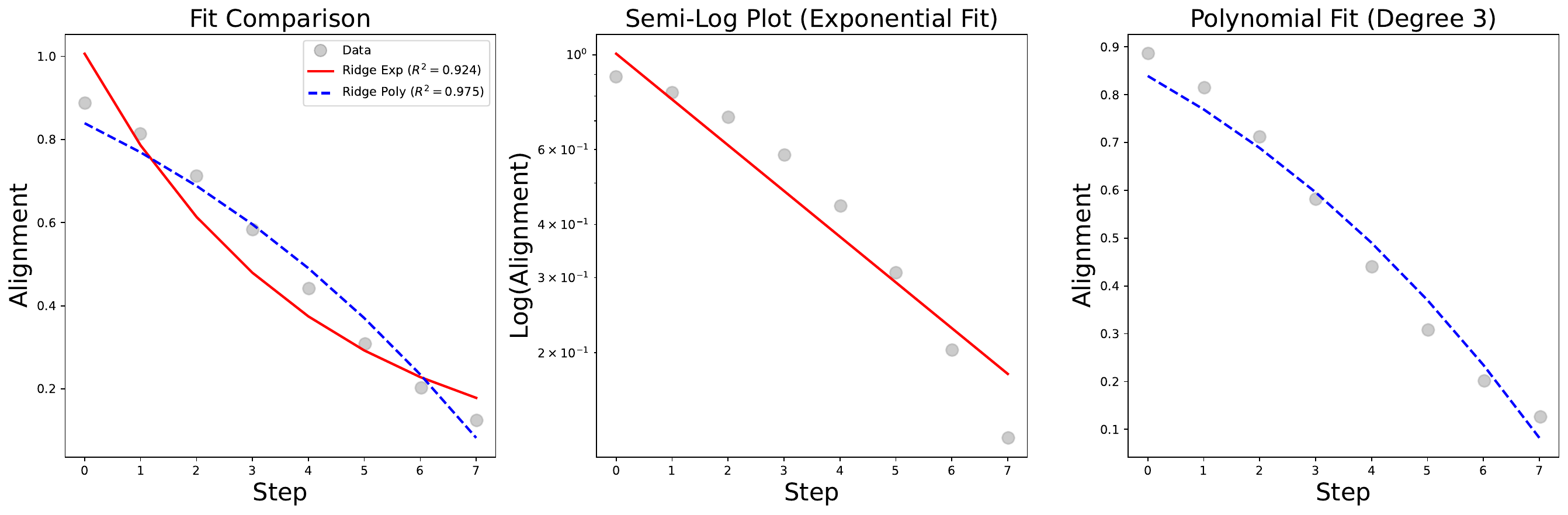}} \\
    \subfigure[m=10]{\includegraphics[width=0.9\linewidth, height=3.65cm, keepaspectratio]{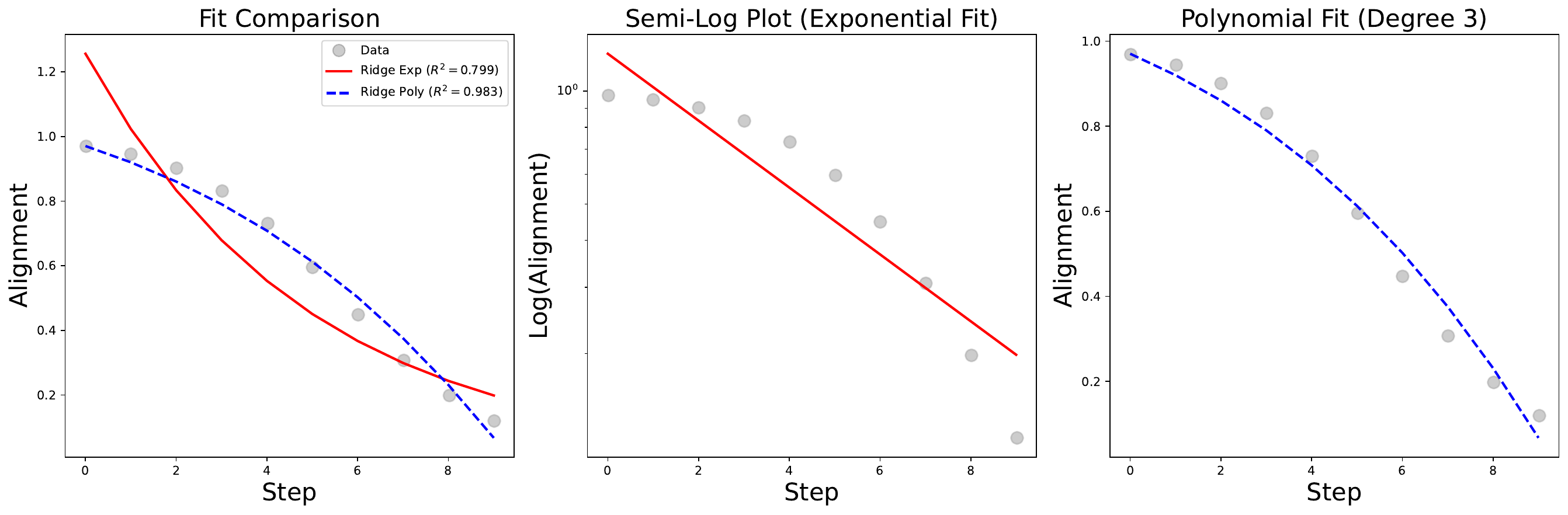}} \\
    \subfigure[m=20]{\includegraphics[width=0.9\linewidth, height=3.65cm, keepaspectratio]{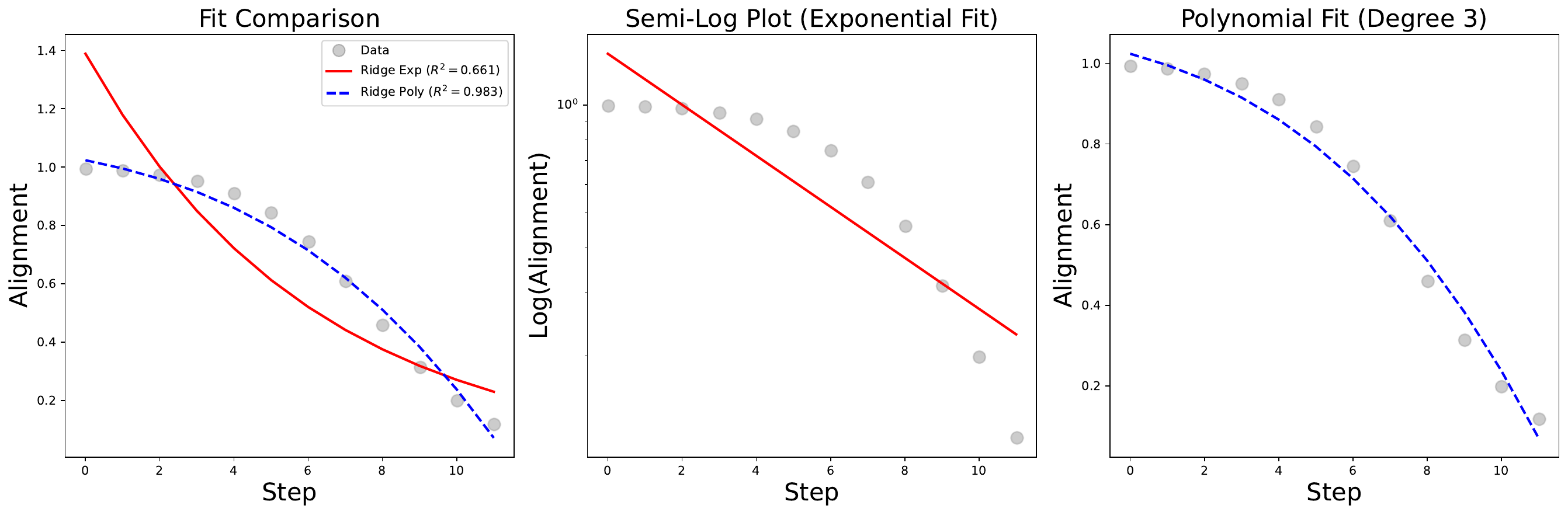}}
    
    \caption{The decay rate of phase I when seed=70425 for various $m=\frac{\lambda_k}{\lambda_{k+1}}$ (Part 1)}
    \label{fig:decay_70425}
\end{figure}
\clearpage

\begin{figure}[p]
    \ContinuedFloat
    \centering
    \subfigure[m=50]{\includegraphics[width=0.9\linewidth, height=3.65cm, keepaspectratio]{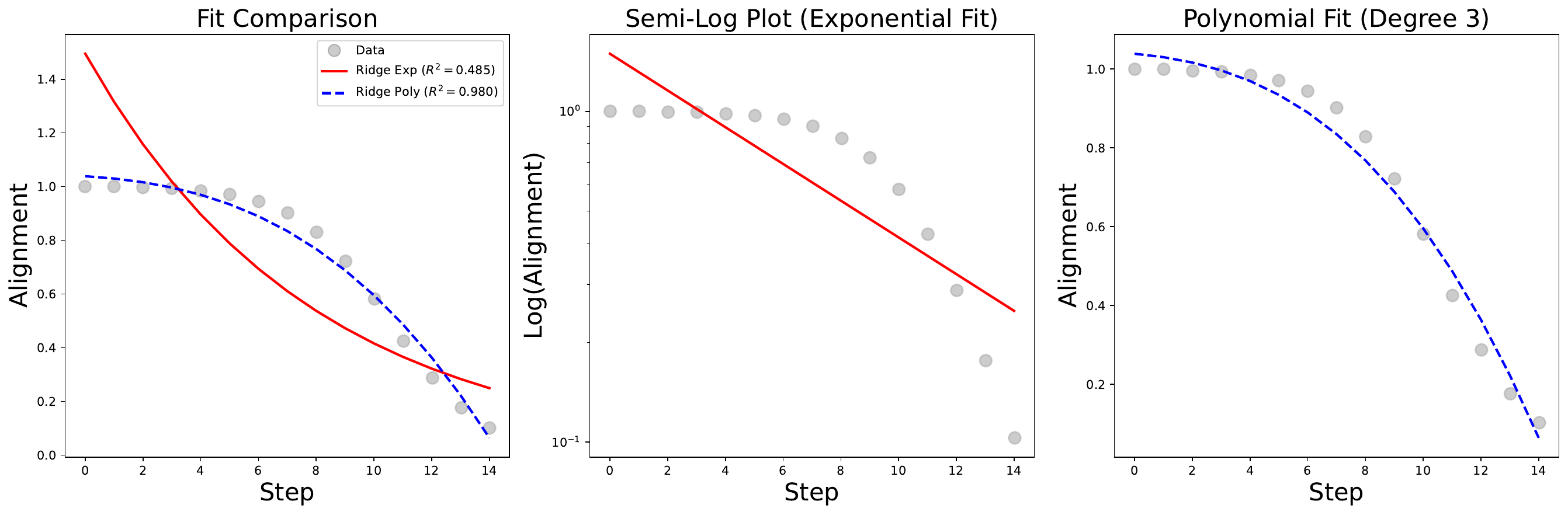}} \\
    \subfigure[m=100]{\includegraphics[width=0.9\linewidth, height=3.65cm, keepaspectratio]{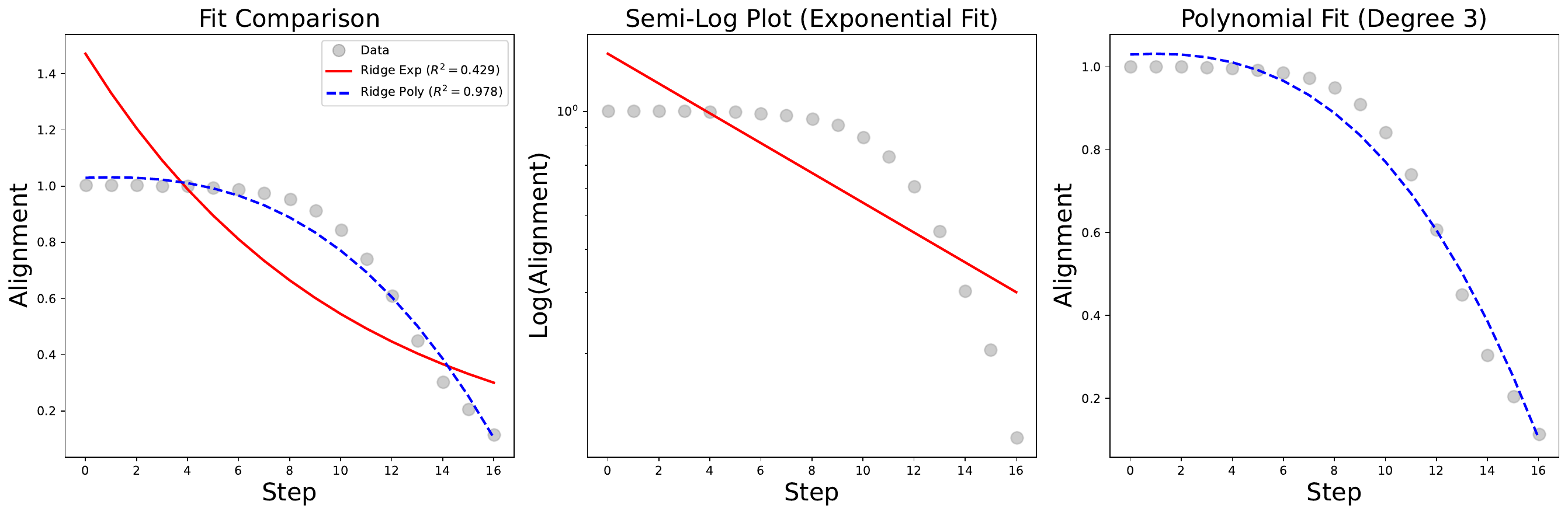}} \\
    \subfigure[m=200]{\includegraphics[width=0.9\linewidth, height=3.65cm, keepaspectratio]{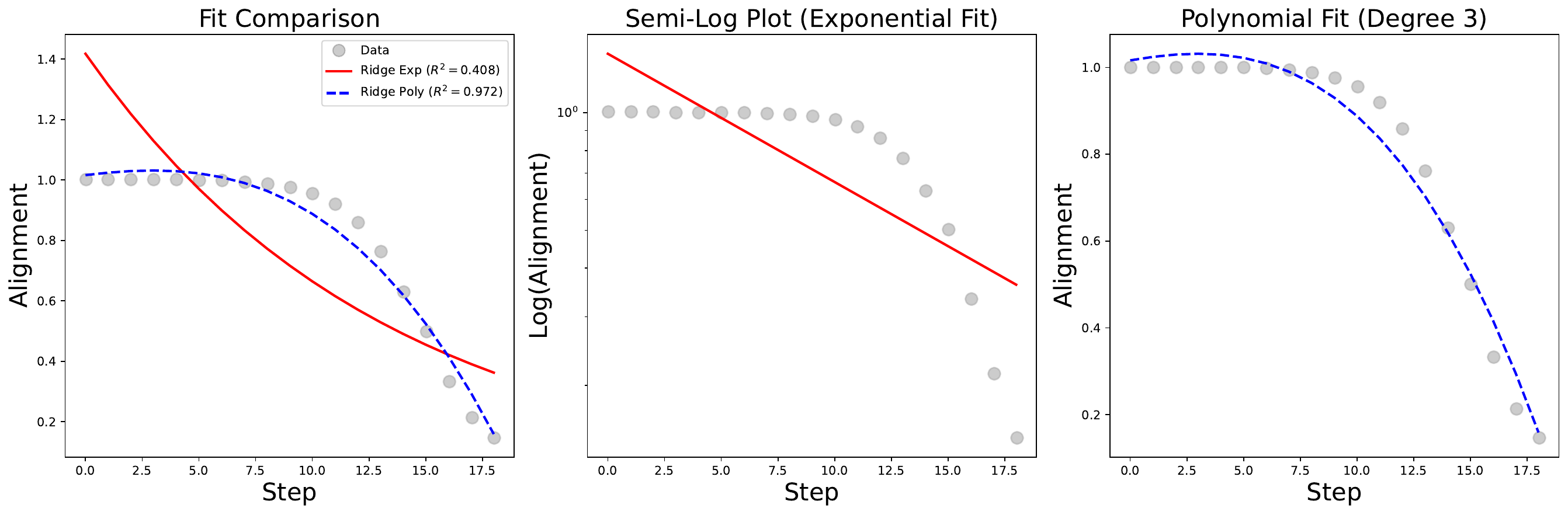}} \\
    \subfigure[m=300]{\includegraphics[width=0.9\linewidth, height=3.65cm, keepaspectratio]{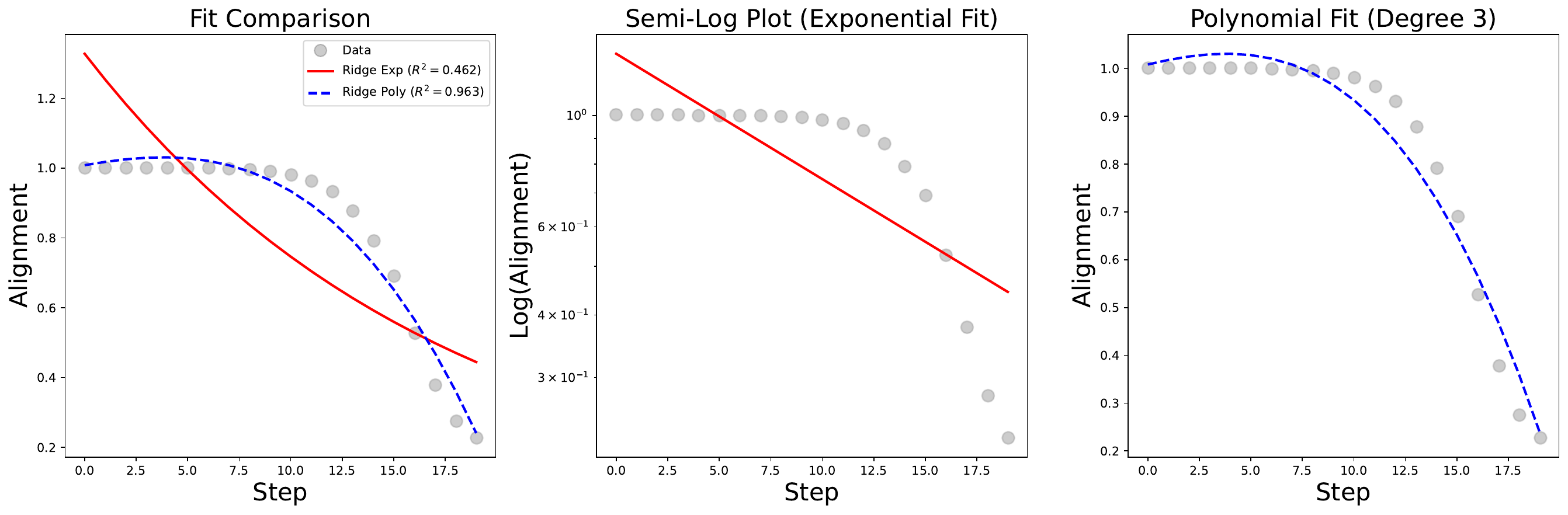}}
    
    \caption{The decay rate of phase I when seed=70425 for various $m=\frac{\lambda_k}{\lambda_{k+1}}$ (Part 2)}
\end{figure}
\clearpage

\begin{figure}[t!]
    \ContinuedFloat
    \centering
    \subfigure[m=400]{\includegraphics[width=0.9\linewidth, height=3.65cm, keepaspectratio]{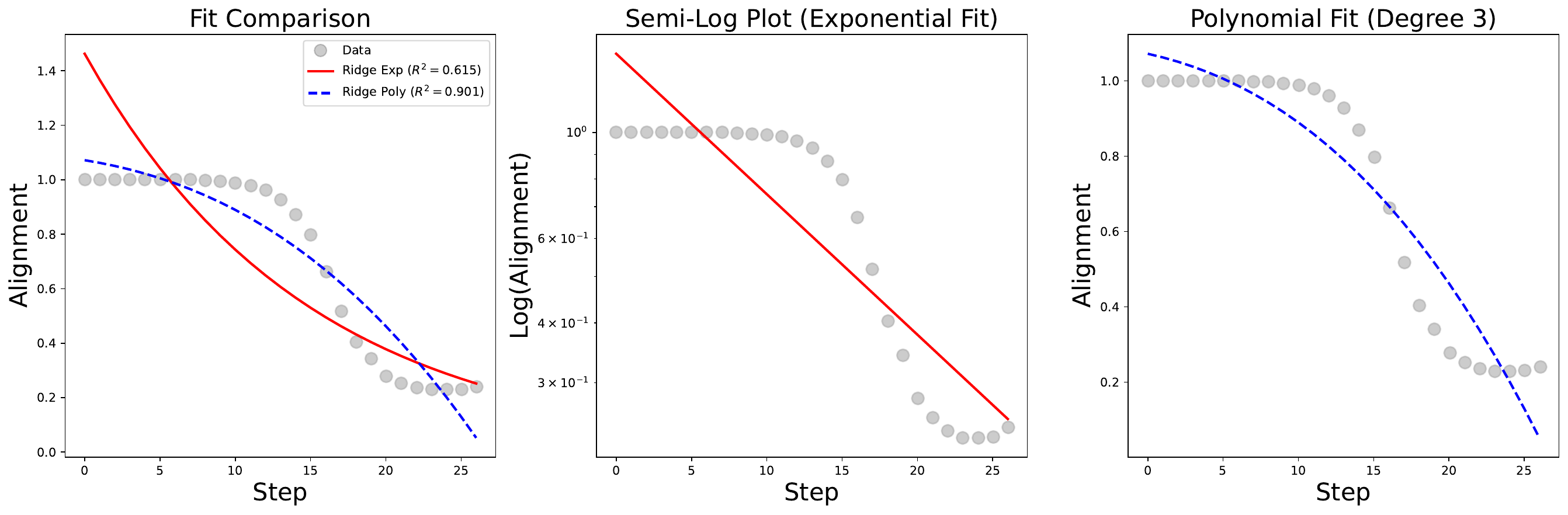}} \\
    \subfigure[m=500]{\includegraphics[width=0.9\linewidth, height=3.65cm, keepaspectratio]{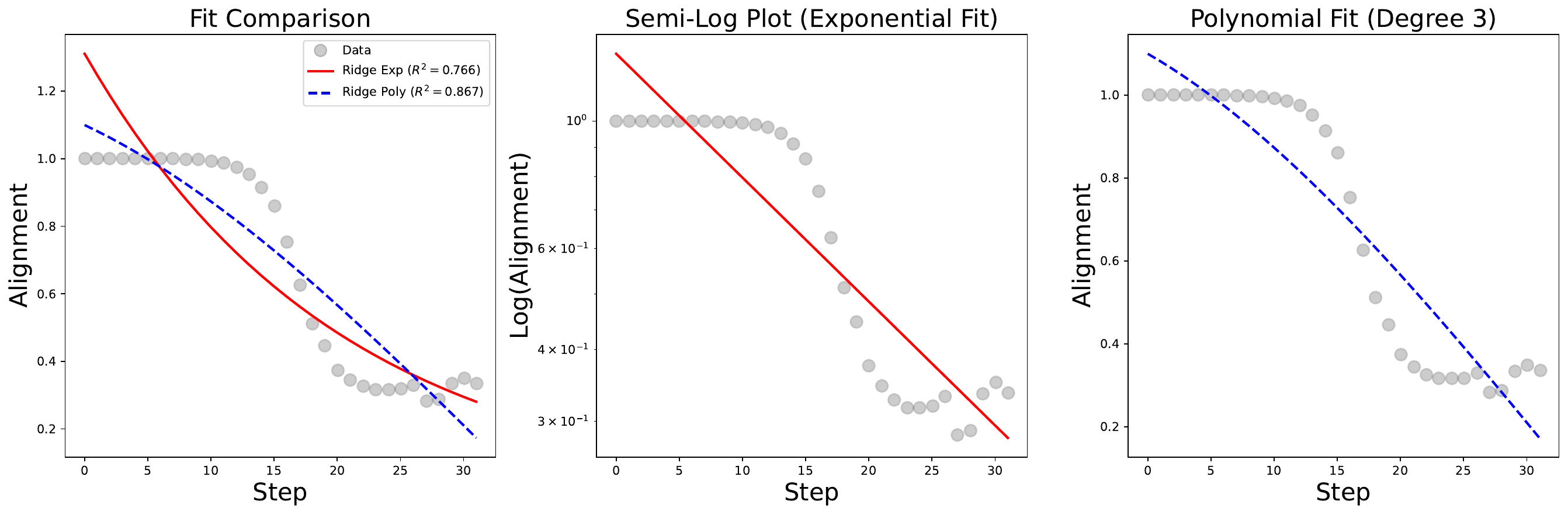}}
    \caption{The decay rate of phase I when seed=70425 for various $m=\frac{\lambda_k}{\lambda_{k+1}}$ (Part 3)}
\end{figure}

\begin{figure}[h!]
    \centering
    \subfigure[m=5]{\includegraphics[width=0.9\linewidth, height=3.65cm, keepaspectratio]{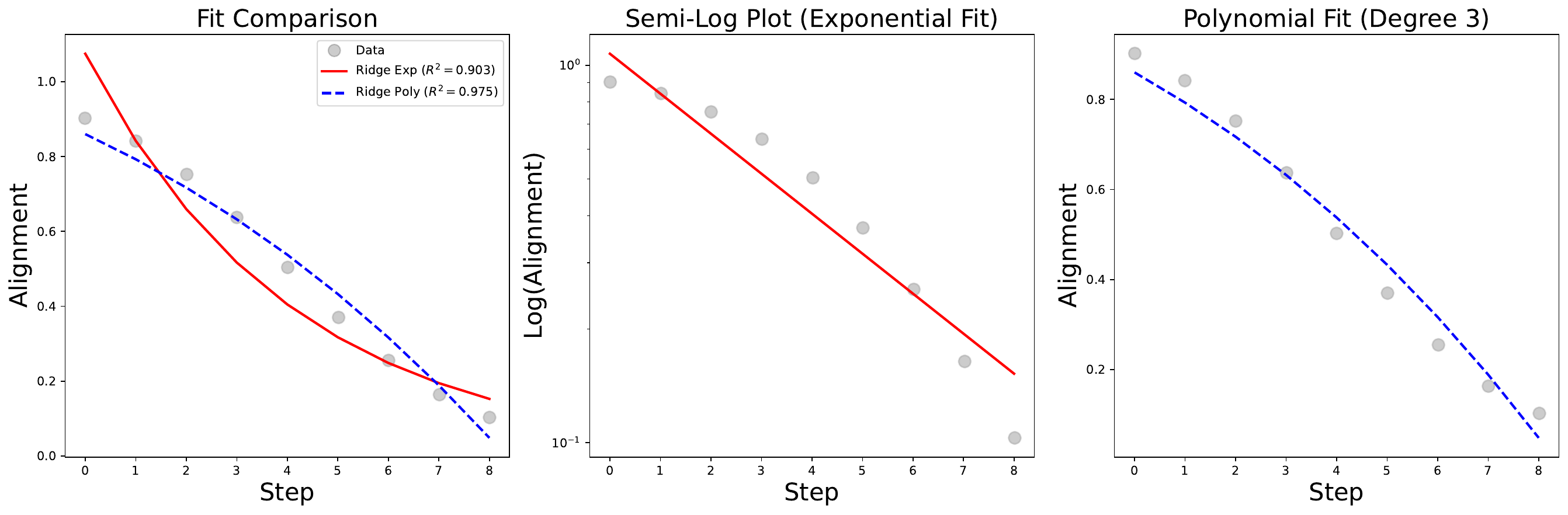}} \\
    \subfigure[m=10]{\includegraphics[width=0.9\linewidth, height=3.65cm, keepaspectratio]{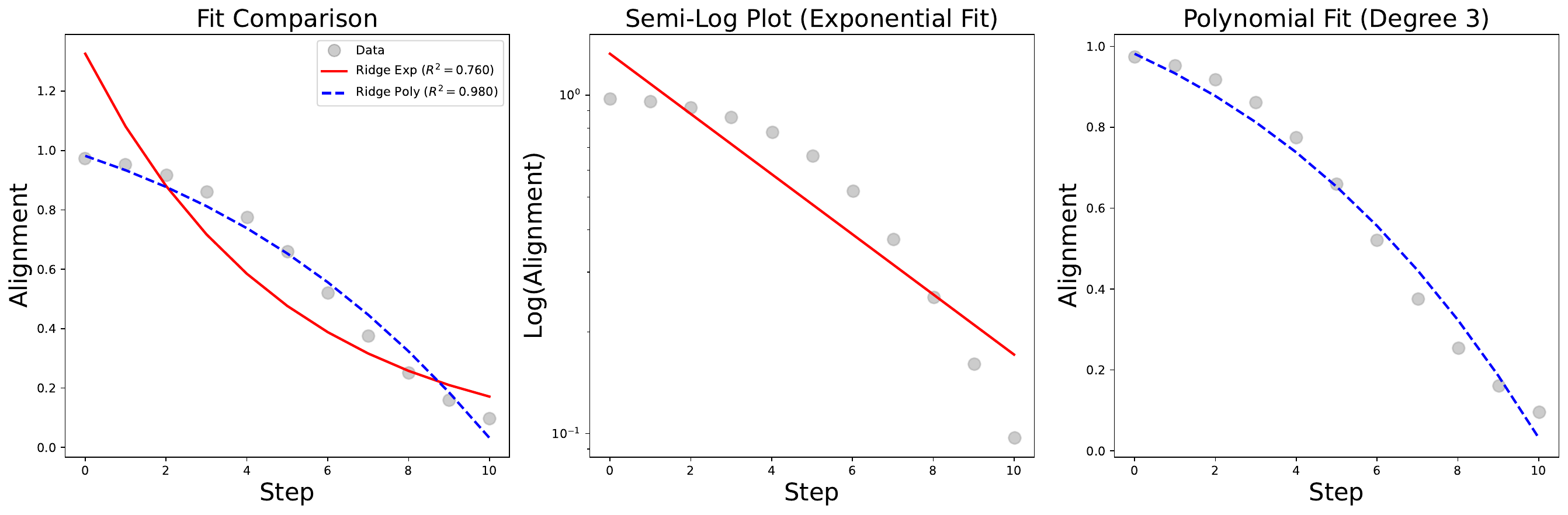}}
    
    \caption{The decay rate of phase I when seed=4008001 for various $m=\frac{\lambda_k}{\lambda_{k+1}}$ (Part 1)}
    \label{fig:decay_4008001}
\end{figure}
\clearpage

\begin{figure}[p]
    \ContinuedFloat
    \centering
    \subfigure[m=20]{\includegraphics[width=0.9\linewidth, height=3.65cm, keepaspectratio]{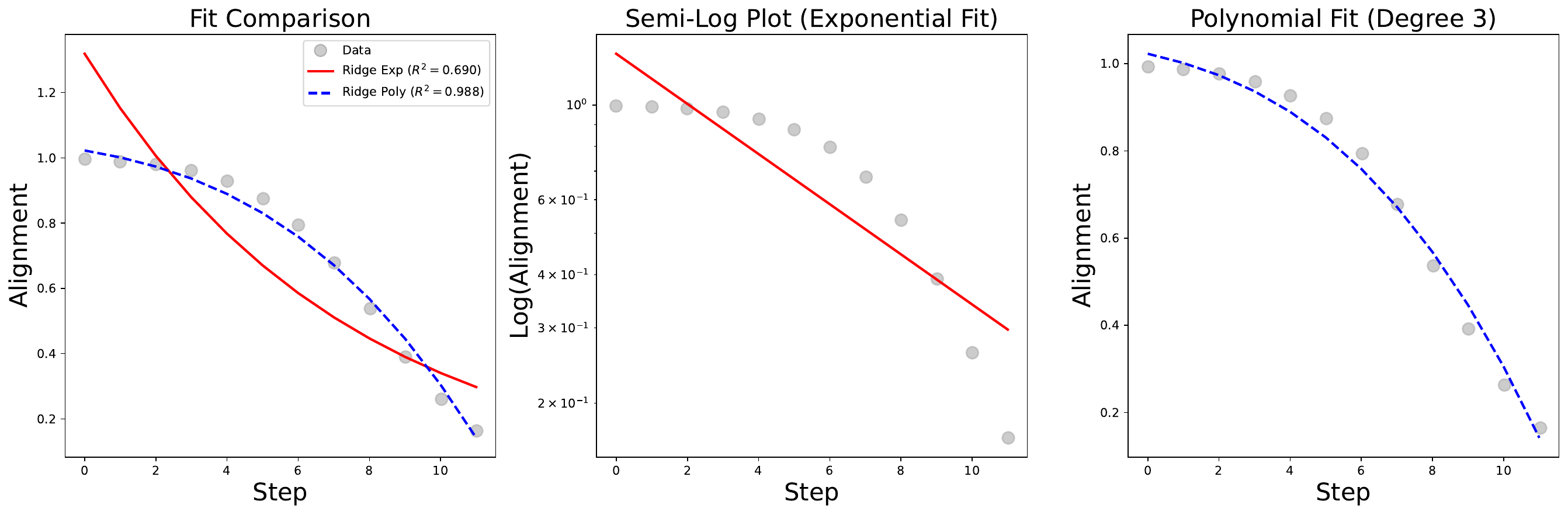}} \\
    \subfigure[m=50]{\includegraphics[width=0.9\linewidth, height=3.65cm, keepaspectratio]{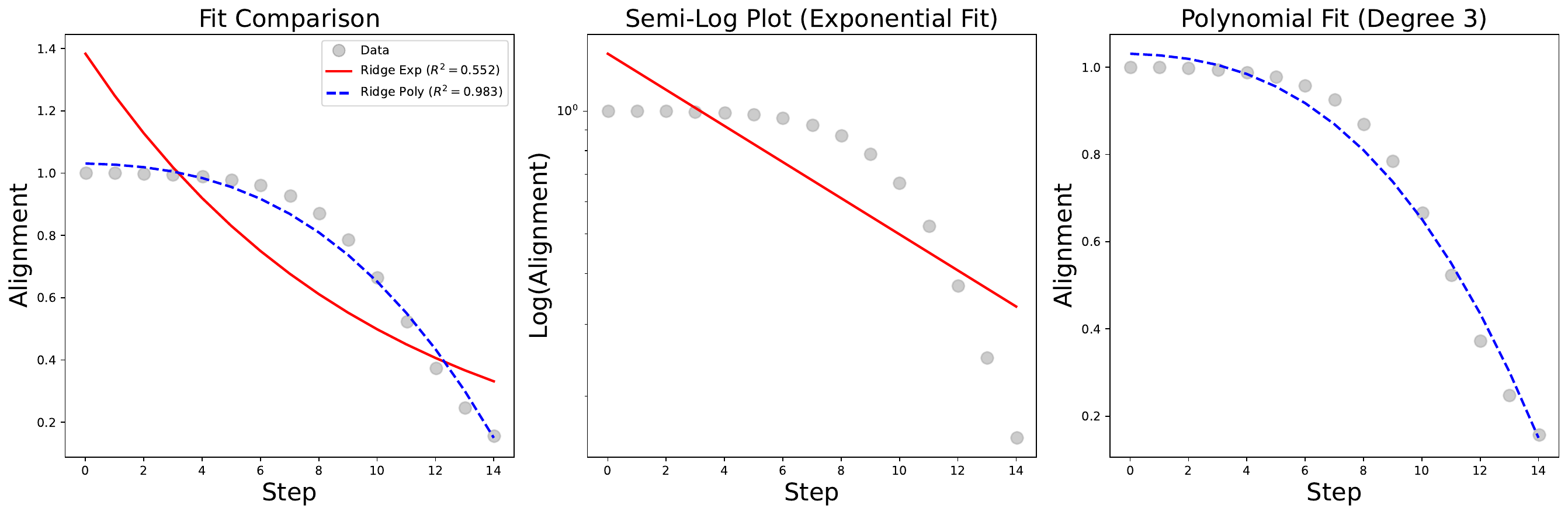}} \\
    \subfigure[m=100]{\includegraphics[width=0.9\linewidth, height=3.65cm, keepaspectratio]{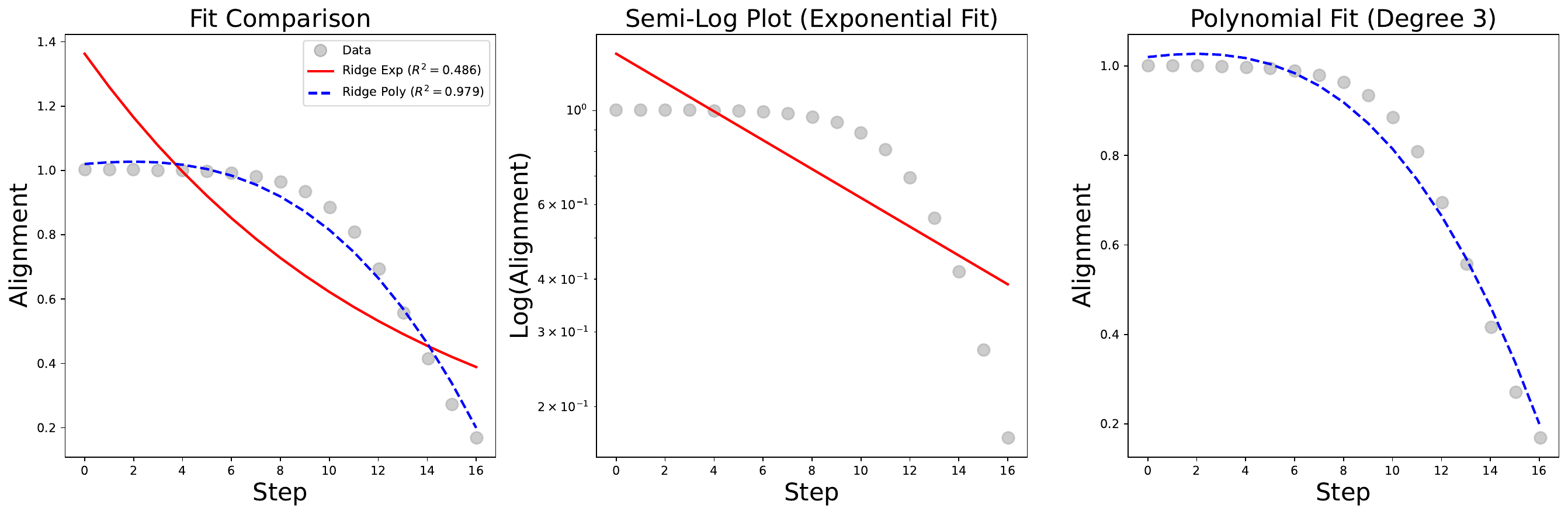}} \\
    \subfigure[m=200]{\includegraphics[width=0.9\linewidth, height=3.65cm, keepaspectratio]{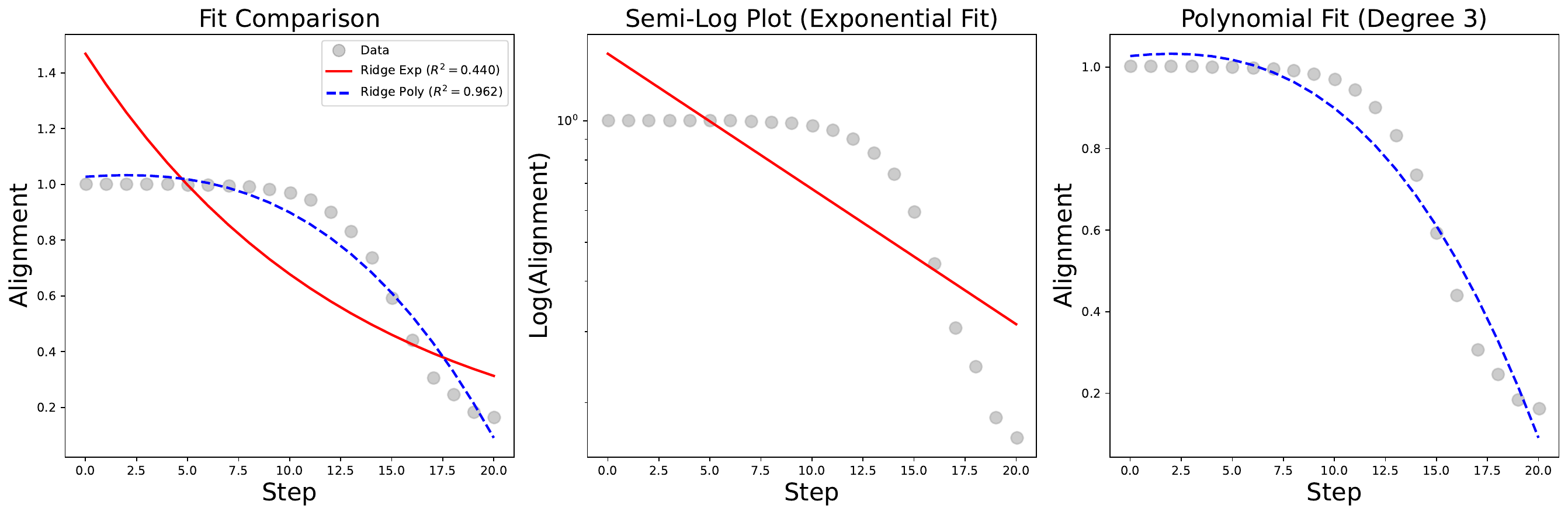}}
    
    \caption{The decay rate of phase I when seed=4008001 for various $m=\frac{\lambda_k}{\lambda_{k+1}}$ (Part 2)}
\end{figure}
\clearpage

\begin{figure}[t!]
    \ContinuedFloat
    \centering
    \subfigure[m=300]{\includegraphics[width=0.9\linewidth, height=3.65cm, keepaspectratio]{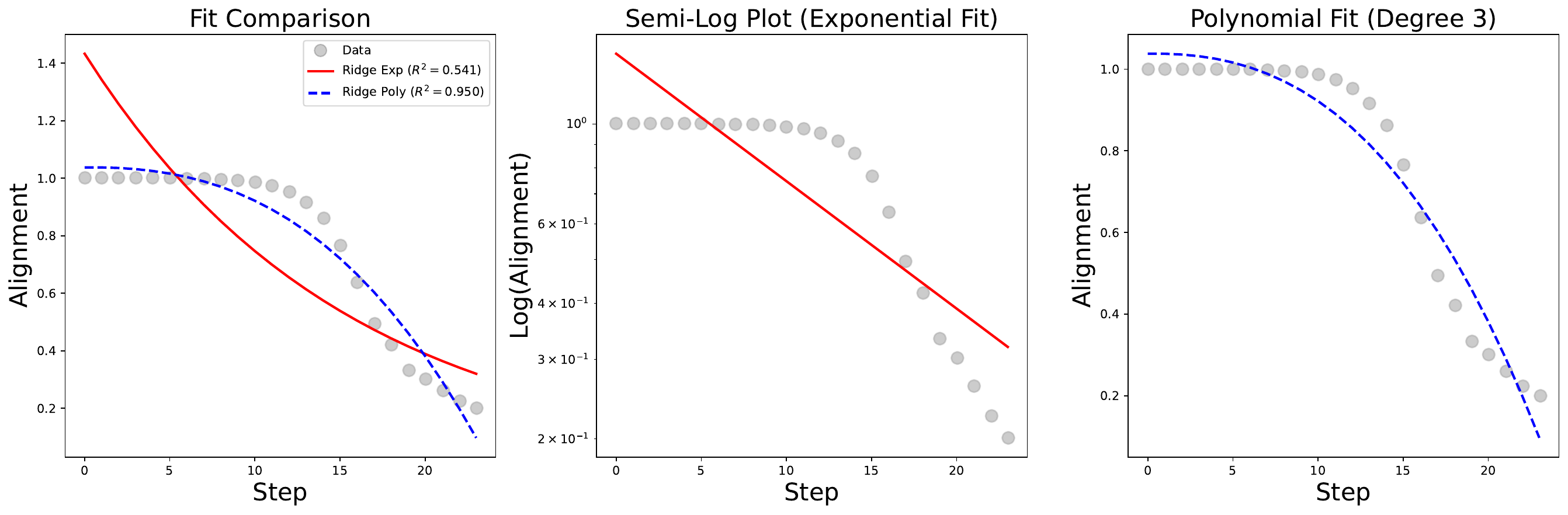}} \\
    \subfigure[m=400]{\includegraphics[width=0.9\linewidth, height=3.65cm, keepaspectratio]{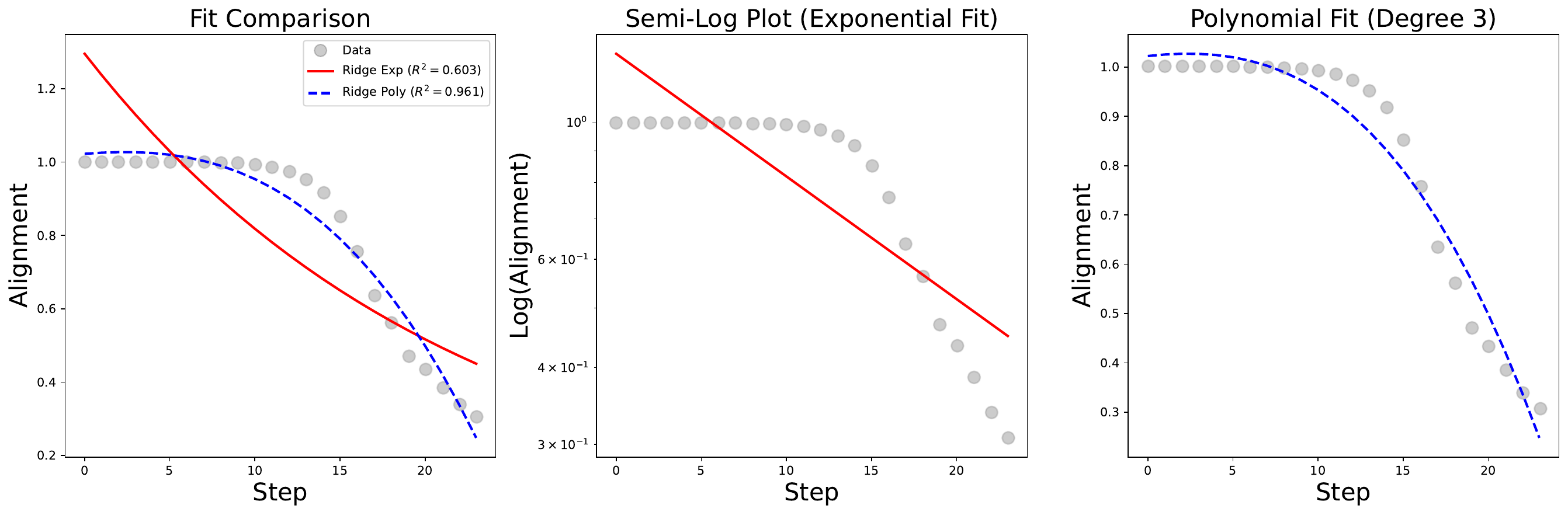}} \\
    \subfigure[m=500]{\includegraphics[width=0.9\linewidth, height=3.65cm, keepaspectratio]{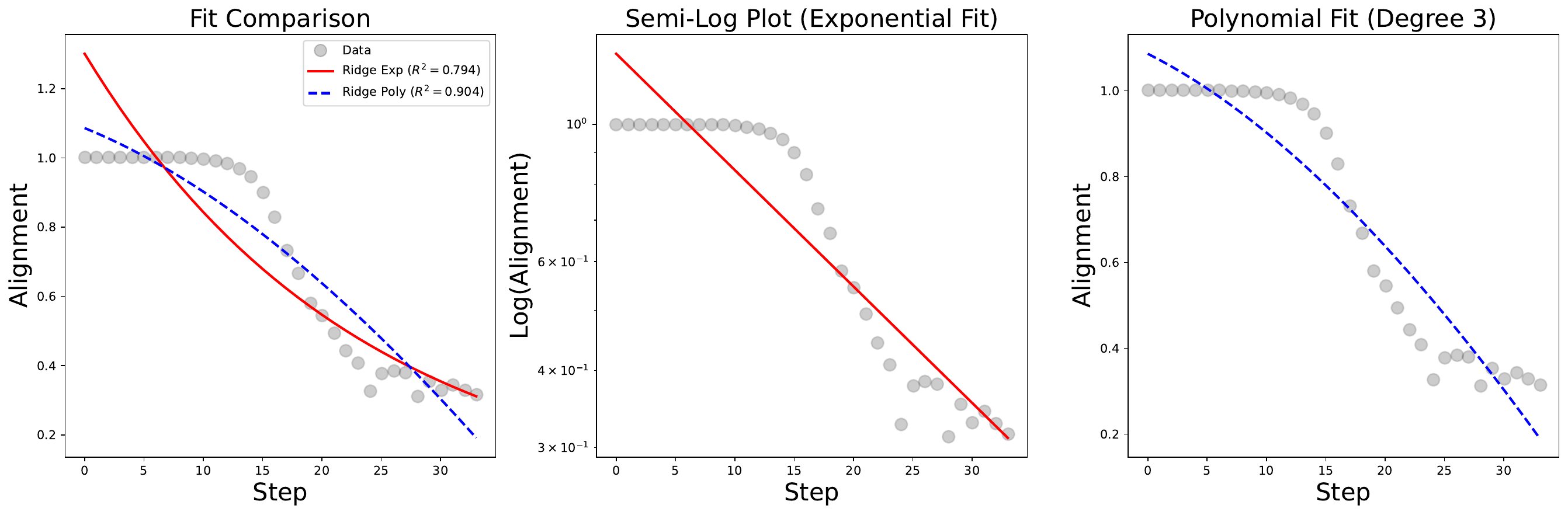}}
    
    \caption{The decay rate of phase I when seed=4008001 for various $m=\frac{\lambda_k}{\lambda_{k+1}}$ (Part 3) \label{fig:decay_4008001，2}}
\end{figure}

\begin{figure}[h!]
    \centering
    \subfigure[seed=42]{\includegraphics[width=0.9\linewidth, height=3.65cm, keepaspectratio]{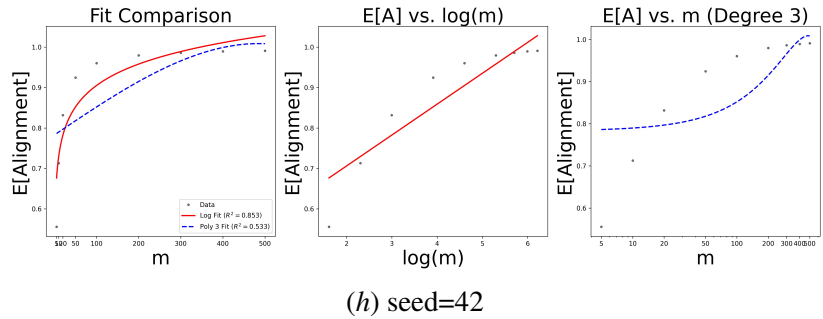}}
    
    \caption{The rate of the expected alignment convergence with respect to $m=\frac{\lambda_k}{\lambda_{k+1}}$ (Part 1)}
\end{figure}
\clearpage

\begin{figure}[p]
    \ContinuedFloat
    \centering
    \subfigure[seed=87]{\includegraphics[width=0.9\linewidth, height=3.65cm, keepaspectratio]{ALT_final/m_rate/seed_87.pic.pdf}} \\
    \subfigure[seed=568]{\includegraphics[width=0.9\linewidth, height=3.65cm, keepaspectratio]{ALT_final/m_rate/seed_568.pic.pdf}} \\
    \subfigure[seed=1101]{\includegraphics[width=0.9\linewidth, height=3.65cm, keepaspectratio]{ALT_final/m_rate/seed_1101.pic.pdf}} \\
    \subfigure[seed=12138]{\includegraphics[width=0.9\linewidth, height=3.65cm, keepaspectratio]{ALT_final/m_rate/seed_12138.pic.pdf}}
    
    \caption{The rate of the expected alignment convergence (Part 2) \label{fig:rate_convergence}}
\end{figure}
\clearpage

\begin{figure}[H]
    \ContinuedFloat
    \centering
    \subfigure[seed=70425]{\includegraphics[width=0.9\linewidth, height=3.65cm, keepaspectratio]{ALT_final/m_rate/seed_70425.pic.pdf}} \\
    \subfigure[seed=4008001]{\includegraphics[width=0.9\linewidth, height=3.65cm, keepaspectratio]{ALT_final/m_rate/seed_4008001.pic.pdf}}
    
    \caption{The rate of the expected alignment convergence (Part 3)}
\end{figure}

\end{document}